\documentclass{article}

\usepackage[final]{neurips_2025}

\usepackage[utf8]{inputenc} 
\usepackage[T1]{fontenc}    
\usepackage{hyperref}       
\usepackage{url}            
\usepackage{booktabs}       
\usepackage{amsfonts}       
\usepackage{nicefrac}       
\usepackage{microtype}      
\usepackage{xcolor}         
\usepackage{tipa}

\usepackage{graphicx}
\usepackage{subfigure}
\usepackage{amsmath}
\usepackage{amssymb}
\usepackage{mathtools}
\usepackage{amsthm}
\usepackage{algorithm}
\usepackage{algorithmic} 
\usepackage{cleveref}
\usepackage{multirow}
\usepackage{xcolor}  
\usepackage{wrapfig}
\usepackage{caption} 
\usepackage{makecell}
\usepackage{lscape} 
\usepackage{ulem}

\definecolor{myBlue}{HTML}{1E90FF}
\newcommand{\best}[1]{\textcolor{myBlue}{\textbf{#1}}}

\newcommand{\real}{\mathbb{R}}
\newcommand{\VECTOR}[1]{\boldsymbol{\lowercase{#1}}}
\newcommand{\MATRIX}[1]{\boldsymbol{\uppercase{#1}}}

\theoremstyle{plain}
\newtheorem{theorem}{Theorem}[section]

\theoremstyle{definition}
\newtheorem{definition}[theorem]{Definition}

\theoremstyle{remark}

\crefname{equation}{Eq.}{Eqs.}      
\Crefname{equation}{Eq.}{Eqs.}
\crefname{algorithm}{Alg.}{Algs.}   
\Crefname{algorithm}{Alg.}{Algs.}
\crefname{section}{Sec.}{Secs.}     
\Crefname{section}{Sec.}{Secs.}
\crefname{table}{Tab.}{Tabs.}       
\Crefname{table}{Tab.}{Tabs.}
\crefname{figure}{Fig.}{Figs.}       
\crefname{figure}{Fig.}{Figs.}       
\crefname{appendix}{App.}{Apps.}       
\crefname{appendix}{App.}{Apps.}       
\crefname{definition}{Def.}{Def.}       

\title{\underline{AliO}: \underline{O}utput \underline{Ali}gnment Matters\\in Long-Term Time Series Forecasting}

\author{%
  Kwangryeol Park \\
  Artificial Intelligence Graduate School \\
  Ulsan National Institute of Science \& Technology (UNIST), South Korea \\
  pkr7098@unist.ac.kr
  \And
  Jaeho Kim\thanks{Work done at UNIST} \\
  Artificial Intelligence \\
  Korea University, South Korea \\
  kjh3690@korea.ac.kr
  \And
  Seulki Lee\thanks{Corresponding author} \\
  Department of Computer Science and Engineering \\
  Ulsan National Institute of Science \& Technology (UNIST), South Korea \\
  seulki.lee@unist.ac.kr
}

\begin{document}
\maketitle
\begin{abstract}
  Long-term Time Series Forecasting (LTSF) tasks, which leverage the current data sequence as input to predict the future sequence, have become increasingly crucial in real-world applications such as weather forecasting and planning of electricity consumption. However, state-of-the-art LTSF models often fail to achieve prediction output alignment for the same timestamps across lagged input sequences. Instead, these models exhibit low output alignment, resulting in fluctuation in prediction outputs for the same timestamps, undermining the model's reliability. To address this, we propose AliO (Align Outputs), a novel approach designed to improve the output alignment of LTSF models by reducing the discrepancies between prediction outputs for the same timestamps in both the time and frequency domains. To measure output alignment, we introduce a new metric, TAM (Time Alignment Metric), which quantifies the alignment between prediction outputs, whereas existing metrics such as MSE only capture the distance between prediction outputs and ground truths. Experimental results show that AliO effectively improves the output alignment, i.e., up to 76.0\% in TAM, while maintaining or enhancing the forecasting performance (up to 17.4\%). This improved output alignment increases the reliability of the LTSF models, making them more applicable in real-world scenarios. The code implementation is on the GitHub repository\footnote{\scriptsize{\url{https://github.com/eai-lab/AliO}}}.
\end{abstract}

\section{Introduction}
\label{sec:introduction}

The task of long-term time series forecasting (LTSF) is essential in various fields such as prediction of electricity demand~\cite{dataset.electric}, weather forecasting~\cite{model.autoformer}, health data~\cite{glucose1}, and so on. Recently, deep neural network models~\cite{model.dlinear,model.itransformer,model.patchtst,model.gpt4ts} have shown strong performance in predicting long-term time series based on historical information, which typically aim to minimize the error between the prediction output and the ground-truth sequence as a regression task. However, in real-world applications, merely minimizing the prediction error in LTSF is insufficient. It is equally important to ensure that the forecasting model generates consistent (aligned) prediction outputs for overlapping timestamps across lagged input sequences. For example, \Cref{fig:introduction} illustrates a scenario in which a trained forecasting model predicts future electricity usage in two instances: (1) for the period from April to November (Prediction 1, purple), using input data from January to March, and (2) for the period from May to December (Prediction 2, yellow), using input data from February to April, where the forecasting periods overlap between May and November. A reliable model should provide consistent predictions for these overlapping months, regardless of the partially differing input sequences. If the model produces inconsistent predictions on the electricity usage for the same timestamps (i.e., May to November) between two input sequences, it could result in significant time and financial costs for rescheduling budget allocation and undermine the reliability of the predictions.

We refer to this phenomenon as the \textit{output alignment} problem, which has not been adequately acknowledged and addressed by existing LTSF studies~\cite{loss.dilate,loss.soft-dtw,problem.prediction_delay}, despite its significance and substantial impact on real-world applications. To the best of our knowledge, state-of-the-art LTSF models~\cite{model.autoformer,model.patchtst,model.itransformer} often fail to maintain the prediction output consistency, and none of existing works has explicitly recognized or attempted to address this inconsistency in LTSF tasks.

\begin{wrapfigure}{r}{0.54\columnwidth}
    \centering
    \centerline{
    \includegraphics[width=0.54\columnwidth]{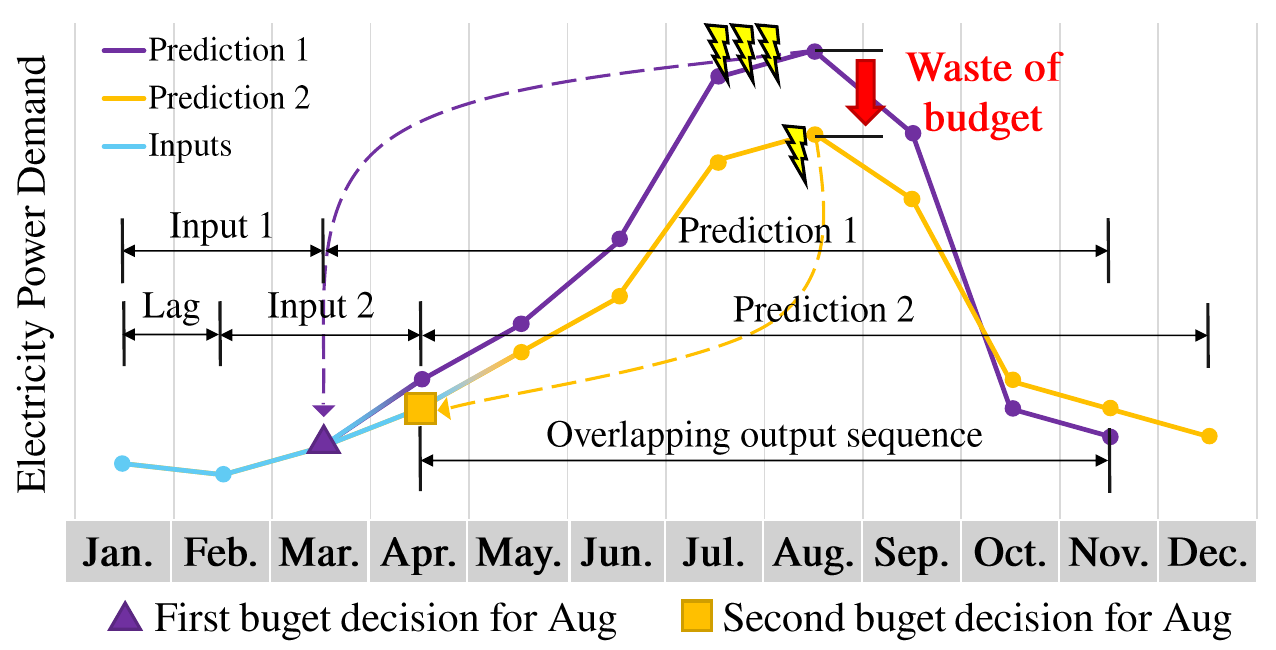}}    
    \caption{An example of low output alignment between two predictions on future electricity usages: (1) Prediction 1 (purple) forecast spanning from Apr to Nov and (2) Prediction 2 (yellow) forecast spanning from May to Dec. These two predictions are generated from two partially overlapping input sequences—(1) one from Jan to Mar (Input 1), and (2) the other from Feb to Apr (Input 2). The inconsistency between these two prediction outputs over the same timestamps (i.e., May to Nov) leads to differing budget allocation plannings for electricity power consumption in Aug, resulting in time and financial waste due to rescheduling resource allocation.
    }
    \label{fig:introduction}
\end{wrapfigure}

In this paper, we present \textit{AliO (Align Outputs)}, a novel method designed to enhance output alignment in LTSF models. For the first time, AliO enables LTSF models to produce consistent predictions for overlapping timestamps across lagged input sequences. By aligning predictions for overlapping timestamps through the minimization of discrepancies in both time and frequency domains, AliO improves output consistency even when input sequences lagged. AliO achieves this by directing the model's predictions toward the ground-truth sequences while simultaneously minimizing discrepancies across multiple predictions obtained from a set of lagged input sequences. It allows AliO to integrate seamlessly with the model's forecasting objectives, such as regression loss (e.g., MSE), without adding implementation complexity or requiring modifications to the model.
As a result, AliO enhances the reliability of forecasts by improving output alignment (consistency), while maintaining or even improving overall forecasting performance.

This represents a significant advancement over existing methods~\cite{model.autoformer,model.dlinear} that focus solely on minimizing the forecasting objective without considering prediction output alignment.

To quantify output alignment, which represents the consistency of a model’s predictions across lagged input sequences, we propose a new metric, Time Alignment Metric (TAM). TAM quantitatively assesses the model’s output alignment by measuring discrepancies between predictions for overlapping timestamps for multiple input sequences. To the best of our knowledge, the proposed TAM is the first metric designed to measure the output alignment.

We experiment with AliO on representative LTSF tasks, including ETT\{h1, h2, m1, m2\}, Electricity (ECL), Traffic, Weather, and ILI dataset~\cite{model.autoformer}, using various state-of-the-art LTSF models such as CycleNet~\cite{model.cyclenet}, GPT4S~\cite{model.gpt4ts}, iTransformer~\cite{model.itransformer}, PatchTST~\cite{model.patchtst}, TimesNet~\cite{model.timesnet}, DLinear~\cite{model.dlinear} and Autoformer~\cite{model.autoformer}. The evaluation results demonstrate that AliO effectively aligns predictions  over overlapping timestamps, i.e., improving TAM up to 76.0\%, while maintaining or enhancing forecasting regression accuracy up to 17.4\% measured in MSE. The detailed descriptions on additional datasets and experimental configurations are provided in \Cref{apdx:dataset_explanation,apdx:forecasting_models_and_configurations,apdx:full_results}.
\section{Motivation}
\label{sec:alignment}

A system with low prediction consistency leads to user distrust and increased costs. For instance, studies on the consistency of weather forecasts have reported cases where consistent forecasts lead to greater trust in the system, whereas inconsistent forecasts result in user distrust~\cite{motivation1,motivation6,motivation7}. Furthermore, accroding to research in non-weather domains, consumers tend to perceive consistency among multiple estimates from the same system as a signal of skill~\cite{motivation4,motivation1,motivation5}. The inconsistency in predictions necessitates replanning, which consumes unnecessary organizational time. The resulting resource reallocation leads to the investment of more resources, contrary toe th original goal of  Rolling forecasting, which aims for resource savings~\cite{motivation3,rolling_forecasting1}. This can be described as an additional sunk cost~\cite{motivation8}, as frequent fluctuations in predictions can induce irrational decision-making and result in lost opportunity costs~\cite{motivation8,motivation9}.

While modern Long-Term Times Series Forecasting (LTSF) models demonstrate high accuracy, research on the aforementioned consistency has been insufficient. Consequently, we examined the consistency of existing models through rolling forecasting on representative LTSF datasets. The left panel of \Cref{fig:base_vs_alico} shows the prediction results of DLinear~\cite{model.autoformer} on the ILI dataset~\cite{model.autoformer} using only regression loss (MSE), revealing a significant lack of consistency. In contrast, the right panel, where AliO is applied, shows that consistency is improved while accuracy is maintained. As can be seen in \Cref{apdx:full_results}, this is a phenomenon observed across other models and datasets.

\begin{figure}[t]
    \centering
    \includegraphics[width=0.495\columnwidth]{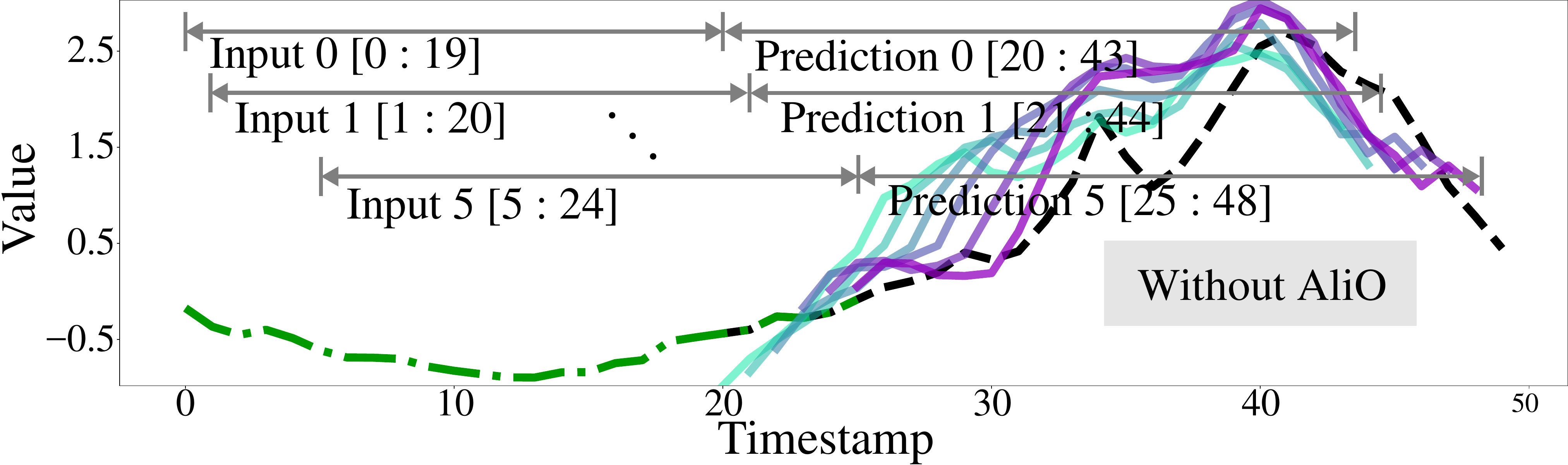}
    \includegraphics[width=0.495\columnwidth]{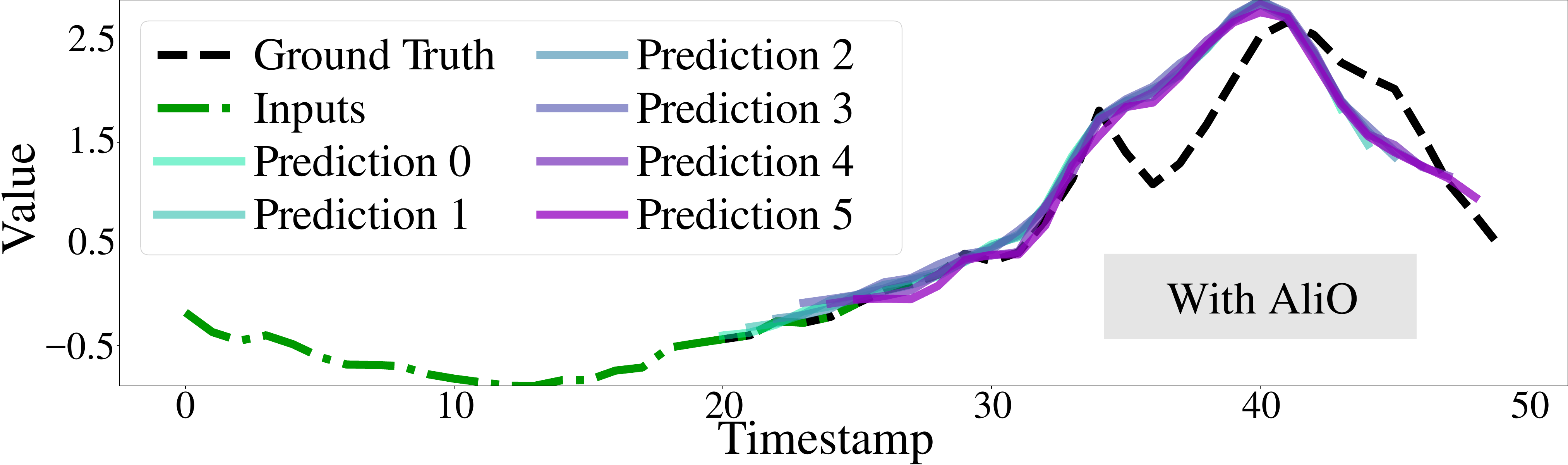}
    \caption{(\textbf{Left}) Six prediction outputs from the DLinear~\cite{model.dlinear} model trained with regression loss (MSE) only, forecasting a length-24 sequence for ILI data~\cite{model.autoformer}, where six input sequences (green dotted line) are lagged one timestamp apart. (\textbf{Right}) Six predicted sequences from DLinear trained with AliO, under the same conditions. The predicted sequences exhibit improved output alignment, showing a closer match to the ground truth for all six predictions, which significantly reduces the prediction shifts observed when using regression loss alone.}
    \label{fig:base_vs_alico}
\end{figure}

Although both consistency and accuracy are crucial for addressing users' psychological and economic concerns, our experiments revealed that while modern LTSF models achieve high accuracy, they fall short in terms of consistency. AliO, as a loss function used in conjunction with regression loss, aims to achieve two goals: maintaining the high accruacy of existing LTSF models while improving their prediction consistency. Furthermore, to quantify consistency, we propose the Time Alignment Metric (TAM). Regression metrics such as MSE, MAE, and DTW~\cite{dtw} represent accuracy, which is the distance between predictions and ground truth. Therefore, they have limitations in measuring consistency, which is a measure of performance between predictions. Complementing this, TAM measures the distance between predictions, serving as an indicator that represents consistency.

Through \Cref{sec:experiment} and \Cref{apdx:full_results}, we show that AliO successfully achieves the dual goals of maintaining/improveing the high accuracy of existing models and enhancing consistency. Additionally, in \Cref{fig:mse_improvement_vs_tam}, we experimentally verify the relationship between the inconsistency value (TAM) of existing models and the MSE improvement in models trained with AliO. To the best our knowledge, this is the first study to address the improvemnet of consistency in LTSF.

\section{Time alignment metric (TAM)}
\label{sec:tam}

We first define the concept of output alignment and then introduce Time Alignment Metric (TAM), which enables a quantitative assessment of output alignment.

\label{subsec:output_alignment}
\label{subsec:tam}

\begin{definition} 
(\textbf{\textit{Output Alignment}}) \textit{Output alignment} refers to the property wherein the prediction outputs of a model, derived from a set of lagged input sequences, exhibit consistent alignment, characterized by uniform patterns across overlapping timestamps in the prediction outputs.
\label{def:output_alignment}
\end{definition}
For instance, as illustrated in \Cref{fig:base_vs_alico}, the six prediction outputs for the overlapping timestamps (i.e., from 25 to 43) in the left figure demonstrate a
lower degree of output alignment. Conversely, the six prediction outputs in the right figure exhibit similar and consistent patterns, demonstrating a high degree of output alignment. To quantify the output alignment (\Cref{def:output_alignment}) in model predictions, we propose Time Alignment Metric (TAM). We begin by defining the necessary concepts, including lagged input and output sequences, and overlapping output sequences, as provided below.

\begin{definition} \textbf{\textit{(Input and Output Sequence)}}
    Given a time-series sequence $\MATRIX{X} \in \real^{c \times d}$, where $c$ is the input channel, and $d$ is the length of the sequence, respectively, the $n$-th \textit{input sequence} $\MATRIX{x}^{n} \in \real^{c \times d'}$ of the length $d' \leq d$ is defined as the segment $\MATRIX{X}_{s: s+d'-1}$, where $s$ is the starting timestamp, and $s+d'-1$ is the ending timestamp.
    \label{def:input_output}
\end{definition}
Then, taking $\MATRIX{x}^{n} \in \real^{c \times d'}$ as input, $f(\MATRIX{X}^{n};\VECTOR{\theta}) = \MATRIX{Y}^{n} \in \real^{c' \times h}$ is defined as the prediction \textit{output sequence} of the length $h$ provided by the forecasting model $f$, where $c'$ is the output channel, and $\VECTOR{\theta}$ is the model parameter. Having the input sequence $\MATRIX{x}^{n}$ and the corresponding output sequence $\MATRIX{Y}^{n}$ of the model $f$, the lagged input and output sequences, $\MATRIX{x}^{n+1}$ and $\MATRIX{Y}^{n+1}$, are derived by shifting their timestamps by the lag parameter $l$, as described below.
\begin{definition}
    \textbf{\textit{(Lagged Input and Output Sequence)}}
    Given the $n$-th input sequence $\MATRIX{x}^{n} \in \real^{c \times d'}$ of the segment $\MATRIX{X}_{s: s+d'-1}$, the sequence $\MATRIX{x}^{n+1} \in \real^{c \times d'}$ of the segment $\MATRIX{X}_{s+l:s+d'+l-1}$ is defined as the \textit{lagged input sequence} of $\MATRIX{x}^{n}$ with $l$ being the lag parameter.
    \label{def:lagged_intput_output}
\end{definition}
Then, given the lagged input sequence $\MATRIX{x}^{n+1} \in \real^{c \times d'}$, the prediction output of the length $h$, $f(\MATRIX{X}^{n+1};\VECTOR{\theta}) = \MATRIX{Y}^{n+1} \in \real^{c' \times h}$, is defined as the model $f$'s \textit{lagged output sequence}. From $\MATRIX{y}^{n}$ and $\MATRIX{Y}^{n+1}$, the overlapping output sequences $\MATRIX{P}^{n}$ and $\MATRIX{P}^{n+1}$ are derived to compute TAM, as follows.
\begin{definition}
    \textbf{\textit{(Overlapping Output Sequence)}}
    Given two output sequences $\MATRIX{y}^{n} \in \real^{c' \times h}$ and $\MATRIX{Y}^{n+1} \in \real^{c' \times h}$ of the model $f$ generated from $\MATRIX{x}^{n}$ and $\MATRIX{x}^{n+1}$ as input, respectively, $\MATRIX{P}^{n} \in \real^{c' \times h'}$ and $\MATRIX{P}^{n+1} \in \real^{c' \times h'}$ are defined as the \textit{overlapping output sequences} between $\MATRIX{y}^{n}$ and $\MATRIX{y}^{n+1}$, whose segments are given by $\MATRIX{y}^{n}_{1+l: h}$ and $\MATRIX{y}^{n+1}_{1:h-l}$, respectively, where $h'=h-l$ is the overlapping length.    \label{def:overlapping_output}
\end{definition}

From these, Time Alignment Metric (TAM) is defined to quantify output alignment by evaluating the consistency between overlapping output sequences $\MATRIX{P}^{n}$ and $\MATRIX{P}^{n+1}$, as follows.

\begin{definition}
    \textbf{\textit{(TAM; Time Alignment Metric)}}
    Given $N$ overlapping output sequences $\MATRIX{P}^{n} \in \mathbb{R}^{c' \times h'}$ for $n = 1, 2, \dots, N$, where the input and output sequences, $\MATRIX{x}^{n+1}$ and $\MATRIX{y}^{n+1}$, are offset from $\MATRIX{x}^{n}$ and $\MATRIX{y}^{n}$ by the lag $l$, $TAM_N$ (\textit{Time Alignment Metric}) is calculated as the average of the MAE values between all pairs of $\{ \MATRIX{P}^{n},\MATRIX{P}^{m} \}$, which is given by:
    \label{def:TAM}
    \begin{equation}
        TAM_N \triangleq \dfrac{(N-1)N}{2}\sum_{n=1}^{N-1}\sum_{m=n+1}^{N}\dfrac{\lvert \MATRIX{p}^n -  \MATRIX{p}^m \rvert_1}{h'}
        \label{eq:tam}
    \end{equation}
\end{definition}

TAM offers a distinct advantage over traditional forecasting regression metrics, such as the MSE or MAE between model outputs and ground truth, which only measure forecasting performance and fail to capture prediction fluctuations across different predictions. Unlike these metrics, TAM evaluates the distances between overlapping predictions, allowing for an assessment of how predictions evolve over time. For instance, regression metrics do not determine whether overlapping predictions remain consistent or exhibit smooth transitions, making it challenging to assess data-model robustness. In contrast, TAM computes the distance between overlapping predictions, providing a robust measure of data-model consistency. A lower TAM value reflects improved alignment and data-model robustness.

\section{Improving output alignment (AliO)}
\label{sec:algorithm}

To enhance the output alignment of LTSF models, we propose AliO (Align Outputs), which minimizes the discrepancy between overlapping predictions (\Cref{def:overlapping_output}). AliO is designed to simultaneously achieve two key objectives: (1) improving output alignment by reducing discrepancies between the overlapping predictions, and (2) maintaining the model's forecasting regression performance by aligning the model's prediction outputs with the ground truths. The two objectives are achieved through the technique that we call regression pulling with the stop-gradient operation, which ensures that the forecasting regression loss remains unaffected while improving output alignment simultaneously.

Given two overlapping predictions, $\MATRIX{P}^{n}= f(\MATRIX{x}^{n};\VECTOR{\theta})_{1+l:h}$ and $\MATRIX{P}^{n+1} = f(\MATRIX{x}^{n+1};\VECTOR{\theta})_{1:h-l}$ defined in \Cref{def:overlapping_output}, AliO aligns these predictions such that $\MATRIX{P}^{n} \simeq \MATRIX{P}^{n+1}$ for $n = 1, 2, \dots, N$ by minimizing the following objective with respect to the model $f$'s parameter $\VECTOR{\theta}$, as:
\begin{align}
    \min_{\VECTOR{\theta}} D(f(\MATRIX{x}^{n};\VECTOR{\theta})_{1+l:h}, f(\MATRIX{x}^{n+1};\VECTOR{\theta})_{1:h-l})
    \label{eq:summary}
\end{align}
where $D$ denotes a distance function and subscripts denote the segment ranges of the model $f$'s prediction output sequences, i.e., $1+l:h$ and $1:h-l$ correspond to the same timestamps for $\MATRIX{X}^{n}$ and $\MATRIX{X}^{n+1}$, respectively. \Cref{eq:summary} can be easily extended to handle non-consecutive overlapping output sequences, enabling its application to TAM (\Cref{def:TAM}). AliO aligns the overlapping predictions $\MATRIX{P}^{n}$ and $\MATRIX{P}^{n+1}$ in both time and frequency domains, as illustrated in \Cref{fig:algorithm_overview} with further details provided in the following subsections. The algorithmic procedure of AliO is summarized in \Cref{alg:alico}.

\begin{figure*}[t]
    \centering
    \begin{minipage}{0.48\textwidth}
        \centering
        \includegraphics[width=\textwidth]{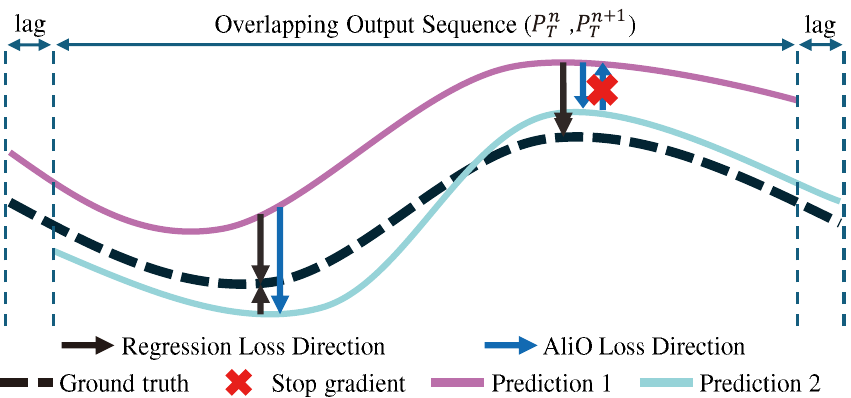}
    \end{minipage}
    \hfill
    \begin{minipage}{0.46\textwidth}
        \centering
        \includegraphics[width=\textwidth]{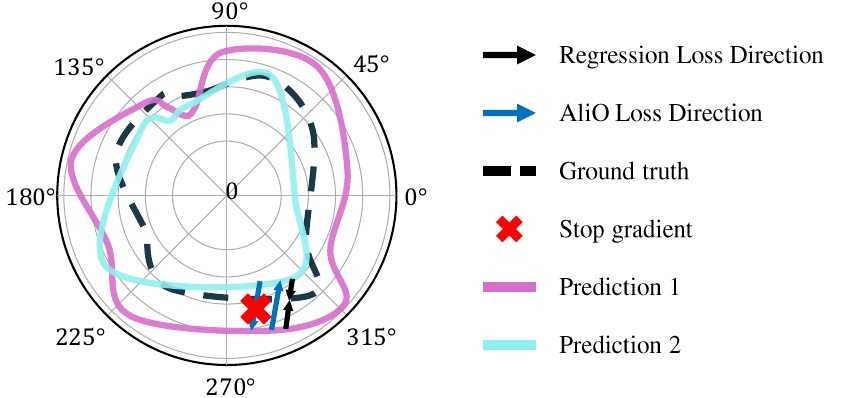}
    \end{minipage}
    \caption{\textbf{Output alignments in both time and frequency domains.} \textbf{(Left)} In the time domain, AliO minimizes the difference between two predictions in the overlapping timestamps, i.e., $\MATRIX{P}^n_T$ and $\MATRIX{P}^{n+1}_T$ ($T$ means time domain). To preserve the regression loss direction while improving the output alignment, the regression pulling (\Cref{sec:regpull}) is applied through the stop-gradient operation to each time point, ensuring that the overall loss is calculated in the direction towards the ground truth. \textbf{(Right)} In the frequency domain, AliO aligns both the phase and amplitude components of $\MATRIX{P}^n_F$ and $\MATRIX{P}^{n+1}_F$ (the transformed frequency domain of $\MATRIX{P}^n_T$ and $\MATRIX{P}^{n+1}_T$), applied with the stop-gradient operation. The figure is represented in polar coordinates, where the angle indicates the phase, and the distance from the center point corresponds to the amplitude}.
    
    \label{fig:algorithm_overview}
\end{figure*}

\subsection{Time domain alignment}

To enhance the temporal alignment, AliO aligns the overlapping predictions, $\MATRIX{P}^{n}_{T}$ and $\MATRIX{P}^{m}_{T}$ for $n, m = 1, 2, \ldots, N$, in the time domain, where the subscript $T$ denotes the time domain. This alignment can be achieved using a distance function, $D_T$, such as MSE or Dynamic Time Warping (DTW)~\cite{dtw}. The computed distance is back-propagated to encourage the model $f$ to produce overlapping predictions that align with each other. However, solely aligning the overlapping predictions by minimizing the time domain alignment loss $\mathcal{L}_{T} = D_T(\MATRIX{P}^{n}_{T}, \MATRIX{P}^{m}_{T})$ may lead to both prediction outputs deviating from the ground truth, degrading the forecasting regression performance of the model $f$.

\begin{algorithm}[thb!]
    \caption{\small{The procedure of AliO. $\odot$ denotes element-wise multiplication. $FFT(\cdot, \cdot, \cdot)$ returns the frequency domain representation of each signal sequences, $sg(\cdot)$ is stop-gradient operator.}}
    \label{alg:alico}
    \begin{algorithmic}
        \STATE {\bfseries Input:} The number of predictions $N$, consecutive predictions $\MATRIX{Y}^n$ with at time lag of $l$, and their ground truth $\MATRIX{\hat{Y}}^n$, where $n\in[1,N]$, and distance function $D_T$ and $D_F$ for the time and frequency domain, respectively. The subscript $T$ and $F$ denotes the time and frequency domain, respectively.
        \STATE {\bfseries Output:} The time domain alignment loss $\mathcal{L}_{T}$ and frequency domain alignment loss $\mathcal{L}_{F}$.
        \STATE Initialize $\mathcal{L}_{t} = 0, \mathcal{L}_{f} = 0, count = 0$
        \small
        \FOR{$n=1$ {\bfseries to} $N - 1$}
            \FOR{$m=n+1$ {\bfseries to} $N$}   
                \STATE $gap = \lvert m - n \rvert \times l$
                \STATE $(\MATRIX{P}^n_T, \MATRIX{P}^m_T, \MATRIX{GT}_T) = (\MATRIX{Y}^n_{gap:}, \MATRIX{Y}^m_{:-gap}, \MATRIX{\hat{Y}}^n_{gap:})$
                \STATE $idx_T = \text{Index}(\lvert \MATRIX{P}^n_T - \MATRIX{GT}_T \rvert > \lvert \MATRIX{P}^m_T - \MATRIX{GT}_T \rvert)$\hfill (1)
                \STATE $\mathcal{L}_{t} = \mathcal{L}_{t} + D_T(\MATRIX{P}^n_{T, i \in idx_T}, sg(\MATRIX{P}^m_{T, i \in idx_T}))  + D_T(sg(\MATRIX{P}^n_{T, i \notin idx_T}), \MATRIX{P}^m_{T, i \notin idx_T})$\hfill [RegPull] \hfill(2)
                \STATE $(\MATRIX{P}^n_F, \MATRIX{P}^m_F, \MATRIX{GT}_F) = FFT(\MATRIX{P}^n_T, \MATRIX{P}^m_T, \MATRIX{GT}_T)$ \hfill (3)
                \STATE $idx_F = \text{Index}(\lvert \MATRIX{P}^n_F - \MATRIX{GT}_F \rvert > \lvert \MATRIX{P}^m_F - \MATRIX{GT}_F \rvert)$ \hfill (4)
                \STATE $\mathcal{L}_{f} = \mathcal{L}_{f} + D_F(\MATRIX{P}^n_{F, i \in idx_F}, sg(\MATRIX{P}^m_{F, i \in idx_F})) + D_F(sg(\MATRIX{P}^n_{F, i \notin idx_F}), \MATRIX{P}^m_{F, i \notin idx_F})$ \hfill [RegPull] \hfill(5)
                \STATE $count = count + 1$
            \ENDFOR
        \ENDFOR
        \STATE $(\mathcal{L}_{T}, \mathcal{L}_{F}) = (\mathcal{L}_{t}/count, \mathcal{L}_{f}/count)$
        \STATE {\bfseries Return:} $\mathcal{L}_{T}$ and $\mathcal{L}_{F}$
    \end{algorithmic}
 \end{algorithm}
 

\subsection{Regression pulling (RegPull)}
\label{sec:regpull}
To maintain forecasting regression performance while aligning predictions, we propose \textit{Regression Pulling (RegPull)}, which identifies which prediction output points of $\MATRIX{P}^{n}_{T}$ and $\MATRIX{P}^{m}_{T}$ are relatively further from the ground truth than the other at the same timestamp, indicated by the index variable $idx_T$, as shown on line (1) in \Cref{alg:alico}. By applying the stop-gradient operation, denoted as $sg(\cdot)$, to the prediction output points farther from the ground truth, the time domain alignment loss $\mathcal{L}_{T}$ pulls these distant points closer to the ground truth, aligning its optimization direction with that of the forecasting regression loss and reinforcing to minimize the regression loss.
Consequently, regression pulling effectively reduces the misalignment between prediction outputs with the forecasting regression performance being unaffected, inducing overall predictions closely aligned with the ground truths.

For each prediction output point $i \in [1, h']$ in \Cref{alg:alico}, $D_T(\MATRIX{P}^{n}_{T, i \in idx_T}, sg(\MATRIX{P}^{m}_{T, i \in idx_T}))$ on line (2) encourages the prediction $\MATRIX{P}^{n}_T$ to move closer to the prediction $\MATRIX{P}^{m}_T$. On the other hand, $D_T(sg(\MATRIX{P}^{n}_{T, i \notin idx_T}), \MATRIX{P}^{m}_{T, i \notin idx_T})$ on the same line (2) promotes the alignment of $\MATRIX{P}^{m}_T$ towards $\MATRIX{P}^{n}_T$. The direction of alignment is determined by the index variable, $idx_T$. These two distance values are combined into the time domain alignment loss $\mathcal{L}_T$, which is expressed as:
\begin{equation}
    \mathcal{L}_T = D_T (\MATRIX{P}^{n}_{T, i \in idx_T}, sg(\MATRIX{P}^{m}_{T, i \in idx_T})) + D_T(sg(\MATRIX{P}^{n}_{T, i \notin idx_T}), \MATRIX{P}^{m}_{T, i \notin idx_T})
    \label{eq:time_domain_loss}
\end{equation}
Consequently, \Cref{eq:time_domain_loss} aligns the direction of the time domain alignment loss $\mathcal{L}_T$ with the forecasting regression loss while improving output alignment. The regression pulling can also be applied in the same way to the frequency domain; \Cref{fig:algorithm_overview} shows a visual illustration of output alignment in the time (left) and frequency domain (right) using regression pulling (red cross marks).

\subsection{Frequency domain alignment}
In addition to the time domain alignment, we propose frequency domain alignment as a complementary approach that supports and enhances time-domain alignment by aligning overlapping predictions in the frequency representation. It promotes the alignment of both the phase and the amplitude components, improving overall consistency in the time domain.

As shown in \Cref{alg:alico}, the time domain overlapping predictions, $\MATRIX{P}^{n}_{T}$ and $\MATRIX{P}^{m}_{T}$, along with the ground truths $\MATRIX{GT}_T$, are first transformed into frequency domain representations, resulting in $\MATRIX{P}^n_F$, $\MATRIX{P}^m_F$, and $\MATRIX{GT}_F$ on line (3), where the subscript $F$ denotes the frequency domain. Subsequently, the index variable $idx_F$ is determined by identifying output points further from $\MATRIX{GT}_F$ between $\MATRIX{P}^n_F$ and $\MATRIX{P}^m_F$ on line (4). The distance function $D_F$ is then applied with regression pulling to facilitate alignment in the frequency domain on lines (5) in the same manner to the time domain. From this, the frequency domain alignment loss $\mathcal{L}_F$ is obtained as:
\begin{equation}
    \mathcal{L}_F = D_F (\MATRIX{P}^n_{F, i \in idx_F}, sg(\MATRIX{P}^m_{F, i \in idx_F})) + D_F(sg(\MATRIX{P}^n_{F, i \notin idx_F}), \MATRIX{P}^m_{F, i \notin idx_F})
    \label{eq:frequency_domain_loss}
\end{equation}
We use MSE (mean squared error) as the main distance function for the frequency domain alignment, i.e., $D_F = \lVert \MATRIX{P}^n_F - \MATRIX{P}^m_F \rVert_2^2 / h'$. The following \Cref{thm:frequency-domain} demonstrates that applying  MSE in the frequency domain facilitates phase alignment between prediction outputs.
\begin{theorem}
    \label{thm:frequency-domain}
    Given two frequency domain prediction vectors, $\VECTOR{P}^n_F$ and $\VECTOR{P}^m_F$ in $\mathbb{C}^{h'}$, where $\mathbb{C}$ means the set of complex numbers, minimizing their MSE, $\lVert \VECTOR{P}^n_F - \VECTOR{P}^m_F \rVert_2^2$, results in a reduction of the difference in the phase components, $\angle{\VECTOR{p}^n_F}$ and $\angle{\VECTOR{p}^m_F}$, as shown below (the proof is provided in \Cref{prof:eq_alico_frequency}):
    \begin{equation}
        \frac{1}{h'}\lVert \VECTOR{P}^n_F - \VECTOR{P}^m_F \rVert_2^2 \to 0 \implies \lvert \angle{\VECTOR{P}^n_F} - \angle{\VECTOR{P}^m_F} \rvert \to 0
    \label{eq:frequency-domain}
    \end{equation}
\end{theorem}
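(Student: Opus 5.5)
The plan is to reduce the vector statement to a coordinatewise statement about complex numbers, and then control the phase difference through the law of cosines.

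\textbf{Step 1 (reduction to coordinates).} Write $\VECTOR{P}^n_F=(a_k)_{k=1}^{h'}$ and $\VECTOR{P}^m_F=(b_k)_{k=1}^{h'}$. Since
\begin{equation*}
\frac{1}{h'}\lVert \VECTOR{P}^n_F-\VECTOR{P}^m_F\rVert_2^2=\frac{1}{h'}\sum_{k}\lvert a_k-b_k\rvert^2 ,
\end{equation*}
a sum of nonnegative terms, the left-hand side of \Cref{eq:frequency-domain} tends to $0$ if and only if $\lvert a_k-b_k\rvert\to 0$ for every $k$. It therefore suffices to show, for a single frequency bin, that $\lvert a-b\rvert\to0$ forces $\lvert\angle a-\angle b\rvert\to 0$.

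\textbf{Step 2 (polar form and law of cosines).} Write $a=r_ae^{i\phi_a}$ and $b=r_be^{i\phi_b}$. Then
\begin{equation*}
\lvert a-b\rvert^2=r_a^2+r_b^2-2r_ar_b\cos(\phi_a-\phi_b)=(r_a-r_b)^2+2r_ar_b\bigl(1-\cos(\phi_a-\phi_b)\bigr).
\end{equation*}
Both terms on the right are nonnegative, so $\lvert a-b\rvert\to0$ gives two conclusions:
\begin{itemize}
\item $(r_a-r_b)^2\to0$, so the amplitudes align;
\item $r_ar_b\,(1-\cos(\phi_a-\phi_b))\to0$.
\end{itemize}
If the amplitude of the bin stays bounded away from zero, say $r_a\ge\varepsilon>0$, then eventually $r_b\ge\varepsilon/2$. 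Hence $1-\cos(\phi_a-\phi_b)\le \lvert a-b\rvert^2/\varepsilon^2\to0$. With phases taken as representatives modulo $2\pi$ in $(-\pi,\pi]$, this yields $\lvert\phi_a-\phi_b\rvert\to0$, using $1-\cos\theta\ge 2\theta^2/\pi^2$ on $[-\pi,\pi]$. This gives the quantitative bound
\begin{equation*}
\lvert\phi_a-\phi_b\rvert\le \frac{\pi\,\lvert a-b\rvert}{\sqrt{2\,r_ar_b}} .
\end{equation*}

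\textbf{Main obstacle.} Phase is undefined at zero and discontinuous near it. If both $a,b\to0$, the MSE vanishes without any phase constraint; for example, $a=\delta$ and $b=-\delta$ give $\lvert a-b\rvert=2\delta\to 0$ while the phases differ by $\pi$. I would handle this with two devices:
\begin{itemize}
\item state the result for frequency components with nonvanishing amplitude, i.e.\ assume $\min(r_a,r_b)$ is bounded below, and note that vanishing bins contribute nothing to the signal;
\item interpret the phase difference modulo $2\pi$, i.e.\ as the angular distance $\min_{j}\lvert\phi_a-\phi_b+2\pi j\rvert$, to avoid the branch cut at $\pm\pi$.
\end{itemize}

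Under these conventions, the coordinatewise bound of Step 2 applied to each $k$ gives $\max_k\lvert\angle a_k-\angle b_k\rvert\to0$, which is the claim of \Cref{thm:frequency-domain}.
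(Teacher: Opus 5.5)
Your argument is correct and follows the same route as the paper's proof: reduce to a single frequency bin and use the polar-form identity $\lvert a-b\rvert^2=r_a^2+r_b^2-2r_ar_b\cos(\phi_a-\phi_b)$ to force $\cos(\phi_a-\phi_b)\to1$. Your version is more careful than the paper's. The paper draws that conclusion with no lower bound on the amplitudes, only gesturing at the stop-gradient holding one prediction fixed, which helps only if that fixed value is nonzero, and it ignores the $2\pi$ wrap-around. Your nonvanishing-amplitude hypothesis and angular-distance convention are exactly what is needed to make the stated implication true.
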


\Cref{fig:algorithm_overview} (Right) shows a polar-coordinate depiction of frequency alignment conducted through \Cref{eq:frequency-domain} in the frequency domain. \Cref{thm:frequency-domain-amplitude} shows that minimizing MSE, i.e., $D_F = \lVert \MATRIX{P}^n_F - \MATRIX{P}^m_F \rVert_2^2$, also reduces the differences in amplitude once the phase components $\angle{\MATRIX{p}^n_F}$ and $\angle{\MATRIX{p}^m_F}$ have been aligned.

\begin{theorem}
    \label{thm:frequency-domain-amplitude}
    Once the phases of two frequency domain prediction vectors, $\VECTOR{P}^n_F$ and $\VECTOR{P}^m_F$ in $\mathbb{C}^{h'}$, are aligned, i.e., $\angle \VECTOR{P}^n_F\simeq\angle\VECTOR{P}^m_F$, the minimization of MSE between the two prediction outputs, $\lVert \VECTOR{P}^n_F - \VECTOR{P}^m_F \rVert_2^2$, leads to a reduction in the amplitude difference, as shown below (the proof is provided in \Cref{prof:eq_alico_frequency_amplitude}):
    \begin{equation}
        \frac{1}{h'}\lVert \VECTOR{p}^n_F - \VECTOR{P}^m_F \rVert_2^2 \to 0 \implies \lvert \VECTOR{P}^n_F \rvert - \lvert \VECTOR{p}^m_F \rvert \to 0
    \end{equation}
\end{theorem}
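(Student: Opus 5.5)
We argue coordinatewise, writing each entry in polar form and expanding the squared modulus of the difference. For each frequency index $k \in \{1,\dots,h'\}$, write $\VECTOR{P}^n_{F,k} = r_k e^{i\alpha_k}$ and $\VECTOR{P}^m_{F,k} = s_k e^{i\beta_k}$, with amplitudes $r_k, s_k \ge 0$. The law of cosines in the complex plane gives
\begin{equation*}
\lvert \VECTOR{P}^n_{F,k} - \VECTOR{P}^m_{F,k} \rvert^2 = r_k^2 + s_k^2 - 2 r_k s_k \cos(\alpha_k - \beta_k) = (r_k - s_k)^2 + 2 r_k s_k \bigl(1 - \cos(\alpha_k - \beta_k)\bigr).
\end{equation*}
This decomposition is the heart of the argument. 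It splits the squared error into a pure amplitude term $(r_k - s_k)^2$ and a phase term. Because $r_k s_k \ge 0$ and $1 - \cos(\cdot) \ge 0$, the phase term is nonnegative.

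We first use the phase-alignment hypothesis $\angle \VECTOR{P}^n_F \simeq \angle \VECTOR{P}^m_F$, that is, $\alpha_k - \beta_k \to 0$ for every $k$. Then $\cos(\alpha_k - \beta_k) \to 1$, and the phase term vanishes in the idealized case of exact alignment, so that $\lvert \VECTOR{P}^n_{F,k} - \VECTOR{P}^m_{F,k} \rvert = \lvert r_k - s_k \rvert$ coordinatewise. Summing over $k$ and dividing by $h'$ shows that the MSE reduces to the mean squared amplitude difference $\frac{1}{h'}\sum_k (r_k - s_k)^2$. Consequently, driving the MSE to zero drives each $(r_k - s_k)^2$ to zero, because each is a nonnegative summand of a sum that tends to zero. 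This gives $\lvert \VECTOR{P}^n_F \rvert - \lvert \VECTOR{P}^m_F \rvert \to 0$ entrywise.

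In the non-idealized case $\alpha_k \ne \beta_k$ no hypothesis is needed beyond nonnegativity of the phase term. The identity above gives $(r_k - s_k)^2 \le \lvert \VECTOR{P}^n_{F,k} - \VECTOR{P}^m_{F,k} \rvert^2$; this is just the reverse triangle inequality $\bigl\lvert \lvert a \rvert - \lvert b \rvert \bigr\rvert \le \lvert a - b \rvert$. Summing over $k$ yields
\begin{equation*}
\frac{1}{h'} \sum_{k=1}^{h'} \bigl(\lvert \VECTOR{P}^n_{F,k} \rvert - \lvert \VECTOR{P}^m_{F,k} \rvert\bigr)^2 \le \frac{1}{h'} \lVert \VECTOR{P}^n_F - \VECTOR{P}^m_F \rVert_2^2 ,
\end{equation*}
and the implication in the statement follows by squeezing. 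We present the phase-aligned identity as the main line of argument, since it matches the theorem's framing: once the phase term is removed, the MSE is \emph{exactly} the amplitude discrepancy. We record the inequality as a remark showing that the hypothesis only sharpens a bound that already holds.

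The only delicate point is interpreting ``$\simeq$'' and ``$\to$'' rigorously. The plan is to treat both as limits along a sequence of parameter values, with $h'$ fixed. Under that reading, finiteness of $h'$ makes the passage from the averaged norm to individual coordinates immediate. Undefined phases at zero amplitudes cause no trouble, since such coordinates contribute only through $r_k s_k = 0$.
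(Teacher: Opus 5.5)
Your proof is correct and follows the same route as the paper: you expand $\lvert P^n_{F,k}-P^m_{F,k}\rvert^2$ by the law of cosines and set the cosine to $1$ under phase alignment, which leaves a perfect square in the amplitudes, so the averaged error tending to zero forces each amplitude gap to zero. Your factorization $(r_k-s_k)^2$ is the right one (the paper's displayed $(\lvert p^1_{F,i}\rvert^2-\lvert p^2_{F,i}\rvert^2)^2$ is a slip), and your reverse-triangle-inequality remark goes beyond the paper by showing the implication holds without the phase hypothesis and without the paper's appeal to stop-gradient.
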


\subsection{AliO loss}

By combining the time  $\mathcal{L}_T$ in \Cref{eq:time_domain_loss} and frequency domain alignment loss $\mathcal{L}_F$ in \Cref{eq:frequency_domain_loss}, we derive the AliO loss, $\mathcal{L}_{AliO}$, which is incorporated with the forecasting regression loss, $\mathcal{L}_{reg}$ as shown below, where $\lambda_T$ and $\lambda_F$ control the extent of alignment in the time and frequency domain, respectively.
\begin{align}
    \mathcal{L}_{AliO} = \lambda_T \mathcal{L}_T + \lambda_F \mathcal{L}_F \quad\quad\quad\therefore\mathcal{L}_{total} = \mathcal{L}_{reg} + \mathcal{L}_{AliO}
    \label{eq:loss}
\end{align}

\section{Experiment}
\label{sec:experiment}

We evaluate AliO on representative LTSF models and datasets. In model training, MSE is employed as the regression loss. The model performance is assessed using both MSE and $TAM_N$ for output alignment evaluation, with $N=2$. To reproduce experimental results of baselines and ensure a fair evaluation, we follow the same hyper-parameters presented in each model papers~\cite{model.autoformer,model.cyclenet,model.dlinear,model.gpt4ts,model.itransformer,model.patchtst,model.timesnet}. For AliO, we set $N=2$ and $l=1$, covering a wide prediction range. The search space for $\lambda_T$ and $\lambda_F$ are $\{1.0, 2.0, 5.0\}$ and $\{0.0, 0.5, 1.0, 2.0\}$, respectively. For each setting, we select the combination with the lowest validation MSE and report its test results. All results are reported as the average performance across all prediction lengths. Detailed descriptions of the experiment and results are provided in \Cref{apdx:dataset_explanation,apdx:forecasting_models_and_configurations,apdx:full_results}.

\textbf{Models.} We apply AliO to various LTSF architectures, experimenting it with a diverse range of LTSF approaches, categorized into four groups: (1) Transformer-based models, i.e., Autoformer~\cite{model.autoformer}, PatchTST~\cite{model.patchtst}, and iTransformer~\cite{model.itransformer}, (2) Linear-based models, i.e., DLinear~\cite{model.dlinear} and CycleNet~\cite{model.cyclenet}, (3) CNN-based models, i.e.,  TimesNet~\cite{model.timesnet}, and (4) LLM-based models, i.e.,  GPT4TS~\cite{model.gpt4ts}. 

\textbf{Datasets.} On the main text of the paper, we report results on representative LTSF datasets, i.e., Electricity (ECL)~\cite{dataset.electric}, ETT \{h1, h2, m1, and m2\}~\cite{dataset.ETT}, Traffic,~\cite{model.autoformer}, Weather~\cite{model.autoformer}, and National-Illness (ILI) dataset~\cite{model.autoformer}. The experiments results on full datasets can be found in \Cref{apdx:full_results}.

\textbf{Context Length} Following the official settings for each model, we set the context length to $336$ for PatchTST and DLinear, and $96$ for the other models. For the ILI dataset specifically, the context lengths were set to $104$ for PatchTST and DLinear, and $36$ for TimesNet.

\textbf{Prediction Length} Furthermore, as per the official settings, the prediction lengths were set to $\{96, 192, 336, 720\}$ for most datasets. The exceptions were the PEMS-related datasets with $\{12, 24, 48, 96\}$, the ILI dataset with $\{24, 36, 48, 60\}$, and the Autoformer model on the ETT datasets, for which we used $\{24, 48, 168, 336, 720\}$.

\subsection{Output alignment and forecasting performance}
\Cref{tab:main_results} summarizes the performance of LTSF models on seven datasets (excluding ILI) for multivariate LTSF tasks, comparing baselines (without AliO) and AliO-integrated models in terms of MSE and TAM. As shown in the table, integrating AliO with the regression loss substantially improves alignment performance (TAM), achieving gains of up to 70.5\% for CycleNet, 45.8\% for GPT4TS, 45.8\% for iTransformer, 36.8\% for PatchTST, 45.6\% for TimesNet, 76.0\% for DLinear, and 39.0\% for Autoformer. Simultaneously, forecasting performance (MSE) also improves by up to 9.1\%. As summarized in \Cref{tab:main_ili_results}, for the ILI dataset exhibiting insufficient initial output alignment, AliO achieves up to a 17.4\% higher MSE improvement compared to other benchmarks.

\begin{table*}[!htb]
    \caption{
    The forecasting and alignment performance, measured by MSE and TAM$_2$, respectively, are compared between the baseline (MSE only) and AliO, with best results indicated in \best{bold}. The results are averaged over three random seeds and prediction lengths. The more metrics (MAE, MAPE, and RMSE) with standard deviations are provided in \Cref{apdx:full_results}.}
    \label{tab:main_results}
    \centering
    \setlength{\tabcolsep}{3pt}
    \resizebox{\textwidth}{!}{
    \begin{tabular}{ccccccccccccccccc}
        \toprule
        \multicolumn{2}{c}{Models}	 & \multicolumn{2}{c}{CycleNet} & \multicolumn{2}{c}{GPT4TS} & \multicolumn{2}{c}{iTransformer} & \multicolumn{2}{c}{PatchTST} & \multicolumn{2}{c}{TimesNet} & \multicolumn{2}{c}{DLinear} & \multicolumn{2}{c}{Autoformer}  \\
    
        \midrule
        \multicolumn{2}{c}{Metric}	& MSE & TAM$_2$ & MSE & TAM$_2$ & MSE & TAM$_2$ & MSE & TAM$_2$ & MSE & TAM$_2$ & MSE  & TAM$_2$ & MSE & TAM$_2$  \\
        \hline
        \hline
        \multirow{2}{*}{ECL}&	baseline & 0.171 & 0.016 & 0.167 & 0.036 & 0.176 &0.050 & 0.162 & 0.037 & 0.196 & 0.040 & 0.166 & 0.013 & 0.229 & 0.041 \\ 
		&	AliO	& \best{0.170} & \best{0.014} & \best{0.167} & \best{0.025} & \best{0.172} & \best{0.031} & \best{0.161} & \best{0.024} & \best{0.191} & \best{0.028} & \best{0.166} & \best{0.010} & \best{0.221} & \best{0.036} \\
		\hline
		\multirow{2}{*}{ETTh1}&	baseline & 0.432 & 0.044 & 0.424 & 0.056 & 0.455 & 0.088 & 0.418 & 0.076 & 0.476 & 0.05 & 0.422 & 0.025 & 0.477 & 0.05 \\ 
		&	AliO	& \best{0.429} & \best{0.013} & \best{0.417} & \best{0.039} & \best{0.438} & \best{0.052} & \best{0.415} & \best{0.048} & \best{0.468} & \best{0.032} & \best{0.419} & \best{0.014} & \best{0.444} & \best{0.046} \\
		\hline
		\multirow{2}{*}{ETTh2}&	baseline & 0.385 & 0.047 & 0.363 & 0.048 & 0.382 & 0.074 & 0.341 & 0.061 & 0.416 & 0.079 & 0.449 & 0.054 & 0.401 & 0.041 \\ 
		&	AliO	& \best{0.381} & \best{0.025} & \best{0.354} & \best{0.033} & \best{0.377} & \best{0.052} & \best{0.341} & \best{0.050} & \best{0.402} & \best{0.043} & \best{0.446} & \best{0.033} & \best{0.377} & \best{0.025} \\
		\hline
		\multirow{2}{*}{ETTm1}&	baseline & 0.386 & 0.032 & 0.351 & 0.044 & 0.407 & 0.060 & 0.352 & 0.050 & 0.412 & 0.045 & 0.358 & 0.024 & 0.488 & 0.070 \\ 
		&	AliO	& \best{0.383} & \best{0.014} & \best{0.346} & \best{0.031} & \best{0.396} & \best{0.033} & \best{0.349} & \best{0.032} & \best{0.398} & \best{0.026} & \best{0.354} & \best{0.010} & \best{0.448} & \best{0.053} \\
		\hline
		\multirow{2}{*}{ETTm2}&	baseline & 0.272 & 0.027 & 0.270 & 0.042 & 0.292 & 0.048 & 0.257 & 0.059 & 0.296 & 0.036 & 0.289 & 0.037 & 0.271 & 0.052 \\ 
		&	AliO	& \best{0.271} & \best{0.019} & \best{0.265} & \best{0.028} & \best{0.289} & \best{0.026} & \best{0.255} & \best{0.039} & \best{0.295} & \best{0.022} & \best{0.268} & \best{0.020} & \best{0.254} & \best{0.032} \\
		\hline
		\multirow{2}{*}{Traffic}&	baseline & 0.487 & 0.021 & 0.411 & 0.059 & \best{0.422} & 0.061 & 0.389 & 0.041 & 0.626 & 0.040 & \best{0.436} & 0.025 & 0.634 & 0.048 \\ 
		&	AliO	& \best{0.481} & \best{0.008} & \best{0.405} & \best{0.032} & 0.423 & \best{0.044} & \best{0.388} & \best{0.027} & \best{0.576} & \best{0.028} & 0.436 & \best{0.006} & \best{0.624} & \best{0.045} \\
		\hline
		\multirow{2}{*}{Weather}&	baseline & \best{0.255} & 0.010 & 0.227 & 0.024 & 0.260 & 0.024 & 0.229 & 0.018 & 0.262 & 0.018 & 0.245 & 0.007 & 0.342 & 0.056 \\ 
		&	AliO	& 0.255 & \best{0.008} & \best{0.225} & \best{0.011} & \best{0.259} & \best{0.017} & \best{0.228} & \best{0.012} & \best{0.262} & \best{0.014} & \best{0.244} & \best{0.005} & \best{0.311} & \best{0.042} \\
		\hline

    \end{tabular}
    }
\end{table*}

\subsection{How AliO maintains or improves regression performance}
\cref{tab:main_ili_results} shows AliO's effectiveness in improving both forecasting accuracy and alignment performance on the ILI dataset~\cite{model.autoformer}, which exhibits poor output alignment in the absence of AliO. The proposed regression pulling enhances model training by analyzing prediction distances at each timestamp, adaptively strengthening the effect of regression loss to align predictions with each other and ground truth. On ILI, which exhibits poor initial alignment, AliO achieves 17.4\% MSE and 58.2\% TAM improvement. Conversely, datasets with better initial alignment, presented in \Cref{tab:main_results}, achieve comparatively smaller MSE improvements. \Cref{fig:mse_improvement_vs_tam,fig:mse_improvement_vs_tam2} shows this trend; higher initial TAM (poor alignment) correlates with greater MSE gains, demonstrating AliO's efficacy in suboptimal alignment conditions.

\noindent
\begin{minipage}[t]{0.4\textwidth}
    \Cref{fig:base_vs_alico,fig:ill_phase} visualize DLinear predictions on the ILI dataset. The left figures (without AliO) exhibit prediction shifts~\cite{problem.prediction_delay} in both time (\Cref{fig:base_vs_alico}) and frequency (\Cref{fig:ill_phase}) domains, reflected in high TAM values. In contrast, the right figures (with AliO) show improved alignment in both domains, which reduces TAM value and enhances regression performance (MSE).
    \end{minipage}
\hfill
\begin{minipage}[t]{0.58\textwidth}
    \captionof{table}{The forecasting regression and alignment performance on ILI~\cite{model.autoformer}, measured in MSE, and TAM$_2$, are compared between the baseline (without AliO) and AliO. The best results are highlighted in \best{bold}. For robust evaluation, we conduct experiments with three random seeds. Results are rounded to three decimal places for simplicity.}
    \label{tab:main_ili_results}
    \centering
    \setlength{\tabcolsep}{4pt}
    \resizebox{\textwidth}{!}{
        \begin{tabular}{ccccccccc}
            \toprule
            Models	 & \multicolumn{2}{c}{GPT4TS} & \multicolumn{2}{c}{PatchTST} & \multicolumn{2}{c}{TimesNet} & \multicolumn{2}{c}{DLinear}  \\
            \midrule
            Metric	 & MSE & TAM$_2$ & MSE & TAM$_2$ & MSE & TAM$_2$ & MSE & TAM$_2$  \\
            \hline
            \hline
            base    & 1.898 & 0.117 & 1.813 & 0.110 & 2.176 & 0.322 & 2.247 & 0.152 \\ 
            AliO	& \best{1.755} & \best{0.098} & \best{1.497} & \best{0.046} & \best{1.963} & \best{0.258} & \best{2.187} & \best{0.119} \\
            \hline
        \end{tabular}
    }
\end{minipage}

\begin{figure}[!htb]
    \centering
    \begin{minipage}[t]{0.42\textwidth}
        \centering
        \includegraphics[width=\linewidth]{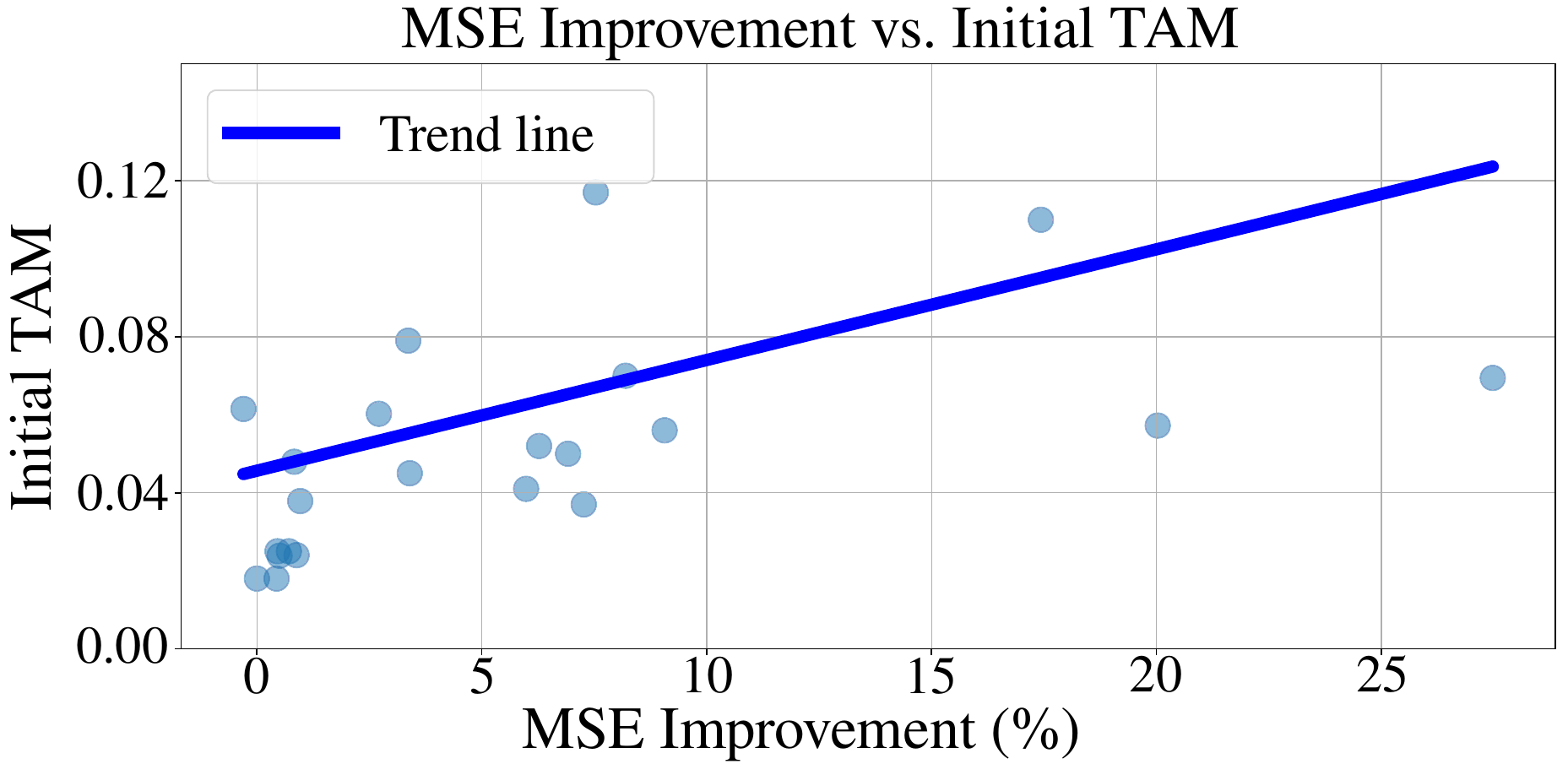}
        \caption{The relationship between initial TAM and MSE improvement (\%). The correlation coefficient is $0.33$, indicating a positive trend; when the initial alignment is poor (i.e., higher TAM values), the MSE improvement tends to be greater. For more detailed information, please check \Cref{apdx:mse_improvement_vs_tam}.}
    \label{fig:mse_improvement_vs_tam}
    \end{minipage}
    \hfill
    \begin{minipage}[t]{0.55\textwidth}
        \centering
        \includegraphics[width=0.85\linewidth]{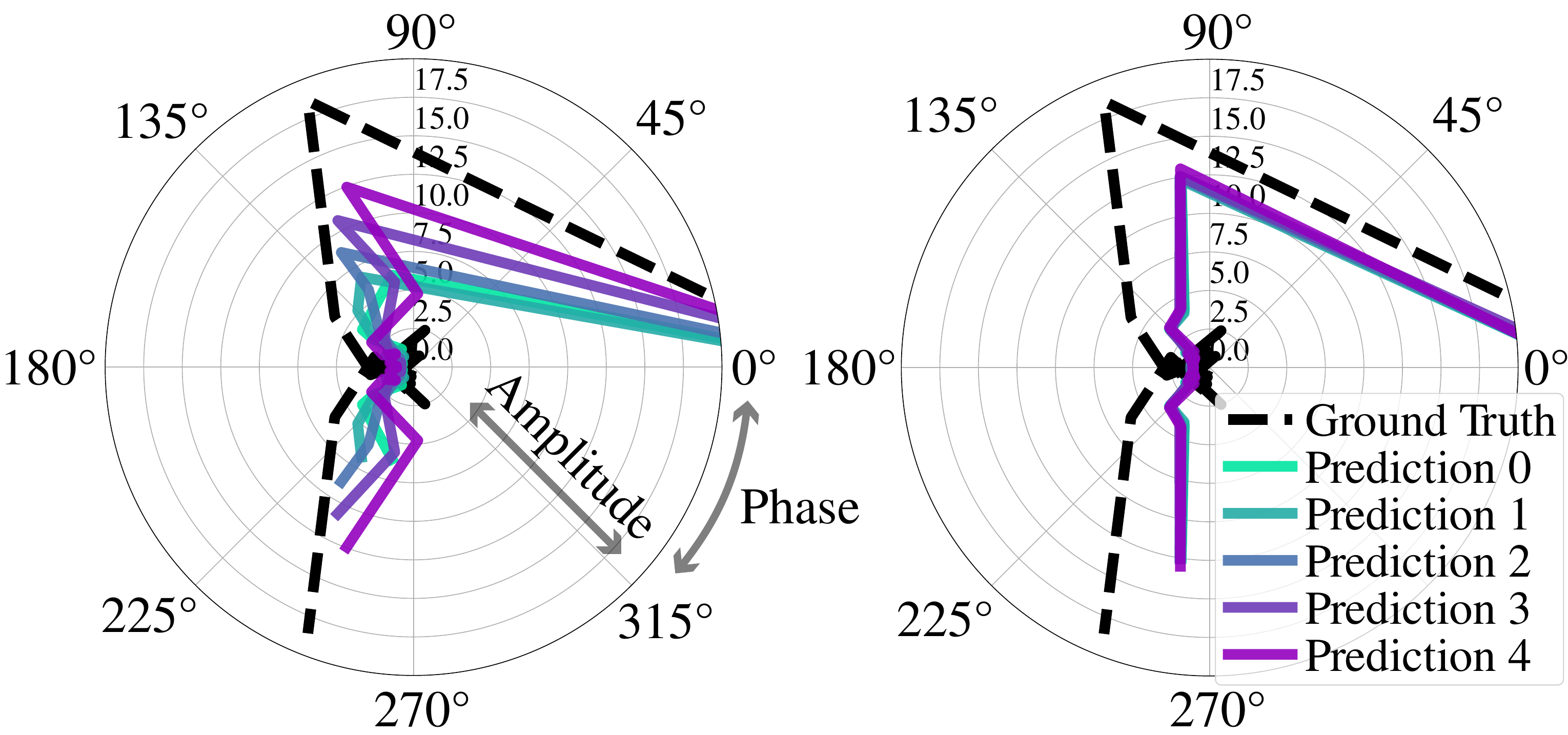}
            \caption{\textbf{(Left)} The frequency domain of five predictions in polar coordinates from DLinear~\cite{model.dlinear} trained with regression loss, forecasting the ILI dataset. \textbf{(Right)} Five phases from the same setup, with AliO applied. AliO effectively aligns both the phase and amplitude of the model's predictions in the frequency domain.}
    \label{fig:ill_phase}
    \end{minipage}
\end{figure}
\subsection{Hyper-parameter Analysis}
\label{sec:ablation_study}

\textbf{Time and Frequency Domain Coefficients.} 
AliO utilizes the two coefficients, $\lambda_T$ and $\lambda_F$ in \Cref{eq:loss}, to balance the regression performance and output alignment by adjusting the alignment strengths over the time and frequency domains, respectively. To evaluate AliO with respect to these two coefficients, we provide \Cref{fig:comparison_mse_tam}, which presents heatmaps of the normalized MSE and TAM for various coefficient values. The x-axis represents $\lambda_F$ in $\{0.0, 0.5, 1.0, 2.0\}$, and the y-axis corresponds to $\lambda_T$ in $\{1.0, 2.0, 5.0\}$. The left heatmap shows normalized MSE, where the maximum MSE in each dataset is scaled to 1.0 and the minimum to 0, followed by averaging across all models. The right heatmap presents normalized TAM in the same manner. Additional results are in \Cref{apdx:zero_coefficient}.

As AliO is designed to enhance alignment, a clear trend is observed in TAM where increasing the time domain alignment coefficient $\lambda_T$ from 1.0 to 5.0 leads to a consistent improvement. Interestingly, the normalized MSE also shows a general improvement as $\lambda_T$ increases, suggesting that the regression pulling effectively aligns the directions of the AliO loss with those of the regression loss. The influence of the frequency domain alignment coefficient $\lambda_F$ is less pronounced compared to the time coefficient $\lambda_T$. However, in cases where $\lambda_T$ alone cannot improve performance, $\lambda_F$ shows its impact on both the MSE and TAM metrics, as exemplified by values of $0.28$ and $0.50$, respectively. This demonstrates the significance of the frequency domain alignment to overcome limitations in the time domain alignment, indicating that both phase and amplitude contribute to performance enhancements.

\begin{figure}[t]
    \centering
    \begin{minipage}[t]{0.46\textwidth}
        \centering
        \includegraphics[width=0.485\linewidth]{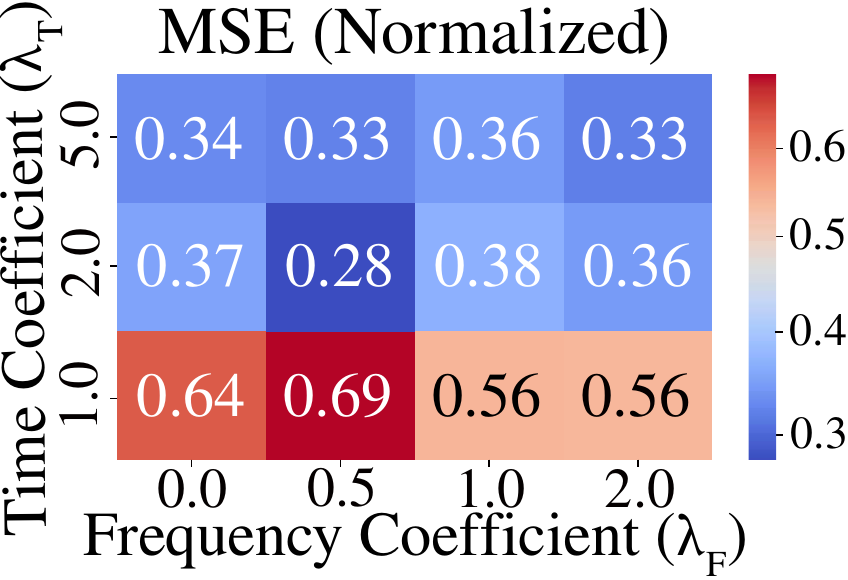}
        \includegraphics[width=0.485\linewidth]{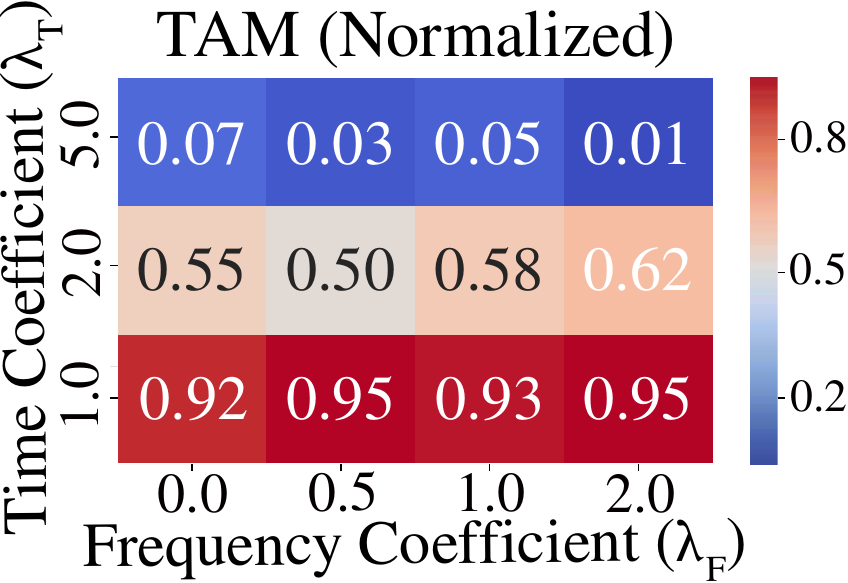}
        \caption{Comparison of the normalized MSE (\textbf{left}) and TAM (\textbf{right}) over different settings of $\lambda_T$ and $\lambda_F$. The x-axis represents $\lambda_F$ ($\{0.0, 0.5, 1.0, 2.0\}$), and the y-axis represents $\lambda_T$ ($\{1.0, 2.0, 5.0\}$). Increasing $\lambda_T$ consistently improves TAM performance, while $\lambda_F$ shows its impact depending on the value of $\lambda_T$. Blue represents better performance.}
        \label{fig:comparison_mse_tam}
    \end{minipage}
    \hfill
    \begin{minipage}[t]{0.51\textwidth}
        \centering
        \includegraphics[width=0.49\linewidth]{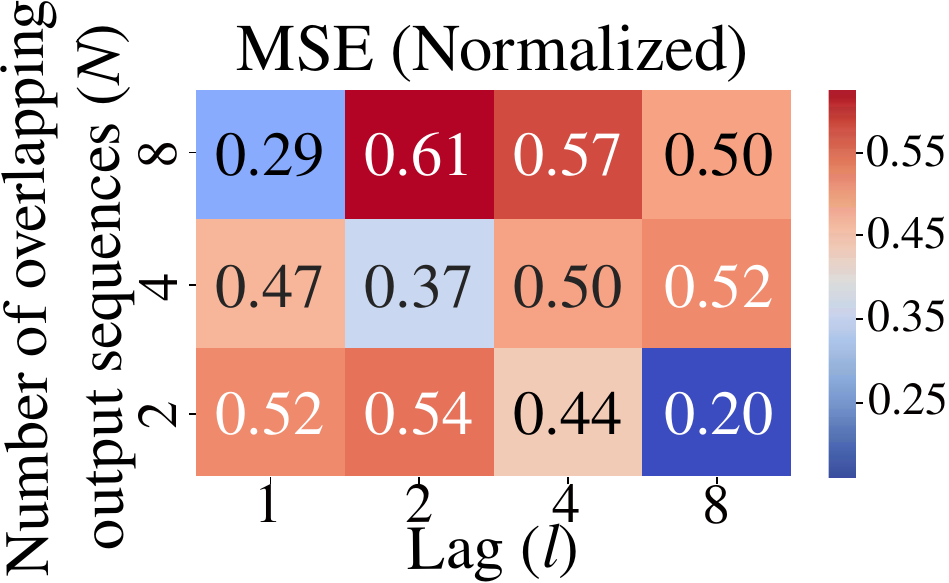}
        \includegraphics[width=0.49\linewidth]{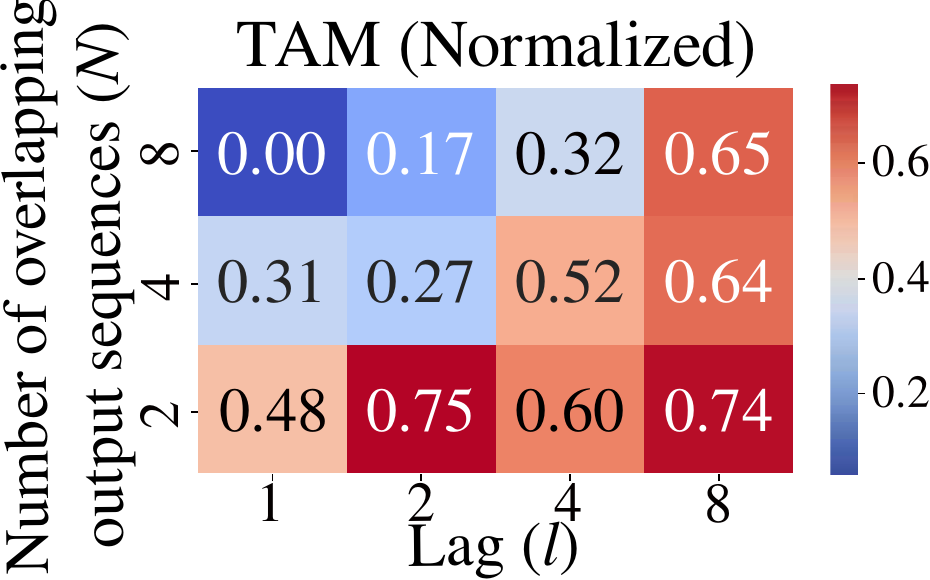}
        \caption{Comparison of the normalized MSE (\textbf{left}) and TAM (\textbf{right}) over different settings of the overlapping predictions $N$ and the lag $l$. The x-axis represents the lag $l$, and the y-axis represents the number of overlapping predictions $N$. Increasing $N$ improves TAM, while increasing $l$ degrades the performance. No distinct tendencies are observed regarding MSE. Blue represents better performance.}
        \label{fig:num_samples_vs_lag_comparison_mse_tam}
    \end{minipage}
\end{figure}

\textbf{Number of Overlapping Output Sequences and Lag Size.} 
We investigate how the model's MSE and TAM vary with different numbers of overlapping output sequences $N$ and the lag $l$. \Cref{fig:num_samples_vs_lag_comparison_mse_tam} shows the normalized MSE and TAM with a range between 0 and 1.0. TAM is improved as the lag decreases, which can be attributed to an increased length of overlapping timestamps at smaller lag value. This allows AliO to align longer overlapping sequence effectively. Additionally, the model performance improves as $N$ increases. As shown in \Cref{alg:alico}, a higher $N$ enables AliO to align predictions from distant timestamps, significantly enhancing TAM. In contrast, no clear trend is observed for MSE, since the window size increases as both $N$ and $l$ grow. Prior studies~\cite{window_size_01,window_size_02,window_size_03,window_size_04} suggest that large window sizes can cause overfitting, whereas an optimal window size leads to better performance. As shown in \Cref{fig:num_samples_vs_lag_comparison_mse_tam}, when both $N$ and $l$ are large, MSE tends to increase, indicating degraded performance. Conversely, lower MSE values ($0.29, 0.37, 0.20$) are observed along the diagonal elements, representing proper window sizes, such as $(l, N) = {(1, 8), (2, 4), (8, 2)}$, which indicates that balanced combinations of $l$ and $N$ yield better performance.

\section{Related works}
\label{sec:related_works}

To the best of our knowledge, no prior work has adequately identified the output alignment problem and provided a solution for it, as AliO proposed in this paper.

\textbf{Data-Model Robustness.} 
While existing research~\cite{model.itransformer} primarily focuses on model initialization robustness, i.e., the consistency of model performance across randomly initialized weights~\cite{model.itransformer}, the output alignment emphasizes data-model robustness. This aspect underscores the model's ability to produce consistent prediction outputs for lagged input sequences.
The data-model robustness is equally critical for practical applications as discussed in this paper and can be quantified using the proposed TAM, which can be enhanced through the proposed time and frequency domain alignment. We anticipate this aspect of LTSF study will inspire further active research in the field.

\textbf{Contrastive Learning.} Contrastive learning~\cite{ts-c,ts-tcc,timeclr,ca-tcc,ts2vec,ppt} employ pretraining strategies that encourage augmented positive pairs to be closely aligned in the representation space. However, they differ from AliO in several key aspects. First, they operate in the representation space rather than in the output space where AliO functions. Second, they minimize the distance between whole vectors rather than focusing on overlapping timestamps. Third, they do not incorporate regression-aware algorithms like regression pulling. These differences highlight the distinctive approach of AliO in output alignment, operating independently of existing methodes and can thus be integrated with them.

\section{Limitations and discussions}
\label{sec:limitation_and_discussion}

\textbf{On Volatile Datasets.} TAM is a metric designed to assess the output alignment of LTSF models. However, its application should be approached with caution on highly volatile datasets that fall outside the typical scope of LTSF tasks, as model predictions may experience considerable volatility. Nevertheless, as demonstrated in \Cref{apdx:full_results}, the results on the Exchange Rate dataset~\cite{model.autoformer}, which is influenced by abrupt external shocks, suggest that TAM retains potential applicability even in volatile settings. Designing more robust metrics for inherently volatile data is our next research objective.

\textbf{Distance Functions.} AliO improves regression performance under high initial TAM conditions (a low degree of output alignment). This is likely attributable to its use of MSE as the distance function, which is consistent with the regression loss employed during model training. To further enhance performance, alternative distance functions such as \cite{loss.dilate,loss.gdtw,loss.huber,loss.soft-dtw}, which capture different characteristics of time series data (e.g., shape), may serve as promising directions for future exploration.

\section{Conclusion}
\label{sec:conclusion}
In this paper, we investigate the output alignment problem, which can commonly arise in LTSF tasks and propose Time Alignment Metric (TAM) as a quantitative measure for this problem. To solve the output alignment problem, we propose AliO (Output Alignment) applied in both time and frequency domains. AliO achieves up to 17.4\% improvement in MSE across various LTSF models and datasets, and up to 76.0\% improvement in TAM, effectively addressing the output alignment problem.

\section*{Acknowledgment}
This work was supported by the Institute of Information \& communications Technology Planning \& Evaluation(IITP) grant funded by the Korea government(MSIT)(No.RS-202400508465) and Institute of Information \& communications Technology Planning \& Evaluation(IITP) grant funded by the Korea government(MSIT) (No.RS-2020-II201336, Artificial Intelligence Graduate School Program(UNIST)).

\bibliography{references}
\bibliographystyle{abbrv}


\newpage
\appendix

\section{Exemplar illustrations on importance of output alignment}
\label{apdx:alignment_example}

\begin{figure}[H]
    \centering
    \centerline{\includegraphics[width=0.48\textwidth]{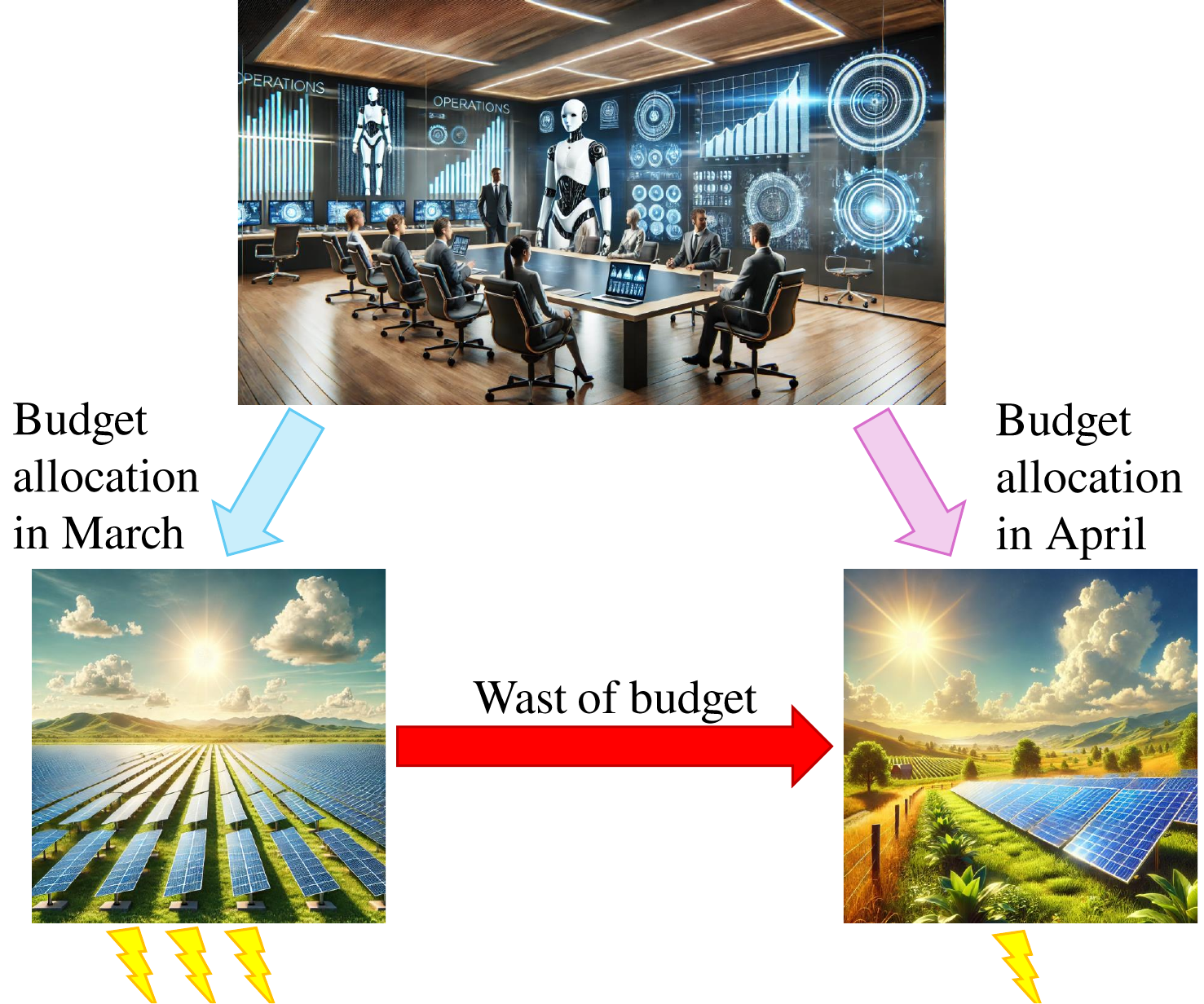}}
    \caption{A simple example illustrating the importance of \textit{output alignment} in electricity power demand forecasting. The two predictions differ, causing confusion among people, and they realize that the budget allocated in March was a waste.}
    \label{fig:alico_figure_example_elec}
\end{figure}
\section{Proof}
\subsection{Proof of \Cref{thm:frequency-domain}}
\label{prof:eq_alico_frequency}
\begin{proof} Rewriting the equation on the left side of the theorem, we can express it form the perspective of individual elements:
\begin{equation}
    \dfrac{\lvert \VECTOR{p}^1_F - \VECTOR{p}^2_F \rvert^2_2}{h'} = \dfrac{1}{h'} \sum^{h'}_{i} \lvert \VECTOR{p}^1_{F, i} - \VECTOR{p}^2_{F, i} \rvert^2
\end{equation}

The squared magnitude of the difference between two complex numbers $\VECTOR{p}^1_{F, i}$ and $\VECTOR{p}^2_{F, i}$ are given as follows:
    \begin{align}
        \lvert \VECTOR{p}^1_{F, i} - \VECTOR{p}^2_{F, i} \rvert^2 &= (\VECTOR{p}^1_{F, i} - \VECTOR{p}^2_{F, i})        (\overline{\VECTOR{p}^1}_{F, i} - \overline{\VECTOR{p}^2}_{F, i}) 
    \end{align}
    Where $\overline{\VECTOR{p}^1}_{F, i}$ and $\overline{\VECTOR{p}^2}_{F, i}$ are the complex conjugate of $\VECTOR{p}^1_{F, i}$ and $\VECTOR{p}^2_{F, i}$. Now, expressing $\VECTOR{p}^1_{F, i}$ and $\VECTOR{p}^2_{F, i}$ including the complex conjugate in polar form:
    \begin{align}
        \VECTOR{p}^1_{F, i} = \lvert \VECTOR{p}^1_{F, i} \rvert e^{j\theta_{\VECTOR{p}^1_{F, i}}},\quad \VECTOR{p}^2_{F, i} = \lvert \VECTOR{p}^2_{F, i} \rvert e^{j\theta_{\VECTOR{p}^2_{F, i}}} \\
        \overline{\VECTOR{p}^1}_{F, i} = \lvert \VECTOR{p}^1_{F, i} \rvert e^{-j\theta_{\VECTOR{p}^1_{F, i}}},\quad \overline{\VECTOR{p}^2}_{F, i} = \lvert \VECTOR{p}^2_{F, i} \rvert e^{-j\theta_{\VECTOR{p}^2_{F, i}}}
    \end{align}
    where $j = \sqrt{-1}$ and $\theta$ is the angle (phase) for the corresponding complex number. We substitute into the expression and simplify the expression:
    \begin{align}
        \lvert \VECTOR{p}^1_{F, i} - \VECTOR{p}^2_{F, i} \rvert^2 &=(\lvert \VECTOR{p}^1_{F, i} \rvert e^{j\theta_{\VECTOR{p}^1_{F, i}}} - \lvert \VECTOR{p}^2_{F, i} \rvert e^{j\theta_{\VECTOR{p}^2_{F, i}}})(\lvert \VECTOR{p}^1_{F, i} \rvert e^{-j\theta_{\VECTOR{p}^1_{F, i}}} - \lvert \VECTOR{p}^2_{F, i} \rvert e^{-j\theta_{\VECTOR{p}^2_{F, i}}})\\
        &= \lvert \VECTOR{p}^1_{F, i} \rvert^2 + \lvert \VECTOR{p}^2_{F, i} \rvert^2 - \lvert \VECTOR{p}^1_{F, i} \rvert\lvert \VECTOR{p}^2_{F, i} \rvert (e^{j(\theta_{\VECTOR{p}^1_{F, i}} - \theta_{\VECTOR{p}^2_{F, i}})} + e^{-j(\theta_{\VECTOR{p}^1_{F, i}} - \theta_{\VECTOR{p}^2_{F, i}})})
    \end{align}
    Using Euler's formula $e^{j\theta} = cos(\theta) + jsin(\theta)$~\cite{euler}, we can convert the exponential expressions to trigonometric expressions:
    \begin{align}
        \lvert \VECTOR{p}^1_{F, i} - \VECTOR{p}^2_{F, i} \rvert^2 &= \lvert \VECTOR{p}^1_{F, i} \rvert^2 + \lvert \VECTOR{p}^2_{F, i} \rvert^2 - \lvert \VECTOR{p}^1_{F, i} \rvert\lvert \VECTOR{p}^2_{F, i} \rvert \{cos(\theta_{\VECTOR{p}^1_{F, i}} - \theta_{\VECTOR{p}^2_{F, i}}) + jsin(\theta_{\VECTOR{p}^1_{F, i}} - \theta_{\VECTOR{p}^2_{F, i}}) \\
        &+ cos(-(\theta_{\VECTOR{p}^1_{F, i}} - \theta_{\VECTOR{p}^2_{F, i}})) + jsin(-(\theta_{\VECTOR{p}^1_{F, i}} - \theta_{\VECTOR{p}^2_{F, i}}))\} \\
        &= \lvert \VECTOR{p}^1_{F, i} \rvert^2 + \lvert \VECTOR{p}^2_{F, i} \rvert^2 - 2\lvert \VECTOR{p}^1_{F, i} \rvert\lvert \VECTOR{p}^2_{F, i} \rvert cos(\theta_{\VECTOR{p}^1_{F, i}} - \theta_{\VECTOR{p}^2_{F, i}})
    \end{align}
    Utilizing the angle (phase) notation $\angle$, we can express the equation as:
    \begin{align}
        \lvert \VECTOR{p}^1_{F, i} - \VECTOR{p}^2_{F, i} \rvert^2 = \lvert \VECTOR{p}^1_{F, i} \rvert^2 + \lvert \VECTOR{p}^2_{F, i} \rvert^2 - 2\lvert \VECTOR{p}^1_{F, i} \rvert\lvert \VECTOR{p}^2_{F, i} \rvert cos(\angle\VECTOR{p}^1_{F, i} - \angle\VECTOR{p}^2_{F, i})
        \label{eq:extended-frequency-domain}
    \end{align}
    Reducing the final equation implies increasing the cosine term, which in turn signifies aligning the two phases, since one of predictions is constant by stop-gradient $sg(\cdot)$ operator in \Cref{alg:alico}.
    \begin{align}
        \lvert \VECTOR{p}^1_{F, i} - \VECTOR{p}^2_{F, i} \rvert^2 &\to 0 \implies cos(\angle\VECTOR{p}^1_{F, i} - \angle\VECTOR{p}^2_{F, i}) \to 1 \\
        cos(\angle\VECTOR{p}^1_{F, i} - \angle\VECTOR{p}^2_{F, i}) &\to 1 \implies \lvert \angle{\VECTOR{p}^1_F} - \angle{\VECTOR{p}^2_F} \rvert \to 0 \\
        \therefore \lvert \VECTOR{p}^1_{F, i} - \VECTOR{p}^2_{F, i} \rvert^2 &\to 0 \implies \lvert \angle{\VECTOR{p}^1_F} - \angle{\VECTOR{p}^2_F} \rvert \to 0
    \end{align}
\end{proof}

\subsection{Proof of \Cref{thm:frequency-domain-amplitude}}
\begin{proof}
    \label{prof:eq_alico_frequency_amplitude}
    Since the two phases, i.e., $\angle\VECTOR{p}^1_{F, i}$ and $\angle\VECTOR{p}^2_{F, i}$, are aligned, the \Cref{eq:extended-frequency-domain} in \Cref{prof:eq_alico_frequency} is modified as follows:
    \begin{align}
        \lvert \VECTOR{p}^1_{F, i} - \VECTOR{p}^2_{F, i} \rvert^2 = \lvert \VECTOR{p}^1_{F, i} \rvert^2 + \lvert \VECTOR{p}^2_{F, i} \rvert^2 - 2\lvert \VECTOR{p}^1_{F, i} \rvert\lvert \VECTOR{p}^2_{F, i}\rvert
    \end{align}
    By factoring, it can be expressed in the following perfect square form.
    \begin{align}
        \lvert \VECTOR{p}^1_{F, i} - \VECTOR{p}^2_{F, i} \rvert^2 = (\lvert \VECTOR{p}^1_{F, i} \rvert^2 - \lvert \VECTOR{p}^2_{F, i} \rvert^2)^2
    \end{align}
    Since one of predictions is constant by stop-gradient $sg(\cdot)$ operator in \Cref{alg:alico}, minimizing $\lvert \VECTOR{p}^1_{F, i} - \VECTOR{p}^2_{F, i} \rvert^2$ leads to a reduction in the amplitude difference
\end{proof}
\section{Visualization of alignment}
\label{apdx:visualization}
In this section, we visualize the model’s prediction outputs and alignment differences when trained solely with regression loss versus with both regression loss and AliO. Each figure is arranged in a 2×2 grid: the top row shows results using regression loss alone, while the bottom row presents results incorporating AliO; the left column illustrates the time domain, and the right column displays the frequency domain visualized in a polar coordinate system, where the azimuth angle represents the phase and the radial distance from the origin corresponds to the amplitude. Overlapping timestamps of prediction outputs for five lagged inputs are shown, with the ground truth indicated by black dashed lines. \Cref{fig:visualization01,fig:visualization02,fig:visualization03,fig:visualization04} show the results from DLinear~\cite{model.dlinear}, PatchTST~\cite{model.patchtst}, TimesNet~\cite{model.timesnet}, and iTransformer~\cite{model.itransformer}.

\begin{figure}[h!]
    \centering
    \begin{minipage}{0.55\textwidth}
        \centering
        \includegraphics[width=\textwidth]{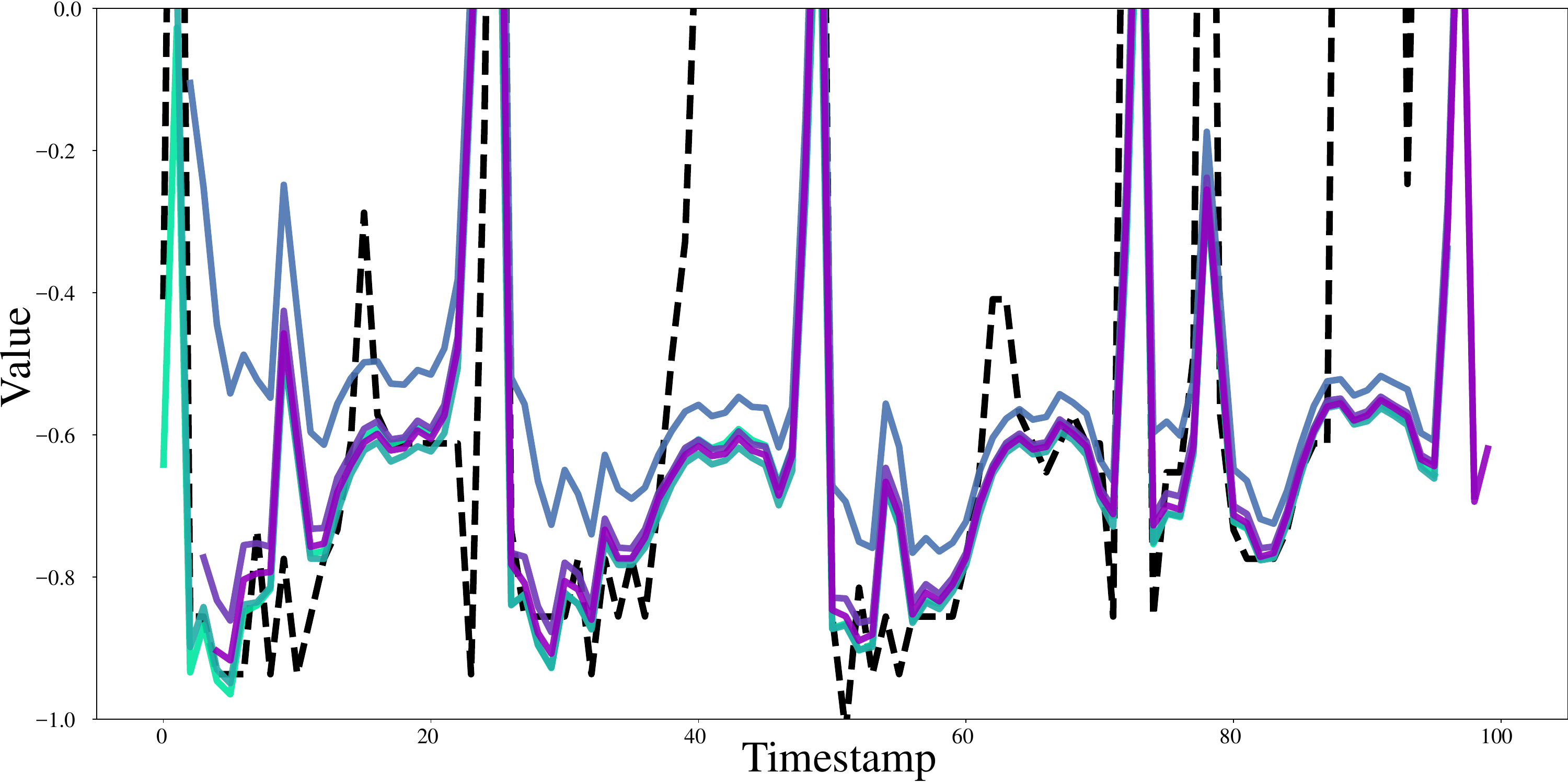}
    \end{minipage}
    \hspace{1cm}
    \begin{minipage}{0.25\textwidth}
        \centering
        \includegraphics[width=\textwidth]{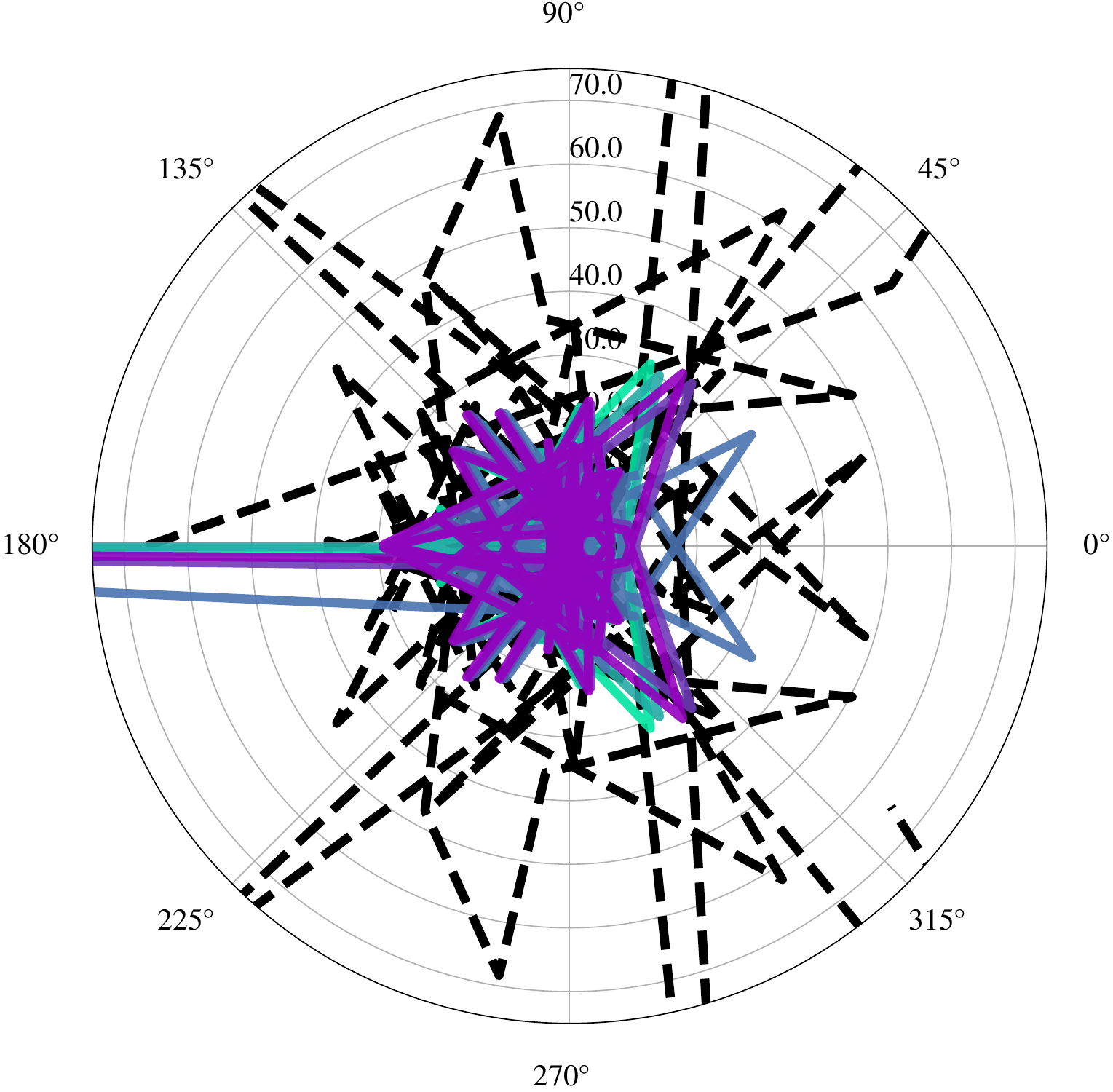}
    \end{minipage}

    \begin{minipage}{0.55\textwidth}
        \centering
        \includegraphics[width=\textwidth]{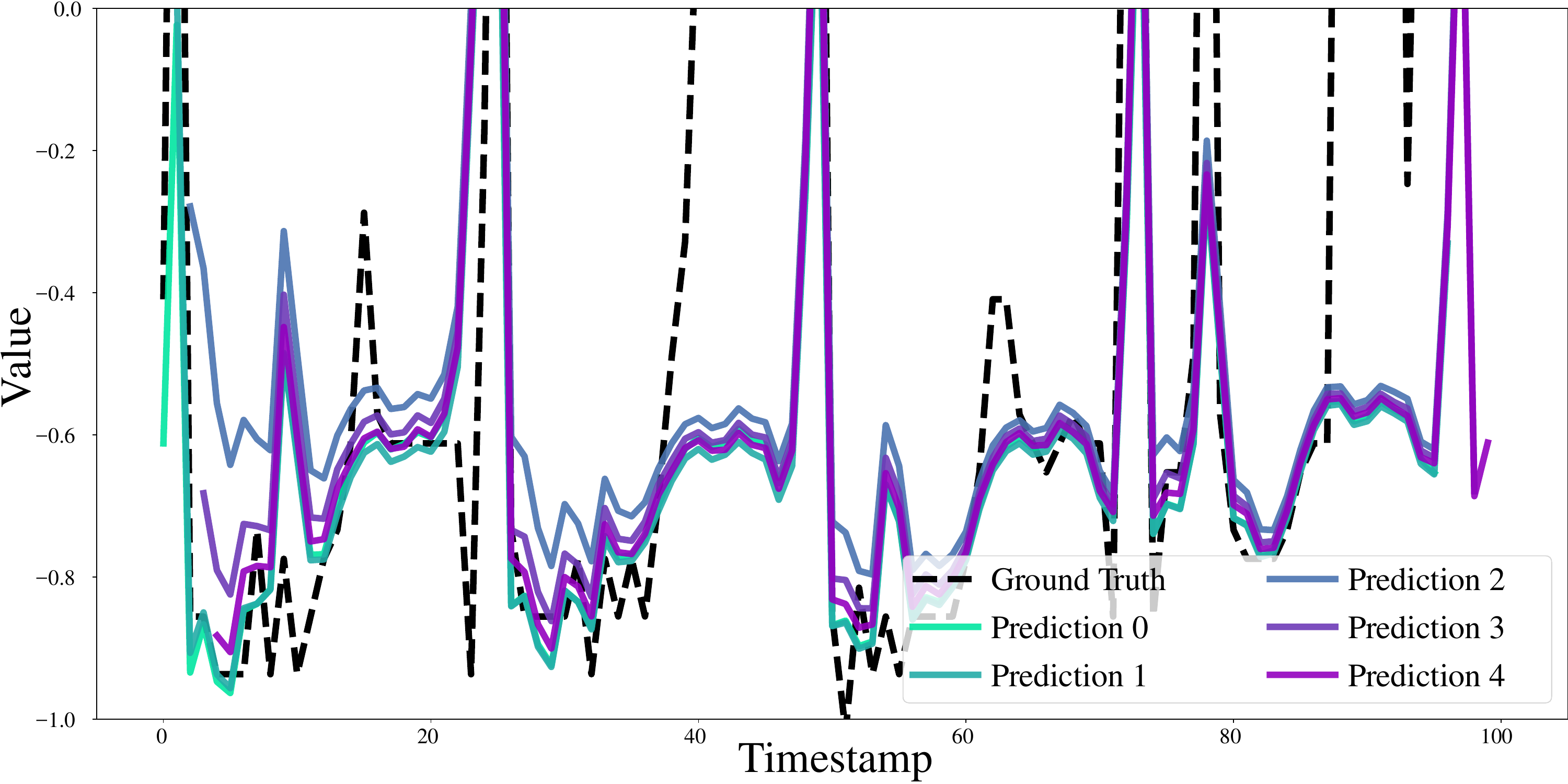}
    \end{minipage}
    \hspace{1cm}
    \begin{minipage}{0.25\textwidth}
        \centering
        \includegraphics[width=\textwidth]{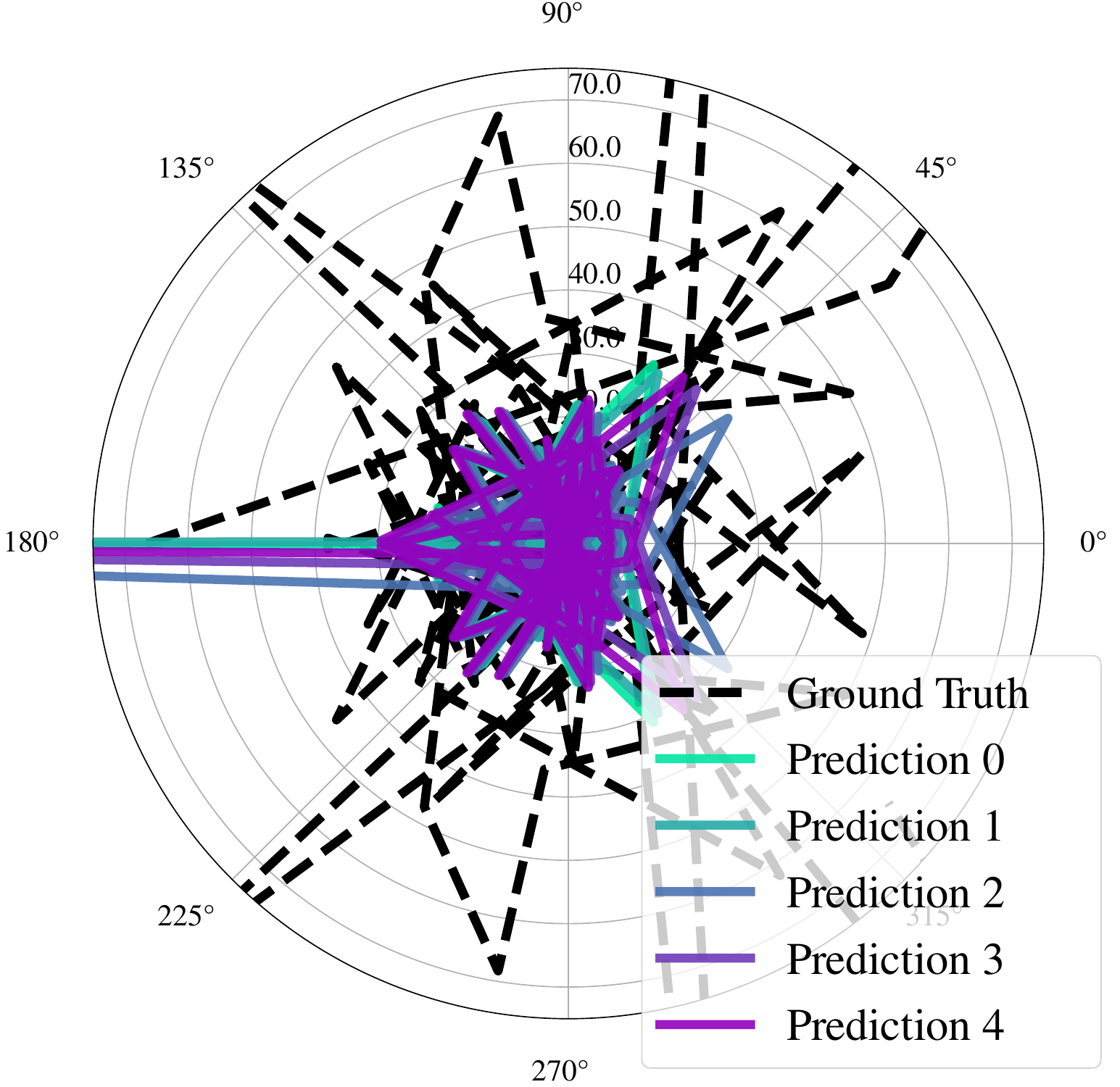}
    \end{minipage}
    \caption{\textbf{Visualization}: \textbf{Upper} shows the prediction of DLinear~\cite{model.dlinear} using only the regression loss on the ECL dataset~\cite{dataset.electric}, while \textbf{Bottom} shows the result under the same conditions when AliO is also applied. It is observed that AliO improves alignment performance for overlapping timestamps.}
    \label{fig:visualization01}
\end{figure}

\begin{figure}[h!]
    \centering
    \begin{minipage}{0.55\textwidth}
        \centering
        \includegraphics[width=\textwidth]{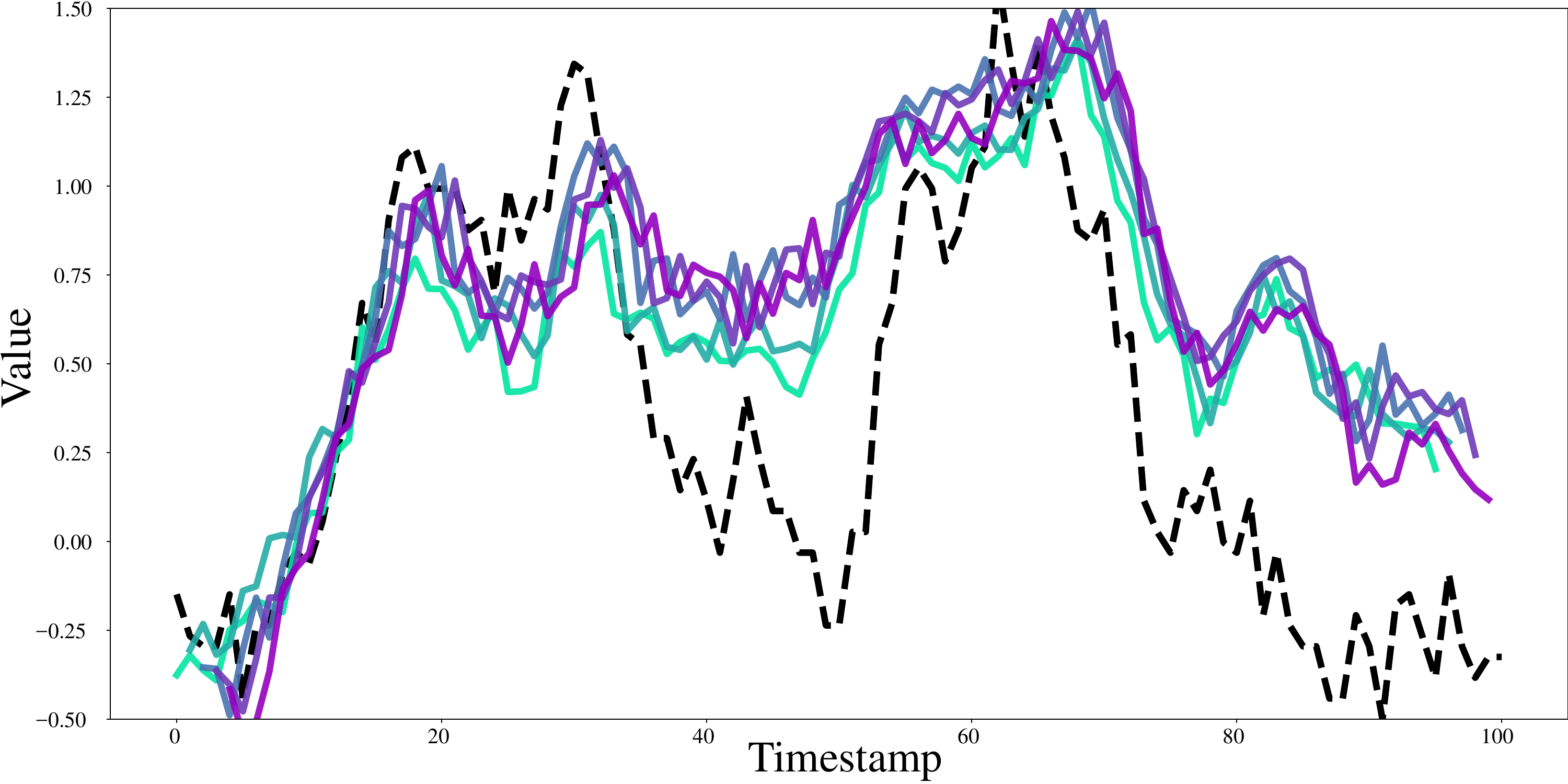}
    \end{minipage}
    \hspace{1cm}
    \begin{minipage}{0.25\textwidth}
        \centering
        \includegraphics[width=\textwidth]{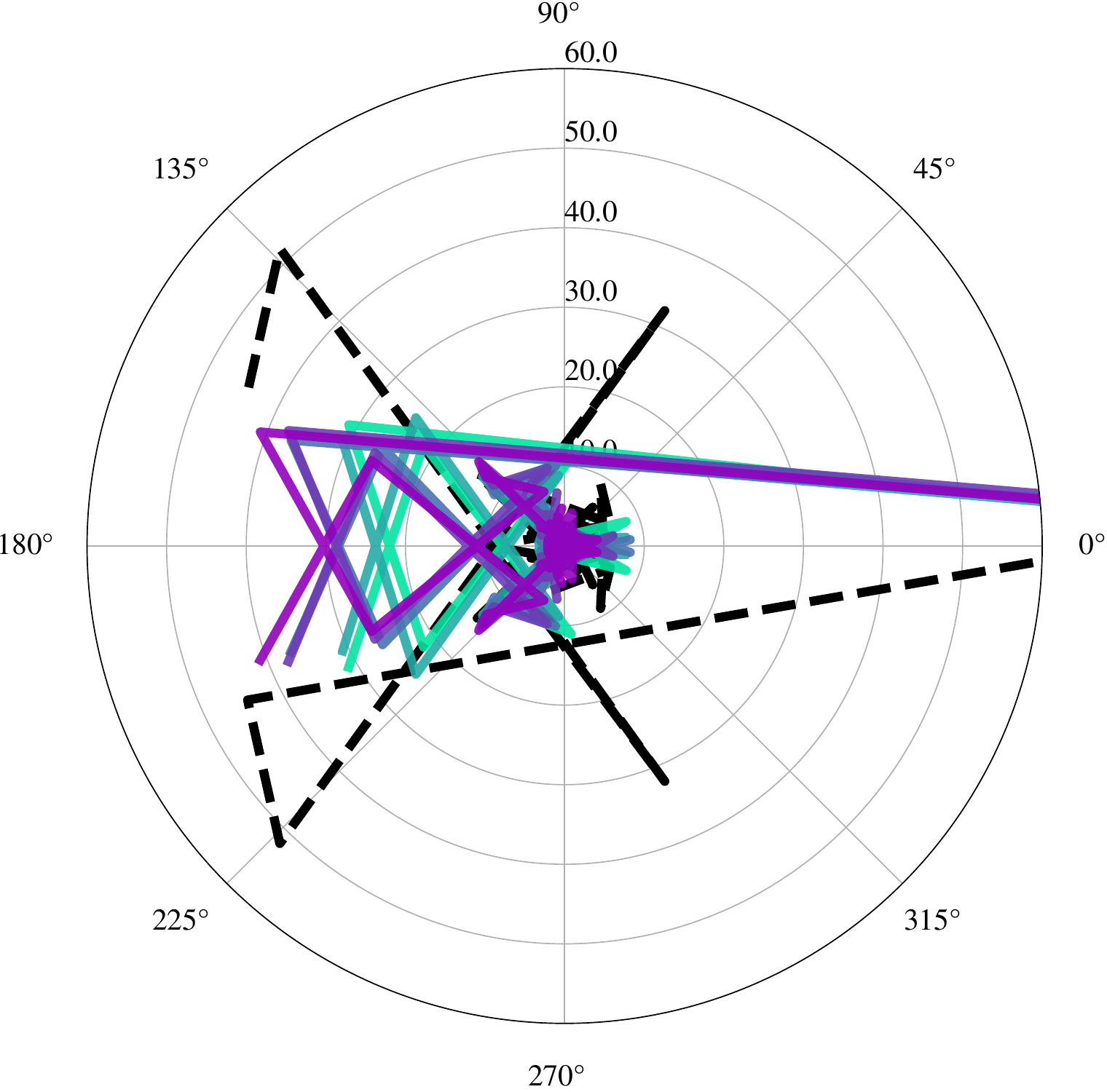}
    \end{minipage}

    \begin{minipage}{0.55\textwidth}
        \centering
        \includegraphics[width=\textwidth]{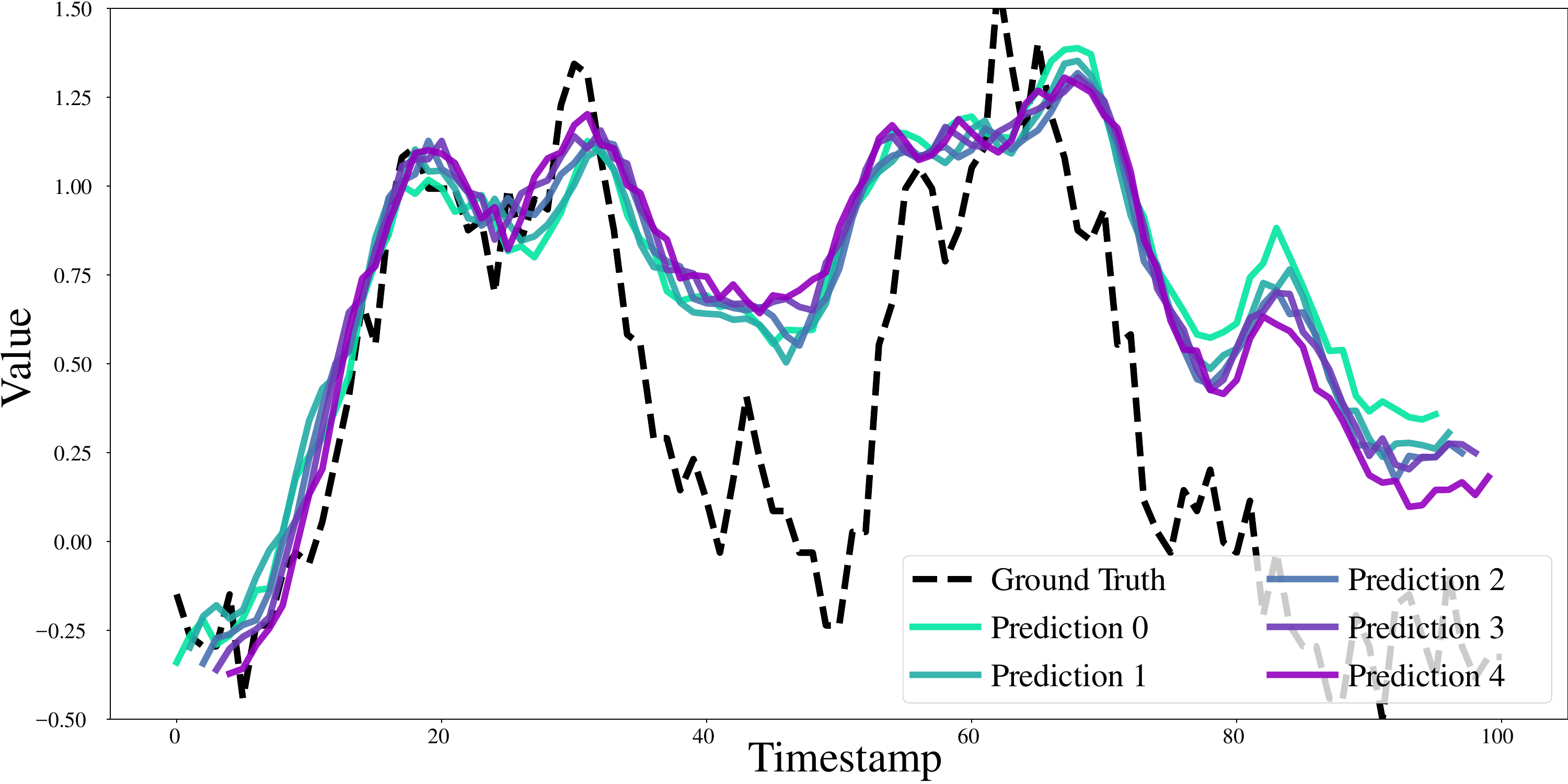}
    \end{minipage}
    \hspace{1cm}
    \begin{minipage}{0.25\textwidth}
        \centering
        \includegraphics[width=\textwidth]{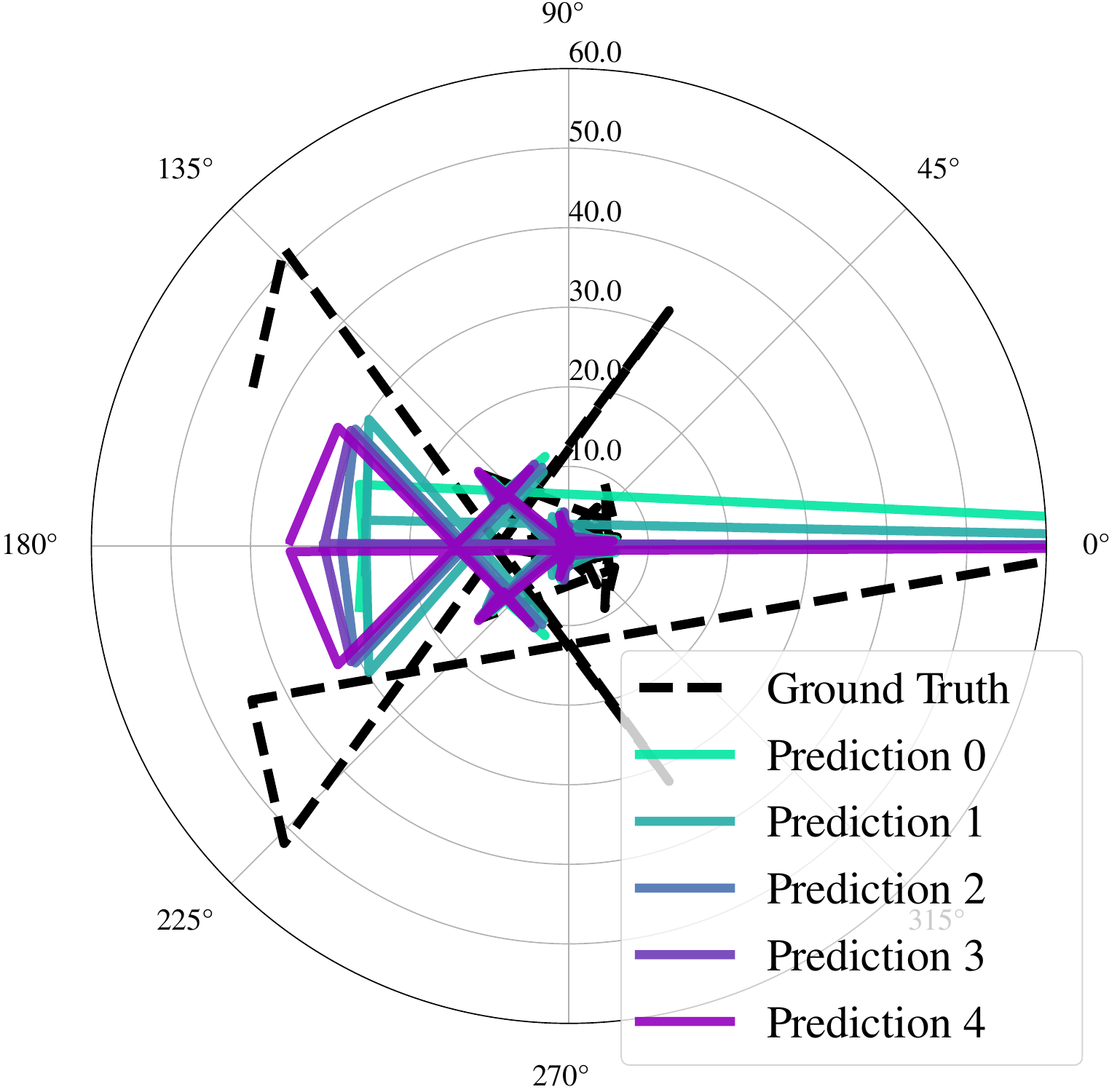}
    \end{minipage}
    \caption{\textbf{Visualization}: \textbf{Upper} shows the prediction of PatchTST~\cite{model.patchtst} using only the regression loss on the ETTm1 dataset~\cite{dataset.ETT}, while \textbf{Bottom} shows the result under the same conditions when AliO is also applied. It is observed that AliO improves alignment performance for overlapping timestamps.}
    \label{fig:visualization02}
\end{figure}

\begin{figure}[h!]
    \centering
    \begin{minipage}{0.55\textwidth}
        \centering
        \includegraphics[width=\textwidth]{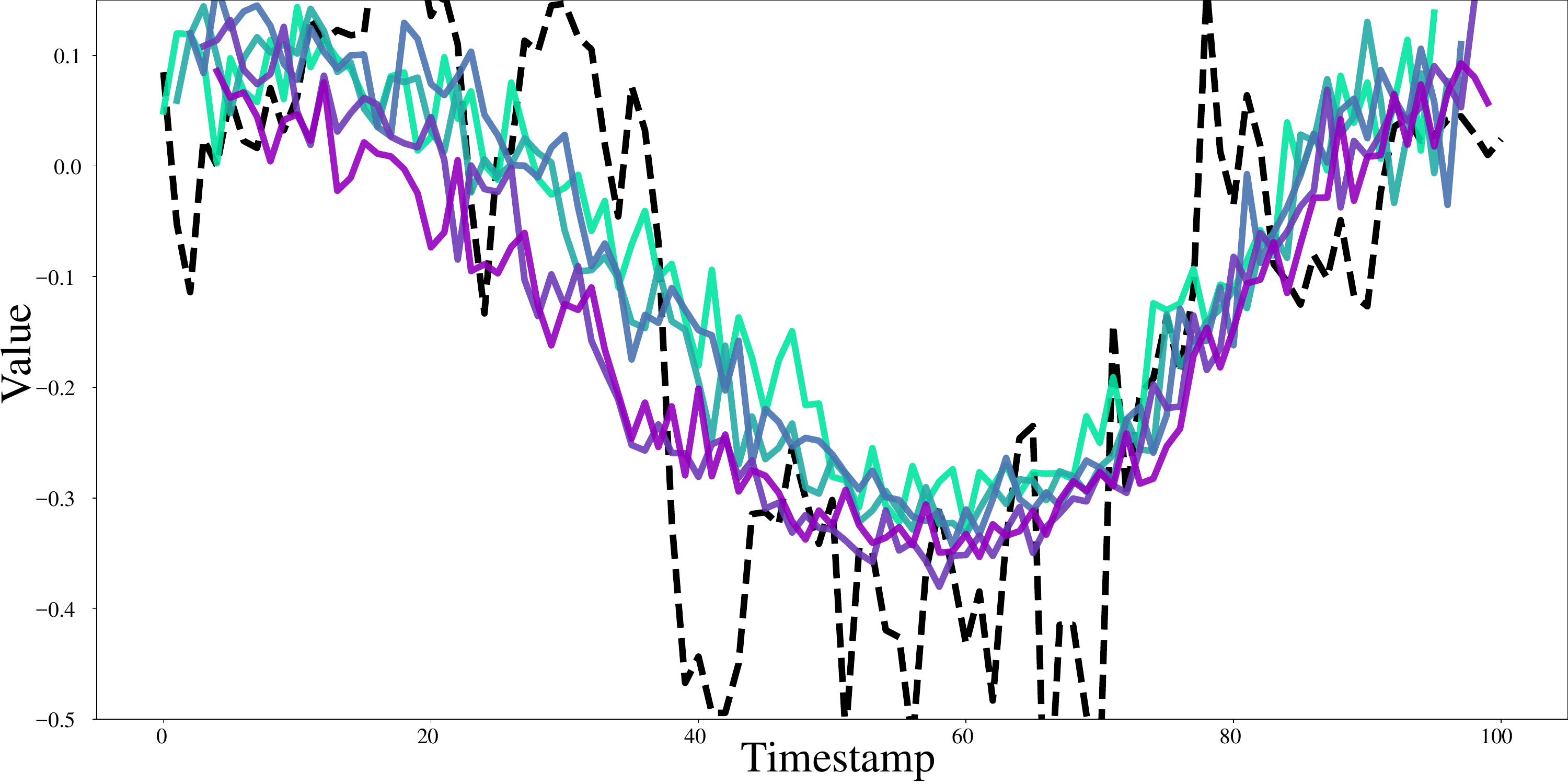}
    \end{minipage}
    \hspace{1cm}
    \begin{minipage}{0.25\textwidth}
        \centering
        \includegraphics[width=\textwidth]{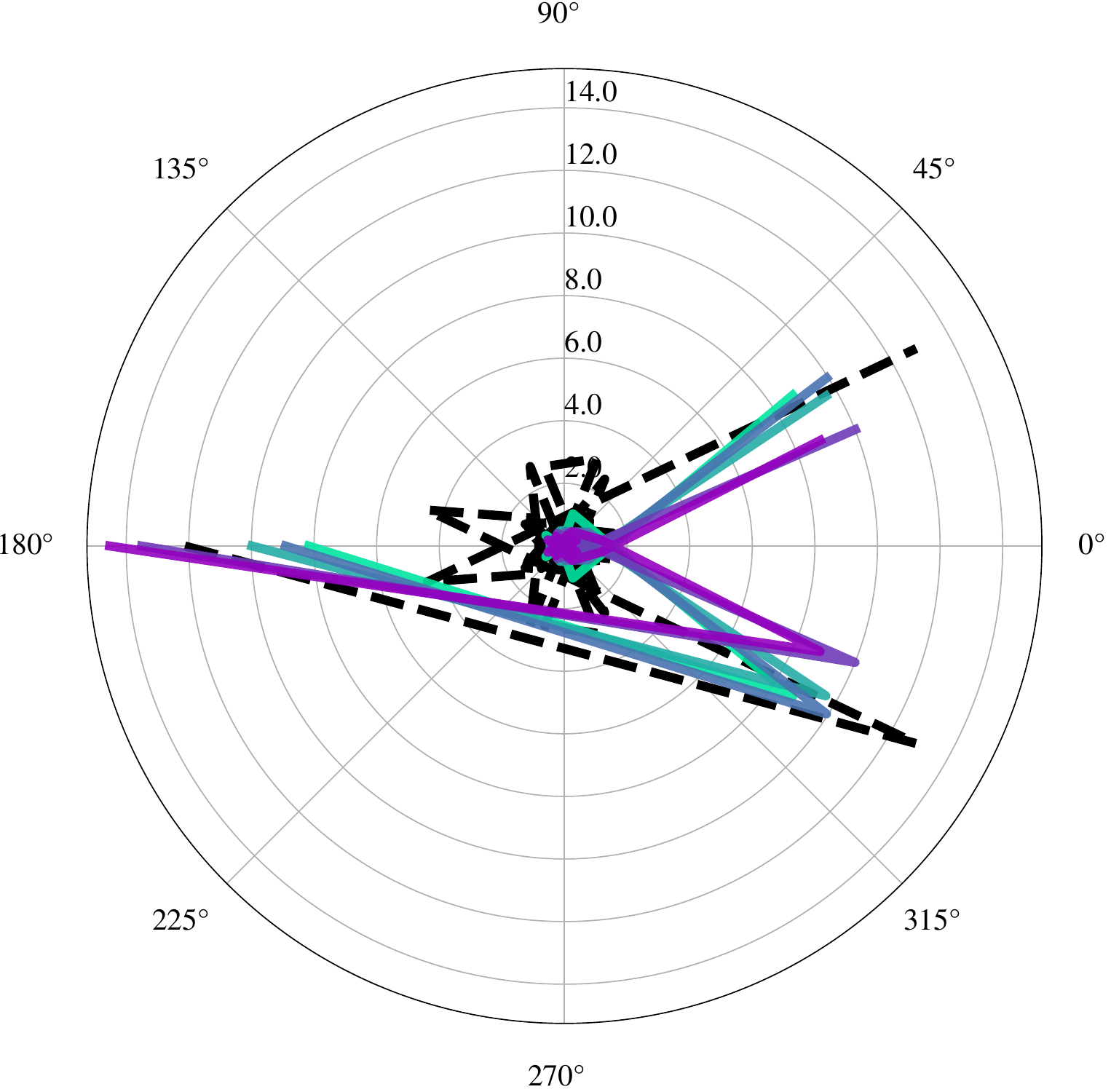}
    \end{minipage}

    \begin{minipage}{0.55\textwidth}
        \centering
        \includegraphics[width=\textwidth]{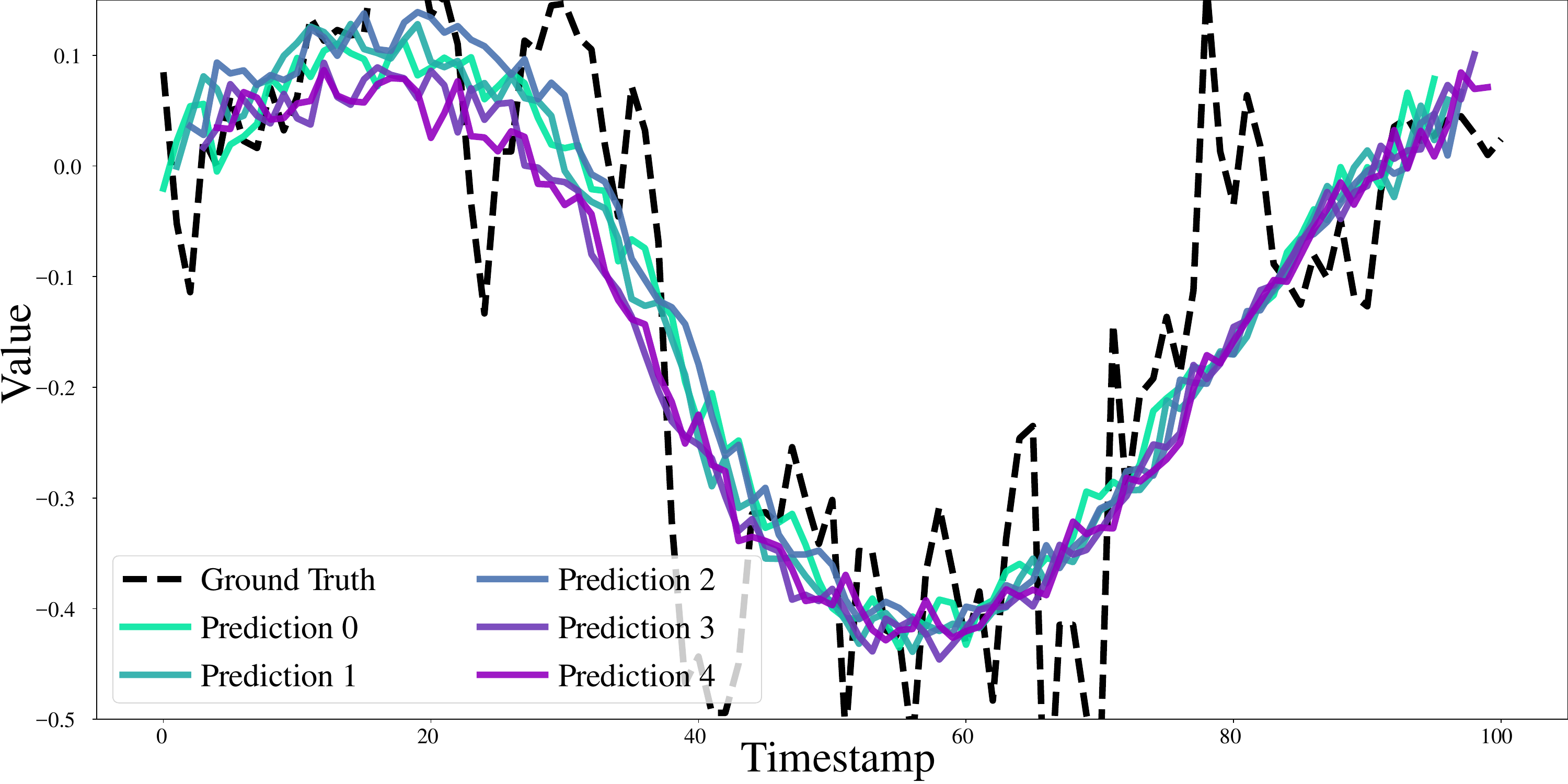}
    \end{minipage}
    \hspace{1cm}
    \begin{minipage}{0.25\textwidth}
        \centering
        \includegraphics[width=\textwidth]{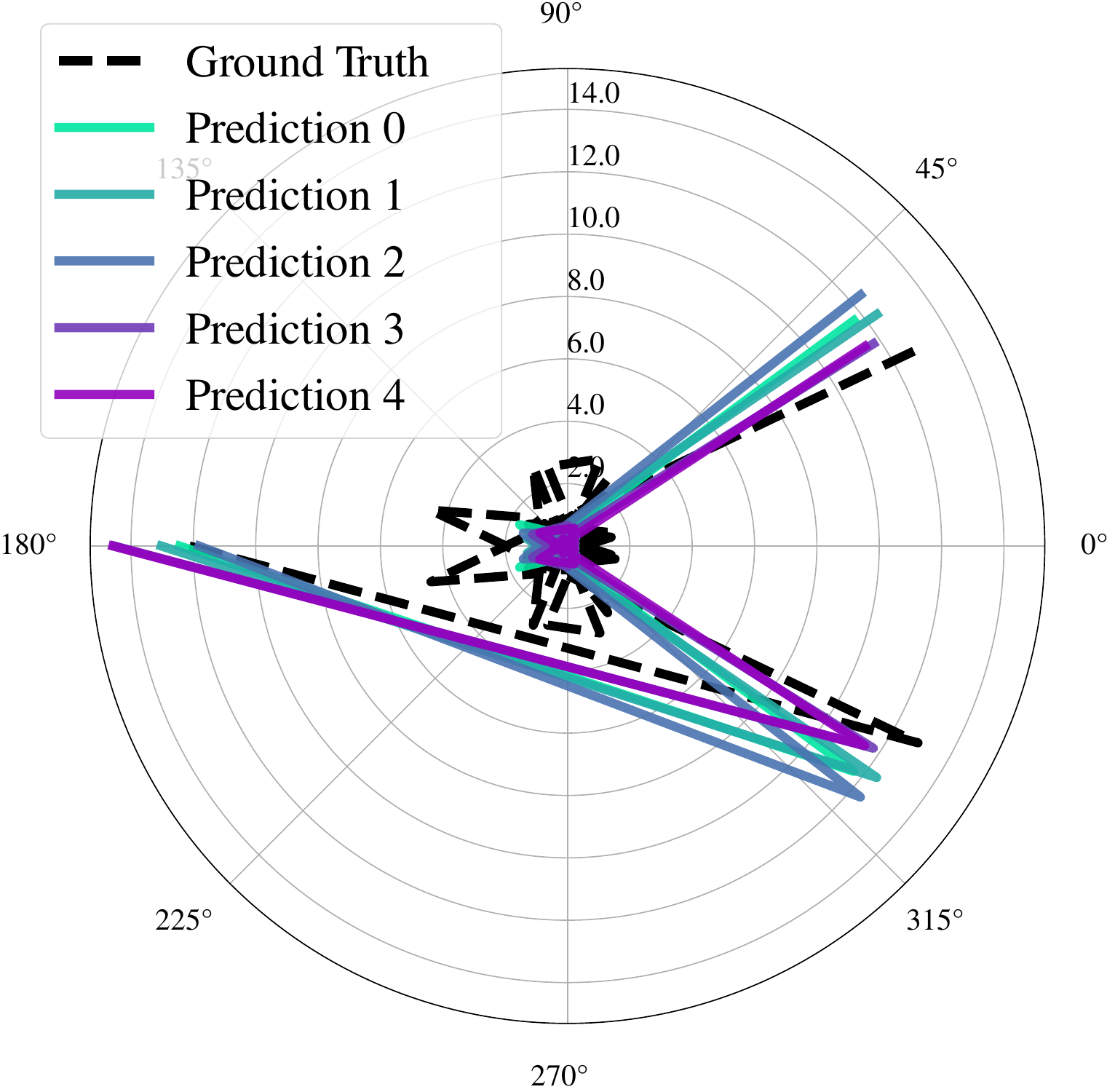}
    \end{minipage}
    \caption{\textbf{Visualization}: \textbf{Upper} shows the prediction of TimesNet~\cite{model.timesnet} using only the regression loss on the ETTm2 dataset~\cite{dataset.ETT}, while \textbf{Bottom} shows the result under the same conditions when AliO is also applied. It is observed that AliO improves alignment performance for overlapping timestamps.}
    \label{fig:visualization03}
\end{figure}

\begin{figure}[h!]
    \centering
    \begin{minipage}{0.55\textwidth}
        \centering
        \includegraphics[width=\textwidth]{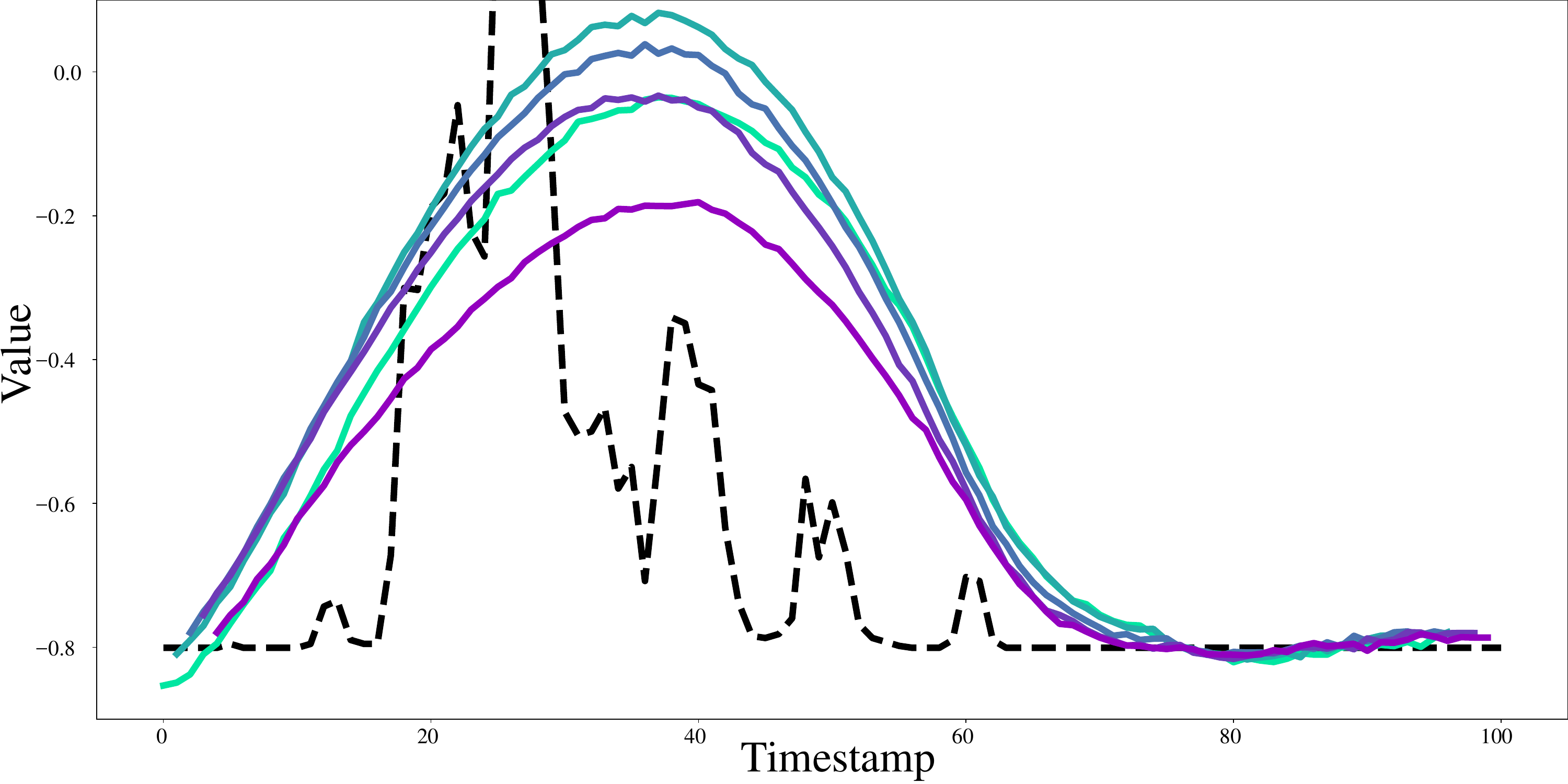}
    \end{minipage}
    \hspace{1cm}
    \begin{minipage}{0.25\textwidth}
        \centering
        \includegraphics[width=\textwidth]{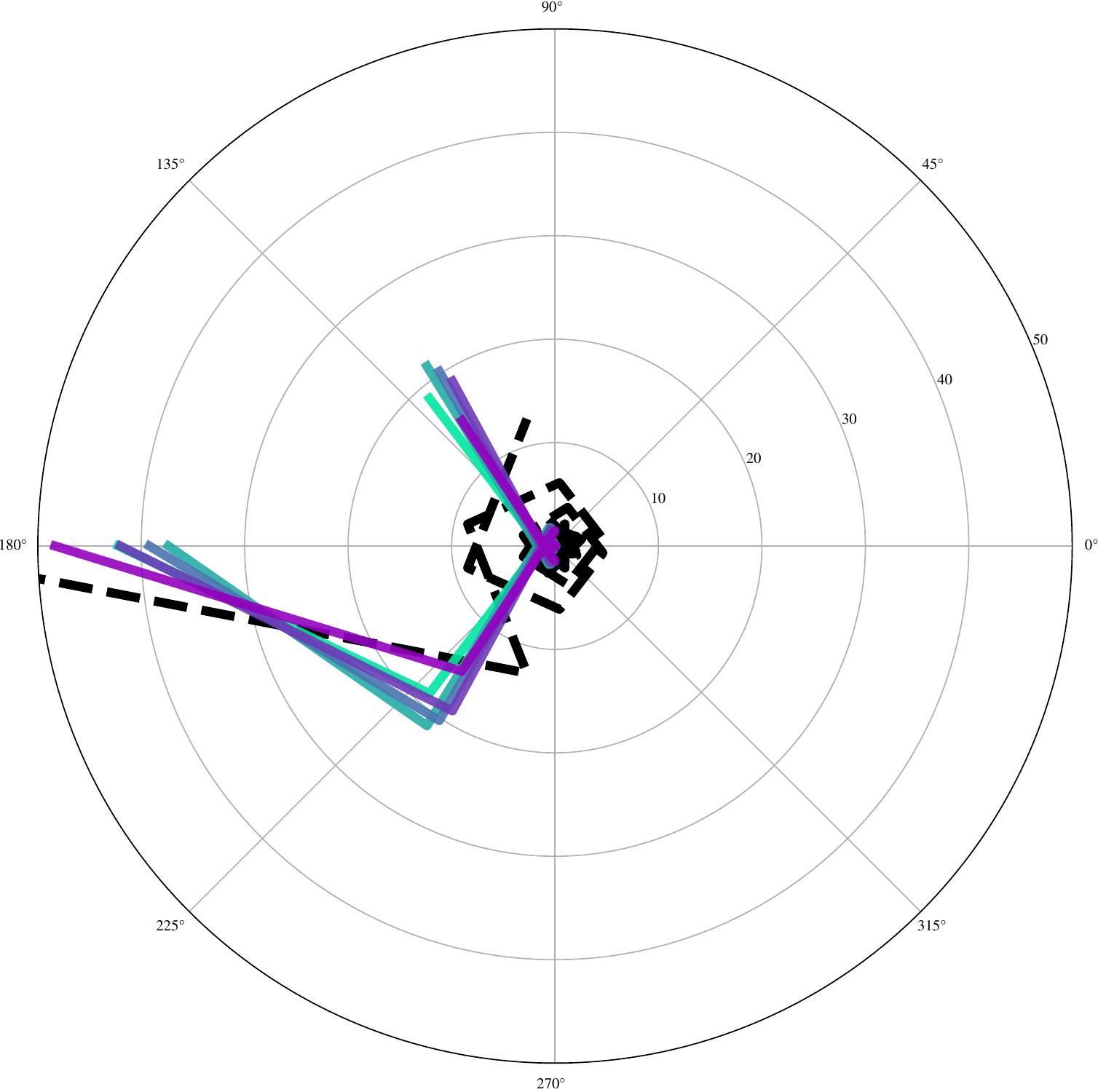}
    \end{minipage}

    \begin{minipage}{0.55\textwidth}
        \centering
        \includegraphics[width=\textwidth]{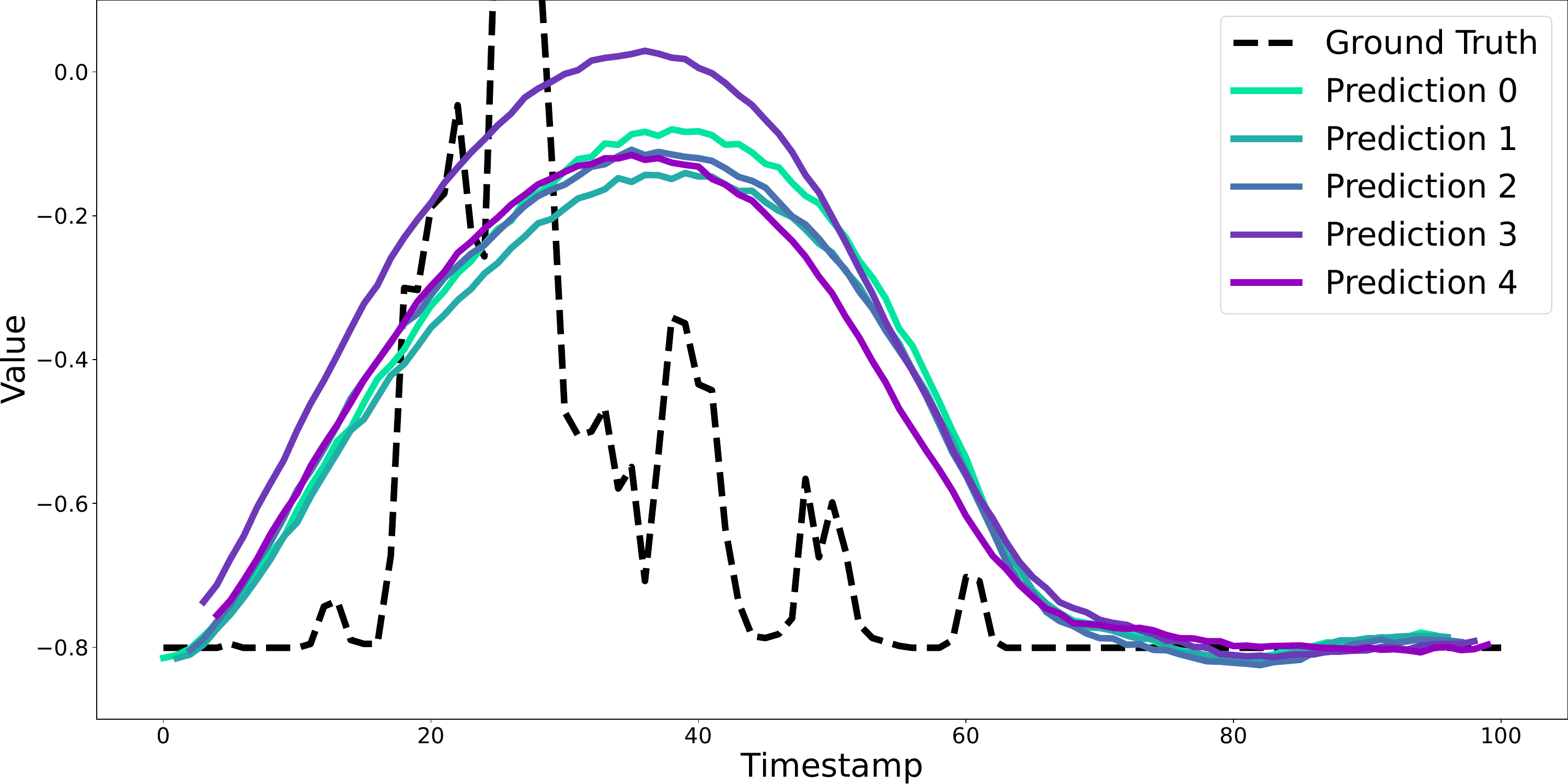}
    \end{minipage}
    \hspace{1cm}
    \begin{minipage}{0.25\textwidth}
        \centering
        \includegraphics[width=\textwidth]{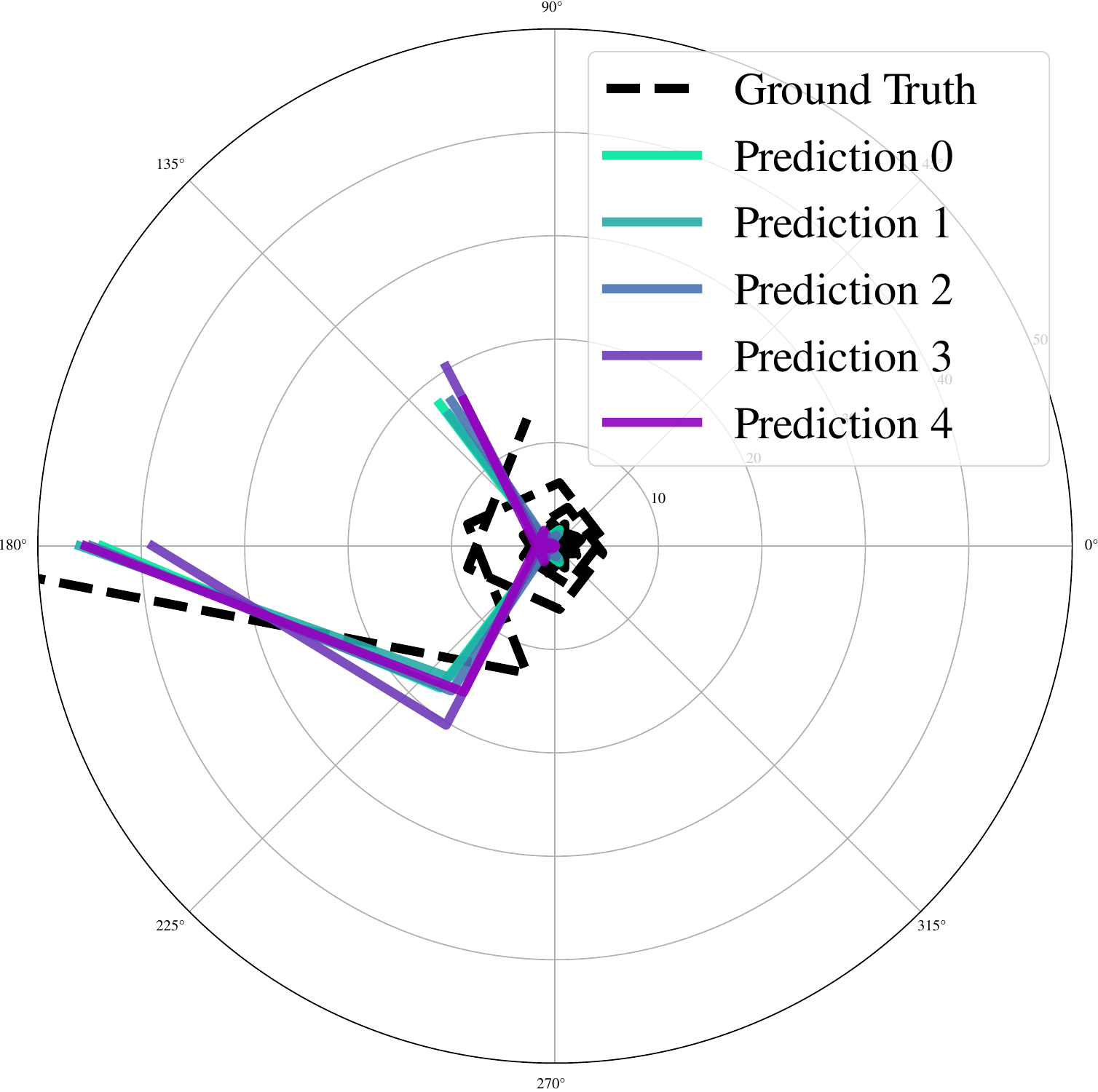}
    \end{minipage}
    \caption{\textbf{Visualization}: \textbf{Upper} shows the prediction of iTransformer~\cite{model.itransformer} using only the regression loss on the Solar dataset~\cite{dataset.solar}, while \textbf{Bottom} shows the result under the same conditions when AliO is also applied. It is observed that AliO improves alignment performance for overlapping timestamps.}
    \label{fig:visualization04}
\end{figure}

\clearpage
\section{On sudden event}
This section expands on the sudden events mentioned in \Cref{sec:limitation_and_discussion}.
\begin{definition}
    The sudden event refers to an anomalous data point in the input sequence that can be interpreted as out-of-distribution (OOD), causing perturbations in the output sequence. Such event occur when input-level perturbations lead to unpredictable output patterns.
\end{definition}
These scenarios deviate from the core objective of Long-term Time Series Forecasting (LTSF), which focuses on predictable sequences. Instead, they align more closely with:
\begin{itemize}
    \item Domain Adaptation~\cite{domain-adaptation}
    \item Anomaly Detection~\cite{anomaly-detection}
    \item Test-Time Adaptation~\cite{test-time-adaptation}
\end{itemize}
The TAM metric is specifically designed for LTSF tasks with predictable, stable output sequences (i.e., non-perturbed scenarios). In contexts involving sudden events or OOD (out-of-distribution) data, the inherent assumptions of TAM-particularly its reliance on sequence stability-may not hold, necessitating cautious application. However, if a sudden event includes precursor signals (i.e., detectable input patterns) and the model can sufficiently predict the output sequence using this information, TAM remains valid even under such conditions.
\section{Initial TAM vs. MSE improvement}
\label{apdx:mse_improvement_vs_tam}
\Cref{fig:mse_improvement_vs_tam} may be difficult to interpret as there is no explicit distinction between models and datasets. In this section [\Cref{fig:mse_improvement_vs_tam2}], to facilitate easier interpretation, models are represented by different shapes and datasets are represented by different colors.

\begin{figure}[H]
    \centering
    \includegraphics[width=0.8\linewidth]{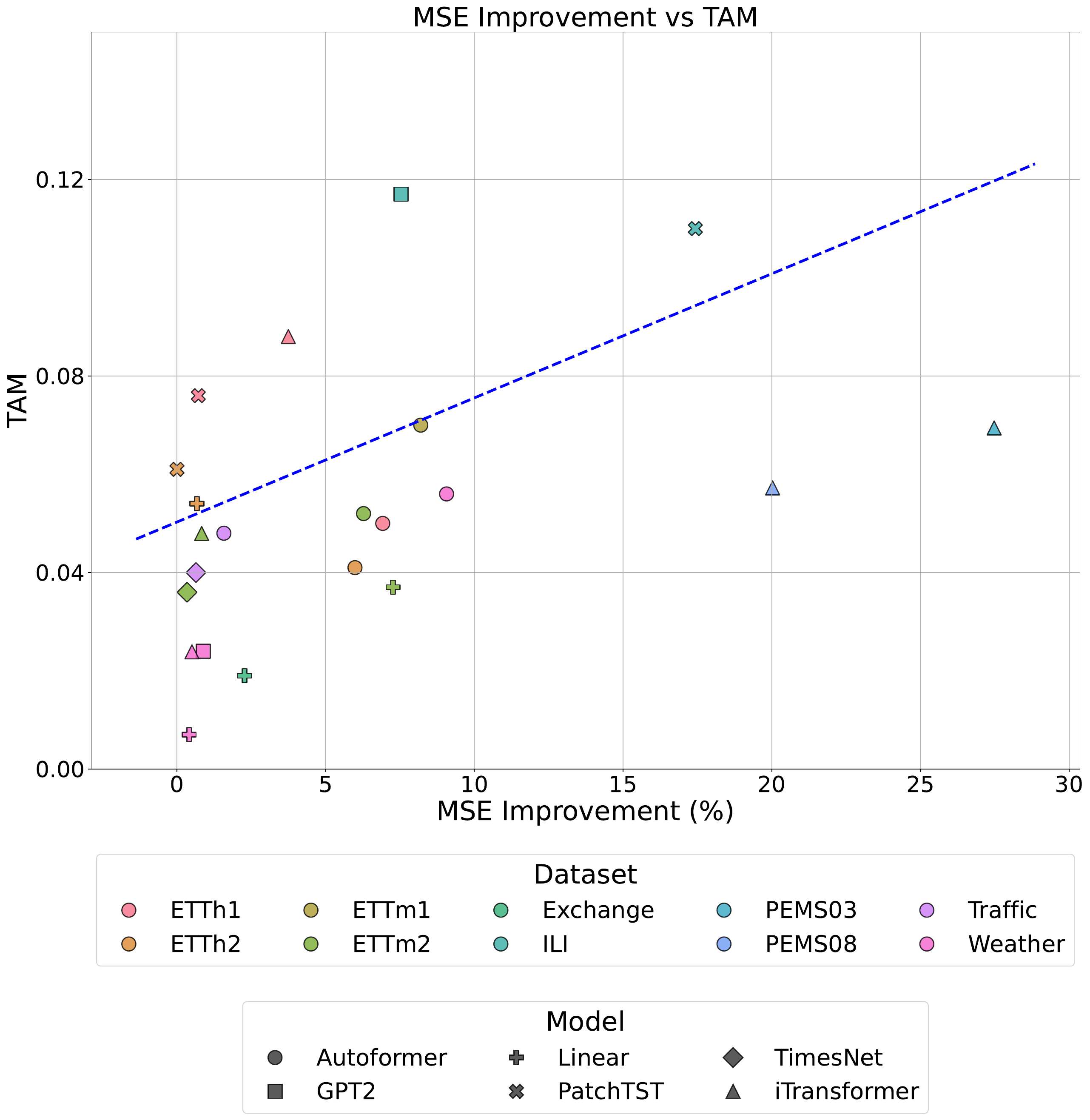}
    \caption{The relationship between initial TAM and MSE improvement (\%).}
    \label{fig:mse_improvement_vs_tam2}
\end{figure}
\section{Zero coefficient of time domain alignment ($\lambda_T$)}
\label{apdx:zero_coefficient}
By default, AliO operates in the time domain, so scenarios where the time-scaling coefficient is zero were excluded from the main context. However, we conducted experiments on CycleNet~\citep{model.cyclenet}, iTransformer~\citep{model.itransformer}, TimesNet~\citep{model.timesnet} under conditions with a zero time-scaling coefficient. The results, shown in \Cref{fig:comparison_mse_tam_cyclenet_ettm1,fig:comparison_mse_tam_cyclenet_ettm2,fig:comparison_mse_tam_cyclenet_traffic,fig:comparison_mse_tam_itransformer_etth1,fig:comparison_mse_tam_itransformer_etth2,fig:comparison_mse_tam_itransformer_ettm1,fig:comparison_mse_tam_itransformer_ettm1,fig:comparison_mse_tam_itransformer_ettm2,fig:comparison_mse_tam_itransformer_traffic,fig:comparison_mse_tam_itransformer_weather,fig:comparison_mse_tam_timesnet_etth1,fig:comparison_mse_tam_timesnet_ettm1,fig:comparison_mse_tam_timesnet_ettm2}, are normalized between 0 and 1. Since the condition ($\lambda_T = 0$, $\lambda_F = 0$) implies that AliO is not used, so it is empty. As demonstrated in \Cref{fig:comparison_mse_tam} increasing $\lambda_T$ concsistently improves TAM performance. When it comes to MSE performance, there's a general trend for it to improve as the coefficient increases, similar to what's seen with TAM. However, in some environments, the coefficient can become excessively large, dominating the regression loss and causing performance to degrade. Despite this, our experiments confirmed that an appropriate, non-zero coefficient leads to improved performance. The tables show the results where the lag is 1 and the number of sequences is 2 (default).

\begin{figure}[H]
    \centering
    \begin{minipage}[H]{0.48\textwidth}
        \centering
        \includegraphics[width=0.485\linewidth]{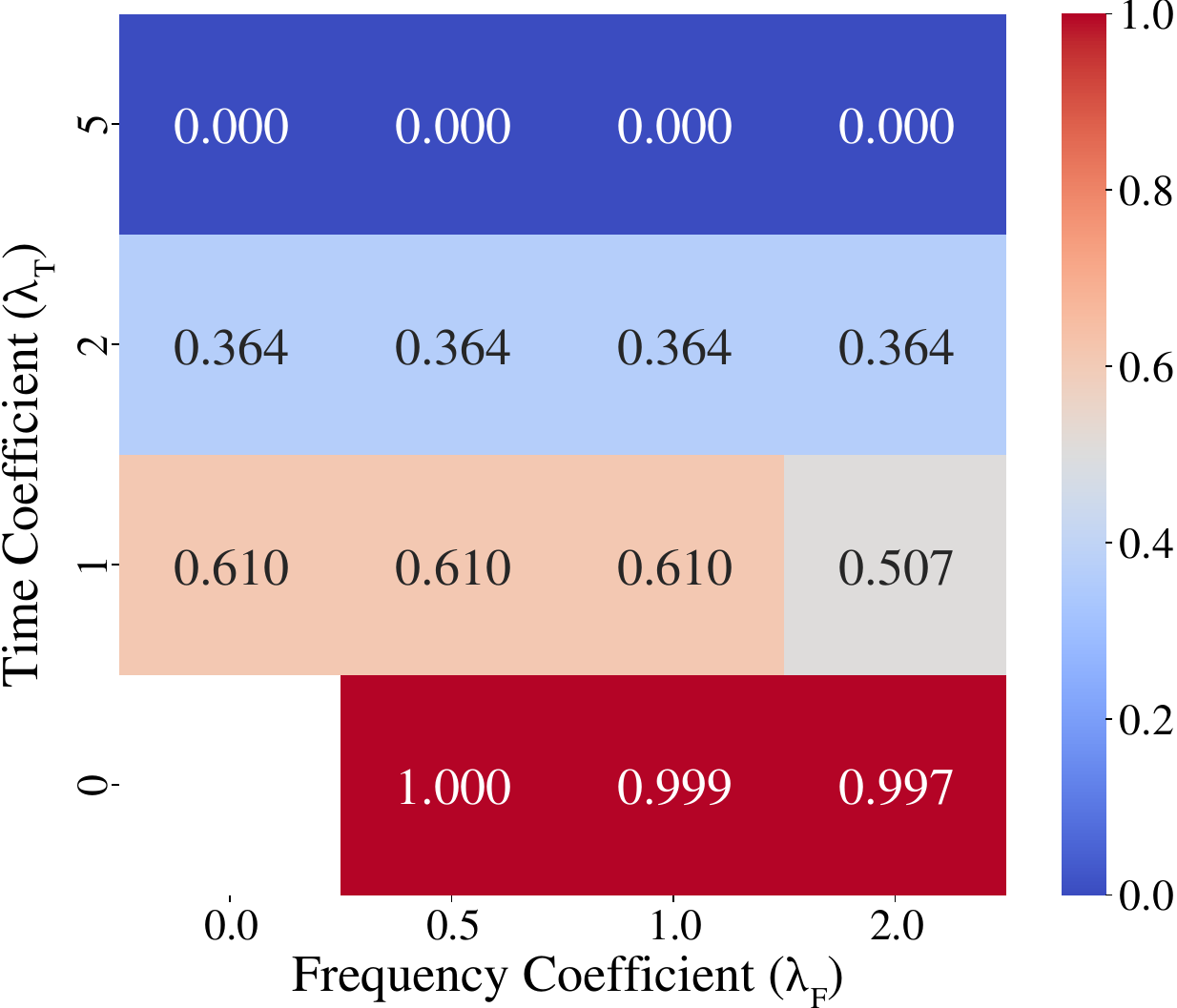}
        \includegraphics[width=0.485\linewidth]{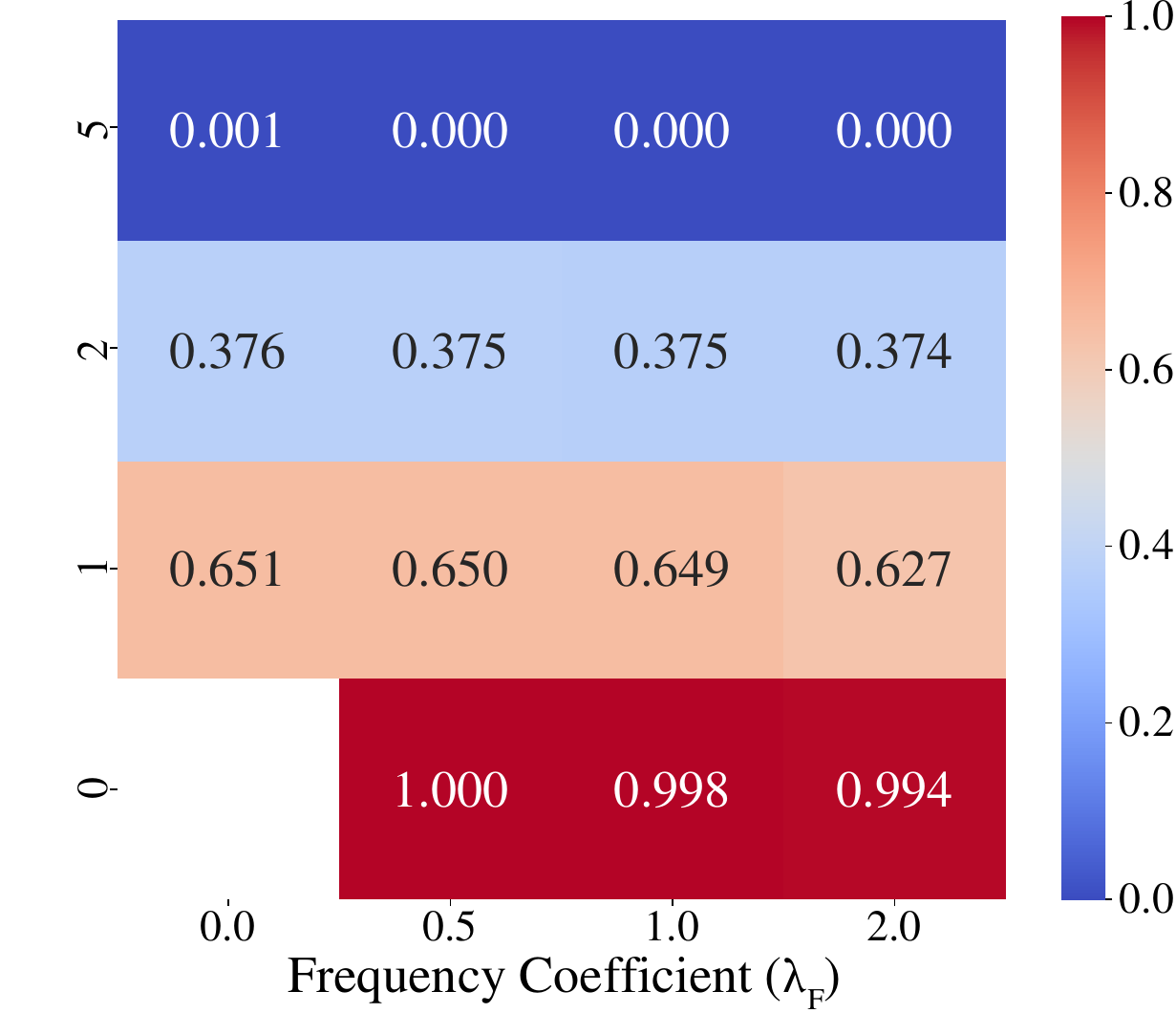}
        \caption{Comparison of the normalized MSE (\textbf{left}) and TAM (\textbf{right}) of CycleNet trained on ETTm1 dataset. The x-axis represents $\lambda_F$ ($\{0.0, 0.5, 1.0, 2.0\}$), and the y-axis represents $\lambda_T$ ($\{0.0, 1.0, 2.0, 5.0\}$).}
        \label{fig:comparison_mse_tam_cyclenet_ettm1}
    \end{minipage}%
    \hfill
    \begin{minipage}[H]{0.48\textwidth}
        \centering
        \includegraphics[width=0.485\linewidth]{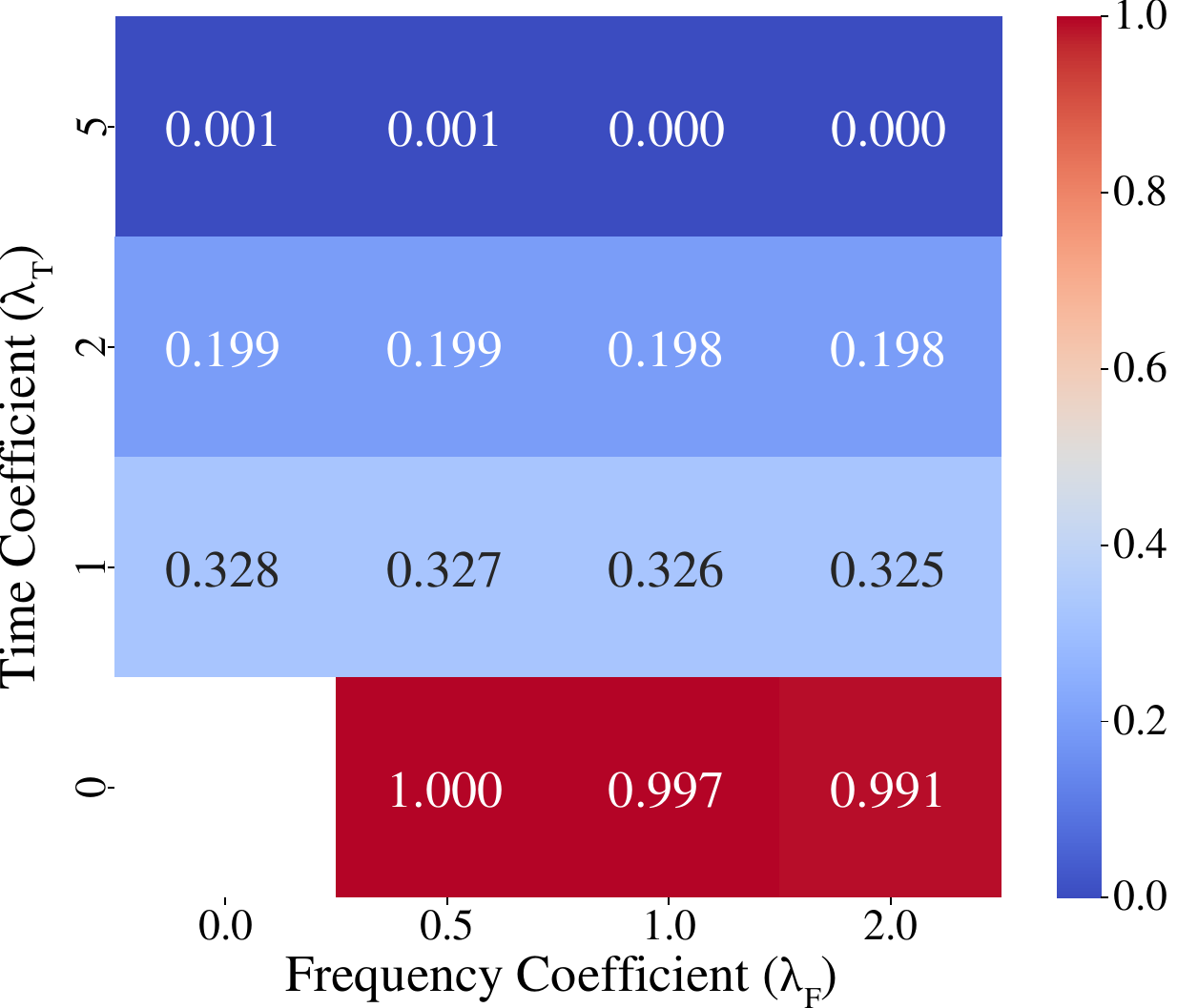}
        \includegraphics[width=0.485\linewidth]{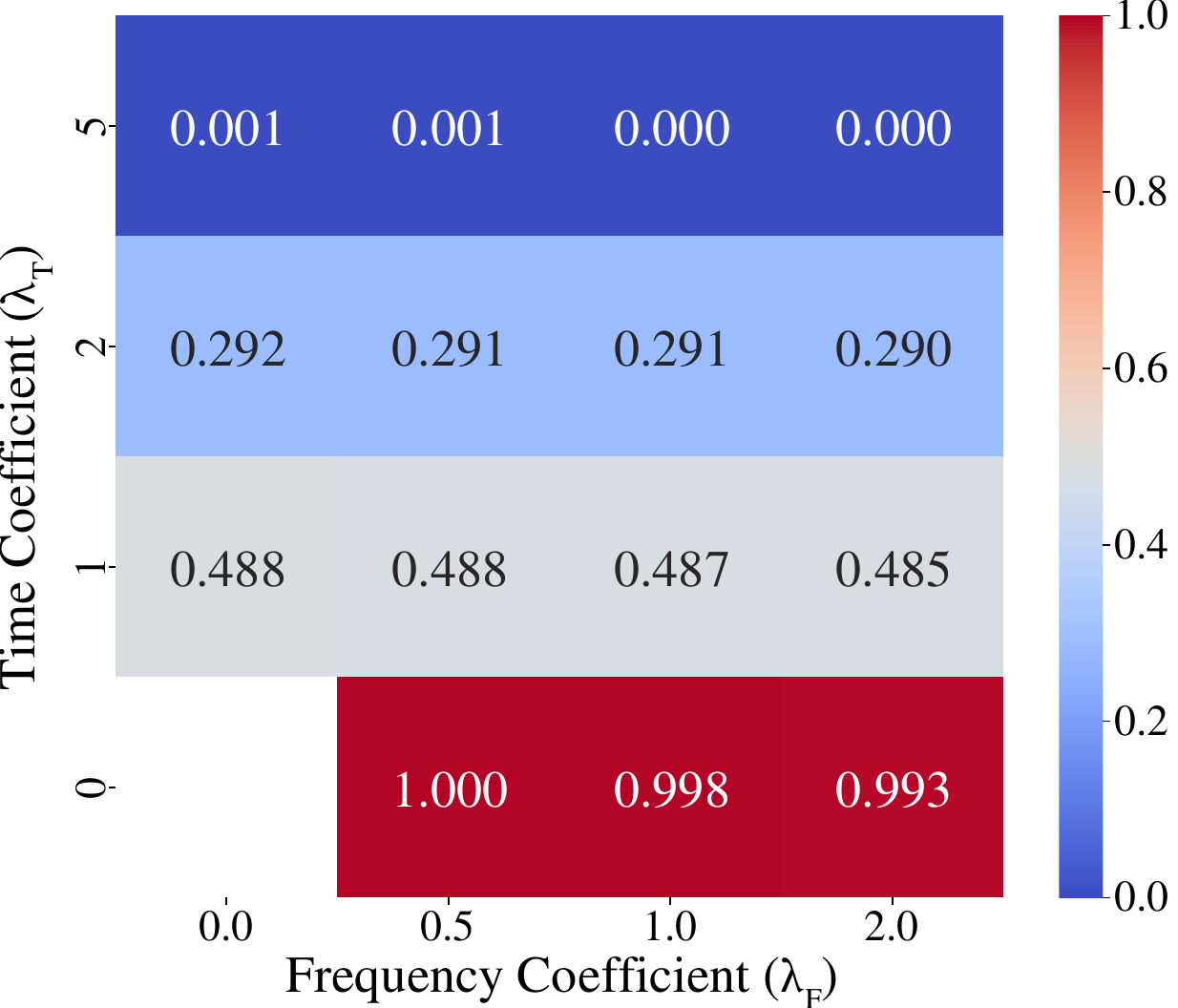}
        \caption{Comparison of the normalized MSE (\textbf{left}) and TAM (\textbf{right}) of CycleNet trained on ETTm2 dataset. The x-axis represents $\lambda_F$ ($\{0.0, 0.5, 1.0, 2.0\}$), and the y-axis represents $\lambda_T$ ($\{0.0, 1.0, 2.0, 5.0\}$).}
        \label{fig:comparison_mse_tam_cyclenet_ettm2}
    \end{minipage}
\end{figure}

\begin{figure}[H]
    \centering
    \begin{minipage}[H]{0.48\textwidth}
        \centering
        \includegraphics[width=0.485\linewidth]{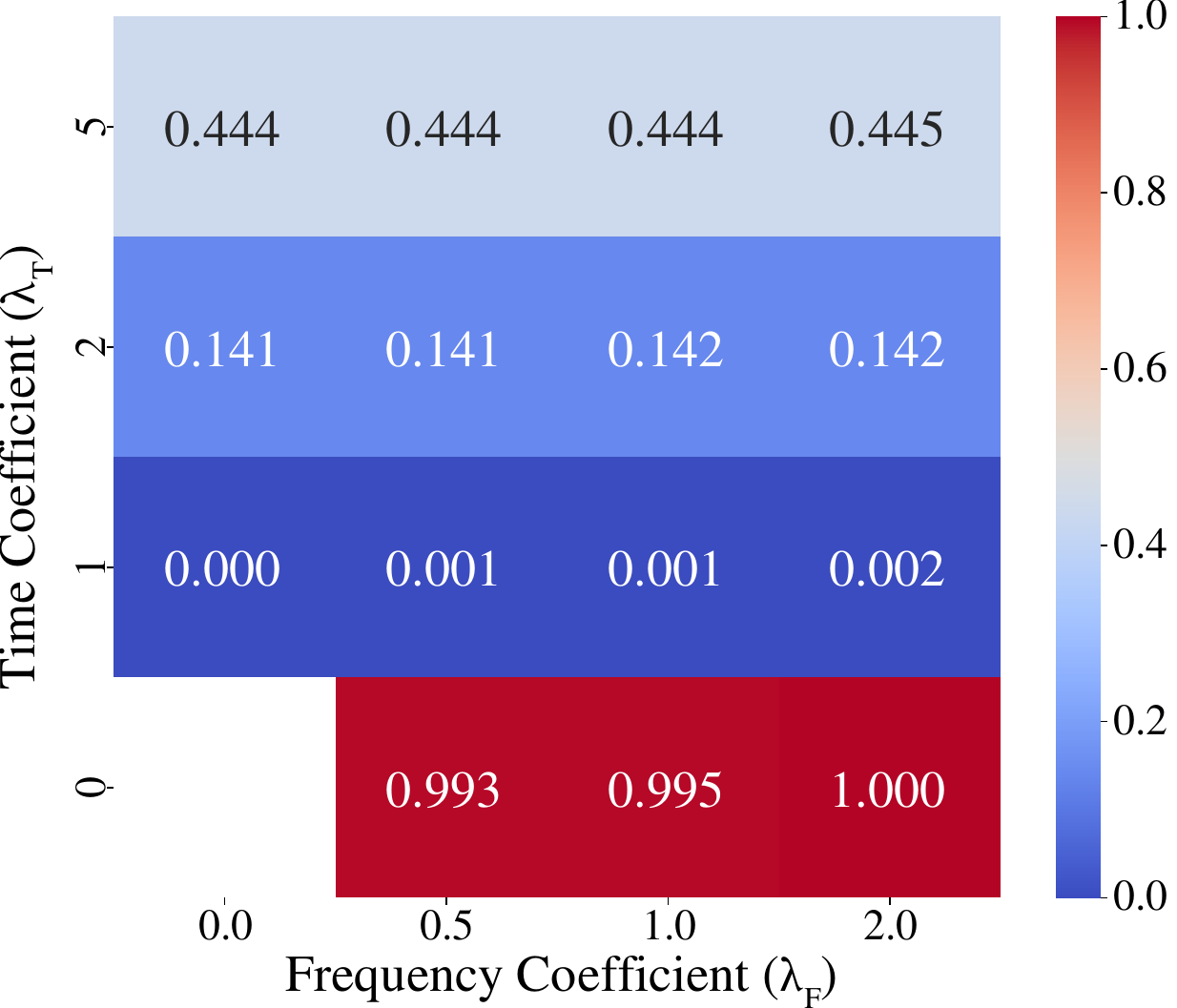}
        \includegraphics[width=0.485\linewidth]{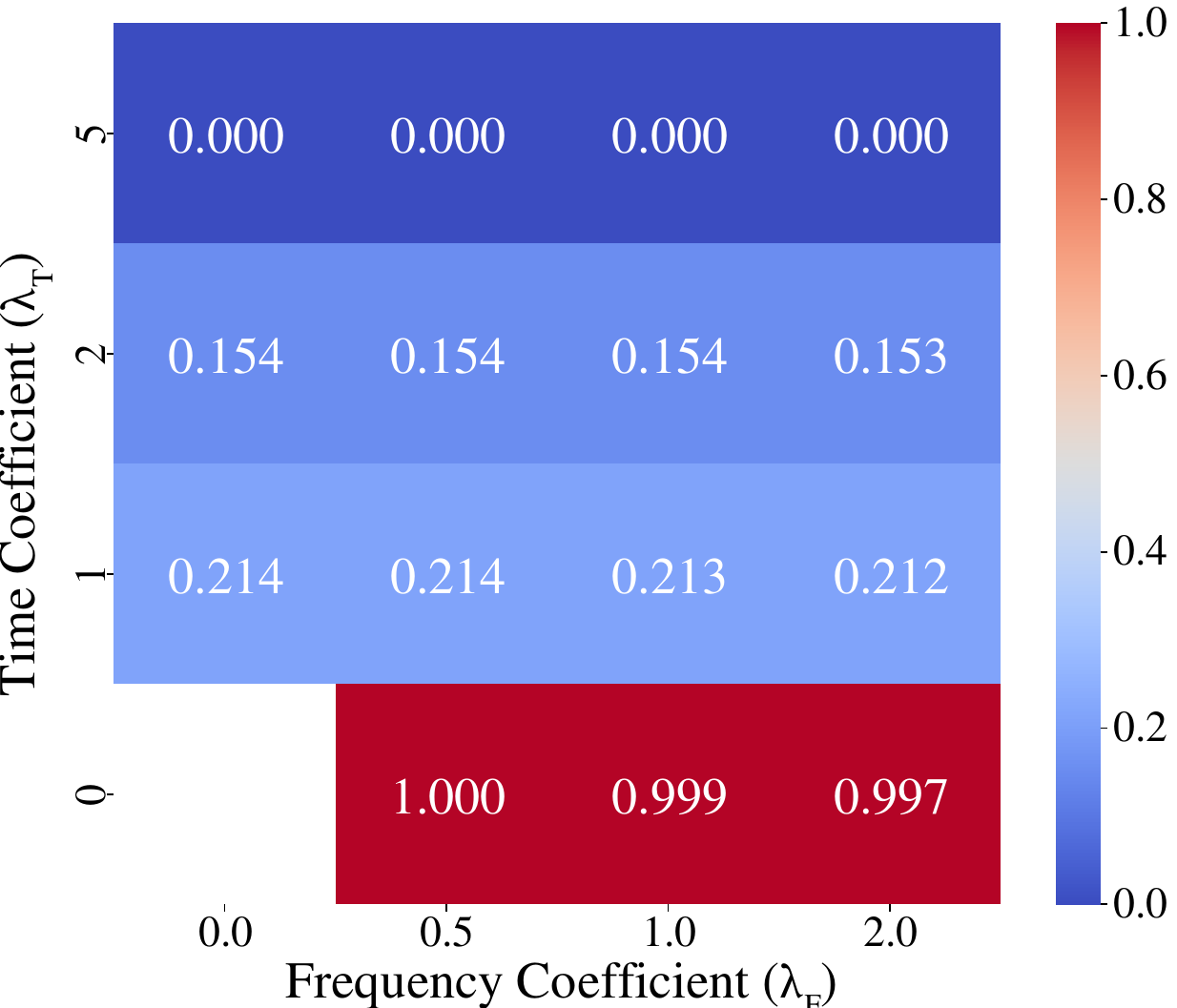}
        \caption{Comparison of the normalized MSE (\textbf{left}) and TAM (\textbf{right}) of CycleNet trained on Traffic dataset. The x-axis represents $\lambda_F$ ($\{0.0, 0.5, 1.0, 2.0\}$), and the y-axis represents $\lambda_T$ ($\{0.0, 1.0, 2.0, 5.0\}$).}
        \label{fig:comparison_mse_tam_cyclenet_traffic}
    \end{minipage}%
    \hfill
    \begin{minipage}[H]{0.48\textwidth}
        \centering
        \includegraphics[width=0.485\linewidth]{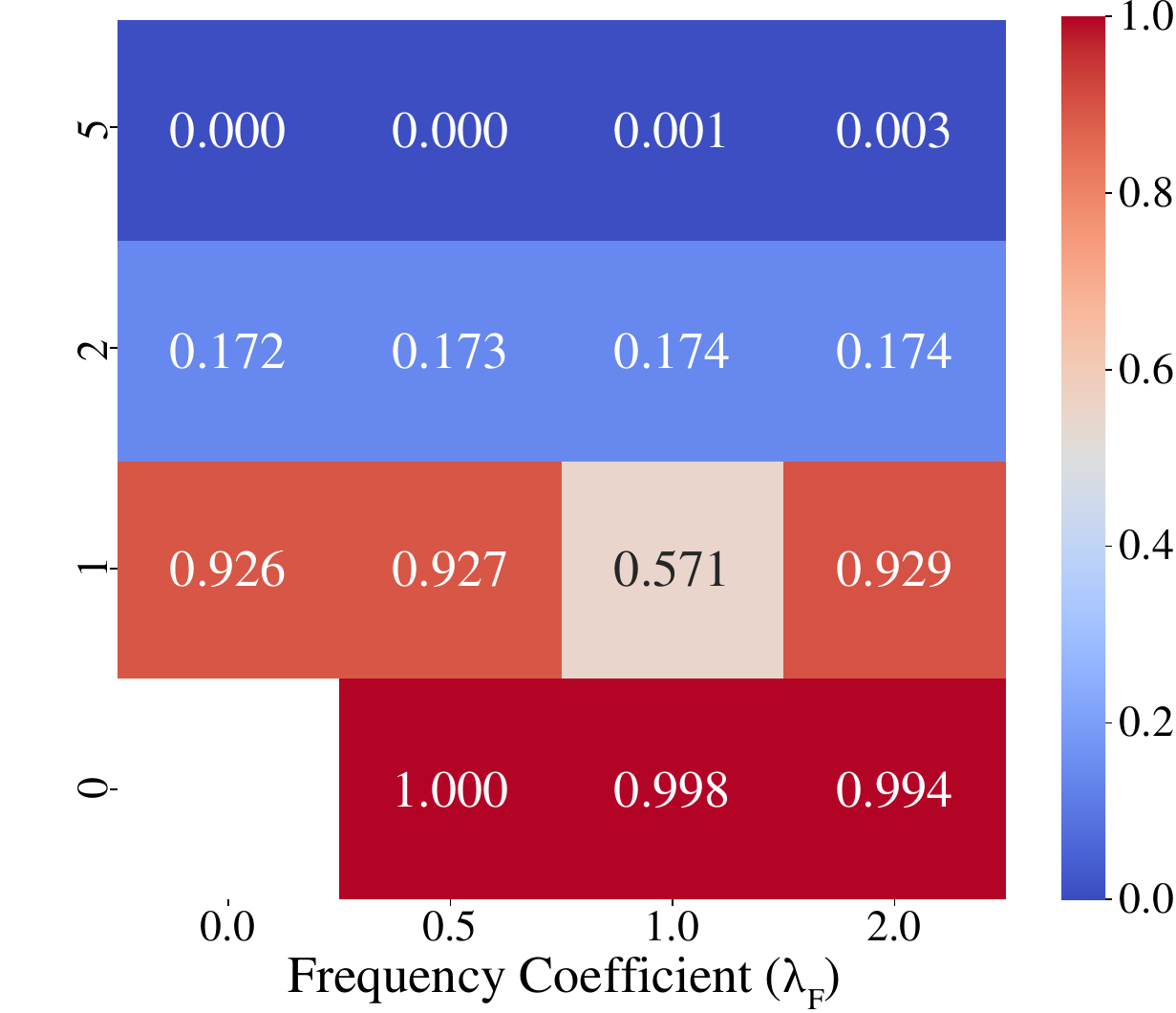}
        \includegraphics[width=0.485\linewidth]{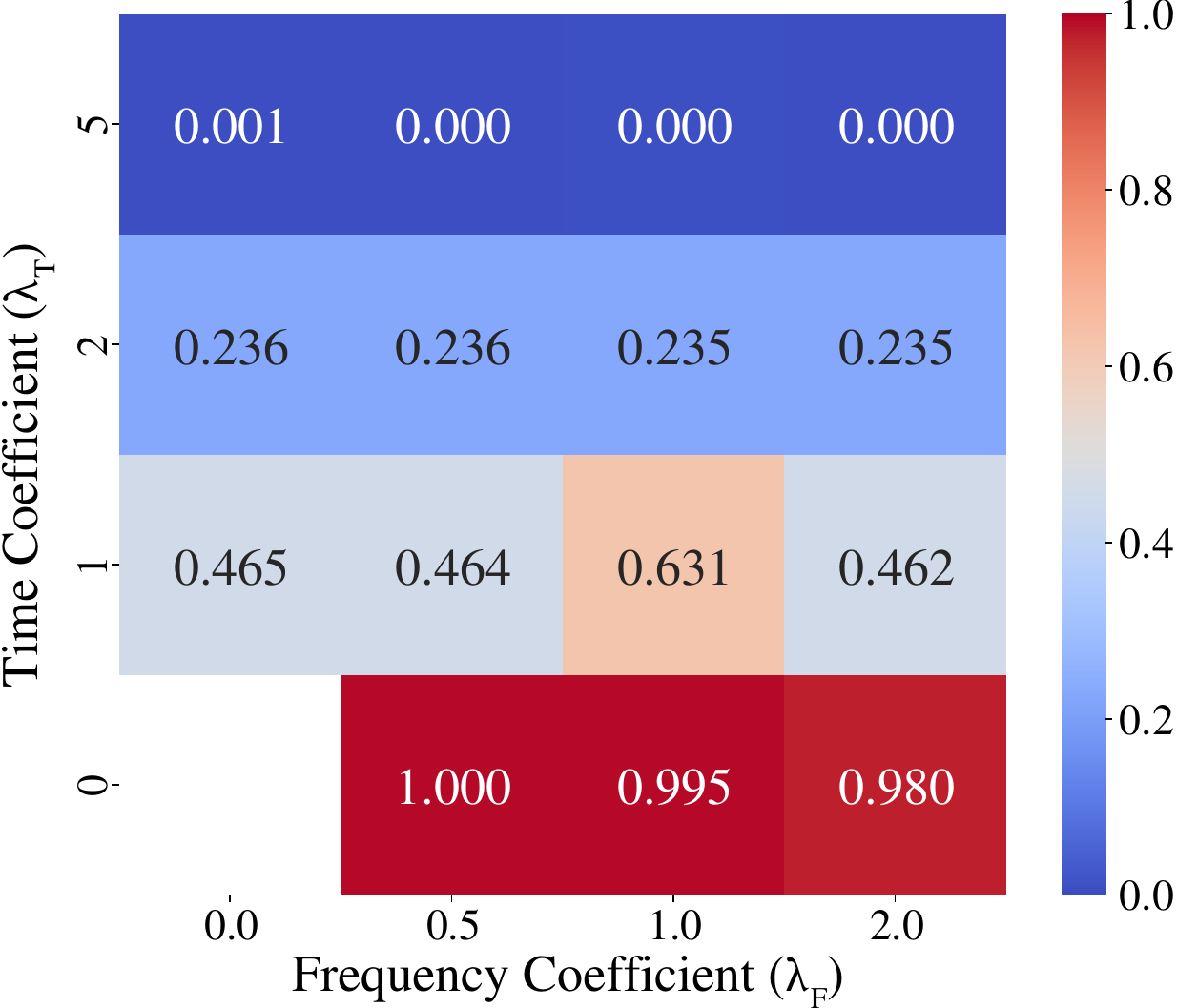}
        \caption{Comparison of the normalized MSE (\textbf{left}) and TAM (\textbf{right}) of iTransformer trained on ETTh1 dataset. The x-axis represents $\lambda_F$ ($\{0.0, 0.5, 1.0, 2.0\}$), and the y-axis represents $\lambda_T$ ($\{0.0, 1.0, 2.0, 5.0\}$).}
        \label{fig:comparison_mse_tam_itransformer_etth1}
    \end{minipage}
\end{figure}

\begin{figure}[H]
    \centering
    \begin{minipage}[H]{0.48\textwidth}
        \centering
        \includegraphics[width=0.485\linewidth]{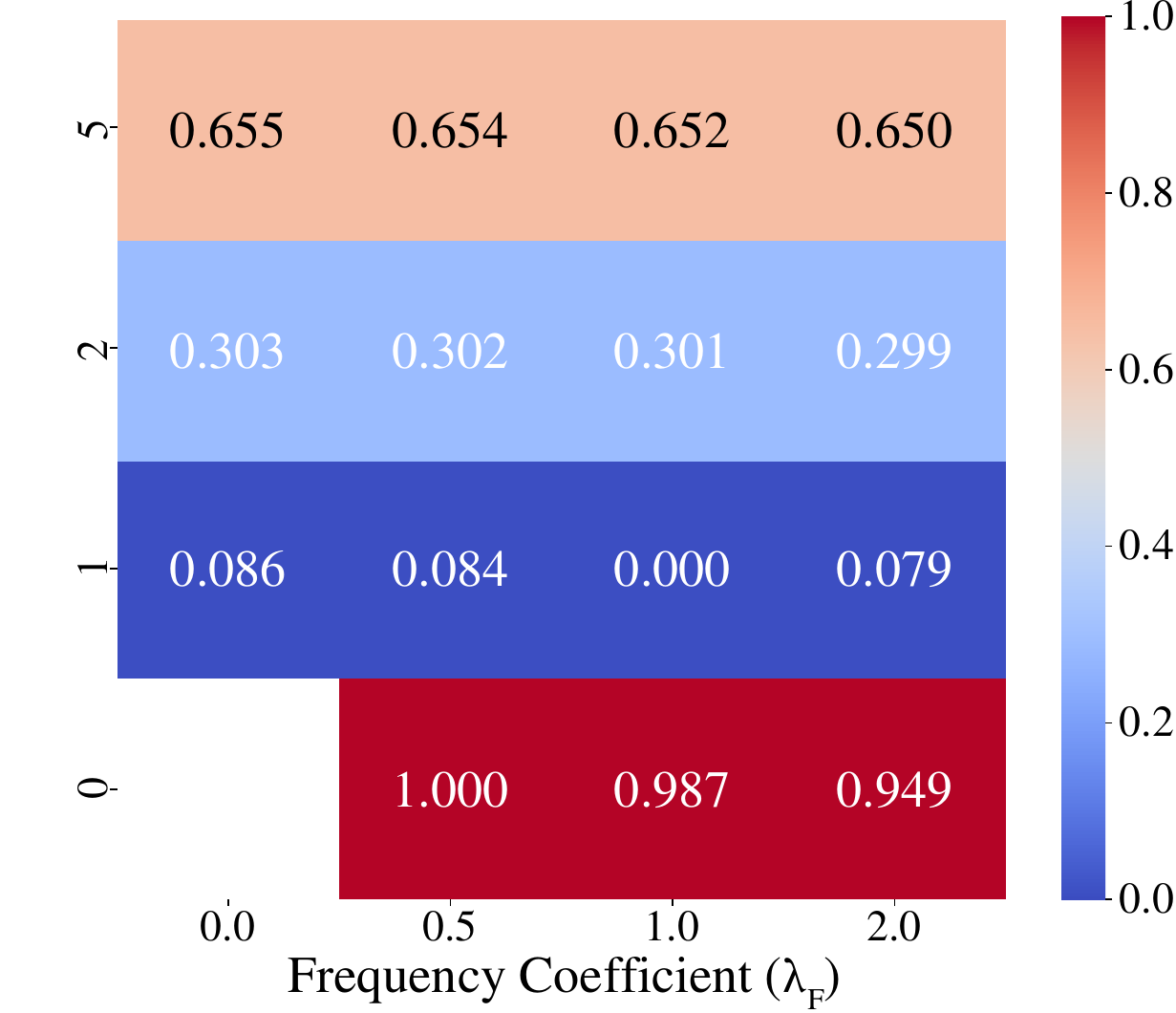}
        \includegraphics[width=0.485\linewidth]{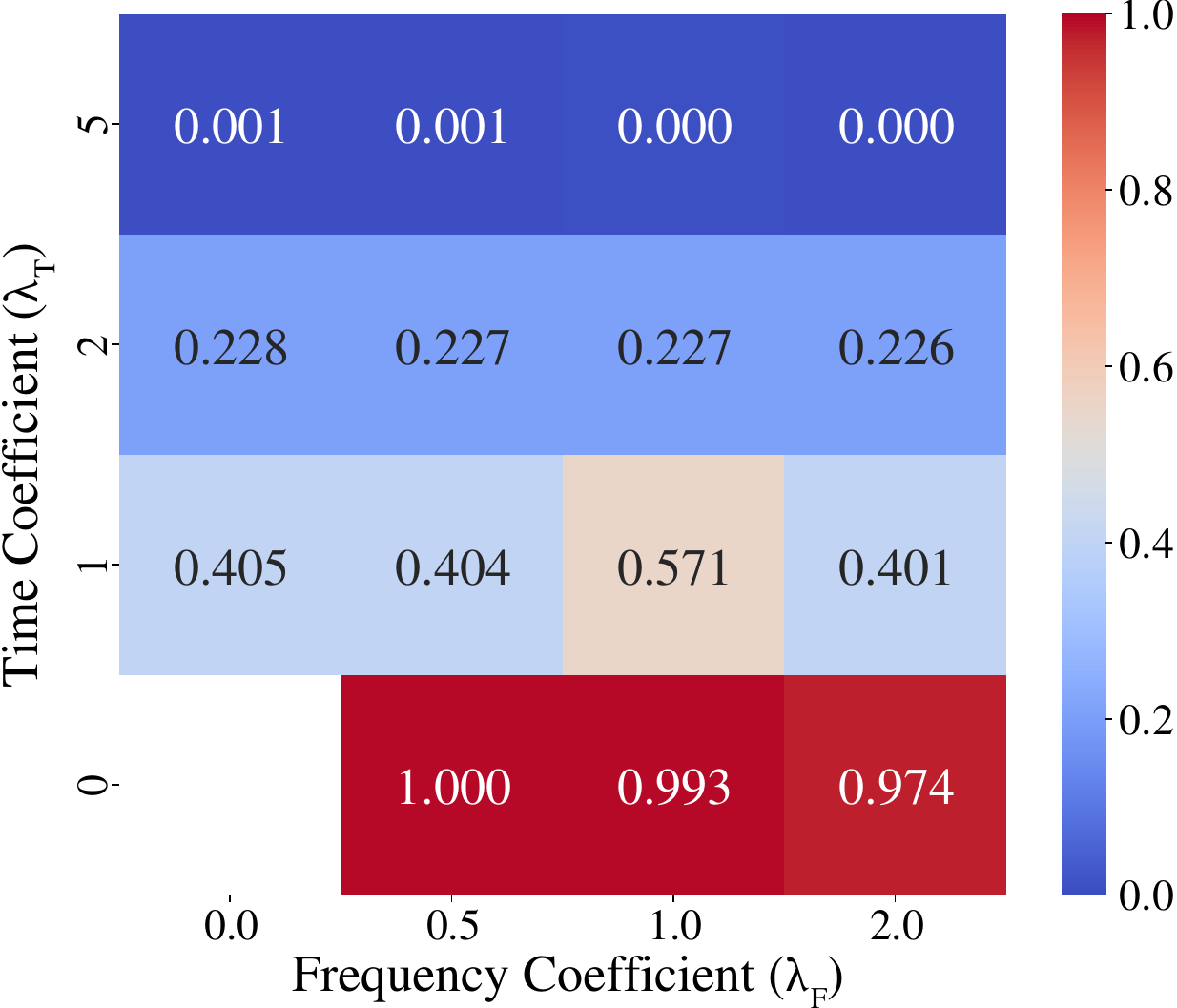}
        \caption{Comparison of the normalized MSE (\textbf{left}) and TAM (\textbf{right}) of iTransformer trained on ETTh2 dataset. The x-axis represents $\lambda_F$ ($\{0.0, 0.5, 1.0, 2.0\}$), and the y-axis represents $\lambda_T$ ($\{0.0, 1.0, 2.0, 5.0\}$).}
        \label{fig:comparison_mse_tam_itransformer_etth2}
    \end{minipage}%
    \hfill
    \begin{minipage}[H]{0.48\textwidth}
        \centering
        \includegraphics[width=0.485\linewidth]{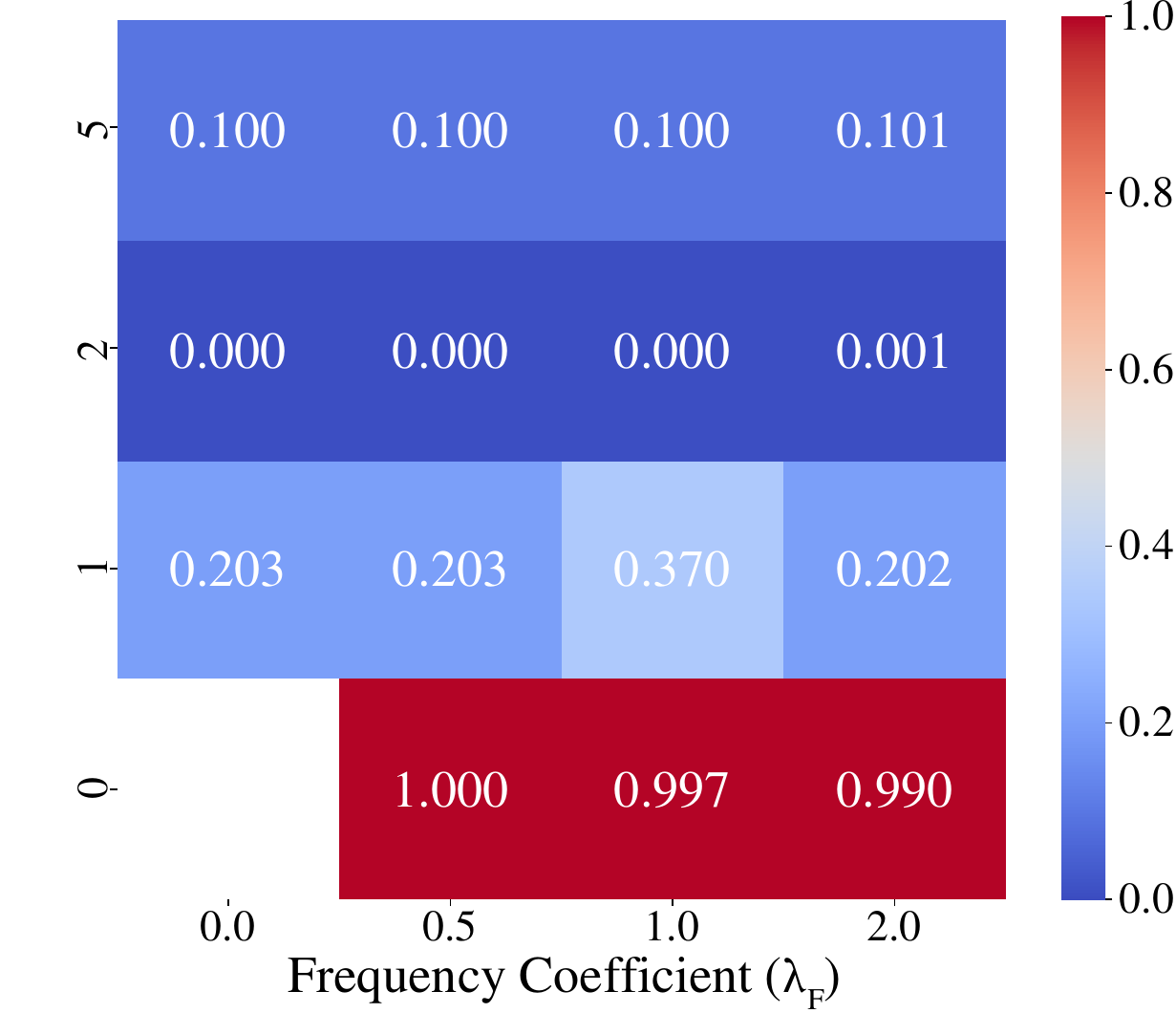}
        \includegraphics[width=0.485\linewidth]{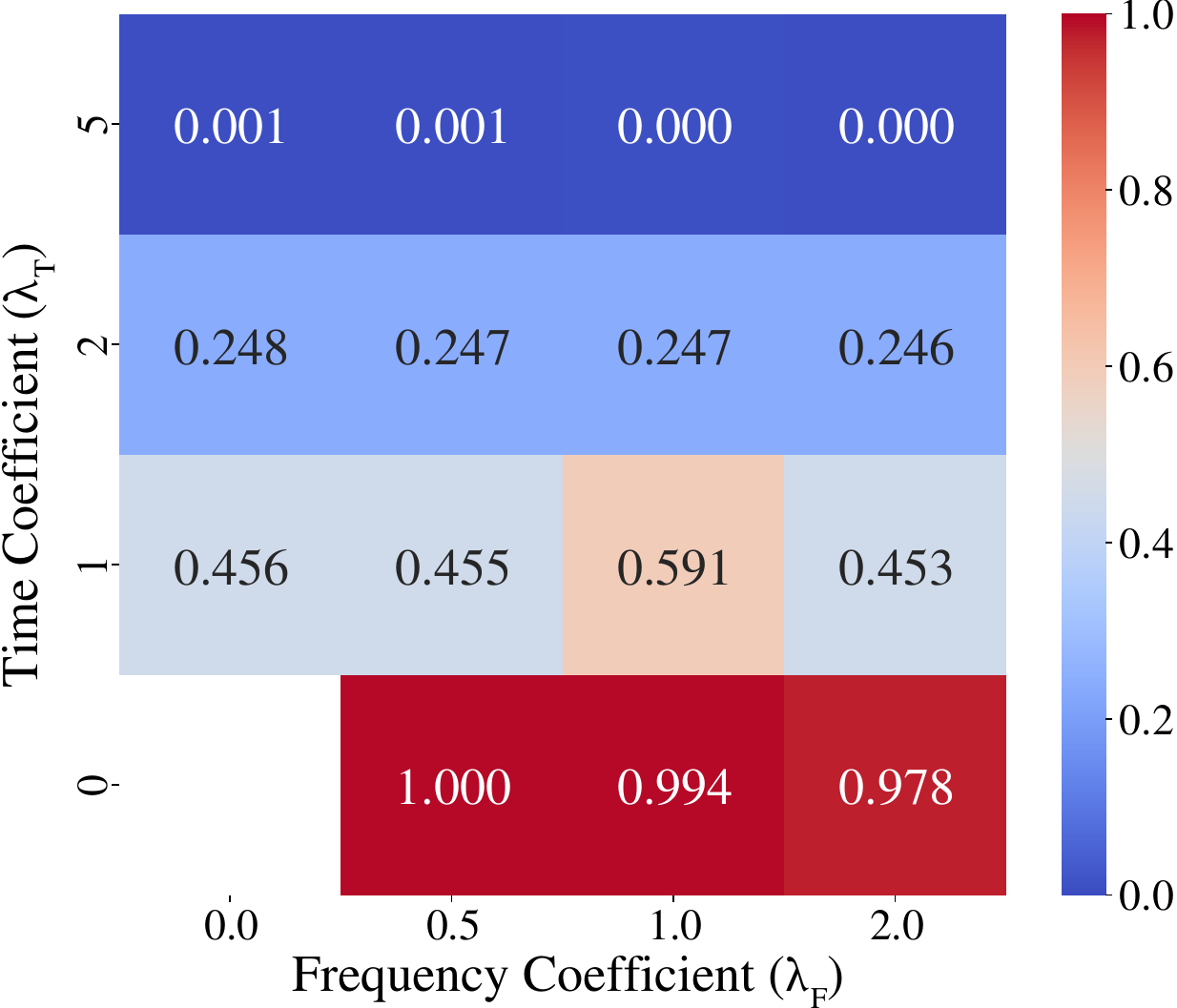}
        \caption{Comparison of the normalized MSE (\textbf{left}) and TAM (\textbf{right}) of iTransformer trained on ETTm1 dataset. The x-axis represents $\lambda_F$ ($\{0.0, 0.5, 1.0, 2.0\}$), and the y-axis represents $\lambda_T$ ($\{0.0, 1.0, 2.0, 5.0\}$).}
        \label{fig:comparison_mse_tam_itransformer_ettm1}
    \end{minipage}
\end{figure}

\begin{figure}[H]
    \centering
    \begin{minipage}[H]{0.48\textwidth}
        \centering
        \includegraphics[width=0.485\linewidth]{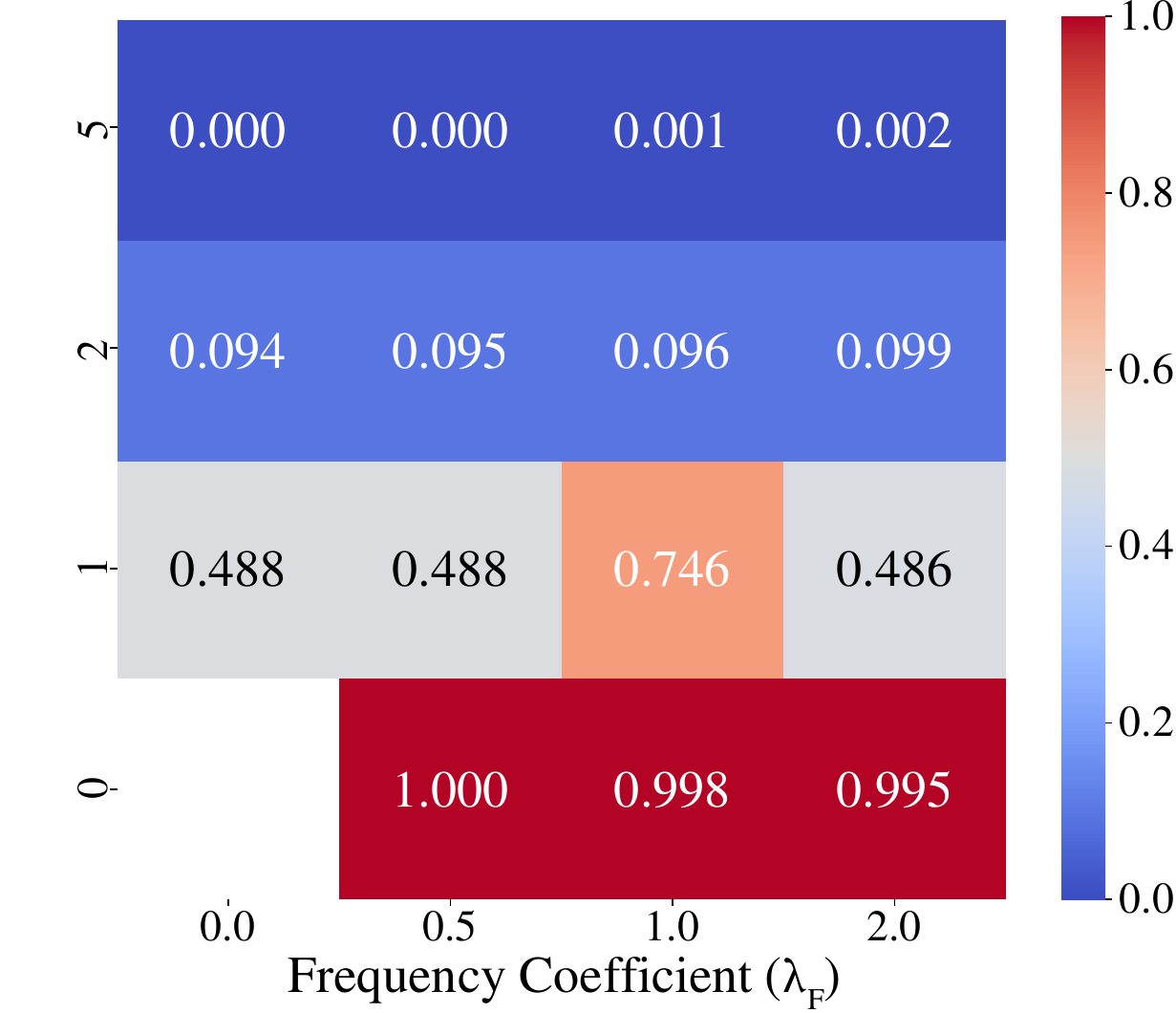}
        \includegraphics[width=0.485\linewidth]{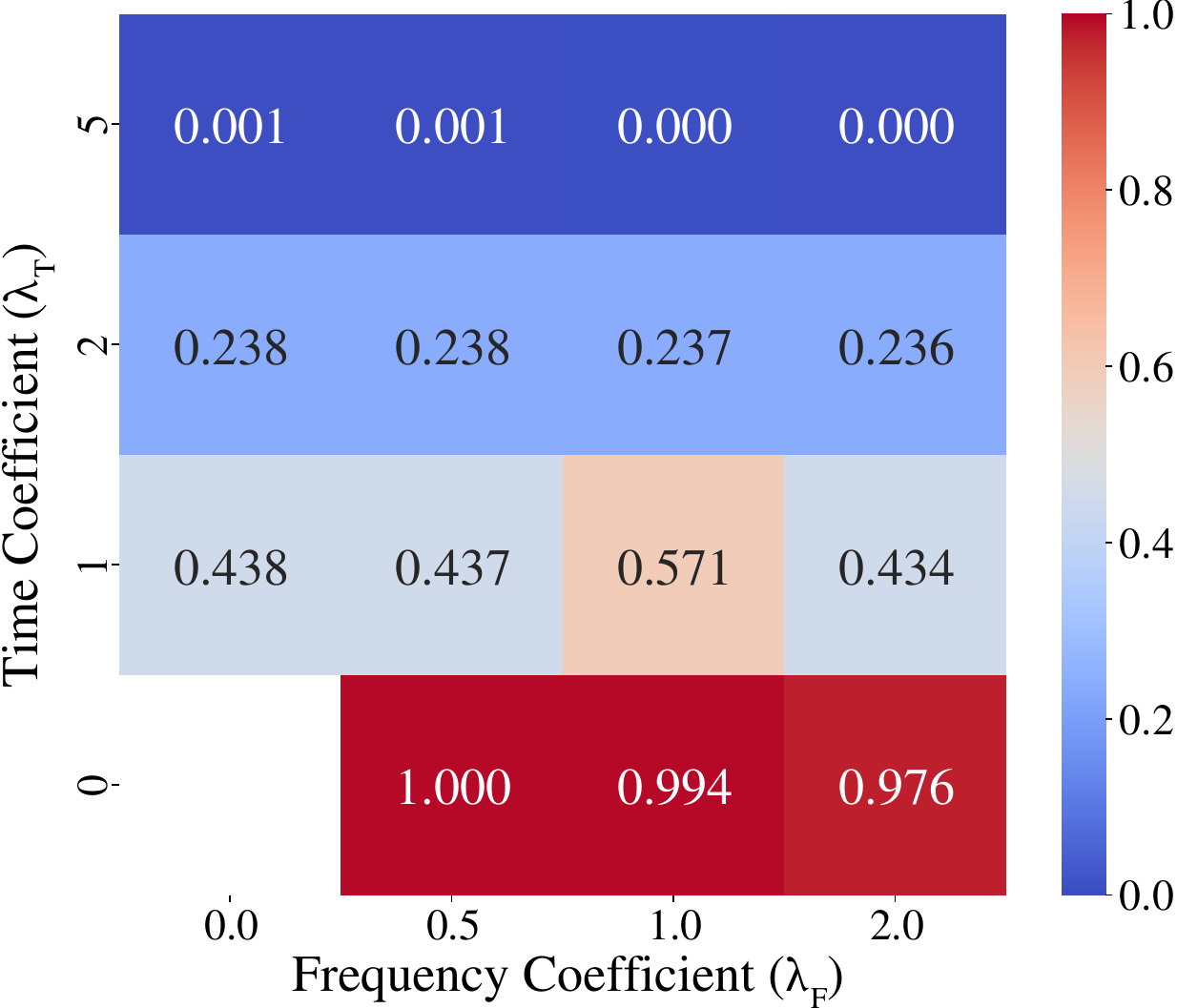}
        \caption{Comparison of the normalized MSE (\textbf{left}) and TAM (\textbf{right}) of iTransformer trained on ETTm2 dataset. The x-axis represents $\lambda_F$ ($\{0.0, 0.5, 1.0, 2.0\}$), and the y-axis represents $\lambda_T$ ($\{0.0, 1.0, 2.0, 5.0\}$).}
        \label{fig:comparison_mse_tam_itransformer_ettm2}
    \end{minipage}%
    \hfill
    \begin{minipage}[H]{0.48\textwidth}
        \centering
        \includegraphics[width=0.485\linewidth]{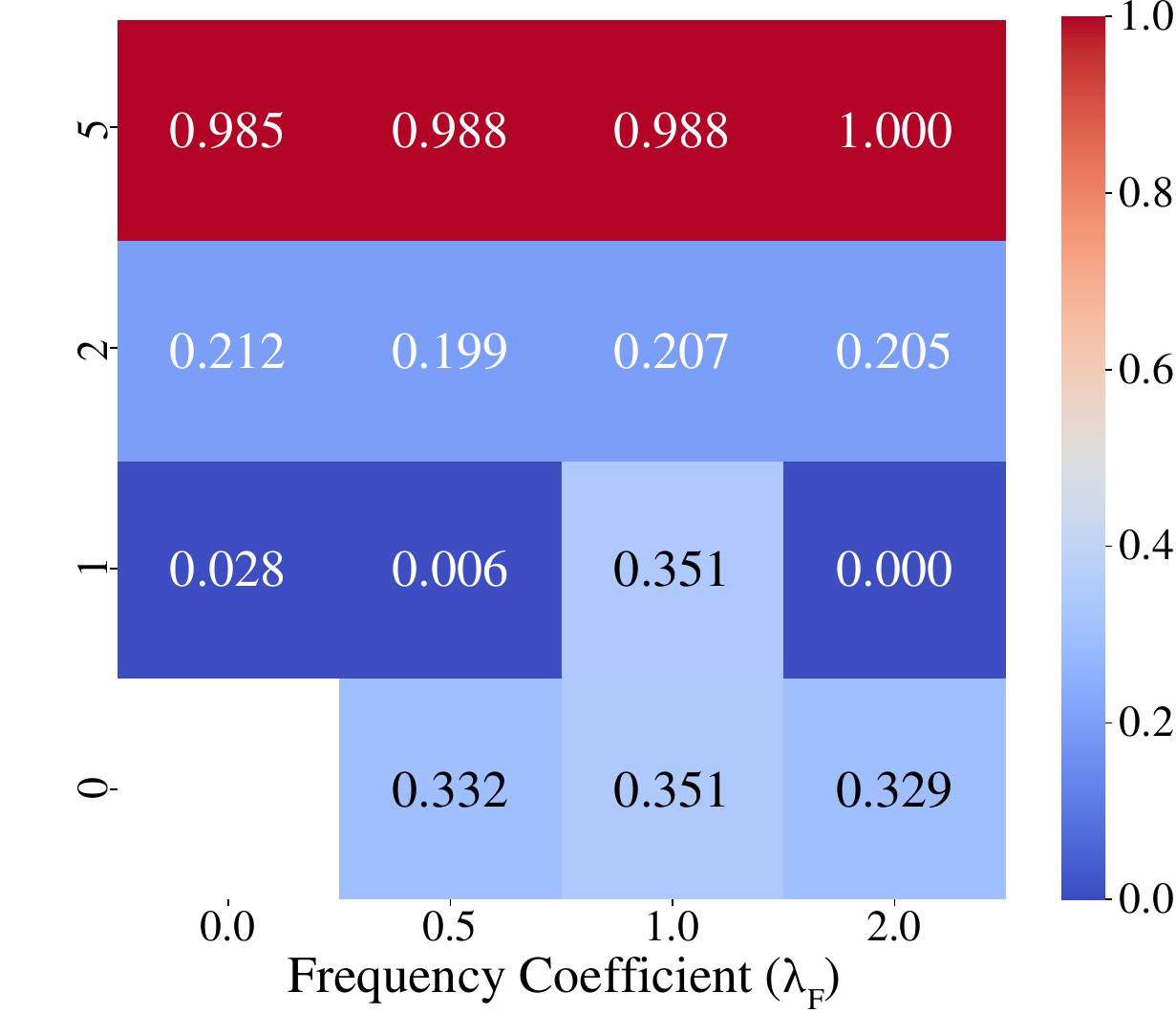}
        \includegraphics[width=0.485\linewidth]{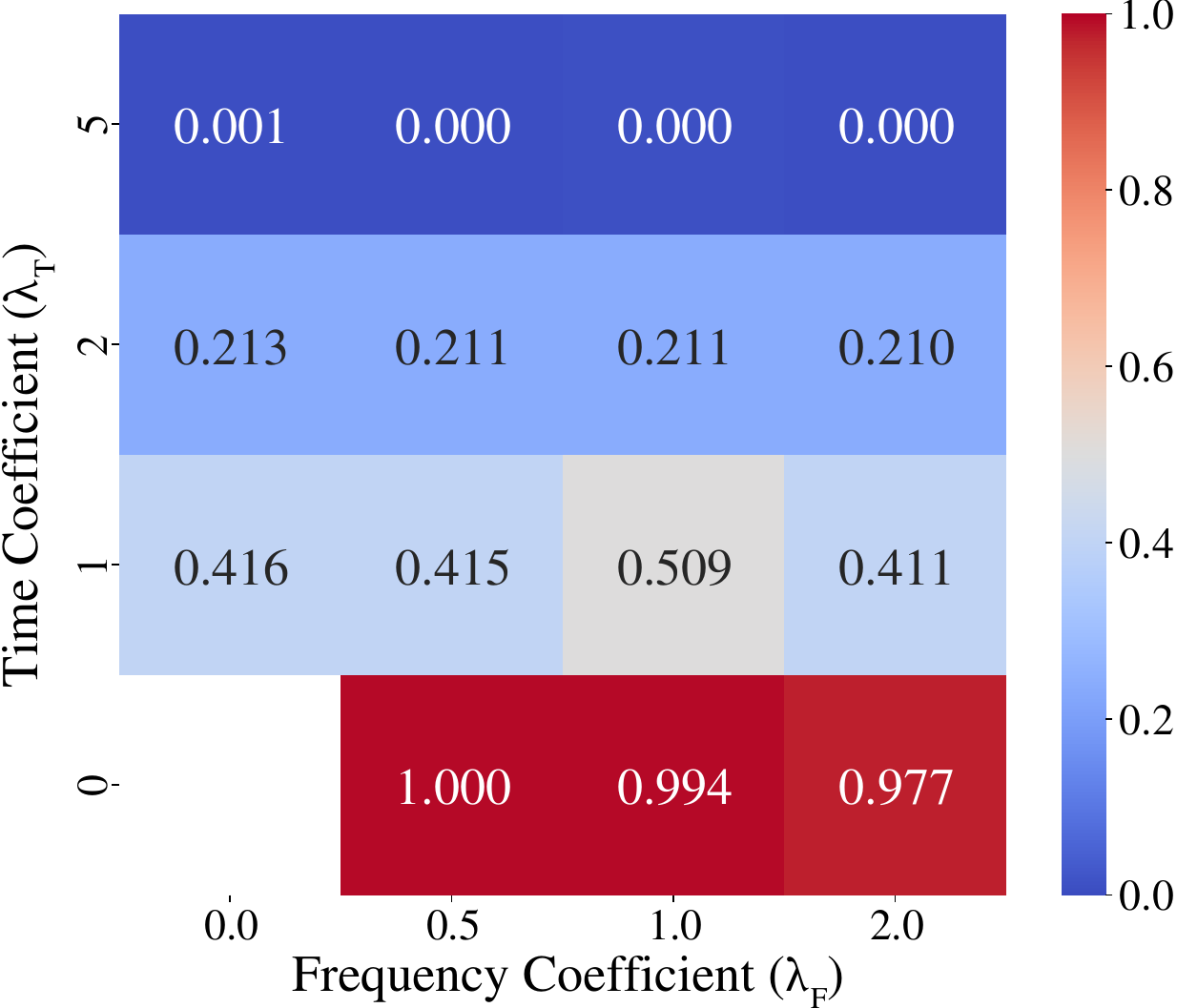}
        \caption{Comparison of the normalized MSE (\textbf{left}) and TAM (\textbf{right}) of iTransformer trained on Traffic dataset. The x-axis represents $\lambda_F$ ($\{0.0, 0.5, 1.0, 2.0\}$), and the y-axis represents $\lambda_T$ ($\{0.0, 1.0, 2.0, 5.0\}$).}
        \label{fig:comparison_mse_tam_itransformer_traffic}
    \end{minipage}
\end{figure}

\begin{figure}[H]
    \centering
    \begin{minipage}[H]{0.48\textwidth}
        \centering
        \includegraphics[width=0.485\linewidth]{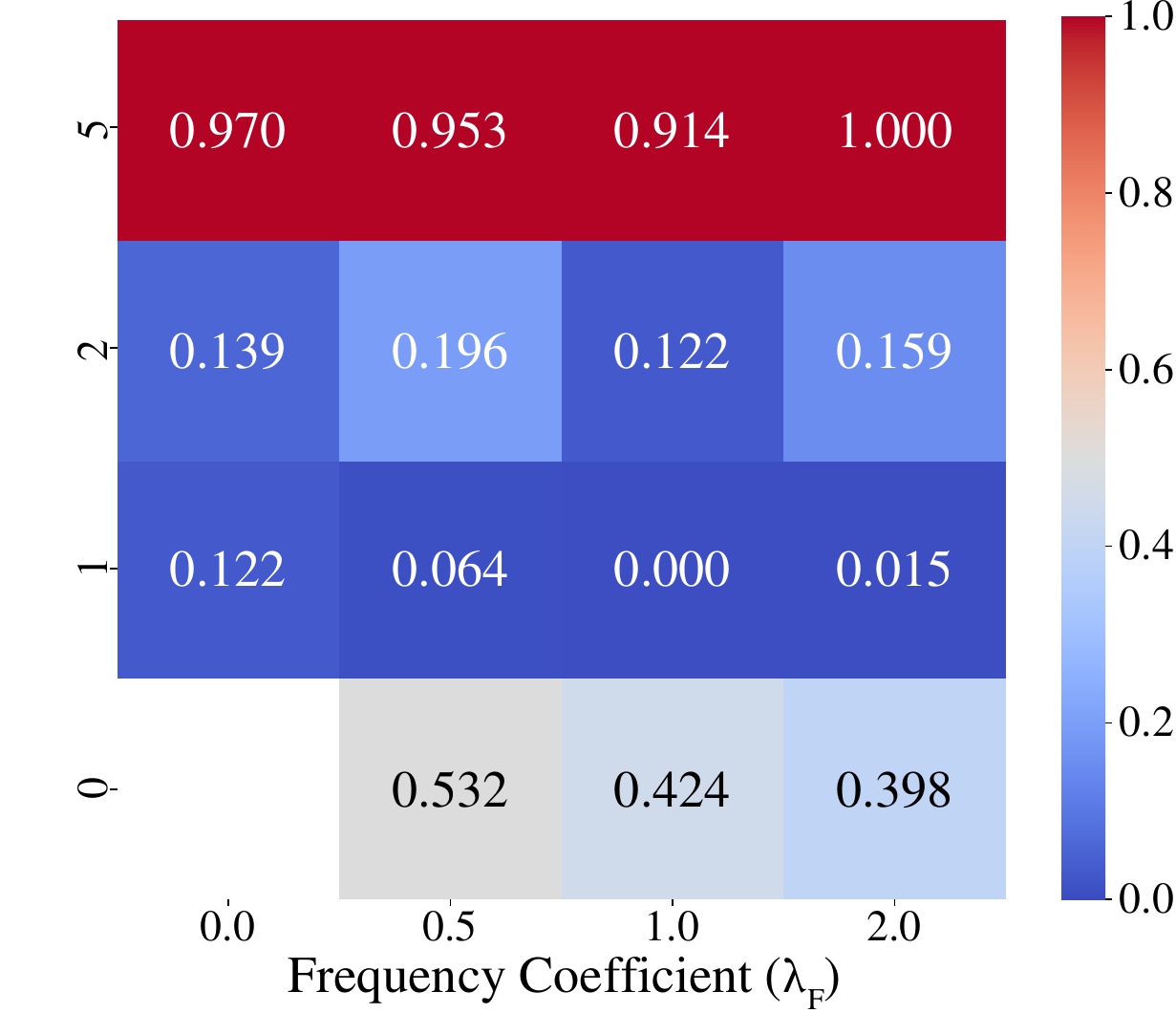}
        \includegraphics[width=0.485\linewidth]{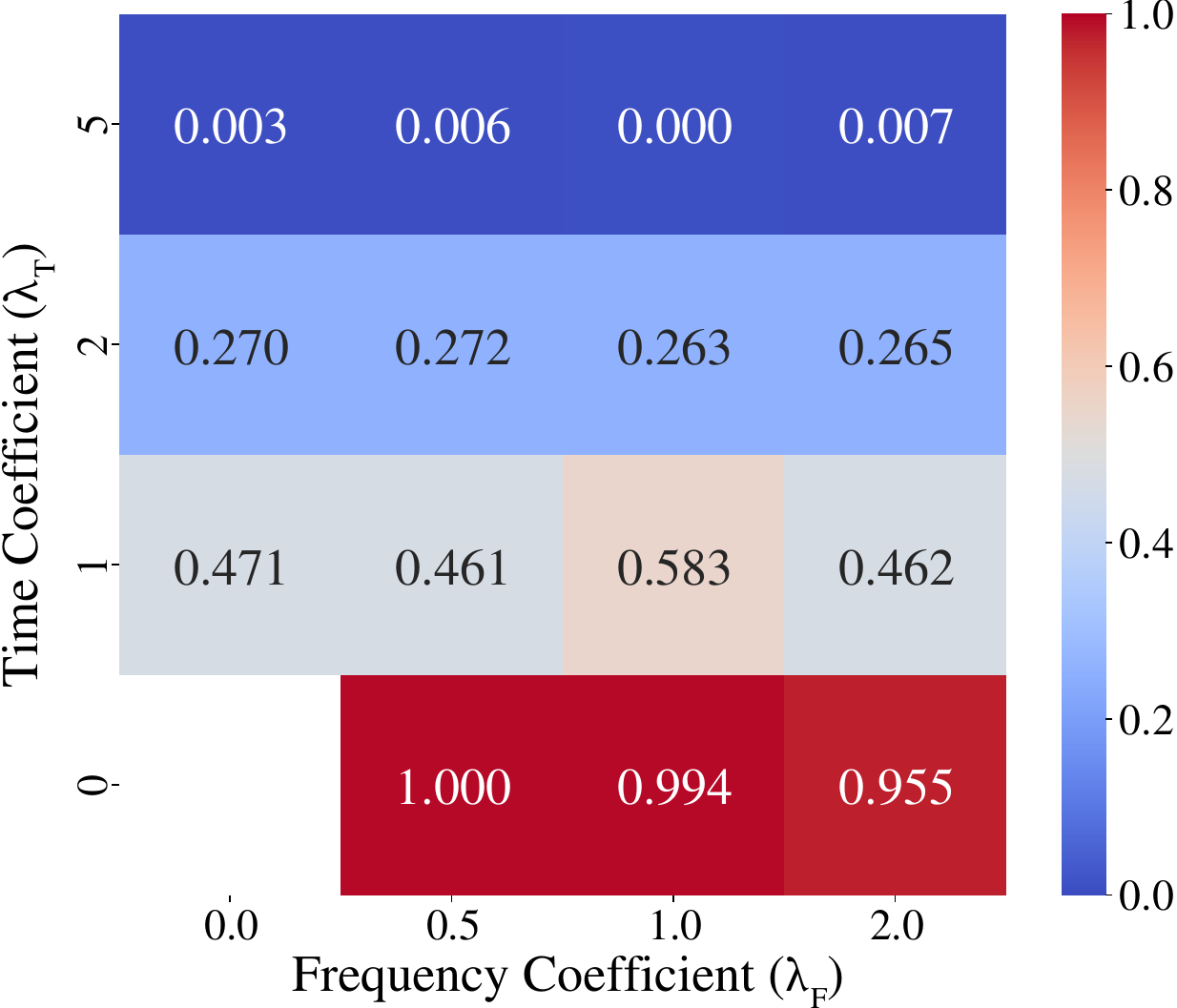}
        \caption{Comparison of the normalized MSE (\textbf{left}) and TAM (\textbf{right}) of iTransformer trained on Weather dataset. The x-axis represents $\lambda_F$ ($\{0.0, 0.5, 1.0, 2.0\}$), and the y-axis represents $\lambda_T$ ($\{0.0, 1.0, 2.0, 5.0\}$).}
        \label{fig:comparison_mse_tam_itransformer_weather}
    \end{minipage}%
    \hfill
    \begin{minipage}[H]{0.48\textwidth}
        \centering
        \includegraphics[width=0.485\linewidth]{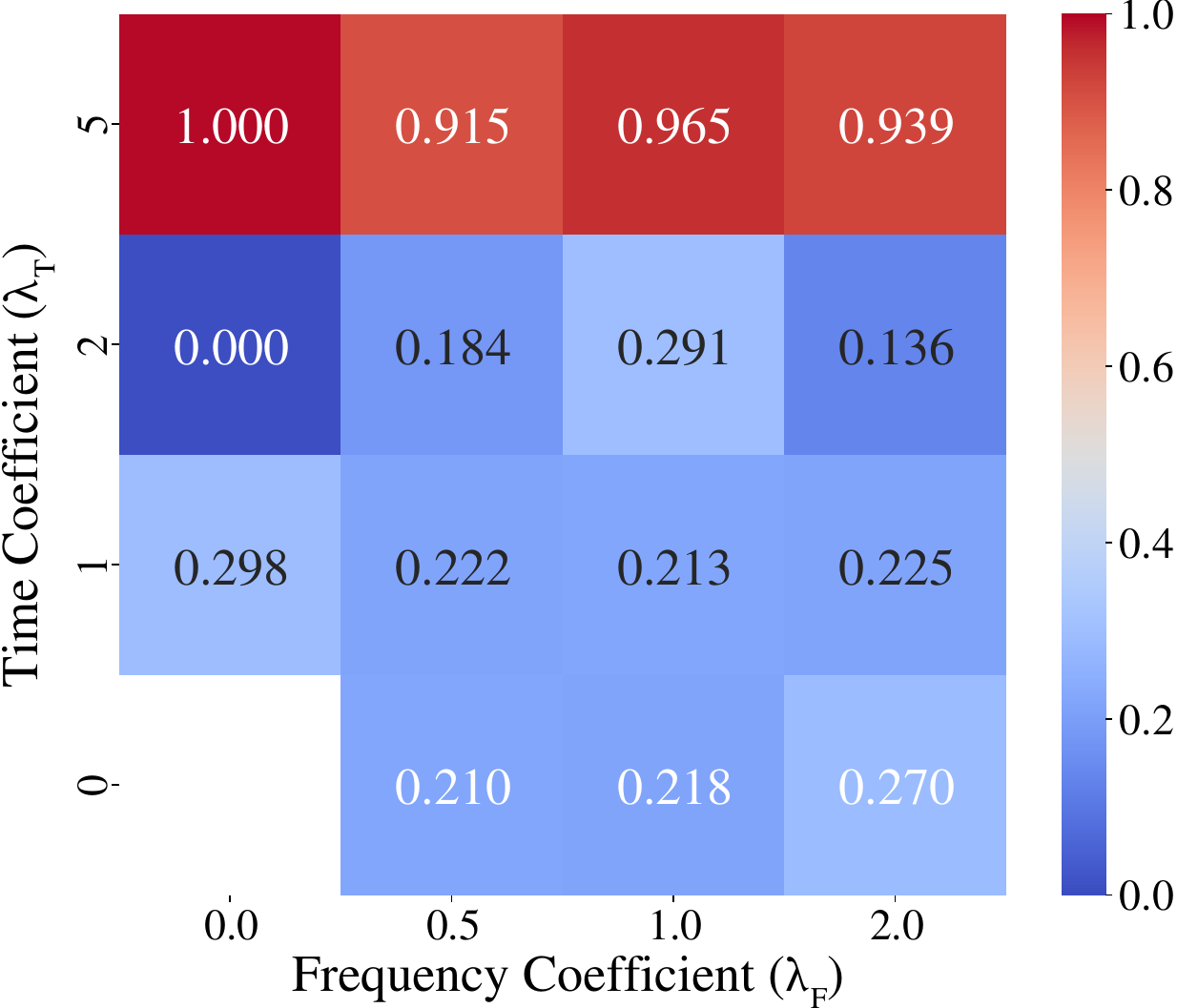}
        \includegraphics[width=0.485\linewidth]{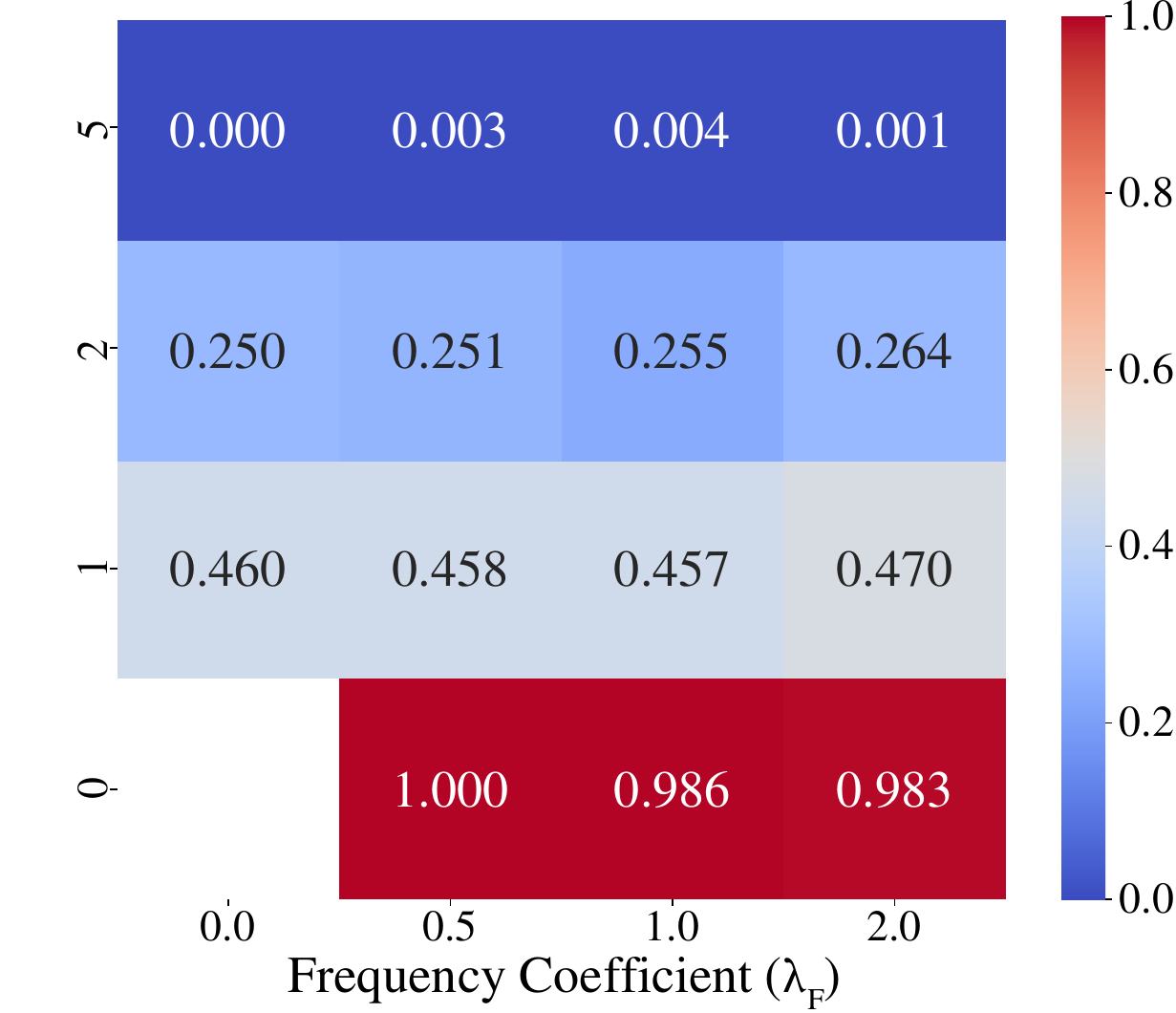}
        \caption{Comparison of the normalized MSE (\textbf{left}) and TAM (\textbf{right}) of TimesNet trained on ETTh1 dataset. The x-axis represents $\lambda_F$ ($\{0.0, 0.5, 1.0, 2.0\}$), and the y-axis represents $\lambda_T$ ($\{0.0, 1.0, 2.0, 5.0\}$).}
        \label{fig:comparison_mse_tam_timesnet_etth1}
    \end{minipage}
\end{figure}

\begin{figure}[H]
    \centering
    \begin{minipage}[H]{0.48\textwidth}
        \centering
        \includegraphics[width=0.485\linewidth]{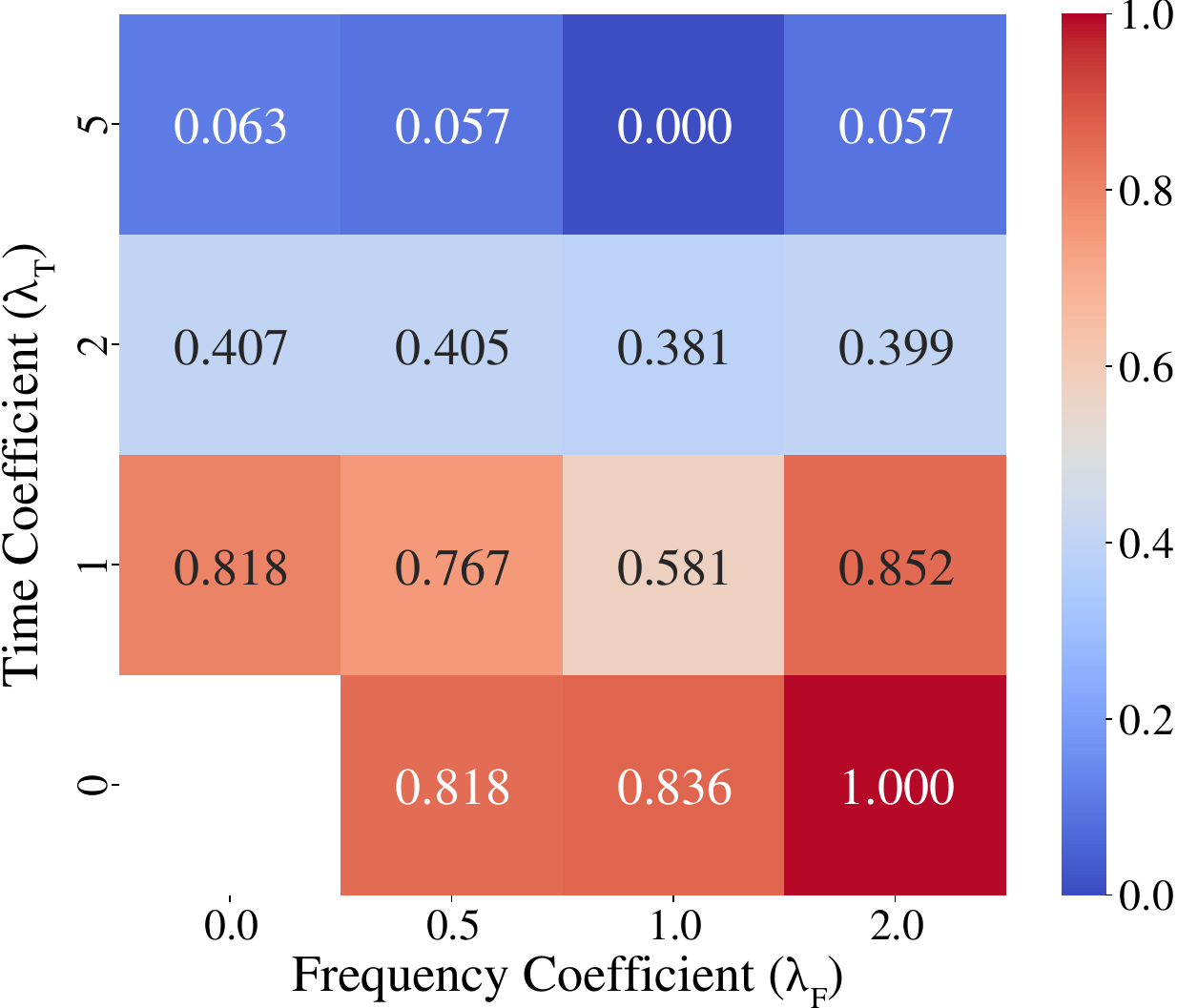}
        \includegraphics[width=0.485\linewidth]{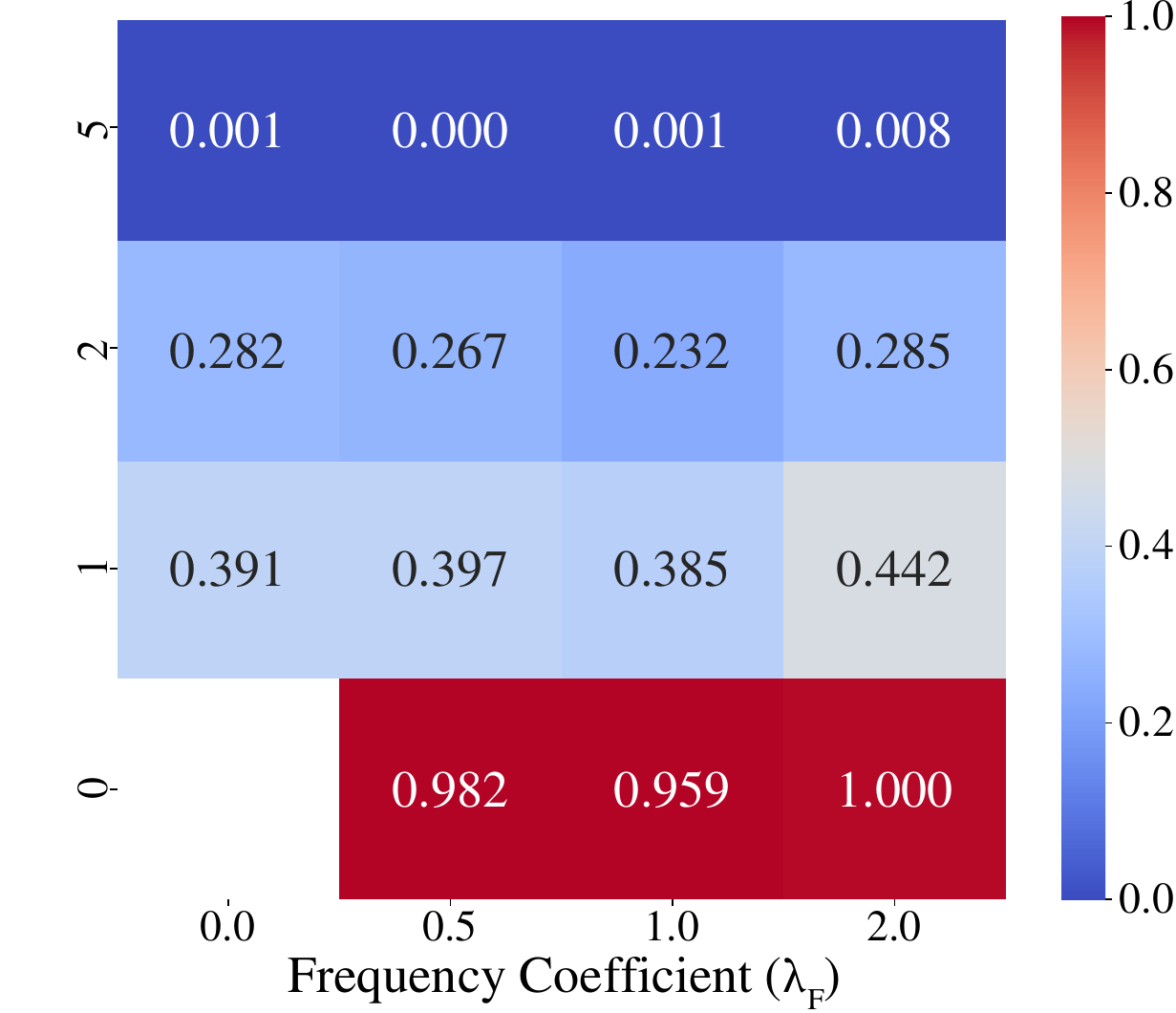}
        \caption{Comparison of the normalized MSE (\textbf{left}) and TAM (\textbf{right}) of TimesNet trained on ETTm1 dataset. The x-axis represents $\lambda_F$ ($\{0.0, 0.5, 1.0, 2.0\}$), and the y-axis represents $\lambda_T$ ($\{0.0, 1.0, 2.0, 5.0\}$).}
        \label{fig:comparison_mse_tam_timesnet_ettm1}
    \end{minipage}%
    \hfill
    \begin{minipage}[H]{0.48\textwidth}
        \centering
        \includegraphics[width=0.485\linewidth]{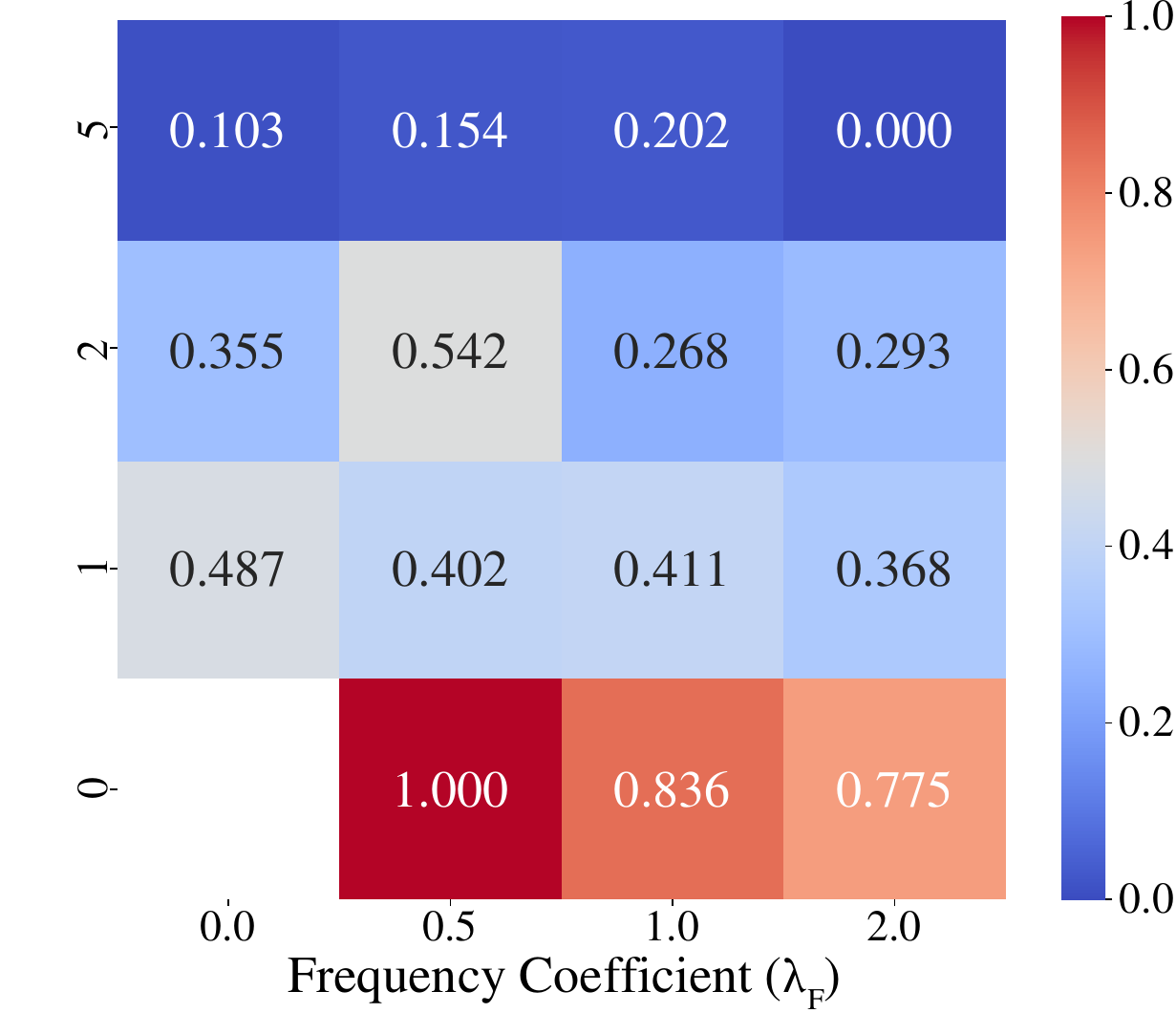}
        \includegraphics[width=0.485\linewidth]{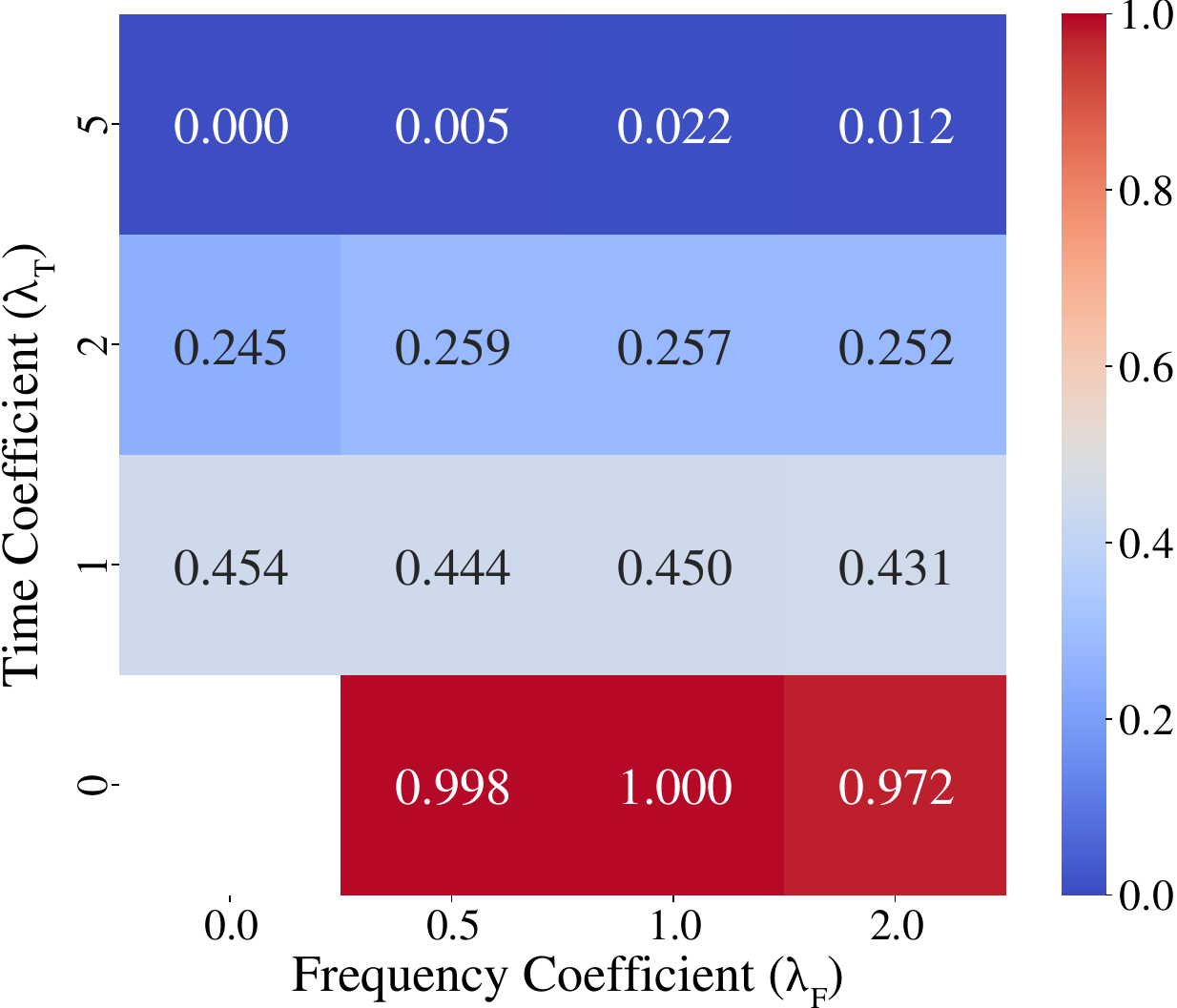}
        \caption{Comparison of the normalized MSE (\textbf{left}) and TAM (\textbf{right}) of TimesNet trained on ETTm2 dataset. The x-axis represents $\lambda_F$ ($\{0.0, 0.5, 1.0, 2.0\}$), and the y-axis represents $\lambda_T$ ($\{0.0, 1.0, 2.0, 5.0\}$).}
        \label{fig:comparison_mse_tam_timesnet_ettm2}
    \end{minipage}
\end{figure}
\section{Dataset explanation}
\label{apdx:dataset_explanation}

We provide a brief description of each dataset used in our experiments, as referenced in \Cref{sec:experiment,apdx:full_results}.

\begin{itemize}
    \item \textbf{Electricity transformer temperature (ETT)}~\cite{dataset.ETT}: This dataset consists of two years of data collected from two counties in China. It is divided into four subsets: ETTh1 and ETTh2 (sampled hourly), and ETTm1 and ETTm2 (sampled every 15 minutes). Each record contains six power load features and one target variable representing the oil temperature.
    \item \textbf{Exchange}~\cite{model.autoformer}: Contains daily exchange rates from eight countries, spanning from 1990 to 2016.
    \item \textbf{Electricity (ECL)}~\cite{dataset.electric}: Comprises electric power consumption data sampled every minute over four years for a single household.
    \item \textbf{ILI}~\href{https: /  / github.com / thuml / Autoformer}{(https: /  / github.com / thuml / Autoformer)}: Weekly records from 2002 to 2021, provided by the US Centers for Disease Control and Prevention, representing the number of influenza-like illness patients.
    \item \textbf{PEMS}: Contains traffic information measured at 5-minute intervals on California highways. The subsets PEMS03, PEMS04, PEMS07, and PEMS08 correspond to different regions and time spans, with varying numbers of sensors~\cite{SCINet}.
    \item \textbf{Solar-Energy}~\cite{dataset.solar}: Includes solar energy production data from 137 plants in 2006, measured at 10-minute intervals.
    \item \textbf{Traffic}~\cite{model.autoformer}: Consists of hourly traffic congestion data collected by 862 sensors on San Francisco freeways from January 2015 to December 2016.
\end{itemize}

To ensure fair performance evaluation, we adopted the sequence length, prediction length, and label length settings used in recent LTSF models, such as Autoformer~\cite{model.autoformer}, DLinear~\cite{model.dlinear}, PatchTST~\cite{model.patchtst}, TimesNet~\cite{model.timesnet}, iTransformer~\cite{model.itransformer}, GPT4TS~\cite{model.gpt4ts}, CycleNet~\cite{model.cyclenet}. Following previous works~\cite{model.autoformer,model.itransformer,SCINet}. \Cref{tab:dataset_statistics} shows the descriptions of datasets including the number of variate in each dataset, prediction length, dataset size, sampling frequency, and domain. Since the official Autoformer implementation provides the results in different prediction length (24, 48, 168, 336, 720), we follow their implementation and show the results on \Cref{apdx:full_results}.

\begin{table}[H]
    \centering
    \caption{Descriptions of datasets. \# of vars means the number of variate in each dataset.}
    \label{tab:dataset_statistics}
    \vspace{0.1in}
    \setlength{\tabcolsep}{3pt}
    \resizebox{\columnwidth}{!}{
    \begin{tabular}{c|ccccc}
        \toprule
        \textbf{Dataset} & \textbf{\# of vars} & \textbf{Prediction Length} & \textbf{\makecell{Dataset size\\(Train  /  Validation  /  Test)}} & \textbf{Frequency} & \textbf{Domain} \\
        \midrule
        ETTh1    & 7   & \makecell{24, 48, 168, 336, 720 (Autoformer)\\96, 192, 336, 720 (other models)} & 8545 / 2881 / 2881   & Hourly  & Temperature         \\
        \hline
        ETTh2    & 7   & \makecell{24, 48, 168, 336, 720 (Autoformer)\\96, 192, 336, 720 (other models)} & 8545 / 2881 / 2881   & Hourly  & Temperature         \\
        \hline
        ETTm1    & 7   & \makecell{24, 48, 168, 336, 720 (Autoformer)\\96, 192, 336, 720 (other models)} & 34465 / 11521 / 11521 & 15 min   & Temperature         \\
        \hline
        ETTm2    & 7   & \makecell{24, 48, 168, 336, 720 (Autoformer)\\96, 192, 336, 720 (other models)} & 34465 / 11521 / 11521 & 15 min   & Temperature         \\
        \hline
        Weather  & 21  & 96, 192, 336, 720 & 36792 / 5271 / 10540  & 10 min   & Weather        \\
        \hline
        \makecell{Electricity\\(ECL)} & 321 & 96, 192, 336, 720 & 18317 / 2633 / 5261   & Hourly  & Electricity    \\
        \hline
        Traffic  & 862 & 96, 192, 336, 720 & 12185 / 1757 / 3509   & Hourly  & Transportation \\
        \hline
        ILI & 7 & 24, 36, 48, 60 & 617 / 74 / 170 & Weekly & Health \\
        \hline
        \makecell{Exchange Rate\\(Exchange)} & 8 & 96, 192, 336, 720 & 5120 / 665 / 1422 & Daily & Economy \\
        \hline
        Solar  & 137 & 96, 192, 336, 720 & 36601 / 5161 / 10417 & 10 min & Energy \\
        \hline
        PEMS03 & 358 & 12, 24, 48, 96 & 15617 / 5135 / 5135 & 5 min & Transportation \\
        \hline
        PEMS04 & 307 & 12, 24, 48, 96 & 10172 / 3375 / 3375 & 5 min & Transportation \\
        \hline
        PEMS07 & 883 & 12, 24, 48, 96 & 16911 / 5622 / 5622 & 5 min & Transportation \\
        \hline
        PEMS08 & 170 & 12, 24, 48, 96 & 10690 / 3548 / 3548 & 5 min & Transportation \\
        \hline
    \end{tabular}
    }
\end{table}

\section{Experiment configurations}
\label{apdx:forecasting_models_and_configurations}
To ensure fair model evaluation, we utilized the official GitHub codes provided by six benchmark models: Autoformer~\cite{model.autoformer}, DLinear~\cite{model.dlinear}, PatchTST~\cite{model.patchtst}, TimesNet~\cite{model.timesnet}, iTransformer~\cite{model.itransformer}, GPT4TS~\cite{model.gpt4ts}, CycleNet~\cite{model.cyclenet}. For all experiments, we adopted the same prediction length, label length, and sequence length as the official implementations, and maintained the original architecture of each model.

\paragraph{Context lengths.}
According to the papers~\cite{model.autoformer,model.dlinear,model.patchtst,model.timesnet,model.itransformer,model.gpt4ts,model.cyclenet}, the used context lenghts are followed by:

\begin{itemize}
    \item CycleNet, Autoformer, iTransformer: 96
    \item TimesNet: 96 (36 for the ILI dataset)
    \item PatchTST, DLinear: 336 (104 for the ILI dataset)
\end{itemize}

\paragraph{Prediction lengths.}
We primarily used prediction lengths of $\{96, 192, 336, 720\}$. For the ILI dataset~\href{https://github.com/thuml/Autoformer}{(https://github.com/thuml/Autoformer)}, we followed prior works and used $\{24, 36, 48, 60\}$. As an exception, for the ETT\{h1, h2, m1, m2\} datasets, we followed the Autoformer paper and used $\{24, 48, 168, 336, 720\}$ (for Autoformer) and $\{96, 192, 336, 720\}$ (for other models).

\paragraph{Optimization.}
The optimizer and scheduler were used under the same conditions as specified in the official codes for each model.

\paragraph{Our method.}
For our method (AliO), the hyperparameters---the number of samples $N$ in \Cref{alg:alico} and lag $l$ in \Cref{alg:alico}---were set to their default values of $2$ and $1$, respectively. The Mean Squared Error (MSE) function was used as the distance function for both the time and frequency domains. The coefficients for the time domain ($\lambda_T$) and frequency domain ($\lambda_F$) were selected from the ranges $\{1.0, 2.0, 5.0\}$ and $\{0.0, 0.5, 1.0, 2.0\}$, respectively (default values are $\lambda_T = 1$ and $\lambda_F = 0$ since the frequency domain is optional). Additionally, the AWL~\cite{method.automatic_weighted_loss} technique was employed to automatically tune these coefficients and report the best performance.

\paragraph{Evaluation.}
We used Mean Squared Error (MSE) as our baseline loss function and reported forecasting performance using MSE, Mean Absolute Error (MAE), Mean Absolute Percentage Error (MAPE), Root Mean Squared Error (RMSE), and TAM. For all these metrics, lower values indicate better performance.

\paragraph{Robustness.}
To ensure robust results, we used three random seeds (${2023, 2024, 2025}$) for initialization and report the standard deviation using $\pm$. All experiments were conducted on NVIDIA RTX 3090 and A6000 GPUs. We conducted all experiments using the same GPU when comparing the baseline and AliO under the same conditions.

\paragraph{Training hyper-parameters.}
We used the same learning rate, batch size, and epoch as the official implementation of each model for reproducibility and fair comparison. The implementation code for each model is as follows.

\begin{itemize}
    \item Autoformer: \url{https://github.com/thuml/Autoformer}
    \item DLinear: \url{https://github.com/vivva/DLinear}
    \item PatchTST: \url{https://github.com/yuqinie98/PatchTST}
    \item TimesNet: \url{https://github.com/thuml/TimesNet}
    \item iTransformer: \url{https://github.com/thuml/iTransformer}
    \item GPT4TS: \url{https://github.com/DAMO-DI-ML/NeurIPS2023-One-Fits-All}
    \item CycleNet: \url{https://github.com/ACAT-SCUT/CycleNet}
\end{itemize}

We followed the implementation code listed above (GitHub) and used the same learning rate, batch size, and number of epochs as shown in \Cref{tab:model_parameter_autoformer,tab:model_parameter_dlinear,tab:model_parameter_patchtst,tab:model_parameter_timesnet,tab:model_parameter_itransformer01,tab:model_parameter_itransformer02,tab:model_parameter_gpt4ts,tab:model_parameter_cyclenet}. The optimizer we used is Adam~\cite{adam}.

\begin{table}[H]
    \centering
    \caption{Learning rate and batch size of Autoformer used in each datasets.}
    \label{tab:model_parameter_autoformer}
    \vspace{0.1in}
    \small
    \begin{tabular}{l|cccccccc}
    \toprule
    Dataset & ECL & ETTh1 & ETTh2 & ETTm1 & ETTm2 & Exchange & Traffic & Weather    \\
    \midrule
    \textbf{Learning rate}  &   0.0001  &   0.0001  &   0.0001  &   0.0001  &   0.0001  &   0.0001  &   0.0001  &   0.0001  \\
    \textbf{batch size}     &   32      &   32      &   32      &   32      &   32      &   32      &   32      &   32  \\
    \textbf{Epoch}          &   10      &   10      &   10      &   10      &   10      &   10      &   3       &   10
    \end{tabular}
\end{table}

\begin{table}[H]
    \centering
    \caption{Learning rate and batch size of DLinear used in each datasets.}
    \label{tab:model_parameter_dlinear}
    \vspace{0.1in}
    \small
    \begin{tabular}{l|ccccccccc}
    \toprule
    Dataset & ECL & ETTh1 & ETTh2 & ETTm1 & ETTm2 & Exchange & ILI & Traffic & Weather    \\
    \midrule
    \textbf{Learning rate}  &   0.001   &   0.0001  &   0.05    &   0.0001  &   \makecell{0.001 \\0.01}  &   \makecell{0.0005\\0.005}  &   0.05  &   0.05  &   0.0001  \\
    \textbf{batch size}     &   16      &   8       &   32      &   8       &   32      &   \makecell{8\\32}      &   32      &   16      &   16  \\
    \textbf{Epoch}          &   10      &   10      &   10      &   10      &   10      &   10      &   10      &   10      &   10
    \end{tabular}
\end{table}

\begin{table}[H]
    \centering
    \caption{Learning rate and batch size of PatchTST used in each datasets.}
    \label{tab:model_parameter_patchtst}
    \vspace{0.1in}
    \small
    \begin{tabular}{l|ccccccccc}
    \toprule
    Dataset & ECL & ETTh1 & ETTh2 & ETTm1 & ETTm2 & ILI & Traffic & Weather    \\
    \midrule
    \textbf{Learning rate}  &   0.0001  &   0.0001  &   0.0001  &   0.0001  &   0.0001  &   0.0025  &   0.0001  &   0.0001  \\
    \textbf{batch size}     &   32      &   128     &   128     &   128     &   128     &   16      &   6       &   128 \\
    \textbf{Epoch}          &   100     &   100     &   100     &   100     &   100     &   100     &   100     &   100
    \end{tabular}
\end{table}

\begin{table}[H]
    \centering
    \caption{Learning rate and batch size of TimesNet used in each datasets.}
    \label{tab:model_parameter_timesnet}
    \vspace{0.1in}
    \small
    \begin{tabular}{l|ccccccccc}
    \toprule
    Dataset & ECL & ETTh1 & ETTh2 & ETTm1 & ETTm2 & ILI & Exchange & Traffic & Weather    \\
    \midrule
    \textbf{Learning rate}  &   0.0001  &   0.0001  &   0.0001  &   0.0001  &   0.0001  &   0.0001  &   0.0001  &   0.0001  &   0.0001  \\
    \textbf{batch size}     &   32      &   32      &   32      &   32      &   32      &   32      &   32      &   16      &   32  \\
    \textbf{Epoch}          &   10      &   10      &   10      &   10      &   10      &   10      &   10      &   10      &   10 
    \end{tabular}
\end{table}

\begin{table}[H]
    \centering
    \caption{Learning rate and batch size of iTransformer used in each datasets (excluding PEMS and Solar).}
    \label{tab:model_parameter_itransformer01}
    \vspace{0.1in}
    \small
    \begin{tabular}{l|ccccccccc}
    \toprule
    Dataset & ECL & ETTh1 & ETTh2 & ETTm1 & ETTm2 & ILI & Exchange & Traffic & Weather    \\
    \midrule
    \textbf{Learning rate}  &   0.0005  &   0.0001  &   0.0001  &   0.0001  &   0.0001  &   0.0001  &   0.0001  &   0.001   &   0.0001  \\
    \textbf{batch size}     &   16      &   32      &   32      &   32      &   32      &   32      &   32      &   16      &   32  \\
    \textbf{Epoch}          &   10      &   10      &   10      &   10      &   10      &   10      &   10      &   10      &   10 
    \end{tabular}
\end{table}

\begin{table}[H]
    \centering
    \caption{Learning rate and batch size of iTransformer used in each datasets (including PEMS and Solar.}
    \label{tab:model_parameter_itransformer02}
    \vspace{0.1in}
    \small
    \begin{tabular}{l|ccccc}
    \toprule
    Dataset & PEMS03 & PEMS04 & PEMS07 & PEMS08 & Solar    \\
    \midrule
    \textbf{Learning rate}  &   0.001   &   0.0005  &   0.001  &   \makecell{0.0001\\0.001}  &   0.0005  \\
    \textbf{batch size}     &   32      &   32      &   \makecell{32\\16}      &   \makecell{32\\16}      &   32      \\
    \textbf{Epoch}          &   10      &   10      &   10      &   10      &   10      
    \end{tabular}
\end{table}

\begin{table}[H]
    \centering
    \caption{Learning rate and batch size of GPT4TS used in each datasets.}
    \label{tab:model_parameter_gpt4ts}
    \vspace{0.1in}
    \small
    \begin{tabular}{l|cccccccc}
    \toprule
    Dataset & ECL & ETTh1 & ETTh2 & ETTm1 & ETTm2 & ILI & Traffic & Weather    \\
    \midrule
    \textbf{Learning rate}  &   0.0001  &   0.0001  &   0.0001  &   0.0001  &   0.0001  &   0.0001  &   0.001   &   0.0001  \\
    \textbf{batch size}     &   512     &   256     &   256     &   256     &   256     &   16      &   256     &   512 \\
    \textbf{Epoch}          &   10      &   10      &   10      &   10      &   10      &   10      &   10      &   10
    \end{tabular}
\end{table}

\begin{table}[H]
    \centering
    \caption{Learning rate and batch size of CycleNet used in each datasets.}
    \label{tab:model_parameter_cyclenet}
    \vspace{0.1in}
    \small
    \begin{tabular}{l|cccccccc}
    \toprule
    Dataset & ECL & ETTh1 & ETTh2 & ETTm1 & ETTm2 & Traffic & Weather & Solar    \\
    \midrule
    \textbf{Learning rate}  &   0.0001  &   0.0001  &   0.0001  &   0.0001  &   0.0001  &   0.0001  &   0.0001  &   0.0001  \\
    \textbf{batch size}     &   128     &   128     &   128     &   128     &   128     &   128     &   128     &   128 \\
    \textbf{Epoch}          &   30      &   30      &   30      &   30      &   30      &   30      &   30      &   30
    \end{tabular}
\end{table}

\section{Full experimental results}
\label{apdx:full_results}
This section presents the comprehensive results of each Long-Term Series Forecasting (LTSF) models, Autoformer~\cite{model.autoformer} (\Cref{tab:full_results_Autoformer_01,tab:full_results_Autoformer_02}), DLinear~\cite{model.dlinear} (\Cref{tab:full_results_DLinear_01,tab:full_results_DLinear_02}), PatchTST~\cite{model.patchtst} (\Cref{tab:full_results_PatchTST_01,tab:full_results_PatchTST_02}), TimesNet~\cite{model.timesnet} (\Cref{tab:full_results_TimesNet_01,tab:full_results_TimesNet_02}), iTransformer~\cite{model.itransformer} (\Cref{tab:full_results_iTransformer_01,tab:full_results_iTransformer_02,tab:full_results_iTransformer_03}), GPT4TS~\cite{model.gpt4ts} (\Cref{tab:full_results_GPT4TS_01,tab:full_results_GPT4TS_02}), and CycleNet~\cite{model.cyclenet} (\Cref{tab:full_results_CycleNet_01,tab:full_results_CycleNet_02}). The evaluation metrics include Mean Squared Error (MSE), Temporal Alignment Metric (TAM), Mean Absolute Error (MAE), Mean Absolute Percentage Error (MAPE), and Root Mean Squared Error (RMSE). The $\pm$ symbol denotes the standard deviation across multiple seeds (we used three random seeds for robust experiment). AVG represents the average value across all prediction lengths, with the best performance highlighted in \best{bold}.

\clearpage

\begin{landscape}
    \begin{table}[h]
        \caption{Full results of the proposed AliO method on the ECL, ETTh1, ETTh2, ETTm1, ETTm2, Exchange, Traffic, and Weather datasets and Autoformer~\cite{model.autoformer}. The results are reported in terms of MSE, TAM, MAE, MAPE, and RMSE. The best results are highlighted in \best{bold}. We use official GitHub code~(\url{https://github.com/thuml/Autoformer}) to train the model with the official configuration.
        The average improvement of AliO over the baseline is MSE: 6.09\%, TAM: 24.24\%, MAE: 4.48\%, MAPE: 3.76\%, and RMSE: 3.12\%.
        The maximum improvement of AliO over the baseline is MSE: 16.67\%, TAM: 81.18\%, MAE: 15.17\%, MAPE: 15.42\%, and RMSE: 8.58\%. Exchange, Traffix, and Weather are in \Cref{tab:full_results_Autoformer_02}.}
        \label{tab:full_results_Autoformer_01}
        
        \vskip 0.1in
        \centering
        \setlength{\tabcolsep}{3pt}
        \small
            \begin{tabular}{c|c|ccccc|ccccc}
            
                \toprule
                \multicolumn{2}{c|}{Models}		& \multicolumn{10}{c}{Autoformer}		\\
                \hline
                \multicolumn{2}{c|}{Method}		& \multicolumn{5}{c|}{Baseline}		& \multicolumn{5}{c}{\textbf{AliO}}		\\
                \hline
                \multicolumn{2}{c|}{Metric}	& MSE$\downarrow$	& TAM$\downarrow$	& MAE$\downarrow$	& MAPE$\downarrow$	& RMSE$\downarrow$	& MSE$\downarrow$	& TAM$\downarrow$	& MAE$\downarrow$	& MAPE$\downarrow$	& RMSE$\downarrow$	\\

                \midrule
                \multirow{5}{*}{ECL}&
                        96		& 0.204$\pm0.008$	& 0.035$\pm0.002$	& 0.319$\pm0.008$	& 3.398$\pm0.230$	& 0.452$\pm0.009$	& \best{0.193$\pm$0.003}	& \best{0.029$\pm$0.001}	& \best{0.308$\pm$0.003}	& \best{3.296$\pm$0.132}	& \best{0.440$\pm$0.004}		\\
                        & 192		& 0.219$\pm0.005$	& 0.042$\pm0.002$	& 0.329$\pm0.003$	& 3.416$\pm0.185$	& 0.468$\pm0.005$	& \best{0.215$\pm$0.003}	& \best{0.038$\pm$0.004}	& \best{0.324$\pm$0.004}	& \best{3.402$\pm$0.091}	& \best{0.464$\pm$0.004}		\\
                        & 336		& 0.232$\pm0.004$	& 0.040$\pm0.004$	& 0.340$\pm0.004$	& 3.466$\pm0.224$	& 0.481$\pm0.004$	& \best{0.223$\pm$0.007}	& \best{0.034$\pm$0.004}	& \best{0.331$\pm$0.005}	& \best{3.350$\pm$0.136}	& \best{0.472$\pm$0.007}		\\
                        & 720		& 0.260$\pm0.013$	& 0.046$\pm0.007$	& 0.362$\pm0.009$	& 3.501$\pm0.118$	& 0.509$\pm0.013$	& \best{0.253$\pm$0.008}	& \best{0.042$\pm$0.002}	& \best{0.356$\pm$0.006}	& \best{3.404$\pm$0.035}	& \best{0.503$\pm$0.008}		\\
                        & AVG		& 0.229$\pm0.018$	& 0.041$\pm0.003$	& 0.337$\pm0.015$	& 3.445$\pm0.036$	& 0.478$\pm0.019$	& \best{0.221$\pm$0.019}	& \best{0.036$\pm$0.004}	& \best{0.330$\pm$0.015}	& \best{3.363$\pm$0.040}	& \best{0.469$\pm$0.020}		\\
                \hline
        
                \multirow{6}{*}{ETTh1}&
                        24		& 0.408$\pm0.008$	& 0.063$\pm0.005$	& 0.427$\pm0.002$	& 10.853$\pm0.413$	& 0.639$\pm0.006$	& \best{0.373$\pm$0.008}	& \best{0.051$\pm$0.002}	& \best{0.416$\pm$0.007}	& \best{10.747$\pm$0.477}	& \best{0.611$\pm$0.007}		\\
                        & 48		& 0.408$\pm0.014$	& 0.048$\pm0.004$	& 0.431$\pm0.008$	& 11.632$\pm0.477$	& 0.639$\pm0.011$	& \best{0.395$\pm$0.024}	& \best{0.037$\pm$0.002}	& \best{0.420$\pm$0.012}	& \best{10.889$\pm$0.175}	& \best{0.628$\pm$0.019}		\\
                        & 168		& 0.502$\pm0.028$	& 0.051$\pm0.012$	& 0.482$\pm0.016$	& 11.810$\pm0.142$	& 0.708$\pm0.020$	& \best{0.445$\pm$0.014}	& \best{0.036$\pm$0.005}	& \best{0.451$\pm$0.010}	& \best{11.573$\pm$0.593}	& \best{0.667$\pm$0.011}		\\
                        & 336		& 0.517$\pm0.013$	& \best{0.039$\pm$0.003}	& 0.491$\pm0.006$	& \best{11.786$\pm$0.326}	& 0.719$\pm0.009$	& \best{0.484$\pm$0.013}	& 0.048$\pm0.002$	& \best{0.479$\pm$0.008}	& 11.957$\pm0.223$	& \best{0.696$\pm$0.009}		\\
                        & 720		& 0.549$\pm0.016$	& \best{0.047$\pm$0.002}	& 0.527$\pm0.008$	& \best{12.533$\pm$0.659}	& 0.741$\pm0.011$	& \best{0.521$\pm$0.009}	& 0.057$\pm0.009$	& \best{0.516$\pm$0.010}	& 13.217$\pm1.015$	& \best{0.722$\pm$0.007}		\\
                        & AVG		& 0.477$\pm0.053$	& 0.050$\pm0.007$	& 0.472$\pm0.034$	& 11.723$\pm0.488$	& 0.689$\pm0.039$	& \best{0.444$\pm$0.050}	& \best{0.046$\pm$0.008}	& \best{0.456$\pm$0.034}	& \best{11.677$\pm$0.811}	& \best{0.665$\pm$0.038}		\\
                \hline
        
                \multirow{6}{*}{ETTh2}&
                        24		& 0.280$\pm0.009$	& 0.054$\pm0.004$	& 0.356$\pm0.006$	& 1.524$\pm0.038$	& 0.529$\pm0.008$	& \best{0.265$\pm$0.003}	& \best{0.034$\pm$0.003}	& \best{0.340$\pm$0.003}	& \best{1.392$\pm$0.028}	& \best{0.515$\pm$0.003}		\\
                        & 48		& 0.335$\pm0.029$	& 0.051$\pm0.009$	& 0.390$\pm0.021$	& 1.624$\pm0.067$	& 0.578$\pm0.025$	& \best{0.301$\pm$0.009}	& \best{0.025$\pm$0.002}	& \best{0.360$\pm$0.006}	& \best{1.422$\pm$0.054}	& \best{0.549$\pm$0.008}		\\
                        & 168		& 0.442$\pm0.008$	& 0.044$\pm0.009$	& 0.448$\pm0.009$	& 1.868$\pm0.150$	& 0.665$\pm0.006$	& \best{0.421$\pm$0.008}	& \best{0.027$\pm$0.002}	& \best{0.427$\pm$0.005}	& \best{1.580$\pm$0.084}	& \best{0.649$\pm$0.006}		\\
                        & 336		& 0.479$\pm0.012$	& 0.031$\pm0.012$	& 0.482$\pm0.012$	& 2.003$\pm0.194$	& 0.692$\pm0.009$	& \best{0.448$\pm$0.002}	& \best{0.018$\pm$0.004}	& \best{0.455$\pm$0.002}	& \best{1.749$\pm$0.020}	& \best{0.669$\pm$0.001}		\\
                        & 720		& 0.470$\pm0.022$	& 0.025$\pm0.005$	& 0.483$\pm0.013$	& 2.273$\pm0.166$	& 0.685$\pm0.016$	& \best{0.450$\pm$0.005}	& \best{0.020$\pm$0.000}	& \best{0.466$\pm$0.003}	& \best{2.044$\pm$0.024}	& \best{0.670$\pm$0.004}		\\
                        & AVG		& 0.401$\pm0.072$	& 0.041$\pm0.010$	& 0.432$\pm0.046$	& 1.858$\pm0.245$	& 0.630$\pm0.059$	& \best{0.377$\pm$0.071}	& \best{0.025$\pm$0.005}	& \best{0.410$\pm$0.046}	& \best{1.637$\pm$0.219}	& \best{0.610$\pm$0.060}		\\
                \hline
        
                \multirow{6}{*}{ETTm1}&
                        24		& 0.377$\pm0.015$	& 0.092$\pm0.003$	& 0.415$\pm0.005$	& 2.563$\pm0.005$	& 0.614$\pm0.012$	& \best{0.351$\pm$0.023}	& \best{0.084$\pm$0.001}	& \best{0.394$\pm$0.011}	& \best{2.455$\pm$0.019}	& \best{0.592$\pm$0.019}		\\
                        & 48		& 0.447$\pm0.048$	& 0.079$\pm0.014$	& 0.452$\pm0.017$	& 2.789$\pm0.130$	& 0.668$\pm0.035$	& \best{0.381$\pm$0.002}	& \best{0.053$\pm$0.001}	& \best{0.412$\pm$0.003}	& \best{2.458$\pm$0.074}	& \best{0.617$\pm$0.001}		\\
                        & 96		& 0.493$\pm0.054$	& 0.074$\pm0.010$	& 0.478$\pm0.019$	& 2.834$\pm0.081$	& 0.701$\pm0.038$	& \best{0.440$\pm$0.029}	& \best{0.055$\pm$0.006}	& \best{0.445$\pm$0.012}	& \best{2.579$\pm$0.078}	& \best{0.663$\pm$0.022}		\\
                        & 288		& 0.579$\pm0.030$	& 0.070$\pm0.013$	& 0.513$\pm0.012$	& 2.851$\pm0.060$	& 0.760$\pm0.020$	& \best{0.515$\pm$0.003}	& \best{0.041$\pm$0.002}	& \best{0.479$\pm$0.004}	& \best{2.604$\pm$0.049}	& \best{0.718$\pm$0.002}		\\
                        & 672		& \best{0.544$\pm$0.013}	& 0.036$\pm0.001$	& 0.501$\pm0.005$	& 2.788$\pm0.062$	& \best{0.737$\pm$0.009}	& 0.554$\pm0.013$	& \best{0.031$\pm$0.002}	& \best{0.501$\pm$0.009}	& \best{2.728$\pm$0.044}	& 0.744$\pm0.009$		\\
                        & AVG		& 0.488$\pm0.065$	& 0.070$\pm0.017$	& 0.472$\pm0.032$	& 2.765$\pm0.095$	& 0.696$\pm0.047$	& \best{0.448$\pm$0.070}	& \best{0.053$\pm$0.016}	& \best{0.446$\pm$0.036}	& \best{2.565$\pm$0.093}	& \best{0.667$\pm$0.053}		\\
                \hline
        
                \multirow{6}{*}{ETTm2}&
                        24		& 0.161$\pm0.013$	& 0.078$\pm0.010$	& 0.268$\pm0.010$	& 1.081$\pm0.039$	& 0.401$\pm0.016$	& \best{0.148$\pm$0.005}	& \best{0.057$\pm$0.005}	& \best{0.259$\pm$0.005}	& \best{1.054$\pm$0.041}	& \best{0.385$\pm$0.007}		\\
                        & 48		& 0.217$\pm0.012$	& 0.080$\pm0.005$	& 0.308$\pm0.009$	& 1.196$\pm0.046$	& 0.466$\pm0.013$	& \best{0.182$\pm$0.006}	& \best{0.041$\pm$0.009}	& \best{0.279$\pm$0.007}	& \best{1.094$\pm$0.033}	& \best{0.426$\pm$0.007}		\\
                        & 96		& 0.252$\pm0.029$	& 0.058$\pm0.008$	& 0.324$\pm0.015$	& 1.302$\pm0.024$	& 0.502$\pm0.028$	& \best{0.217$\pm$0.003}	& \best{0.030$\pm$0.006}	& \best{0.298$\pm$0.004}	& \best{1.158$\pm$0.019}	& \best{0.466$\pm$0.003}		\\
                        & 288		& 0.319$\pm0.005$	& 0.027$\pm0.004$	& 0.362$\pm0.003$	& 1.407$\pm0.029$	& 0.565$\pm0.005$	& \best{0.313$\pm$0.001}	& \best{0.019$\pm$0.005}	& \best{0.354$\pm$0.001}	& \best{1.307$\pm$0.013}	& \best{0.559$\pm$0.001}		\\
                        & 672		& \best{0.408$\pm$0.002}	& 0.017$\pm0.003$	& 0.410$\pm0.001$	& 1.577$\pm0.014$	& \best{0.639$\pm$0.001}	& 0.409$\pm0.003$	& \best{0.012$\pm$0.004}	& \best{0.409$\pm$0.002}	& \best{1.538$\pm$0.047}	& 0.640$\pm0.002$		\\
                        & AVG		& 0.271$\pm0.078$	& 0.052$\pm0.024$	& 0.334$\pm0.044$	& 1.313$\pm0.156$	& 0.514$\pm0.075$	& \best{0.254$\pm$0.087}	& \best{0.032$\pm$0.015}	& \best{0.320$\pm$0.050}	& \best{1.230$\pm$0.161}	& \best{0.495$\pm$0.084}		\\
            \end{tabular}
    \end{table}
\end{landscape}

\begin{landscape}
    \begin{table}[h]
        \caption{Full results of the proposed AliO method on the ECL, ETTh1, ETTh2, ETTm1, ETTm2, Exchange, Traffic, and Weather datasets and Autoformer~\cite{model.autoformer}. The results are reported in terms of MSE, TAM, MAE, MAPE, and RMSE. The best results are highlighted in \best{bold}. We use official GitHub code~(\url{https://github.com/thuml/Autoformer}) to train the model with the official configuration.
        The average improvement of AliO over the baseline is MSE: 6.09\%, TAM: 24.24\%, MAE: 4.48\%, MAPE: 3.76\%, and RMSE: 3.12\%.
        The maximum improvement of AliO over the baseline is MSE: 16.67\%, TAM: 81.18\%, MAE: 15.17\%, MAPE: 15.42\%, and RMSE: 8.58\%. ECL, ETTh1, ETTh2, ETTm1, and ETTm2 are in \Cref{tab:full_results_Autoformer_01}.}
        \label{tab:full_results_Autoformer_02}
        
        \vskip 0.1in
        \centering
        \setlength{\tabcolsep}{3pt}
        \small
            \begin{tabular}{c|c|ccccc|ccccc}
            
                \toprule
                \multicolumn{2}{c|}{Models}		& \multicolumn{10}{c}{Autoformer}		\\
                \hline
                \multicolumn{2}{c|}{Method}		& \multicolumn{5}{c|}{Baseline}		& \multicolumn{5}{c}{\textbf{AliO}}		\\
                \hline
                \multicolumn{2}{c|}{Metric}	& MSE$\downarrow$	& TAM$\downarrow$	& MAE$\downarrow$	& MAPE$\downarrow$	& RMSE$\downarrow$	& MSE$\downarrow$	& TAM$\downarrow$	& MAE$\downarrow$	& MAPE$\downarrow$	& RMSE$\downarrow$	\\

                \midrule
                \multirow{5}{*}{Exchange}&
                        96		& 0.155$\pm0.012$	& 0.031$\pm0.016$	& 0.286$\pm0.011$	& 1.700$\pm0.053$	& 0.394$\pm0.015$	& \best{0.139$\pm$0.005}	& \best{0.018$\pm$0.007}	& \best{0.270$\pm$0.005}	& \best{1.648$\pm$0.010}	& \best{0.373$\pm$0.007}		\\
                        & 192		& 0.284$\pm0.017$	& \best{0.014$\pm$0.004}	& 0.389$\pm0.009$	& 2.345$\pm0.025$	& 0.532$\pm0.016$	& \best{0.275$\pm$0.012}	& 0.015$\pm0.008$	& \best{0.379$\pm$0.009}	& \best{2.338$\pm$0.025}	& \best{0.525$\pm$0.011}		\\
                        & 336		& 0.447$\pm0.041$	& 0.020$\pm0.007$	& 0.496$\pm0.023$	& 3.403$\pm0.130$	& 0.668$\pm0.031$	& \best{0.436$\pm$0.012}	& \best{0.011$\pm$0.000}	& \best{0.488$\pm$0.007}	& \best{3.369$\pm$0.002}	& \best{0.660$\pm$0.009}		\\
                        & 720		& 1.110$\pm0.035$	& 0.037$\pm0.044$	& 0.814$\pm$0.022	& 6.640$\pm0.123$	& 1.053$\pm0.017$	& \best{1.079$\pm$0.022}	& \best{0.007$\pm$0.002}	& \best{0.800$\pm$0.011}	& \best{6.627$\pm$0.094}	& \best{1.039$\pm$0.011}		\\
                        & AVG		& 0.499$\pm0.329$	& 0.025$\pm0.008$	& 0.496$\pm$0.177	& 3.522$\pm1.699$	& 0.662$\pm0.220$	& \best{0.482$\pm$0.221}	& \best{0.013$\pm$0.004}	& \best{0.485$\pm$0.280}	& \best{3.495$\pm$1.707}	& \best{0.649$\pm$0.221}		\\
                \hline
        
                \multirow{5}{*}{Traffic}&
                        96		& 0.637$\pm0.018$	& 0.051$\pm0.005$	& 0.400$\pm0.014$	& 4.222$\pm0.261$	& 0.798$\pm0.011$	& \best{0.613$\pm$0.009}	& \best{0.041$\pm$0.003}	& \best{0.375$\pm$0.002}	& \best{3.782$\pm$0.014}	& \best{0.783$\pm$0.006}		\\
                        & 192		& 0.626$\pm0.009$	& 0.055$\pm0.006$	& 0.389$\pm0.006$	& \best{4.104$\pm$0.090}	& 0.791$\pm0.006$	& \best{0.621$\pm$0.008}	& \best{0.051$\pm$0.005}	& \best{0.386$\pm$0.006}	& 4.130$\pm0.079$	& \best{0.788$\pm$0.005}		\\
                        & 336		& 0.623$\pm0.006$	& \best{0.042$\pm$0.002}	& 0.386$\pm0.005$	& \best{4.207$\pm$0.091}	& 0.789$\pm0.004$	& \best{0.614$\pm$0.003}	& 0.048$\pm0.006$	& \best{0.379$\pm$0.003}	& 4.313$\pm0.167$	& \best{0.784$\pm$0.002}		\\
                        & 720		& 0.651$\pm0.010$	& 0.045$\pm0.004$	& 0.398$\pm0.006$	& 4.326$\pm0.033$	& 0.807$\pm0.006$	& \best{0.648$\pm$0.005}	& \best{0.039$\pm$0.001}	& \best{0.386$\pm$0.003}	& \best{3.970$\pm$0.063}	& \best{0.805$\pm$0.003}		\\
                        & AVG		& 0.634$\pm0.010$	& 0.048$\pm0.005$	& 0.393$\pm0.005$	& 4.215$\pm0.070$	& 0.796$\pm0.006$	& \best{0.624$\pm$0.013}	& \best{0.045$\pm$0.004}	& \best{0.382$\pm$0.004}	& \best{4.049$\pm$0.175}	& \best{0.790$\pm$0.008}		\\
                \hline
        
                \multirow{5}{*}{Weather}	&
                        96		& 0.269$\pm0.032$	& 0.069$\pm0.011$	& 0.340$\pm0.023$	& \best{13.016$\pm$1.557}	& 0.518$\pm0.031$	& \best{0.224$\pm$0.009}	& \best{0.051$\pm$0.003}	& \best{0.288$\pm$0.011}	& 13.076$\pm0.455$	& \best{0.474$\pm$0.010}		\\
                        & 192		& 0.298$\pm0.013$	& 0.055$\pm0.003$	& 0.356$\pm0.008$	& \best{14.001$\pm$0.960}	& 0.546$\pm0.012$	& \best{0.282$\pm$0.007}	& \best{0.043$\pm$0.005}	& \best{0.332$\pm$0.006}	& 15.517$\pm0.589$	& \best{0.531$\pm$0.007}		\\
                        & 336		& 0.367$\pm0.026$	& 0.046$\pm0.006$	& 0.398$\pm0.020$	& \best{13.587$\pm$2.218}	& 0.606$\pm0.021$	& \best{0.338$\pm$0.011}	& \best{0.045$\pm$0.014}	& \best{0.368$\pm$0.008}	& 14.002$\pm0.378$	& \best{0.581$\pm$0.010}		\\
                        & 720		& 0.434$\pm0.014$	& 0.056$\pm0.004$	& 0.441$\pm0.013$	& \best{11.666$\pm$0.645}	& 0.659$\pm0.011$	& \best{0.401$\pm$0.005}	& \best{0.027$\pm$0.007}	& \best{0.410$\pm$0.004}	& 12.468$\pm0.723$	& \best{0.633$\pm$0.004}		\\
                        & AVG		& 0.342$\pm0.057$	& 0.056$\pm0.007$	& 0.384$\pm0.035$	& \best{13.068$\pm$0.788}	& 0.582$\pm0.049$	& \best{0.311$\pm$0.058}	& \best{0.042$\pm$0.008}	& \best{0.350$\pm$0.040}	& 13.766$\pm1.028$	& \best{0.555$\pm$0.053}		\\
            \end{tabular}
    \end{table}
    
\end{landscape}

\begin{landscape}
    \begin{table}[h]
        \caption{Full results of the proposed AliO method on the ECL, ETTh1, ETTh2, ETTm1, ETTm2, Exchange, ILI, Traffix, and Weather datasets and DLinear~\cite{model.dlinear}. The results are reported in terms of MSE, TAM, MAE, MAPE, and RMSE. The best results are highlighted in \best{bold}. We use official GitHub code~(\url{https://github.com/vivva/DLinear}) to train the model with the official configuration. The average improvement of AliO over the baseline is MSE: 1.70\%, TAM: 36.34\%, MAE: 1.99\%, MAPE: 2.25\%, and RMSE: 0.84\%. The maximum improvement of AliO over the baseline is MSE: 11.06\%, TAM: 77.19\%, MAE: 8.39\%, MAPE: 15.19\%, and RMSE: 5.56\%. Exchange ILI Traffic Weather are in \Cref{tab:full_results_DLinear_02}.}
        \label{tab:full_results_DLinear_01}
        
    \vskip 0.1in
    \centering
    \setlength{\tabcolsep}{3pt}
    \small
    
        \begin{tabular}{c|c|ccccc|ccccc}
        
        \toprule
        \multicolumn{2}{c|}{Models}		& \multicolumn{10}{c}{DLinear}		\\
        \hline
        \multicolumn{2}{c|}{Method}		& \multicolumn{5}{c|}{Baseline}		& \multicolumn{5}{c}{\textbf{AliO}}		\\
        \hline
        \multicolumn{2}{c|}{Metric}	& MSE$\downarrow$	& TAM$\downarrow$	& MAE$\downarrow$	& MAPE$\downarrow$	& RMSE$\downarrow$	& MSE$\downarrow$	& TAM$\downarrow$	& MAE$\downarrow$	& MAPE$\downarrow$	& RMSE$\downarrow$	\\

        \midrule

        \multirow{5}{*}{ECL}&
                96		& \best{0.140$\pm$0.000}	& 0.017$\pm$0.000	& 0.237$\pm$0.000	& \best{2.149$\pm$0.005}	& \best{0.374$\pm$0.000}	& 0.140$\pm$0.000	& \best{0.013$\pm$0.000}	& \best{0.237$\pm$0.000}	& 2.149$\pm$0.003	& 0.374$\pm$0.000		\\
                & 192		& \best{0.153$\pm$0.000}	& 0.013$\pm$0.000	& 0.250$\pm$0.000	& \best{2.300$\pm$0.003}	& \best{0.391$\pm$0.000}	& 0.153$\pm$0.000	& \best{0.010$\pm$0.000}	& \best{0.250$\pm$0.000}	& 2.301$\pm$0.001	& 0.391$\pm$0.000		\\
                & 336		& 0.169$\pm$0.000	& 0.011$\pm$0.000	& 0.268$\pm$0.000	& 2.283$\pm$0.004	& 0.411$\pm$0.000	& \best{0.169$\pm$0.000}	& \best{0.008$\pm$0.000}	& \best{0.267$\pm$0.000}	& \best{2.283$\pm$0.001}	& \best{0.411$\pm$0.000}		\\
                & 720		& 0.204$\pm$0.000	& 0.011$\pm$0.001	& 0.301$\pm$0.000	& 2.457$\pm$0.002	& 0.451$\pm$0.000	& \best{0.203$\pm$0.000}	& \best{0.008$\pm$0.001}	& \best{0.300$\pm$0.000}	& \best{2.451$\pm$0.004}	& \best{0.451$\pm$0.000}		\\
                & AVG		& 0.166$\pm$0.021	& 0.013$\pm$0.002	& 0.264$\pm$0.021	& 2.297$\pm$0.098	& 0.407$\pm$0.026	& \best{0.166$\pm$0.021}	& \best{0.010$\pm$0.002}	& \best{0.264$\pm$0.021}	& \best{2.296$\pm$0.096}	& \best{0.407$\pm$0.026}		\\
        \hline
        
        \multirow{5}{*}{ETTh1}&
                96		& 0.371$\pm$0.000	& 0.028$\pm$0.001	& 0.395$\pm$0.001	& \best{8.691$\pm$0.011}	& 0.609$\pm$0.000	& \best{0.369$\pm$0.000}	& \best{0.019$\pm$0.000}	& \best{0.392$\pm$0.000}	& 8.743$\pm$0.003	& \best{0.608$\pm$0.000}		\\
                & 192		& 0.408$\pm$0.002	& 0.028$\pm$0.002	& 0.419$\pm$0.002	& \best{8.481$\pm$0.081}	& 0.638$\pm$0.002	& \best{0.404$\pm$0.000}	& \best{0.015$\pm$0.000}	& \best{0.413$\pm$0.000}	& 8.496$\pm$0.006	& \best{0.635$\pm$0.000}		\\
                & 336		& 0.435$\pm$0.002	& 0.024$\pm$0.005	& 0.439$\pm$0.003	& 8.545$\pm$0.053	& 0.660$\pm$0.001	& \best{0.432$\pm$0.000}	& \best{0.013$\pm$0.000}	& \best{0.435$\pm$0.000}	& \best{8.489$\pm$0.001}	& \best{0.658$\pm$0.000}		\\
                & 720		& 0.474$\pm$0.001	& 0.022$\pm$0.003	& 0.493$\pm$0.001	& 9.459$\pm$0.056	& 0.689$\pm$0.001	& \best{0.469$\pm$0.000}	& \best{0.010$\pm$0.000}	& \best{0.488$\pm$0.000}	& \best{9.429$\pm$0.001}	& \best{0.685$\pm$0.000}		\\
                & AVG		& 0.422$\pm$0.034	& 0.025$\pm$0.003	& 0.436$\pm$0.032	& 8.794$\pm$0.350	& 0.649$\pm$0.026	& \best{0.419$\pm$0.033}	& \best{0.014$\pm$0.003}	& \best{0.432$\pm$0.032}	& \best{8.789$\pm$0.343}	& \best{0.646$\pm$0.025}		\\
        \hline

        \multirow{5}{*}{ETTh2}&
                96		& 0.291$\pm$0.003	& 0.069$\pm$0.016	& 0.354$\pm$0.002	& 1.359$\pm$0.048	& 0.539$\pm$0.003	& \best{0.290$\pm$0.006}	& \best{0.046$\pm$0.010}	& \best{0.353$\pm$0.006}	& \best{1.321$\pm$0.028}	& \best{0.538$\pm$0.006}		\\
                & 192		& 0.381$\pm$0.008	& 0.049$\pm$0.005	& 0.416$\pm$0.006	& \best{1.369$\pm$0.007}	& 0.617$\pm$0.007	& \best{0.369$\pm$0.007}	& \best{0.023$\pm$0.004}	& \best{0.405$\pm$0.006}	& 1.376$\pm$0.022	& \best{0.607$\pm$0.006}		\\
                & 336		& \best{0.438$\pm$0.011}	& \best{0.039$\pm$0.001}	& \best{0.456$\pm$0.007}	& \best{1.494$\pm$0.008}	& \best{0.662$\pm$0.008}	& 0.454$\pm$0.009	& 0.050$\pm$0.027	& 0.465$\pm$0.004	& 1.497$\pm$0.041	& 0.674$\pm$0.007		\\
                & 720		& 0.687$\pm$0.044	& 0.057$\pm$0.020	& 0.586$\pm$0.018	& 1.605$\pm$0.033	& 0.828$\pm$0.026	& \best{0.672$\pm$0.025}	& \best{0.013$\pm$0.002}	& \best{0.577$\pm$0.013}	& \best{1.544$\pm$0.019}	& \best{0.820$\pm$0.015}		\\
                & AVG		& 0.449$\pm$0.131	& 0.054$\pm$0.010	& 0.453$\pm$0.076	& 1.457$\pm$0.090	& 0.662$\pm$0.095	& \best{0.446$\pm$0.128}	& \best{0.033$\pm$0.014}	& \best{0.450$\pm$0.075}	& \best{1.434$\pm$0.080}	& \best{0.660$\pm$0.093}		\\
        \hline

        \multirow{5}{*}{ETTm1}&
                96		& 0.300$\pm$0.001	& 0.031$\pm$0.002	& 0.344$\pm$0.001	& 2.021$\pm$0.011	& 0.548$\pm$0.001	& \best{0.296$\pm$0.000}	& \best{0.014$\pm$0.000}	& \best{0.338$\pm$0.000}	& \best{1.983$\pm$0.001}	& \best{0.544$\pm$0.000}		\\
                & 192		& 0.338$\pm$0.000	& 0.026$\pm$0.001	& 0.369$\pm$0.001	& 2.079$\pm$0.008	& 0.581$\pm$0.000	& \best{0.333$\pm$0.000}	& \best{0.010$\pm$0.000}	& \best{0.361$\pm$0.000}	& \best{2.068$\pm$0.001}	& \best{0.577$\pm$0.000}		\\
                & 336		& 0.369$\pm$0.000	& 0.020$\pm$0.001	& 0.386$\pm$0.001	& 2.131$\pm$0.008	& 0.608$\pm$0.000	& \best{0.367$\pm$0.000}	& \best{0.008$\pm$0.000}	& \best{0.382$\pm$0.000}	& \best{2.124$\pm$0.001}	& \best{0.606$\pm$0.000}		\\
                & 720		& 0.427$\pm$0.000	& 0.019$\pm$0.001	& 0.422$\pm$0.001	& 2.240$\pm$0.018	& 0.653$\pm$0.000	& \best{0.422$\pm$0.000}	& \best{0.006$\pm$0.000}	& \best{0.416$\pm$0.000}	& \best{2.237$\pm$0.001}	& \best{0.650$\pm$0.000}		\\
                & AVG		& 0.358$\pm$0.041	& 0.024$\pm$0.004	& 0.380$\pm$0.025	& 2.118$\pm$0.072	& 0.597$\pm$0.034	& \best{0.354$\pm$0.042}	& \best{0.010$\pm$0.002}	& \best{0.374$\pm$0.025}	& \best{2.103$\pm$0.082}	& \best{0.594$\pm$0.035}		\\
        \hline

        \multirow{5}{*}{ETTm2}&
                96		& 0.171$\pm$0.002	& 0.034$\pm$0.002	& 0.265$\pm$0.004	& \best{1.047$\pm$0.009}	& 0.413$\pm$0.002	& \best{0.166$\pm$0.001}	& \best{0.018$\pm$0.001}	& \best{0.258$\pm$0.001}	& 1.056$\pm$0.003	& \best{0.407$\pm$0.001}		\\
                & 192		& 0.233$\pm$0.004	& 0.031$\pm$0.004	& 0.312$\pm$0.007	& \best{1.153$\pm$0.024}	& 0.483$\pm$0.004	& \best{0.229$\pm$0.002}	& \best{0.015$\pm$0.001}	& \best{0.307$\pm$0.002}	& 1.162$\pm$0.005	& \best{0.478$\pm$0.002}		\\
                & 336		& 0.311$\pm$0.022	& 0.053$\pm$0.023	& 0.366$\pm$0.017	& 1.325$\pm$0.085	& 0.557$\pm$0.019	& \best{0.284$\pm$0.005}	& \best{0.026$\pm$0.007}	& \best{0.344$\pm$0.004}	& \best{1.274$\pm$0.018}	& \best{0.533$\pm$0.005}		\\
                & 720		& 0.443$\pm$0.012	& 0.030$\pm$0.003	& 0.453$\pm$0.008	& \best{1.315$\pm$0.019}	& 0.665$\pm$0.009	& \best{0.394$\pm$0.004}	& \best{0.020$\pm$0.005}	& \best{0.415$\pm$0.003}	& 1.384$\pm$0.007	& \best{0.628$\pm$0.003}		\\
                & AVG		& 0.289$\pm$0.091	& 0.037$\pm$0.009	& 0.349$\pm$0.062	& \best{1.210$\pm$0.104}	& 0.530$\pm$0.083	& \best{0.268$\pm$0.075}	& \best{0.020$\pm$0.004}	& \best{0.331$\pm$0.051}	& 1.219$\pm$0.110	& \best{0.512$\pm$0.072}
        \end{tabular}
    \end{table}
\end{landscape}

\begin{landscape}
    \begin{table}[h]
        \caption{Full results of the proposed AliO method on the ECL, ETTh1, ETTh2, ETTm1, ETTm2, Exchange, ILI, Traffix, and Weather datasets and DLinear~\cite{model.dlinear}. The results are reported in terms of MSE, TAM, MAE, MAPE, and RMSE. The best results are highlighted in \best{bold}. We use official GitHub code~(\url{https://github.com/vivva/DLinear}) to train the model with the official configuration. The average improvement of AliO over the baseline is MSE: 1.70\%, TAM: 36.34\%, MAE: 1.99\%, MAPE: 2.25\%, and RMSE: 0.84\%. The maximum improvement of AliO over the baseline is MSE: 11.06\%, TAM: 77.19\%, MAE: 8.39\%, MAPE: 15.19\%, and RMSE: 5.56\%. ECL, ETTh1, ETTh2, ETTm1, and ETTm2 are in \Cref{tab:full_results_DLinear_01}.}
        \label{tab:full_results_DLinear_02}
        
    \vskip 0.1in
    \centering
    \setlength{\tabcolsep}{3pt}
    \small
    
        \begin{tabular}{c|c|ccccc|ccccc}
        
        \toprule
        \multicolumn{2}{c|}{Models}		& \multicolumn{10}{c}{DLinear}		\\
        \hline
        \multicolumn{2}{c|}{Method}		& \multicolumn{5}{c|}{Baseline}		& \multicolumn{5}{c}{\textbf{AliO}}		\\
        \hline
        \multicolumn{2}{c|}{Metric}	& MSE$\downarrow$	& TAM$\downarrow$	& MAE$\downarrow$	& MAPE$\downarrow$	& RMSE$\downarrow$	& MSE$\downarrow$	& TAM$\downarrow$	& MAE$\downarrow$	& MAPE$\downarrow$	& RMSE$\downarrow$	\\

        \midrule
        \multirow{5}{*}{Exchange}&
                96		& 0.086$\pm$0.006	& 0.017$\pm$0.000	& 0.208$\pm$0.008	& 1.198$\pm$0.036	& 0.293$\pm$0.010	& \best{0.078$\pm$0.000}	& \best{0.015$\pm$0.000}	& \best{0.201$\pm$0.000}	& \best{1.120$\pm$0.001}	& \best{0.280$\pm$0.000}		\\
                & 192		& 0.160$\pm$0.002	& 0.015$\pm$0.001	& 0.295$\pm$0.002	& \best{1.584$\pm$0.033}	& 0.400$\pm$0.003	& \best{0.157$\pm$0.002}	& \best{0.014$\pm$0.001}	& \best{0.291$\pm$0.002}	& 1.594$\pm$0.019	& \best{0.396$\pm$0.002}		\\
                & 336		& 0.330$\pm$0.032	& \best{0.012$\pm$0.005}	& 0.434$\pm$0.016	& \best{2.120$\pm$0.139}	& 0.574$\pm$0.028	& \best{0.320$\pm$0.034}	& 0.012$\pm$0.003	& \best{0.425$\pm$0.022}	& 2.222$\pm$0.207	& \best{0.565$\pm$0.030}		\\
                & 720		& 0.833$\pm$0.129	& \best{0.033$\pm$0.017}	& 0.689$\pm$0.050	& 2.298$\pm$0.441	& 0.910$\pm$0.072	& \best{0.822$\pm$0.135}	& 0.054$\pm$0.033	& \best{0.679$\pm$0.051}	& \best{2.186$\pm$0.258}	& \best{0.904$\pm$0.074}		\\
                & AVG		& 0.352$\pm$0.260	& \best{0.019$\pm$0.007}	& 0.407$\pm$0.163	& 1.800$\pm$0.390	& 0.544$\pm$0.209	& \best{0.344$\pm$0.259}	& 0.024$\pm$0.016	& \best{0.399$\pm$0.161}	& \best{1.781$\pm$0.407}	& \best{0.536$\pm$0.210}		\\
        \hline

        \multirow{5}{*}{ILI}&
                24		& 2.335$\pm$0.149	& 0.196$\pm$0.039	& 1.078$\pm$0.060	& 4.052$\pm$0.445	& 1.527$\pm$0.049	& \best{2.229$\pm$0.019}	& \best{0.131$\pm$0.003}	& \best{1.036$\pm$0.005}	& \best{3.689$\pm$0.059}	& \best{1.493$\pm$0.006}		\\
                & 36		& \best{2.072$\pm$0.026}	& \best{0.144$\pm$0.009}	& \best{1.018$\pm$0.012}	& \best{2.497$\pm$0.075}	& \best{1.439$\pm$0.009}	& 2.076$\pm$0.036	& 0.153$\pm$0.063	& 1.024$\pm$0.005	& 2.507$\pm$0.190	& 1.441$\pm$0.013		\\
                & 48		& 2.269$\pm$0.077	& 0.136$\pm$0.020	& 1.091$\pm$0.032	& 2.685$\pm$0.249	& 1.506$\pm$0.026	& \best{2.161$\pm$0.016}	& \best{0.078$\pm$0.003}	& \best{1.049$\pm$0.003}	& \best{2.278$\pm$0.016}	& \best{1.470$\pm$0.005}		\\
                & 60		& 2.315$\pm$0.022	& 0.132$\pm$0.028	& 1.088$\pm$0.006	& 2.542$\pm$0.176	& 1.521$\pm$0.007	& \best{2.280$\pm$0.080}	& \best{0.112$\pm$0.021}	& \best{1.078$\pm$0.019}	& \best{2.412$\pm$0.050}	& \best{1.510$\pm$0.027}		\\
                & AVG		& 2.247$\pm$0.093	& 0.152$\pm$0.023	& 1.069$\pm$0.027	& 2.944$\pm$0.576	& 1.498$\pm$0.031	& \best{2.187$\pm$0.069}	& \best{0.119$\pm$0.025}	& \best{1.047$\pm$0.018}	& \best{2.721$\pm$0.505}	& \best{1.478$\pm$0.023}		\\
        \hline

        \multirow{5}{*}{Traffic}&
                96		& \best{0.412$\pm$0.001}	& 0.024$\pm$0.001	& 0.286$\pm$0.001	& 3.123$\pm$0.030	& \best{0.642$\pm$0.001}	& 0.417$\pm$0.000	& \best{0.006$\pm$0.000}	& \best{0.274$\pm$0.000}	& 2.749$\pm$0.000	& 0.646$\pm$0.000		\\
                & 192		& \best{0.424$\pm$0.000}	& 0.026$\pm$0.003	& 0.291$\pm$0.000	& 3.073$\pm$0.018	& \best{0.651$\pm$0.000}	& 0.426$\pm$0.000	& \best{0.007$\pm$0.000}	& \best{0.279$\pm$0.000}	& 2.718$\pm$0.006	& 0.653$\pm$0.000		\\
                & 336		& 0.438$\pm$0.001	& 0.023$\pm$0.000	& 0.299$\pm$0.001	& 3.047$\pm$0.008	& 0.662$\pm$0.000	& \best{0.437$\pm$0.000}	& \best{0.006$\pm$0.000}	& \best{0.285$\pm$0.000}	& 2.690$\pm$0.006	& \best{0.661$\pm$0.000}		\\
                & 720		& 0.468$\pm$0.001	& 0.026$\pm$0.004	& 0.319$\pm$0.001	& 3.139$\pm$0.022	& 0.684$\pm$0.001	& \best{0.463$\pm$0.000}	& \best{0.006$\pm$0.000}	& \best{0.303$\pm$0.000}	& 2.769$\pm$0.005	& \best{0.681$\pm$0.000}		\\
                & AVG		& \best{0.436$\pm$0.019}	& 0.025$\pm$0.001	& 0.299$\pm$0.011	& 3.095$\pm$0.033	& \best{0.660$\pm$0.014}	& 0.436$\pm$0.017	& \best{0.006$\pm$0.000}	& \best{0.285$\pm$0.011}	& 2.731$\pm$0.027	& 0.660$\pm$0.013		\\
        \hline

        \multirow{5}{*}{Weather}	&
                96		& \best{0.175$\pm$0.001}	& 0.009$\pm$0.001	& 0.237$\pm$0.003	& \best{10.585$\pm$0.247}	& \best{0.419$\pm$0.001}	& 0.175$\pm$0.001	& \best{0.007$\pm$0.000}	& \best{0.236$\pm$0.002}	& 10.679$\pm$0.122	& 0.419$\pm$0.001		\\
                & 192		& \best{0.216$\pm$0.001}	& 0.007$\pm$0.000	& 0.275$\pm$0.001	& \best{10.767$\pm$0.080}	& \best{0.465$\pm$0.001}	& 0.217$\pm$0.000	& \best{0.005$\pm$0.000}	& \best{0.273$\pm$0.000}	& 10.893$\pm$0.009	& 0.466$\pm$0.000		\\
                & 336		& 0.264$\pm$0.002	& 0.007$\pm$0.000	& 0.316$\pm$0.003	& \best{11.698$\pm$0.232}	& 0.513$\pm$0.002	& \best{0.262$\pm$0.000}	& \best{0.004$\pm$0.000}	& \best{0.310$\pm$0.000}	& 11.930$\pm$0.005	& \best{0.511$\pm$0.000}		\\
                & 720		& 0.326$\pm$0.001	& 0.006$\pm$0.000	& 0.367$\pm$0.002	& \best{10.920$\pm$0.188}	& 0.571$\pm$0.001	& \best{0.324$\pm$0.000}	& \best{0.003$\pm$0.000}	& \best{0.362$\pm$0.000}	& 11.236$\pm$0.009	& \best{0.569$\pm$0.000}		\\
                & AVG		& 0.245$\pm$0.050	& 0.007$\pm$0.001	& 0.299$\pm$0.043	& \best{10.993$\pm$0.380}	& 0.492$\pm$0.051	& \best{0.244$\pm$0.049}	& \best{0.005$\pm$0.001}	& \best{0.295$\pm$0.041}	& 11.184$\pm$0.424	& \best{0.491$\pm$0.050}    
        \end{tabular}
    \end{table}
\end{landscape}

\begin{landscape}
    \begin{table}[h]
        \caption{Full results of the proposed AliO method on the ECL, ETTh1, ETTh2, ETTm1, ETTm2, ILI, Traffix, and Weather datasets and PatchTST~\cite{model.patchtst}. The results are reported in terms of MSE, TAM, MAE, MAPE, and RMSE. The best results are highlighted in \best{bold}. We use official GitHub code~(\url{https://github.com/yuqinie98/PatchTST}) to train the model with the official configuration. The average improvement of AliO over the baseline is MSE: 2.58\%, TAM: 35.69\%, MAE: 2.42\%, MAPE: 4.40\%, and RMSE: 1.34\%. The maximum improvement of AliO over the baseline is MSE: 20.93\%, TAM: 60.37\%, MAE: 14.44\%, MAPE: 29.81\%, and RMSE: 11.01\%. ILI, Traffic, and Weather are in \Cref{tab:full_results_PatchTST_02}.}
        \label{tab:full_results_PatchTST_01}
        
    \vskip 0.1in
    \centering
    \setlength{\tabcolsep}{3pt}
    \small
    
        \begin{tabular}{c|c|ccccc|ccccc}
        
        \toprule
        \multicolumn{2}{c|}{Models}		& \multicolumn{10}{c}{PatchTST}		\\
        \hline
        \multicolumn{2}{c|}{Method}		& \multicolumn{5}{c|}{Baseline}		& \multicolumn{5}{c}{\textbf{AliO}}		\\
        \hline
        \multicolumn{2}{c|}{Metric}	& MSE$\downarrow$	& TAM$\downarrow$	& MAE$\downarrow$	& MAPE$\downarrow$	& RMSE$\downarrow$	& MSE$\downarrow$	& TAM$\downarrow$	& MAE$\downarrow$	& MAPE$\downarrow$	& RMSE$\downarrow$	\\

        \midrule
        \multirow{5}{*}{ECL}&
                96		& \best{0.130$\pm$0.000}	& 0.030$\pm$0.000	& 0.223$\pm$0.000	& 2.332$\pm$0.034	& \best{0.360$\pm$0.000}	& 0.130$\pm$0.000	& \best{0.016$\pm$0.000}	& \best{0.221$\pm$0.000}	& \best{2.276$\pm$0.006}	& 0.361$\pm$0.000		\\
                & 192		& 0.149$\pm$0.001	& 0.032$\pm$0.002	& 0.241$\pm$0.001	& 2.488$\pm$0.040	& 0.386$\pm$0.001	& \best{0.148$\pm$0.001}	& \best{0.021$\pm$0.000}	& \best{0.238$\pm$0.000}	& \best{2.462$\pm$0.004}	& \best{0.385$\pm$0.001}		\\
                & 336		& 0.166$\pm$0.000	& 0.037$\pm$0.001	& 0.260$\pm$0.001	& 2.493$\pm$0.008	& 0.408$\pm$0.001	& \best{0.165$\pm$0.001}	& \best{0.028$\pm$0.001}	& \best{0.257$\pm$0.000}	& \best{2.468$\pm$0.008}	& \best{0.406$\pm$0.001}		\\
                & 720		& 0.203$\pm$0.001	& 0.049$\pm$0.002	& 0.293$\pm$0.001	& 2.583$\pm$0.027	& 0.451$\pm$0.001	& \best{0.202$\pm$0.001}	& \best{0.030$\pm$0.000}	& \best{0.289$\pm$0.001}	& \best{2.554$\pm$0.012}	& \best{0.449$\pm$0.001}		\\
                & AVG		& 0.162$\pm$0.024	& 0.037$\pm$0.007	& 0.254$\pm$0.023	& 2.474$\pm$0.081	& 0.401$\pm$0.030	& \best{0.161$\pm$0.024}	& \best{0.024$\pm$0.005}	& \best{0.251$\pm$0.023}	& \best{2.440$\pm$0.091}	& \best{0.400$\pm$0.029}		\\
        \hline

        \multirow{5}{*}{ETTh1}&
                96		& 0.379$\pm$0.000	& 0.065$\pm$0.002	& 0.401$\pm$0.000	& 9.413$\pm$0.032	& 0.616$\pm$0.000	& \best{0.378$\pm$0.001}	& \best{0.053$\pm$0.002}	& \best{0.401$\pm$0.001}	& \best{9.358$\pm$0.026}	& \best{0.615$\pm$0.001}		\\
                & 192		& 0.412$\pm$0.000	& 0.063$\pm$0.002	& 0.420$\pm$0.000	& 9.490$\pm$0.037	& 0.642$\pm$0.000	& \best{0.411$\pm$0.000}	& \best{0.050$\pm$0.002}	& \best{0.419$\pm$0.000}	& \best{9.454$\pm$0.027}	& \best{0.641$\pm$0.000}		\\
                & 336		& 0.435$\pm$0.002	& 0.072$\pm$0.007	& 0.436$\pm$0.001	& 9.524$\pm$0.066	& 0.659$\pm$0.001	& \best{0.432$\pm$0.001}	& \best{0.045$\pm$0.002}	& \best{0.434$\pm$0.000}	& \best{9.425$\pm$0.038}	& \best{0.657$\pm$0.000}		\\
                & 720		& 0.448$\pm$0.003	& 0.103$\pm$0.013	& 0.465$\pm$0.002	& 10.184$\pm$0.163	& 0.669$\pm$0.002	& \best{0.439$\pm$0.003}	& \best{0.042$\pm$0.005}	& \best{0.461$\pm$0.002}	& \best{9.896$\pm$0.092}	& \best{0.663$\pm$0.002}		\\
                & AVG		& 0.418$\pm$0.023	& 0.076$\pm$0.014	& 0.431$\pm$0.021	& 9.653$\pm$0.277	& 0.646$\pm$0.018	& \best{0.415$\pm$0.021}	& \best{0.048$\pm$0.004}	& \best{0.429$\pm$0.020}	& \best{9.533$\pm$0.190}	& \best{0.644$\pm$0.017}		\\
        \hline

        \multirow{5}{*}{ETTh2}&
                96		& 0.276$\pm$0.000	& 0.067$\pm$0.001	& 0.338$\pm$0.000	& 1.378$\pm$0.002	& 0.525$\pm$0.000	& \best{0.275$\pm$0.000}	& \best{0.055$\pm$0.001}	& \best{0.337$\pm$0.000}	& \best{1.368$\pm$0.003}	& \best{0.525$\pm$0.000}		\\
                & 192		& \best{0.336$\pm$0.000}	& 0.062$\pm$0.001	& 0.378$\pm$0.000	& 1.510$\pm$0.003	& \best{0.580$\pm$0.000}	& 0.337$\pm$0.000	& \best{0.051$\pm$0.001}	& \best{0.377$\pm$0.001}	& \best{1.495$\pm$0.024}	& 0.580$\pm$0.000		\\
                & 336		& 0.361$\pm$0.000	& 0.060$\pm$0.002	& 0.401$\pm$0.001	& 1.700$\pm$0.011	& 0.601$\pm$0.000	& \best{0.360$\pm$0.000}	& \best{0.048$\pm$0.002}	& \best{0.399$\pm$0.002}	& \best{1.672$\pm$0.027}	& \best{0.600$\pm$0.000}		\\
                & 720		& \best{0.391$\pm$0.000}	& 0.055$\pm$0.002	& 0.429$\pm$0.000	& 2.013$\pm$0.003	& \best{0.625$\pm$0.000}	& 0.391$\pm$0.001	& \best{0.044$\pm$0.002}	& \best{0.429$\pm$0.001}	& \best{2.010$\pm$0.003}	& 0.625$\pm$0.001		\\
                & AVG		& 0.341$\pm$0.038	& 0.061$\pm$0.004	& 0.387$\pm$0.030	& 1.650$\pm$0.214	& 0.583$\pm$0.033	& \best{0.341$\pm$0.038}	& \best{0.050$\pm$0.004}	& \best{0.386$\pm$0.030}	& \best{1.636$\pm$0.216}	& \best{0.583$\pm$0.033}		\\
        \hline

        \multirow{5}{*}{ETTm1}&
                96		& 0.290$\pm$0.001	& 0.052$\pm$0.001	& 0.342$\pm$0.001	& 2.175$\pm$0.015	& 0.538$\pm$0.001	& \best{0.285$\pm$0.001}	& \best{0.029$\pm$0.000}	& \best{0.334$\pm$0.001}	& \best{2.102$\pm$0.015}	& \best{0.534$\pm$0.001}		\\
                & 192		& 0.334$\pm$0.002	& 0.049$\pm$0.001	& 0.370$\pm$0.002	& 2.302$\pm$0.014	& 0.578$\pm$0.002	& \best{0.329$\pm$0.002}	& \best{0.035$\pm$0.001}	& \best{0.364$\pm$0.001}	& \best{2.241$\pm$0.001}	& \best{0.574$\pm$0.001}		\\
                & 336		& \best{0.366$\pm$0.001}	& 0.049$\pm$0.001	& 0.391$\pm$0.001	& 2.350$\pm$0.014	& \best{0.605$\pm$0.001}	& 0.367$\pm$0.002	& \best{0.026$\pm$0.000}	& \best{0.386$\pm$0.001}	& \best{2.284$\pm$0.008}	& 0.606$\pm$0.002		\\
                & 720		& 0.418$\pm$0.005	& 0.052$\pm$0.002	& 0.423$\pm$0.004	& \best{2.452$\pm$0.026}	& 0.646$\pm$0.004	& \best{0.415$\pm$0.002}	& \best{0.037$\pm$0.001}	& \best{0.419$\pm$0.001}	& 2.459$\pm$0.016	& \best{0.644$\pm$0.001}		\\
                & AVG		& 0.352$\pm$0.042	& 0.050$\pm$0.001	& 0.382$\pm$0.026	& 2.320$\pm$0.089	& 0.592$\pm$0.035	& \best{0.349$\pm$0.043}	& \best{0.032$\pm$0.004}	& \best{0.376$\pm$0.028}	& \best{2.271$\pm$0.114}	& \best{0.589$\pm$0.036}		\\
        \hline

        \multirow{5}{*}{ETTm2}&
                96		& 0.164$\pm$0.001	& 0.051$\pm$0.001	& 0.252$\pm$0.000	& 1.057$\pm$0.004	& 0.405$\pm$0.001	& \best{0.162$\pm$0.000}	& \best{0.036$\pm$0.001}	& \best{0.251$\pm$0.000}	& \best{1.053$\pm$0.001}	& \best{0.403$\pm$0.000}		\\
                & 192		& 0.220$\pm$0.000	& 0.059$\pm$0.002	& 0.292$\pm$0.001	& \best{1.194$\pm$0.004}	& 0.469$\pm$0.000	& \best{0.218$\pm$0.001}	& \best{0.033$\pm$0.001}	& \best{0.289$\pm$0.001}	& 1.194$\pm$0.006	& \best{0.467$\pm$0.001}		\\
                & 336		& 0.275$\pm$0.001	& 0.063$\pm$0.000	& 0.329$\pm$0.000	& 1.318$\pm$0.001	& 0.525$\pm$0.001	& \best{0.274$\pm$0.000}	& \best{0.045$\pm$0.001}	& \best{0.327$\pm$0.000}	& \best{1.316$\pm$0.003}	& \best{0.523$\pm$0.000}		\\
                & 720		& 0.368$\pm$0.002	& 0.063$\pm$0.002	& 0.384$\pm$0.001	& \best{1.478$\pm$0.003}	& 0.606$\pm$0.001	& \best{0.366$\pm$0.001}	& \best{0.044$\pm$0.000}	& \best{0.383$\pm$0.001}	& 1.480$\pm$0.002	& \best{0.605$\pm$0.001}		\\
                & AVG		& 0.257$\pm$0.067	& 0.059$\pm$0.004	& 0.314$\pm$0.043	& 1.262$\pm$0.139	& 0.501$\pm$0.066	& \best{0.255$\pm$0.067}	& \best{0.039$\pm$0.005}	& \best{0.312$\pm$0.044}	& \best{1.261$\pm$0.141}	& \best{0.500$\pm$0.067}    
        \end{tabular}
    \end{table}
\end{landscape}

\begin{landscape}
    \begin{table}[h]
        \caption{Full results of the proposed AliO method on the ECL, ETTh1, ETTh2, ETTm1, ETTm2, ILI, Traffix, and Weather datasets and PatchTST~\cite{model.patchtst}. The results are reported in terms of MSE, TAM, MAE, MAPE, and RMSE. The best results are highlighted in \best{bold}. We use official GitHub code~(\url{https://github.com/yuqinie98/PatchTST}) to train the model with the official configuration. The average improvement of AliO over the baseline is MSE: 2.58\%, TAM: 35.69\%, MAE: 2.42\%, MAPE: 4.40\%, and RMSE: 1.34\%. The maximum improvement of AliO over the baseline is MSE: 20.93\%, TAM: 60.37\%, MAE: 14.44\%, MAPE: 29.81\%, and RMSE: 11.01\%. ECL, ETTh1, ETTh2, ETTm1, and ETTm2 are in \Cref{tab:full_results_PatchTST_01}.}
        \label{tab:full_results_PatchTST_02}
        
    \vskip 0.1in
    \centering
    \setlength{\tabcolsep}{3pt}
    \small
    
        \begin{tabular}{c|c|ccccc|ccccc}
        
        \toprule
        \multicolumn{2}{c|}{Models}		& \multicolumn{10}{c}{PatchTST}		\\
        \hline
        \multicolumn{2}{c|}{Method}		& \multicolumn{5}{c|}{Baseline}		& \multicolumn{5}{c}{\textbf{AliO}}		\\
        \hline
        \multicolumn{2}{c|}{Metric}	& MSE$\downarrow$	& TAM$\downarrow$	& MAE$\downarrow$	& MAPE$\downarrow$	& RMSE$\downarrow$	& MSE$\downarrow$	& TAM$\downarrow$	& MAE$\downarrow$	& MAPE$\downarrow$	& RMSE$\downarrow$	\\

        \midrule
        \multirow{5}{*}{ILI}&
                24		& 2.132$\pm$0.179	& 0.128$\pm$0.015	& 0.917$\pm$0.064	& 4.121$\pm$0.100	& 1.459$\pm$0.062	& \best{1.686$\pm$0.053}	& \best{0.056$\pm$0.003}	& \best{0.811$\pm$0.011}	& \best{3.004$\pm$0.048}	& \best{1.298$\pm$0.020}		\\
                & 36		& 1.559$\pm$0.045	& 0.106$\pm$0.017	& 0.830$\pm$0.013	& 2.305$\pm$0.216	& 1.248$\pm$0.018	& \best{1.391$\pm$0.047}	& \best{0.044$\pm$0.002}	& \best{0.767$\pm$0.004}	& \best{1.751$\pm$0.077}	& \best{1.179$\pm$0.020}		\\
                & 48		& 1.736$\pm$0.078	& 0.099$\pm$0.009	& 0.900$\pm$0.021	& 2.306$\pm$0.099	& 1.317$\pm$0.030	& \best{1.431$\pm$0.109}	& \best{0.042$\pm$0.001}	& \best{0.774$\pm$0.034}	& \best{1.682$\pm$0.049}	& \best{1.195$\pm$0.045}		\\
                & 60		& 1.824$\pm$0.140	& 0.107$\pm$0.005	& 0.923$\pm$0.046	& 2.476$\pm$0.098	& 1.349$\pm$0.053	& \best{1.479$\pm$0.097}	& \best{0.042$\pm$0.002}	& \best{0.790$\pm$0.034}	& \best{1.738$\pm$0.096}	& \best{1.216$\pm$0.040}		\\
                & AVG		& 1.813$\pm$0.186	& 0.110$\pm$0.010	& 0.893$\pm$0.033	& 2.802$\pm$0.684	& 1.343$\pm$0.068	& \best{1.497$\pm$0.101}	& \best{0.046$\pm$0.005}	& \best{0.785$\pm$0.015}	& \best{2.044$\pm$0.497}	& \best{1.222$\pm$0.041}		\\
        \hline

        \multirow{5}{*}{Traffic}&
                96		& 0.358$\pm$0.000	& 0.040$\pm$0.000	& 0.245$\pm$0.000	& 2.553$\pm$0.018	& 0.598$\pm$0.000	& \best{0.357$\pm$0.000}	& \best{0.029$\pm$0.001}	& \best{0.242$\pm$0.000}	& \best{2.472$\pm$0.005}	& \best{0.598$\pm$0.000}		\\
                & 192		& \best{0.378$\pm$0.001}	& 0.039$\pm$0.000	& 0.254$\pm$0.000	& 2.570$\pm$0.022	& \best{0.615$\pm$0.001}	& 0.378$\pm$0.000	& \best{0.022$\pm$0.000}	& \best{0.248$\pm$0.000}	& \best{2.482$\pm$0.001}	& 0.615$\pm$0.000		\\
                & 336		& 0.391$\pm$0.001	& 0.040$\pm$0.001	& 0.262$\pm$0.001	& 2.620$\pm$0.004	& 0.626$\pm$0.001	& \best{0.390$\pm$0.001}	& \best{0.027$\pm$0.001}	& \best{0.257$\pm$0.001}	& \best{2.542$\pm$0.003}	& \best{0.625$\pm$0.000}		\\
                & 720		& 0.431$\pm$0.000	& 0.044$\pm$0.001	& 0.285$\pm$0.000	& 2.845$\pm$0.018	& 0.657$\pm$0.000	& \best{0.427$\pm$0.001}	& \best{0.029$\pm$0.000}	& \best{0.278$\pm$0.000}	& \best{2.707$\pm$0.017}	& \best{0.653$\pm$0.001}		\\
                & AVG		& 0.390$\pm$0.024	& 0.041$\pm$0.002	& 0.262$\pm$0.013	& 2.647$\pm$0.105	& 0.624$\pm$0.019	& \best{0.388$\pm$0.023}	& \best{0.027$\pm$0.003}	& \best{0.256$\pm$0.012}	& \best{2.551$\pm$0.084}	& \best{0.623$\pm$0.018}		\\
        \hline

        \multirow{5}{*}{Weather}	&
                96		& 0.152$\pm$0.002	& 0.020$\pm$0.000	& 0.201$\pm$0.003	& \best{11.697$\pm$0.303}	& 0.390$\pm$0.003	& \best{0.151$\pm$0.000}	& \best{0.015$\pm$0.000}	& \best{0.197$\pm$0.000}	& 11.913$\pm$0.090	& \best{0.388$\pm$0.000}		\\
                & 192		& 0.196$\pm$0.001	& 0.018$\pm$0.001	& 0.242$\pm$0.001	& \best{13.084$\pm$0.650}	& 0.443$\pm$0.002	& \best{0.196$\pm$0.001}	& \best{0.013$\pm$0.000}	& \best{0.240$\pm$0.001}	& 13.325$\pm$0.405	& \best{0.442$\pm$0.001}		\\
                & 336		& 0.247$\pm$0.000	& 0.017$\pm$0.000	& 0.282$\pm$0.000	& 14.374$\pm$0.254	& 0.497$\pm$0.000	& \best{0.246$\pm$0.000}	& \best{0.012$\pm$0.000}	& \best{0.279$\pm$0.001}	& \best{14.242$\pm$0.366}	& \best{0.496$\pm$0.000}		\\
                & 720		& \best{0.320$\pm$0.001}	& 0.016$\pm$0.000	& 0.334$\pm$0.001	& \best{13.976$\pm$0.119}	& \best{0.566$\pm$0.000}	& 0.321$\pm$0.000	& \best{0.007$\pm$0.000}	& \best{0.329$\pm$0.000}	& 14.243$\pm$0.026	& 0.566$\pm$0.000		\\
                & AVG		& 0.229$\pm$0.056	& 0.018$\pm$0.001	& 0.265$\pm$0.044	& \best{13.283$\pm$0.919}	& 0.474$\pm$0.058	& \best{0.228$\pm$0.056}	& \best{0.012$\pm$0.003}	& \best{0.261$\pm$0.044}	& 13.431$\pm$0.853	& \best{0.473$\pm$0.059}		\\
    
        \end{tabular}
    \end{table}
\end{landscape}

\begin{landscape}
    \begin{table}[h]
        \caption{Full results of the proposed AliO method on the ECL, ETTh1, ETTh2, ETTm1, ETTm2, Exchange, Traffic, ILI, and Weather datasets and TimesNet~\cite{model.timesnet}. The results are reported in terms of MSE, TAM, MAE, MAPE, and RMSE. The best results are highlighted in \best{bold}. We use official GitHub code~(\url{https://github.com/thuml/TimesNet}) to train the model with the official configuration. The average improvement of AliO over the baseline is MSE: 3.42\%, TAM: 33.36\%, MAE: 1.81\%, MAPE: 4.01\%, and RMSE: 1.76\%. The maximum improvement of AliO over the baseline is MSE: 29.03\%, TAM: 51.96\%, MAE: 9.34\%, MAPE: 23.09\%, and RMSE: 16.56\%. Exchange, Traffic, ILI, and Weather are in \Cref{tab:full_results_TimesNet_02}.}
        \label{tab:full_results_TimesNet_01}
        
    \vskip 0.1in
    \centering
    \setlength{\tabcolsep}{3pt}
    \small
    
        \begin{tabular}{c|c|ccccc|ccccc}
        
        \toprule
        \multicolumn{2}{c|}{Models}		& \multicolumn{10}{c}{TimesNet}		\\
        \hline
        \multicolumn{2}{c|}{Method}		& \multicolumn{5}{c|}{Baseline}		& \multicolumn{5}{c}{\textbf{AliO}}		\\
        \hline
        \multicolumn{2}{c|}{Metric}	& MSE$\downarrow$	& TAM$\downarrow$	& MAE$\downarrow$	& MAPE$\downarrow$	& RMSE$\downarrow$	& MSE$\downarrow$	& TAM$\downarrow$	& MAE$\downarrow$	& MAPE$\downarrow$	& RMSE$\downarrow$	\\

        \midrule
        \multirow{5}{*}{ECL}&
                96		& 0.168$\pm$0.001	& 0.038$\pm$0.001	& 0.271$\pm$0.001	& 2.666$\pm$0.048	& 0.409$\pm$0.001	& \best{0.166$\pm$0.001}	& \best{0.025$\pm$0.000}	& \best{0.269$\pm$0.001}	& \best{2.563$\pm$0.067}	& \best{0.408$\pm$0.001}		\\
                & 192		& 0.186$\pm$0.001	& 0.038$\pm$0.002	& 0.288$\pm$0.002	& 2.787$\pm$0.097	& 0.431$\pm$0.001	& \best{0.183$\pm$0.001}	& \best{0.023$\pm$0.000}	& \best{0.284$\pm$0.001}	& \best{2.712$\pm$0.013}	& \best{0.428$\pm$0.002}		\\
                & 336		& 0.202$\pm$0.005	& 0.041$\pm$0.002	& 0.302$\pm$0.003	& \best{2.944$\pm$0.075}	& 0.449$\pm$0.006	& \best{0.196$\pm$0.001}	& \best{0.025$\pm$0.001}	& \best{0.297$\pm$0.002}	& 2.944$\pm$0.091	& \best{0.443$\pm$0.002}		\\
                & 720		& 0.228$\pm$0.012	& 0.044$\pm$0.005	& 0.322$\pm$0.011	& \best{2.883$\pm$0.092}	& 0.477$\pm$0.013	& \best{0.219$\pm$0.002}	& \best{0.039$\pm$0.003}	& \best{0.313$\pm$0.001}	& 2.933$\pm$0.027	& \best{0.468$\pm$0.002}		\\
                & AVG		& 0.196$\pm$0.020	& 0.040$\pm$0.002	& 0.296$\pm$0.017	& 2.820$\pm$0.094	& 0.442$\pm$0.022	& \best{0.191$\pm$0.017}	& \best{0.028$\pm$0.006}	& \best{0.291$\pm$0.014}	& \best{2.788$\pm$0.143}	& \best{0.436$\pm$0.020}		\\
        \hline

        \multirow{5}{*}{ETTh1}&
                96		& \best{0.409$\pm$0.010}	& 0.055$\pm$0.001	& \best{0.425$\pm$0.006}	& \best{10.015$\pm$0.311}	& \best{0.640$\pm$0.008}	& 0.410$\pm$0.010	& \best{0.045$\pm$0.001}	& 0.426$\pm$0.006	& 10.036$\pm$0.280	& 0.641$\pm$0.008		\\
                & 192		& 0.469$\pm$0.006	& 0.054$\pm$0.003	& 0.460$\pm$0.004	& 10.999$\pm$0.070	& 0.685$\pm$0.004	& \best{0.461$\pm$0.005}	& \best{0.030$\pm$0.002}	& \best{0.455$\pm$0.001}	& \best{10.429$\pm$0.286}	& \best{0.679$\pm$0.003}		\\
                & 336		& 0.507$\pm$0.013	& 0.046$\pm$0.003	& 0.478$\pm$0.008	& 10.926$\pm$0.266	& 0.712$\pm$0.009	& \best{0.497$\pm$0.003}	& \best{0.027$\pm$0.001}	& \best{0.472$\pm$0.002}	& \best{10.342$\pm$0.156}	& \best{0.705$\pm$0.002}		\\
                & 720		& 0.521$\pm$0.006	& 0.046$\pm$0.003	& 0.497$\pm$0.003	& 10.981$\pm$0.954	& 0.722$\pm$0.004	& \best{0.502$\pm$0.007}	& \best{0.027$\pm$0.001}	& \best{0.487$\pm$0.003}	& \best{10.451$\pm$0.332}	& \best{0.708$\pm$0.005}		\\
                & AVG		& 0.476$\pm$0.039	& 0.050$\pm$0.004	& 0.465$\pm$0.024	& 10.730$\pm$0.370	& 0.689$\pm$0.029	& \best{0.468$\pm$0.033}	& \best{0.032$\pm$0.007}	& \best{0.460$\pm$0.020}	& \best{10.314$\pm$0.148}	& \best{0.683$\pm$0.024}		\\
        \hline

        \multirow{5}{*}{ETTh2}&
                96		& 0.328$\pm$0.014	& 0.067$\pm$0.006	& 0.370$\pm$0.010	& 1.607$\pm$0.053	& 0.573$\pm$0.012	& \best{0.318$\pm$0.003}	& \best{0.041$\pm$0.006}	& \best{0.362$\pm$0.002}	& \best{1.500$\pm$0.021}	& \best{0.564$\pm$0.003}		\\
                & 192		& 0.429$\pm$0.030	& 0.089$\pm$0.034	& 0.424$\pm$0.019	& 1.646$\pm$0.069	& 0.655$\pm$0.023	& \best{0.398$\pm$0.011}	& \best{0.044$\pm$0.002}	& \best{0.408$\pm$0.005}	& \best{1.626$\pm$0.040}	& \best{0.631$\pm$0.008}		\\
                & 336		& 0.452$\pm$0.013	& 0.085$\pm$0.023	& 0.453$\pm$0.009	& \best{1.826$\pm$0.034}	& 0.672$\pm$0.010	& \best{0.448$\pm$0.002}	& \best{0.051$\pm$0.003}	& \best{0.447$\pm$0.002}	& 1.839$\pm$0.052	& \best{0.669$\pm$0.001}		\\
                & 720		& 0.454$\pm$0.006	& 0.075$\pm$0.025	& 0.463$\pm$0.006	& 2.177$\pm$0.158	& 0.673$\pm$0.005	& \best{0.446$\pm$0.016}	& \best{0.036$\pm$0.005}	& \best{0.459$\pm$0.010}	& \best{2.064$\pm$0.132}	& \best{0.668$\pm$0.012}		\\
                & AVG		& 0.416$\pm$0.046	& 0.079$\pm$0.008	& 0.428$\pm$0.032	& 1.814$\pm$0.201	& 0.643$\pm$0.037	& \best{0.402$\pm$0.047}	& \best{0.043$\pm$0.005}	& \best{0.419$\pm$0.034}	& \best{1.757$\pm$0.192}	& \best{0.633$\pm$0.038}		\\
        \hline

        \multirow{5}{*}{ETTm1}&
                96		& 0.334$\pm$0.005	& 0.052$\pm$0.001	& 0.374$\pm$0.003	& 2.366$\pm$0.011	& 0.578$\pm$0.004	& \best{0.328$\pm$0.005}	& \best{0.029$\pm$0.001}	& \best{0.368$\pm$0.002}	& \best{2.308$\pm$0.045}	& \best{0.572$\pm$0.004}		\\
                & 192		& 0.386$\pm$0.000	& 0.045$\pm$0.001	& 0.399$\pm$0.001	& 2.478$\pm$0.070	& 0.622$\pm$0.000	& \best{0.381$\pm$0.001}	& \best{0.033$\pm$0.002}	& \best{0.397$\pm$0.001}	& \best{2.458$\pm$0.045}	& \best{0.618$\pm$0.001}		\\
                & 336		& 0.429$\pm$0.007	& 0.044$\pm$0.002	& 0.427$\pm$0.003	& 2.593$\pm$0.091	& 0.655$\pm$0.005	& \best{0.409$\pm$0.003}	& \best{0.022$\pm$0.001}	& \best{0.417$\pm$0.001}	& \best{2.483$\pm$0.014}	& \best{0.640$\pm$0.003}		\\
                & 720		& 0.499$\pm$0.004	& 0.042$\pm$0.002	& 0.465$\pm$0.003	& 2.801$\pm$0.051	& 0.706$\pm$0.003	& \best{0.475$\pm$0.001}	& \best{0.020$\pm$0.001}	& \best{0.452$\pm$0.002}	& \best{2.686$\pm$0.023}	& \best{0.689$\pm$0.001}		\\
                & AVG		& 0.412$\pm$0.054	& 0.045$\pm$0.003	& 0.416$\pm$0.030	& 2.560$\pm$0.144	& 0.640$\pm$0.042	& \best{0.398$\pm$0.048}	& \best{0.026$\pm$0.005}	& \best{0.409$\pm$0.027}	& \best{2.484$\pm$0.121}	& \best{0.630$\pm$0.038}		\\
        \hline

        \multirow{5}{*}{ETTm2}&
                96		& 0.186$\pm$0.000	& 0.040$\pm$0.002	& 0.266$\pm$0.001	& 1.181$\pm$0.013	& 0.431$\pm$0.000	& \best{0.186$\pm$0.002}	& \best{0.030$\pm$0.002}	& \best{0.265$\pm$0.001}	& \best{1.155$\pm$0.026}	& \best{0.431$\pm$0.002}		\\
                & 192		& 0.255$\pm$0.002	& 0.037$\pm$0.001	& 0.308$\pm$0.002	& \best{1.290$\pm$0.020}	& 0.505$\pm$0.002	& \best{0.253$\pm$0.001}	& \best{0.020$\pm$0.001}	& \best{0.307$\pm$0.002}	& 1.292$\pm$0.036	& \best{0.503$\pm$0.001}		\\
                & 336		& \best{0.315$\pm$0.003}	& 0.033$\pm$0.003	& \best{0.345$\pm$0.001}	& 1.351$\pm$0.017	& \best{0.561$\pm$0.003}	& 0.317$\pm$0.004	& \best{0.020$\pm$0.002}	& 0.348$\pm$0.002	& \best{1.335$\pm$0.015}	& 0.563$\pm$0.003		\\
                & 720		& 0.430$\pm$0.001	& 0.034$\pm$0.004	& 0.410$\pm$0.001	& 1.584$\pm$0.039	& 0.656$\pm$0.001	& \best{0.423$\pm$0.001}	& \best{0.017$\pm$0.001}	& \best{0.406$\pm$0.000}	& \best{1.545$\pm$0.021}	& \best{0.650$\pm$0.000}		\\
                & AVG		& 0.296$\pm$0.080	& 0.036$\pm$0.003	& 0.332$\pm$0.047	& 1.352$\pm$0.132	& 0.538$\pm$0.073	& \best{0.295$\pm$0.078}	& \best{0.022$\pm$0.005}	& \best{0.331$\pm$0.047}	& \best{1.332$\pm$0.125}	& \best{0.537$\pm$0.072}
        \end{tabular}
    \end{table}
\end{landscape}

\begin{landscape}
    \begin{table}[h]
        \caption{Full results of the proposed AliO method on the ECL, ETTh1, ETTh2, ETTm1, ETTm2, Exchange, Traffic, ILI, and Weather datasets and TimesNet~\cite{model.timesnet}. The results are reported in terms of MSE, TAM, MAE, MAPE, and RMSE. The best results are highlighted in \best{bold}. We use official GitHub code~(\url{https://github.com/thuml/TimesNet}) to train the model with the official configuration. The average improvement of AliO over the baseline is MSE: 3.42\%, TAM: 33.36\%, MAE: 1.81\%, MAPE: 4.01\%, and RMSE: 1.76\%. The maximum improvement of AliO over the baseline is MSE: 29.03\%, TAM: 51.96\%, MAE: 9.34\%, MAPE: 23.09\%, and RMSE: 16.56\%. ECL, ETTh1, ETTh2, ETTm1, and ETTm2 are in \Cref{tab:full_results_TimesNet_01}.}
        \label{tab:full_results_TimesNet_02}
        
    \vskip 0.1in
    \centering
    \setlength{\tabcolsep}{3pt}
    \small
    
        \begin{tabular}{c|c|ccccc|ccccc}
        
        \toprule
        \multicolumn{2}{c|}{Models}		& \multicolumn{10}{c}{TimesNet}		\\
        \hline
        \multicolumn{2}{c|}{Method}		& \multicolumn{5}{c|}{Baseline}		& \multicolumn{5}{c}{\textbf{AliO}}		\\
        \hline
        \multicolumn{2}{c|}{Metric}	& MSE$\downarrow$	& TAM$\downarrow$	& MAE$\downarrow$	& MAPE$\downarrow$	& RMSE$\downarrow$	& MSE$\downarrow$	& TAM$\downarrow$	& MAE$\downarrow$	& MAPE$\downarrow$	& RMSE$\downarrow$	\\

        \midrule
        \multirow{5}{*}{Exchange}&
                96		& 0.112$\pm$0.001	& 0.038$\pm$0.004	& 0.241$\pm$0.001	& 1.405$\pm$0.100	& 0.334$\pm$0.002	& \best{0.105$\pm$0.002}	& \best{0.023$\pm$0.001}	& \best{0.232$\pm$0.001}	& \best{1.300$\pm$0.041}	& \best{0.324$\pm$0.003}		\\
                & 192		& 0.217$\pm$0.008	& 0.036$\pm$0.004	& 0.337$\pm$0.005	& \best{1.951$\pm$0.061}	& 0.466$\pm$0.009	& \best{0.207$\pm$0.002}	& \best{0.020$\pm$0.000}	& \best{0.330$\pm$0.001}	& 2.002$\pm$0.054	& \best{0.455$\pm$0.003}		\\
                & 336		& 0.366$\pm$0.008	& 0.031$\pm$0.003	& 0.440$\pm$0.006	& \best{3.197$\pm$0.076}	& 0.605$\pm$0.007	& \best{0.364$\pm$0.005}	& \best{0.029$\pm$0.004}	& \best{0.439$\pm$0.004}	& 3.249$\pm$0.070	& \best{0.603$\pm$0.004}		\\
                & 720		& 0.964$\pm$0.025	& 0.033$\pm$0.005	& 0.746$\pm$0.012	& 6.519$\pm$0.188	& 0.982$\pm$0.013	& \best{0.944$\pm$0.034}	& \best{0.024$\pm$0.004}	& \best{0.740$\pm$0.015}	& \best{6.473$\pm$0.189}	& \best{0.972$\pm$0.017}		\\
                & AVG		& 0.415$\pm$0.295	& 0.035$\pm$0.003	& 0.441$\pm$0.170	& 3.268$\pm$1.777	& 0.597$\pm$0.217	& \best{0.405$\pm$0.290}	& \best{0.024$\pm$0.003}	& \best{0.435$\pm$0.171}	& \best{3.256$\pm$1.775}	& \best{0.588$\pm$0.217}		\\
        \hline

        \multirow{5}{*}{Traffic}&
                96		& 0.593$\pm$0.007	& 0.042$\pm$0.001	& 0.313$\pm$0.002	& 2.957$\pm$0.073	& 0.770$\pm$0.004	& \best{0.589$\pm$0.003}	& \best{0.028$\pm$0.001}	& \best{0.310$\pm$0.001}	& \best{2.798$\pm$0.051}	& \best{0.767$\pm$0.002}		\\
                & 192		& 0.618$\pm$0.005	& 0.041$\pm$0.004	& 0.325$\pm$0.004	& 3.115$\pm$0.107	& 0.786$\pm$0.003	& \best{0.614$\pm$0.003}	& \best{0.025$\pm$0.001}	& \best{0.320$\pm$0.001}	& \best{2.882$\pm$0.053}	& \best{0.783$\pm$0.002}		\\
                & 336		& 0.631$\pm$0.008	& 0.041$\pm$0.005	& 0.334$\pm$0.006	& 3.420$\pm$0.219	& 0.794$\pm$0.005	& \best{0.448$\pm$0.128}	& \best{0.026$\pm$0.001}	& \best{0.303$\pm$0.018}	& \best{2.712$\pm$0.320}	& \best{0.663$\pm$0.092}		\\
                & 720		& 0.661$\pm$0.002	& 0.035$\pm$0.002	& 0.347$\pm$0.001	& 3.477$\pm$0.016	& 0.813$\pm$0.001	& \best{0.653$\pm$0.002}	& \best{0.032$\pm$0.005}	& \best{0.344$\pm$0.002}	& \best{3.456$\pm$0.094}	& \best{0.808$\pm$0.001}		\\
                & AVG		& 0.626$\pm$0.022	& 0.040$\pm$0.003	& 0.330$\pm$0.011	& 3.242$\pm$0.192	& 0.791$\pm$0.014	& \best{0.576$\pm$0.069}	& \best{0.028$\pm$0.003}	& \best{0.319$\pm$0.014}	& \best{2.962$\pm$0.261}	& \best{0.755$\pm$0.050}		\\
        \hline

        \multirow{5}{*}{ILI}&
                24		& 2.759$\pm$0.839	& 0.371$\pm$0.026	& 0.972$\pm$0.077	& 4.452$\pm$0.551	& 1.643$\pm$0.246	& \best{2.239$\pm$0.347}	& \best{0.256$\pm$0.011}	& \best{0.924$\pm$0.045}	& \best{3.424$\pm$0.129}	& \best{1.492$\pm$0.114}		\\
                & 36		& 1.949$\pm$0.033	& 0.330$\pm$0.031	& 0.919$\pm$0.006	& 2.976$\pm$0.224	& 1.396$\pm$0.012	& \best{1.734$\pm$0.025}	& \best{0.262$\pm$0.027}	& \best{0.846$\pm$0.007}	& \best{2.508$\pm$0.167}	& \best{1.317$\pm$0.010}		\\
                & 48		& 1.976$\pm$0.112	& 0.305$\pm$0.030	& 0.896$\pm$0.029	& 2.906$\pm$0.050	& 1.405$\pm$0.040	& \best{1.911$\pm$0.033}	& \best{0.263$\pm$0.016}	& \best{0.862$\pm$0.015}	& \best{2.686$\pm$0.047}	& \best{1.382$\pm$0.012}		\\
                & 60		& 2.002$\pm$0.098	& 0.289$\pm$0.006	& 0.919$\pm$0.033	& 3.060$\pm$0.179	& 1.414$\pm$0.035	& \best{1.970$\pm$0.054}	& \best{0.251$\pm$0.021}	& \best{0.899$\pm$0.002}	& \best{2.779$\pm$0.123}	& \best{1.403$\pm$0.019}		\\
                & AVG		& 2.171$\pm$0.304	& 0.324$\pm$0.028	& 0.926$\pm$0.025	& 3.348$\pm$0.572	& 1.465$\pm$0.092	& \best{1.963$\pm$0.162}	& \best{0.258$\pm$0.004}	& \best{0.883$\pm$0.027}	& \best{2.849$\pm$0.309}	& \best{1.399$\pm$0.056}		\\
        \hline

        \multirow{5}{*}{Weather}	&
                96		& \best{0.174$\pm$0.003}	& 0.021$\pm$0.001	& 0.223$\pm$0.002	& 11.767$\pm$0.301	& \best{0.417$\pm$0.004}	& 0.174$\pm$0.003	& \best{0.017$\pm$0.000}	& \best{0.223$\pm$0.001}	& \best{11.659$\pm$0.718}	& 0.417$\pm$0.003		\\
                & 192		& \best{0.231$\pm$0.005}	& 0.017$\pm$0.002	& \best{0.271$\pm$0.004}	& 14.158$\pm$0.953	& \best{0.481$\pm$0.006}	& 0.237$\pm$0.007	& \best{0.014$\pm$0.001}	& 0.276$\pm$0.006	& \best{13.831$\pm$0.810}	& 0.487$\pm$0.007		\\
                & 336		& 0.285$\pm$0.002	& 0.020$\pm$0.002	& 0.307$\pm$0.001	& \best{13.782$\pm$0.351}	& 0.534$\pm$0.002	& \best{0.280$\pm$0.001}	& \best{0.011$\pm$0.001}	& \best{0.302$\pm$0.000}	& 13.819$\pm$0.139	& \best{0.529$\pm$0.001}		\\
                & 720		& 0.359$\pm$0.001	& 0.015$\pm$0.001	& 0.353$\pm$0.002	& \best{14.045$\pm$0.438}	& 0.599$\pm$0.001	& \best{0.358$\pm$0.001}	& \best{0.012$\pm$0.001}	& \best{0.353$\pm$0.001}	& 14.367$\pm$0.365	& \best{0.599$\pm$0.001}		\\
                & AVG		& \best{0.262$\pm$0.061}	& 0.018$\pm$0.002	& 0.289$\pm$0.043	& 13.438$\pm$0.871	& \best{0.508$\pm$0.060}	& 0.262$\pm$0.060	& \best{0.014$\pm$0.002}	& \best{0.289$\pm$0.042}	& \best{13.419$\pm$0.930}	& 0.508$\pm$0.059		\\
    
        \end{tabular}
    \end{table}
\end{landscape}

\begin{landscape}
    \begin{table}[h]
        \caption{Full results of the proposed AliO method on the ECL, ETTh1, ETTh2, ETTm1, ETTm2, Solar, PEMS03, PEMS04, PEMS07, PEMS08, Exchange, Traffix, and Weather datasets and iTransformer~\cite{model.itransformer}. The results are reported in terms of MSE, TAM, MAE, MAPE, and RMSE. The best results are highlighted in \best{bold}. We use official GitHub code~(\url{https://github.com/thuml/iTransformer}) to train the model with the official configuration. The average improvement of AliO over the baseline is MSE: 6.59\%, TAM: 28.04\%, MAE: 5.41\%, MAPE: 0.46\%, and RMSE: 3.85\%. The maximum improvement of AliO over the baseline is MSE: 88.52\%, TAM: 60.37\%, MAE: 73.75\%, MAPE: 18.04\%, and RMSE: 66.10\%. Solar, PEMS03, PEMS04, PEMS07, and PEMS08 are in \Cref{tab:full_results_iTransformer_02}. Exchange, Traffic, and Weather are in \Cref{tab:full_results_iTransformer_03}.}
        \label{tab:full_results_iTransformer_01}
        
    \vskip 0.1in
    \centering
    \setlength{\tabcolsep}{3pt}
    \small
    
        \begin{tabular}{c|c|ccccc|ccccc}
        
        \toprule
        \multicolumn{2}{c|}{Models}		& \multicolumn{10}{c}{iTransformer}		\\
        \hline
        \multicolumn{2}{c|}{Method}		& \multicolumn{5}{c|}{Baseline}		& \multicolumn{5}{c}{\textbf{AliO}}		\\
        \hline
        \multicolumn{2}{c|}{Metric}	& MSE$\downarrow$	& TAM$\downarrow$	& MAE$\downarrow$	& MAPE$\downarrow$	& RMSE$\downarrow$	& MSE$\downarrow$	& TAM$\downarrow$	& MAE$\downarrow$	& MAPE$\downarrow$	& RMSE$\downarrow$	\\

        \midrule
        \multirow{5}{*}{ECL}&
                96		& 0.148$\pm$0.000	& 0.046$\pm$0.000	& 0.240$\pm$0.000	& 2.525$\pm$0.030	& 0.385$\pm$0.000	& \best{0.147$\pm$0.000}	& \best{0.034$\pm$0.000}	& \best{0.238$\pm$0.000}	& \best{2.512$\pm$0.018}	& \best{0.383$\pm$0.000}		\\
                & 192		& 0.165$\pm$0.001	& 0.042$\pm$0.000	& 0.256$\pm$0.001	& 2.732$\pm$0.033	& 0.406$\pm$0.001	& \best{0.160$\pm$0.000}	& \best{0.031$\pm$0.000}	& \best{0.250$\pm$0.000}	& \best{2.671$\pm$0.023}	& \best{0.400$\pm$0.000}		\\
                & 336		& 0.180$\pm$0.001	& 0.049$\pm$0.001	& 0.273$\pm$0.001	& \best{2.736$\pm$0.005}	& 0.424$\pm$0.001	& \best{0.175$\pm$0.001}	& \best{0.036$\pm$0.001}	& \best{0.267$\pm$0.000}	& 2.802$\pm$0.025	& \best{0.418$\pm$0.001}		\\
                & 720		& 0.211$\pm$0.000	& 0.063$\pm$0.000	& 0.301$\pm$0.000	& \best{3.068$\pm$0.024}	& 0.459$\pm$0.000	& \best{0.207$\pm$0.001}	& \best{0.025$\pm$0.001}	& \best{0.294$\pm$0.001}	& 3.073$\pm$0.032	& \best{0.455$\pm$0.001}		\\
                & AVG		& 0.176$\pm$0.021	& 0.050$\pm$0.007	& 0.267$\pm$0.020	& 2.765$\pm$0.174	& 0.418$\pm$0.024	& \best{0.172$\pm$0.020}	& \best{0.031$\pm$0.004}	& \best{0.262$\pm$0.019}	& \best{2.764$\pm$0.184}	& \best{0.414$\pm$0.024}		\\
        \hline
        \multirow{5}{*}{ETTh1}&
                96		& 0.386$\pm$0.001	& 0.083$\pm$0.001	& 0.404$\pm$0.001	& 10.066$\pm$0.052	& 0.621$\pm$0.001	& \best{0.380$\pm$0.000}	& \best{0.057$\pm$0.000}	& \best{0.397$\pm$0.000}	& \best{9.673$\pm$0.026}	& \best{0.616$\pm$0.000}		\\
                & 192		& 0.443$\pm$0.001	& 0.083$\pm$0.001	& 0.437$\pm$0.001	& 9.867$\pm$0.093	& 0.666$\pm$0.001	& \best{0.431$\pm$0.000}	& \best{0.064$\pm$0.001}	& \best{0.428$\pm$0.000}	& \best{9.624$\pm$0.060}	& \best{0.657$\pm$0.000}		\\
                & 336		& 0.487$\pm$0.003	& 0.089$\pm$0.002	& 0.459$\pm$0.002	& 10.520$\pm$0.334	& 0.698$\pm$0.002	& \best{0.471$\pm$0.001}	& \best{0.038$\pm$0.000}	& \best{0.445$\pm$0.001}	& \best{10.034$\pm$0.076}	& \best{0.686$\pm$0.001}		\\
                & 720		& 0.506$\pm$0.007	& 0.098$\pm$0.001	& 0.492$\pm$0.003	& 11.807$\pm$0.649	& 0.711$\pm$0.005	& \best{0.471$\pm$0.008}	& \best{0.048$\pm$0.002}	& \best{0.469$\pm$0.004}	& \best{11.258$\pm$0.051}	& \best{0.687$\pm$0.006}		\\
                & AVG		& 0.455$\pm$0.041	& 0.088$\pm$0.005	& 0.448$\pm$0.029	& 10.565$\pm$0.676	& 0.674$\pm$0.031	& \best{0.438$\pm$0.034}	& \best{0.052$\pm$0.009}	& \best{0.435$\pm$0.023}	& \best{10.147$\pm$0.591}	& \best{0.661$\pm$0.026}		\\
        \hline

        \multirow{5}{*}{ETTh2}&
                96		& 0.300$\pm$0.001	& 0.080$\pm$0.002	& 0.350$\pm$0.001	& 1.469$\pm$0.010	& 0.548$\pm$0.001	& \best{0.300$\pm$0.000}	& \best{0.045$\pm$0.001}	& \best{0.348$\pm$0.000}	& \best{1.422$\pm$0.004}	& \best{0.548$\pm$0.000}		\\
                & 192		& 0.378$\pm$0.000	& 0.076$\pm$0.001	& 0.398$\pm$0.001	& 1.596$\pm$0.014	& 0.615$\pm$0.000	& \best{0.375$\pm$0.001}	& \best{0.058$\pm$0.002}	& \best{0.396$\pm$0.000}	& \best{1.560$\pm$0.006}	& \best{0.612$\pm$0.001}		\\
                & 336		& 0.421$\pm$0.002	& 0.073$\pm$0.002	& 0.431$\pm$0.001	& 1.775$\pm$0.008	& 0.649$\pm$0.001	& \best{0.413$\pm$0.001}	& \best{0.055$\pm$0.003}	& \best{0.427$\pm$0.001}	& \best{1.712$\pm$0.002}	& \best{0.643$\pm$0.001}		\\
                & 720		& 0.429$\pm$0.001	& 0.069$\pm$0.002	& 0.447$\pm$0.000	& 2.018$\pm$0.012	& 0.655$\pm$0.001	& \best{0.419$\pm$0.002}	& \best{0.050$\pm$0.002}	& \best{0.441$\pm$0.001}	& \best{1.981$\pm$0.024}	& \best{0.647$\pm$0.001}		\\
                & AVG		& 0.382$\pm$0.046	& 0.074$\pm$0.003	& 0.406$\pm$0.033	& 1.714$\pm$0.184	& 0.617$\pm$0.038	& \best{0.377$\pm$0.042}	& \best{0.052$\pm$0.005}	& \best{0.403$\pm$0.032}	& \best{1.669$\pm$0.186}	& \best{0.613$\pm$0.036}		\\
        \hline

        \multirow{5}{*}{ETTm1}&
                96		& 0.343$\pm$0.003	& 0.067$\pm$0.001	& 0.377$\pm$0.001	& 2.354$\pm$0.012	& 0.586$\pm$0.002	& \best{0.330$\pm$0.001}	& \best{0.037$\pm$0.000}	& \best{0.361$\pm$0.000}	& \best{2.142$\pm$0.005}	& \best{0.575$\pm$0.001}		\\
                & 192		& 0.380$\pm$0.002	& 0.060$\pm$0.001	& 0.394$\pm$0.001	& 2.383$\pm$0.013	& 0.617$\pm$0.001	& \best{0.375$\pm$0.001}	& \best{0.031$\pm$0.000}	& \best{0.383$\pm$0.001}	& \best{2.203$\pm$0.011}	& \best{0.612$\pm$0.001}		\\
                & 336		& 0.418$\pm$0.000	& 0.058$\pm$0.001	& 0.418$\pm$0.000	& 2.443$\pm$0.006	& 0.647$\pm$0.000	& \best{0.408$\pm$0.001}	& \best{0.037$\pm$0.000}	& \best{0.406$\pm$0.000}	& \best{2.302$\pm$0.005}	& \best{0.639$\pm$0.001}		\\
                & 720		& 0.488$\pm$0.001	& 0.056$\pm$0.001	& 0.457$\pm$0.001	& 2.670$\pm$0.012	& 0.699$\pm$0.001	& \best{0.472$\pm$0.001}	& \best{0.026$\pm$0.000}	& \best{0.441$\pm$0.001}	& \best{2.451$\pm$0.005}	& \best{0.687$\pm$0.001}		\\
                & AVG		& 0.407$\pm$0.048	& 0.060$\pm$0.004	& 0.412$\pm$0.027	& 2.462$\pm$0.111	& 0.637$\pm$0.037	& \best{0.396$\pm$0.046}	& \best{0.033$\pm$0.004}	& \best{0.398$\pm$0.026}	& \best{2.275$\pm$0.104}	& \best{0.628$\pm$0.037}		\\
        \hline

        \multirow{5}{*}{ETTm2}&
                96		& 0.185$\pm$0.001	& 0.050$\pm$0.001	& 0.270$\pm$0.001	& 1.169$\pm$0.009	& 0.430$\pm$0.001	& \best{0.184$\pm$0.000}	& \best{0.038$\pm$0.000}	& \best{0.267$\pm$0.000}	& \best{1.160$\pm$0.002}	& \best{0.428$\pm$0.000}		\\
                & 192		& 0.254$\pm$0.000	& 0.050$\pm$0.001	& 0.314$\pm$0.001	& 1.302$\pm$0.018	& 0.504$\pm$0.000	& \best{0.250$\pm$0.001}	& \best{0.025$\pm$0.000}	& \best{0.310$\pm$0.001}	& \best{1.287$\pm$0.005}	& \best{0.500$\pm$0.001}		\\
                & 336		& 0.315$\pm$0.004	& 0.049$\pm$0.006	& 0.352$\pm$0.003	& 1.411$\pm$0.005	& 0.562$\pm$0.004	& \best{0.312$\pm$0.001}	& \best{0.022$\pm$0.000}	& \best{0.348$\pm$0.001}	& \best{1.399$\pm$0.003}	& \best{0.559$\pm$0.001}		\\
                & 720		& 0.413$\pm$0.001	& 0.044$\pm$0.003	& 0.407$\pm$0.001	& 1.615$\pm$0.006	& 0.643$\pm$0.001	& \best{0.411$\pm$0.001}	& \best{0.019$\pm$0.000}	& \best{0.404$\pm$0.001}	& \best{1.594$\pm$0.009}	& \best{0.641$\pm$0.001}		\\
                & AVG		& 0.292$\pm$0.075	& 0.048$\pm$0.002	& 0.336$\pm$0.045	& 1.374$\pm$0.146	& 0.534$\pm$0.070	& \best{0.289$\pm$0.075}	& \best{0.026$\pm$0.007}	& \best{0.332$\pm$0.045}	& \best{1.360$\pm$0.142}	& \best{0.532$\pm$0.070}
        \end{tabular}
    \end{table}
\end{landscape}

\begin{landscape}
    \begin{table}[h]
        \caption{Full results of the proposed AliO method on the ECL, ETTh1, ETTh2, ETTm1, ETTm2, Solar, PEMS03, PEMS04, PEMS07, PEMS08, Exchange, Traffix, and Weather datasets and iTransformer~\cite{model.itransformer}. The results are reported in terms of MSE, TAM, MAE, MAPE, and RMSE. The best results are highlighted in \best{bold}. We use official GitHub code~(\url{https://github.com/thuml/iTransformer}) to train the model with the official configuration. The average improvement of AliO over the baseline is MSE: 6.59\%, TAM: 28.04\%, MAE: 5.41\%, MAPE: 0.46\%, and RMSE: 3.85\%. The maximum improvement of AliO over the baseline is MSE: 88.52\%, TAM: 60.37\%, MAE: 73.75\%, MAPE: 18.04\%, and RMSE: 66.10\%. ECL, ETTh1, ETTh2, ETTm1, and ETTm2 are in \Cref{tab:full_results_iTransformer_01}. Exchange, Traffix, and Weather are in \Cref{tab:full_results_iTransformer_03}}
        \label{tab:full_results_iTransformer_02}
        
    \vskip 0.1in
    \centering
    \setlength{\tabcolsep}{3pt}
    \small
    
        \begin{tabular}{c|c|ccccc|ccccc}
        
        \toprule
        \multicolumn{2}{c|}{Models}		& \multicolumn{10}{c}{iTransformer}		\\
        \hline
        \multicolumn{2}{c|}{Method}		& \multicolumn{5}{c|}{Baseline}		& \multicolumn{5}{c}{\textbf{AliO}}		\\
        \hline
        \multicolumn{2}{c|}{Metric}	& MSE$\downarrow$	& TAM$\downarrow$	& MAE$\downarrow$	& MAPE$\downarrow$	& RMSE$\downarrow$	& MSE$\downarrow$	& TAM$\downarrow$	& MAE$\downarrow$	& MAPE$\downarrow$	& RMSE$\downarrow$	\\

        \midrule
        \multirow{5}{*}{Solar}&
                96		& 0.206$\pm$0.002	& 0.038$\pm$0.002	& 0.237$\pm$0.002	& 1.846$\pm$0.033	& 0.454$\pm$0.002	& \best{0.192$\pm$0.002}	& \best{0.021$\pm$0.001}	& \best{0.217$\pm$0.002}	& \best{1.843$\pm$0.035}	& \best{0.438$\pm$0.002}		\\
                & 192		& 0.237$\pm$0.001	& 0.040$\pm$0.000	& 0.263$\pm$0.001	& \best{1.976$\pm$0.014}	& 0.487$\pm$0.001	& \best{0.225$\pm$0.001}	& \best{0.020$\pm$0.000}	& \best{0.241$\pm$0.001}	& 1.980$\pm$0.008	& \best{0.474$\pm$0.001}		\\
                & 336		& 0.250$\pm$0.001	& 0.038$\pm$0.001	& 0.275$\pm$0.001	& \best{2.011$\pm$0.011}	& 0.500$\pm$0.001	& \best{0.242$\pm$0.001}	& \best{0.019$\pm$0.000}	& \best{0.257$\pm$0.000}	& 2.017$\pm$0.011	& \best{0.492$\pm$0.001}		\\
                & 720		& 0.251$\pm$0.000	& 0.035$\pm$0.000	& 0.275$\pm$0.000	& \best{2.066$\pm$0.025}	& 0.501$\pm$0.000	& \best{0.247$\pm$0.001}	& \best{0.016$\pm$0.000}	& \best{0.261$\pm$0.000}	& 2.092$\pm$0.012	& \best{0.497$\pm$0.001}		\\
                & AVG		& 0.236$\pm$0.016	& 0.038$\pm$0.002	& 0.263$\pm$0.014	& \best{1.975$\pm$0.072}	& 0.485$\pm$0.017	& \best{0.227$\pm$0.019}	& \best{0.019$\pm$0.002}	& \best{0.244$\pm$0.015}	& 1.983$\pm$0.081	& \best{0.475$\pm$0.021}		\\
        \hline

        \multirow{5}{*}{PEMS03}&
                12		& \best{0.069$\pm$0.000}	& 0.048$\pm$0.000	& \best{0.175$\pm$0.001}	& \best{1.417$\pm$0.006}	& \best{0.263$\pm$0.001}	& 0.069$\pm$0.000	& \best{0.028$\pm$0.000}	& 0.175$\pm$0.000	& 1.424$\pm$0.006	& 0.263$\pm$0.000		\\
                & 24		& 0.098$\pm$0.001	& 0.052$\pm$0.000	& 0.209$\pm$0.001	& 1.671$\pm$0.011	& 0.313$\pm$0.001	& \best{0.098$\pm$0.000}	& \best{0.041$\pm$0.000}	& \best{0.209$\pm$0.000}	& \best{1.663$\pm$0.006}	& \best{0.313$\pm$0.001}		\\
                & 48		& 0.163$\pm$0.001	& 0.061$\pm$0.001	& 0.274$\pm$0.001	& 2.022$\pm$0.009	& 0.404$\pm$0.001	& \best{0.162$\pm$0.001}	& \best{0.047$\pm$0.000}	& \best{0.273$\pm$0.001}	& \best{2.000$\pm$0.015}	& \best{0.402$\pm$0.001}		\\
                & 96		& 0.918$\pm$0.310	& 0.117$\pm$0.026	& 0.720$\pm$0.127	& 3.747$\pm$0.531	& 0.944$\pm$0.162	& \best{0.576$\pm$0.044}	& \best{0.080$\pm$0.009}	& \best{0.575$\pm$0.028}	& \best{3.383$\pm$0.266}	& \best{0.759$\pm$0.029}		\\
                & AVG		& 0.312$\pm$0.314	& 0.069$\pm$0.025	& 0.345$\pm$0.196	& 2.214$\pm$0.814	& 0.481$\pm$0.243	& \best{0.226$\pm$0.183}	& \best{0.049$\pm$0.017}	& \best{0.308$\pm$0.141}	& \best{2.117$\pm$0.679}	& \best{0.434$\pm$0.173}		\\
        \hline

        \multirow{5}{*}{PEMS04}&
                12		& \best{0.081$\pm$0.000}	& 0.042$\pm$0.000	& \best{0.188$\pm$0.000}	& \best{1.283$\pm$0.007}	& \best{0.284$\pm$0.000}	& 0.082$\pm$0.000	& \best{0.033$\pm$0.000}	& 0.190$\pm$0.001	& 1.294$\pm$0.013	& 0.286$\pm$0.000		\\
                & 24		& \best{0.100$\pm$0.000}	& 0.041$\pm$0.000	& \best{0.212$\pm$0.000}	& \best{1.512$\pm$0.019}	& \best{0.316$\pm$0.000}	& 0.101$\pm$0.000	& \best{0.031$\pm$0.000}	& 0.213$\pm$0.000	& 1.521$\pm$0.009	& 0.317$\pm$0.001		\\
                & 48		& 0.134$\pm$0.002	& 0.041$\pm$0.000	& 0.248$\pm$0.002	& 1.782$\pm$0.029	& 0.366$\pm$0.002	& \best{0.131$\pm$0.001}	& \best{0.029$\pm$0.000}	& \best{0.245$\pm$0.001}	& \best{1.771$\pm$0.017}	& \best{0.362$\pm$0.002}		\\
                & 96		& 0.169$\pm$0.001	& 0.041$\pm$0.000	& 0.280$\pm$0.000	& 2.090$\pm$0.015	& 0.411$\pm$0.001	& \best{0.166$\pm$0.002}	& \best{0.028$\pm$0.000}	& \best{0.277$\pm$0.002}	& \best{2.081$\pm$0.008}	& \best{0.407$\pm$0.003}		\\
                & AVG		& 0.121$\pm$0.030	& 0.041$\pm$0.001	& 0.232$\pm$0.031	& \best{1.667$\pm$0.270}	& 0.344$\pm$0.043	& \best{0.120$\pm$0.028}	& \best{0.030$\pm$0.002}	& \best{0.231$\pm$0.030}	& 1.667$\pm$0.262	& \best{0.343$\pm$0.041}		\\
        \hline

        \multirow{5}{*}{PEMS07}&
                12		& \best{0.067$\pm$0.000}	& 0.044$\pm$0.000	& \best{0.164$\pm$0.001}	& \best{1.670$\pm$0.003}	& \best{0.258$\pm$0.000}	& 0.070$\pm$0.001	& \best{0.036$\pm$0.001}	& 0.168$\pm$0.001	& 1.706$\pm$0.017	& 0.264$\pm$0.001		\\
                & 24		& \best{0.087$\pm$0.000}	& 0.042$\pm$0.000	& \best{0.190$\pm$0.001}	& \best{1.898$\pm$0.019}	& \best{0.295$\pm$0.001}	& 0.090$\pm$0.001	& \best{0.035$\pm$0.000}	& 0.193$\pm$0.001	& 1.943$\pm$0.028	& 0.300$\pm$0.001		\\
                & 48		& 0.995$\pm$0.054	& \best{0.020$\pm$0.019}	& 0.832$\pm$0.032	& \best{1.540$\pm$0.702}	& 0.997$\pm$0.027	& \best{0.114$\pm$0.001}	& 0.031$\pm$0.001	& \best{0.218$\pm$0.002}	& 2.158$\pm$0.017	& \best{0.338$\pm$0.002}		\\
                & 96		& 1.172$\pm$0.189	& \best{0.021$\pm$0.023}	& 0.895$\pm$0.050	& \best{2.412$\pm$1.881}	& 1.079$\pm$0.085	& \best{0.402$\pm$0.161}	& 0.052$\pm$0.017	& \best{0.435$\pm$0.089}	& 4.015$\pm$0.667	& \best{0.623$\pm$0.121}		\\
                & AVG		& 0.580$\pm$0.454	& \best{0.031$\pm$0.010}	& 0.520$\pm$0.308	& \best{1.880$\pm$0.298}	& 0.657$\pm$0.342	& \best{0.169$\pm$0.121}	& 0.038$\pm$0.007	& \best{0.254$\pm$0.095}	& 2.455$\pm$0.818	& \best{0.381$\pm$0.127}		\\
        \hline

        \multirow{5}{*}{PEMS08}&
                12		& \best{0.088$\pm$0.001}	& 0.053$\pm$0.000	& \best{0.193$\pm$0.001}	& \best{1.637$\pm$0.004}	& \best{0.297$\pm$0.001}	& 0.089$\pm$0.000	& \best{0.046$\pm$0.000}	& 0.193$\pm$0.000	& 1.643$\pm$0.003	& 0.298$\pm$0.001		\\
                & 24		& 0.138$\pm$0.000	& 0.059$\pm$0.000	& 0.243$\pm$0.001	& \best{2.074$\pm$0.005}	& 0.371$\pm$0.001	& \best{0.138$\pm$0.000}	& \best{0.051$\pm$0.000}	& \best{0.242$\pm$0.000}	& 2.076$\pm$0.011	& \best{0.371$\pm$0.001}		\\
                & 48		& 0.247$\pm$0.010	& 0.052$\pm$0.001	& 0.287$\pm$0.006	& 2.242$\pm$0.070	& 0.497$\pm$0.010	& \best{0.232$\pm$0.003}	& \best{0.037$\pm$0.001}	& \best{0.271$\pm$0.002}	& \best{2.085$\pm$0.038}	& \best{0.481$\pm$0.003}		\\
                & 96		& 0.452$\pm$0.020	& 0.065$\pm$0.005	& 0.431$\pm$0.013	& 2.921$\pm$0.217	& 0.672$\pm$0.015	& \best{0.282$\pm$0.003}	& \best{0.030$\pm$0.001}	& \best{0.312$\pm$0.002}	& \best{2.394$\pm$0.052}	& \best{0.531$\pm$0.003}		\\
                & AVG		& 0.231$\pm$0.125	& 0.057$\pm$0.005	& 0.288$\pm$0.079	& 2.219$\pm$0.413	& 0.459$\pm$0.127	& \best{0.185$\pm$0.068}	& \best{0.041$\pm$0.007}	& \best{0.255$\pm$0.039}	& \best{2.050$\pm$0.239}	& \best{0.420$\pm$0.082}
        \end{tabular}
    \end{table}
\end{landscape}

\begin{landscape}
    \begin{table}[h]
        \caption{Full results of the proposed AliO method on the ECL, ETTh1, ETTh2, ETTm1, ETTm2, Solar, PEMS03, PEMS04, PEMS07, PEMS08, Exchange, Traffix, and Weather datasets and iTransformer~\cite{model.itransformer}. The results are reported in terms of MSE, TAM, MAE, MAPE, and RMSE. The best results are highlighted in \best{bold}. We use official GitHub code~(\url{https://github.com/thuml/iTransformer}) to train the model with the official configuration. The average improvement of AliO over the baseline is MSE: 6.59\%, TAM: 28.04\%, MAE: 5.41\%, MAPE: 0.46\%, and RMSE: 3.85\%. The maximum improvement of AliO over the baseline is MSE: 88.52\%, TAM: 60.37\%, MAE: 73.75\%, MAPE: 18.04\%, and RMSE: 66.10\%. ECL, ETTh1, ETTh2, ETTm1, and ETTm2 are in \Cref{tab:full_results_iTransformer_01}. Solar, PEMS03, PEMS04, PEMS07, and PEMS08 are in \Cref{tab:full_results_iTransformer_02}.}
        \label{tab:full_results_iTransformer_03}
        
    \vskip 0.1in
    \centering
    \setlength{\tabcolsep}{3pt}
    \small
    
        \begin{tabular}{c|c|ccccc|ccccc}
        
        \toprule
        \multicolumn{2}{c|}{Models}		& \multicolumn{10}{c}{iTransformer}		\\
        \hline
        \multicolumn{2}{c|}{Method}		& \multicolumn{5}{c|}{Baseline}		& \multicolumn{5}{c}{\textbf{AliO}}		\\
        \hline
        \multicolumn{2}{c|}{Metric}	& MSE$\downarrow$	& TAM$\downarrow$	& MAE$\downarrow$	& MAPE$\downarrow$	& RMSE$\downarrow$	& MSE$\downarrow$	& TAM$\downarrow$	& MAE$\downarrow$	& MAPE$\downarrow$	& RMSE$\downarrow$	\\

        \midrule
        \multirow{5}{*}{Exchange}&
                96		& 0.086$\pm$0.001	& 0.035$\pm$0.001	& 0.206$\pm$0.001	& 1.287$\pm$0.007	& 0.294$\pm$0.001	& \best{0.086$\pm$0.000}	& \best{0.030$\pm$0.000}	& \best{0.205$\pm$0.001}	& \best{1.280$\pm$0.006}	& \best{0.293$\pm$0.001}		\\
                & 192		& 0.179$\pm$0.001	& 0.037$\pm$0.000	& 0.302$\pm$0.001	& 1.926$\pm$0.011	& 0.423$\pm$0.001	& \best{0.178$\pm$0.001}	& \best{0.031$\pm$0.000}	& \best{0.300$\pm$0.001}	& \best{1.895$\pm$0.003}	& \best{0.421$\pm$0.001}		\\
                & 336		& 0.335$\pm$0.003	& 0.042$\pm$0.000	& 0.420$\pm$0.002	& 2.952$\pm$0.014	& 0.579$\pm$0.003	& \best{0.331$\pm$0.002}	& \best{0.032$\pm$0.001}	& \best{0.417$\pm$0.001}	& \best{2.914$\pm$0.013}	& \best{0.576$\pm$0.001}		\\
                & 720		& 0.865$\pm$0.006	& 0.038$\pm$0.002	& 0.702$\pm$0.003	& 6.170$\pm$0.019	& 0.930$\pm$0.003	& \best{0.857$\pm$0.006}	& \best{0.031$\pm$0.002}	& \best{0.699$\pm$0.002}	& \best{6.150$\pm$0.021}	& \best{0.926$\pm$0.003}		\\
                & AVG		& 0.366$\pm$0.270	& 0.038$\pm$0.002	& 0.408$\pm$0.166	& 3.084$\pm$1.680	& 0.557$\pm$0.213	& \best{0.363$\pm$0.267}	& \best{0.031$\pm$0.001}	& \best{0.405$\pm$0.166}	& \best{3.060$\pm$1.679}	& \best{0.554$\pm$0.212}		\\
        \hline

        \multirow{5}{*}{Traffic}&
                96		& 0.394$\pm$0.001	& 0.066$\pm$0.001	& 0.269$\pm$0.001	& 2.897$\pm$0.005	& 0.628$\pm$0.001	& \best{0.393$\pm$0.001}	& \best{0.049$\pm$0.000}	& \best{0.264$\pm$0.000}	& \best{2.723$\pm$0.010}	& \best{0.627$\pm$0.001}		\\
                & 192		& 0.413$\pm$0.001	& 0.061$\pm$0.000	& 0.277$\pm$0.000	& 2.948$\pm$0.011	& 0.643$\pm$0.001	& \best{0.412$\pm$0.000}	& \best{0.043$\pm$0.000}	& \best{0.271$\pm$0.000}	& \best{2.757$\pm$0.011}	& \best{0.642$\pm$0.000}		\\
                & 336		& \best{0.425$\pm$0.001}	& 0.059$\pm$0.000	& 0.283$\pm$0.001	& 2.947$\pm$0.021	& \best{0.652$\pm$0.001}	& 0.428$\pm$0.000	& \best{0.041$\pm$0.000}	& \best{0.278$\pm$0.000}	& \best{2.788$\pm$0.009}	& 0.654$\pm$0.000		\\
                & 720		& \best{0.457$\pm$0.002}	& 0.060$\pm$0.000	& 0.300$\pm$0.001	& 3.108$\pm$0.034	& \best{0.676$\pm$0.001}	& 0.460$\pm$0.000	& \best{0.041$\pm$0.000}	& \best{0.296$\pm$0.000}	& \best{2.889$\pm$0.008}	& 0.678$\pm$0.000		\\
                & AVG		& \best{0.422$\pm$0.020}	& 0.061$\pm$0.002	& 0.282$\pm$0.010	& 2.975$\pm$0.071	& \best{0.649$\pm$0.016}	& 0.423$\pm$0.022	& \best{0.044$\pm$0.003}	& \best{0.277$\pm$0.011}	& \best{2.789$\pm$0.055}	& 0.650$\pm$0.017		\\
        \hline

        \multirow{5}{*}{Weather}	&
                96		& 0.176$\pm$0.002	& 0.024$\pm$0.001	& 0.216$\pm$0.002	& 16.182$\pm$0.603	& 0.420$\pm$0.002	& \best{0.175$\pm$0.000}	& \best{0.021$\pm$0.000}	& \best{0.215$\pm$0.000}	& \best{14.334$\pm$0.300}	& \best{0.418$\pm$0.000}		\\
                & 192		& 0.225$\pm$0.001	& 0.024$\pm$0.002	& 0.257$\pm$0.001	& 16.029$\pm$0.036	& 0.474$\pm$0.001	& \best{0.223$\pm$0.000}	& \best{0.019$\pm$0.000}	& \best{0.257$\pm$0.000}	& \best{15.938$\pm$0.131}	& \best{0.473$\pm$0.000}		\\
                & 336		& 0.282$\pm$0.001	& 0.024$\pm$0.000	& 0.299$\pm$0.001	& 15.771$\pm$0.132	& 0.531$\pm$0.001	& \best{0.281$\pm$0.001}	& \best{0.016$\pm$0.000}	& \best{0.298$\pm$0.000}	& \best{15.376$\pm$0.069}	& \best{0.530$\pm$0.001}		\\
                & 720		& 0.359$\pm$0.001	& 0.023$\pm$0.001	& 0.350$\pm$0.001	& \best{15.409$\pm$0.151}	& 0.599$\pm$0.001	& \best{0.357$\pm$0.000}	& \best{0.013$\pm$0.000}	& \best{0.347$\pm$0.000}	& 15.819$\pm$0.052	& \best{0.597$\pm$0.000}		\\
                & AVG		& 0.260$\pm$0.061	& 0.024$\pm$0.000	& 0.281$\pm$0.045	& 15.848$\pm$0.262	& 0.506$\pm$0.059	& \best{0.259$\pm$0.061}	& \best{0.017$\pm$0.003}	& \best{0.279$\pm$0.044}	& \best{15.367$\pm$0.565}	& \best{0.505$\pm$0.060}		\\
    
        \end{tabular}
    \end{table}
\end{landscape}

\begin{landscape}
    \begin{table}[h]
        \caption{Full results of the proposed AliO method on the ECL, ETTh1, ETTh2, ETTm1, ETTm2, ILI, Traffix, and Weather datasets and GPT4TS~\cite{model.gpt4ts}. The results are reported in terms of MSE, TAM, MAE, MAPE, and RMSE. The best results are highlighted in \best{bold}. We use official GitHub code~(\url{https://github.com/DAMO-DI-ML/NeurIPS2023-One-Fits-All}) to train the model with the official configuration. The average improvement of AliO over the baseline is MSE: 2.27\%, TAM: 32.13\%, MAE: 2.43\%, MAPE: 1.56\%, and RMSE: 1.14\%. The maximum improvement of AliO over the baseline is MSE: 11.54\%, TAM: 62.06\%, MAE: 10.98\%, MAPE: 8.16\%, and RMSE: 5.93\%. ILI, Traffix, and Weather are in \Cref{tab:full_results_GPT4TS_02}.}
        \label{tab:full_results_GPT4TS_01}
        
    \vskip 0.1in
    \centering
    \setlength{\tabcolsep}{3pt}
    \small
    
        \begin{tabular}{c|c|ccccc|ccccc}
        
        \toprule
        \multicolumn{2}{c|}{Models}		& \multicolumn{10}{c}{GPT4TS}		\\
        \hline
        \multicolumn{2}{c|}{Method}		& \multicolumn{5}{c|}{Baseline}		& \multicolumn{5}{c}{\textbf{AliO}}		\\
        \hline
        \multicolumn{2}{c|}{Metric}	& MSE$\downarrow$	& TAM$\downarrow$	& MAE$\downarrow$	& MAPE$\downarrow$	& RMSE$\downarrow$	& MSE$\downarrow$	& TAM$\downarrow$	& MAE$\downarrow$	& MAPE$\downarrow$	& RMSE$\downarrow$	\\

        \midrule
        \multirow{5}{*}{ETTh1}&
                96		& 0.380$\pm$0.003	& 0.060$\pm$0.001	& 0.401$\pm$0.001	& \best{9.556$\pm$0.065}	& 0.616$\pm$0.002	& \best{0.374$\pm$0.003}	& \best{0.034$\pm$0.001}	& \best{0.393$\pm$0.001}	& 9.675$\pm$0.079	& \best{0.612$\pm$0.002}		\\
                & 192		& 0.418$\pm$0.001	& 0.051$\pm$0.001	& 0.419$\pm$0.003	& \best{9.502$\pm$0.101}	& 0.646$\pm$0.001	& \best{0.412$\pm$0.002}	& \best{0.037$\pm$0.001}	& \best{0.416$\pm$0.001}	& 9.532$\pm$0.276	& \best{0.642$\pm$0.001}		\\
                & 336		& 0.441$\pm$0.007	& 0.054$\pm$0.004	& 0.435$\pm$0.004	& \best{9.507$\pm$0.157}	& 0.664$\pm$0.006	& \best{0.431$\pm$0.004}	& \best{0.031$\pm$0.000}	& \best{0.427$\pm$0.003}	& 9.519$\pm$0.180	& \best{0.656$\pm$0.003}		\\
                & 720		& 0.458$\pm$0.006	& 0.058$\pm$0.005	& 0.465$\pm$0.003	& \best{10.152$\pm$0.159}	& 0.677$\pm$0.004	& \best{0.452$\pm$0.009}	& \best{0.053$\pm$0.002}	& \best{0.456$\pm$0.004}	& 10.804$\pm$0.153	& \best{0.672$\pm$0.007}		\\
                & AVG		& 0.424$\pm$0.026	& 0.056$\pm$0.003	& 0.430$\pm$0.021	& \best{9.679$\pm$0.245}	& 0.651$\pm$0.020	& \best{0.417$\pm$0.026}	& \best{0.039$\pm$0.008}	& \best{0.423$\pm$0.020}	& 9.883$\pm$0.479	& \best{0.646$\pm$0.020}		\\
        \hline

        \multirow{5}{*}{ETTh2}&
                96		& 0.292$\pm$0.003	& 0.053$\pm$0.003	& 0.353$\pm$0.001	& 1.377$\pm$0.031	& 0.540$\pm$0.002	& \best{0.288$\pm$0.003}	& \best{0.041$\pm$0.001}	& \best{0.348$\pm$0.004}	& \best{1.352$\pm$0.023}	& \best{0.537$\pm$0.003}		\\
                & 192		& 0.368$\pm$0.008	& 0.054$\pm$0.006	& 0.400$\pm$0.004	& 1.484$\pm$0.017	& 0.606$\pm$0.006	& \best{0.351$\pm$0.001}	& \best{0.033$\pm$0.002}	& \best{0.388$\pm$0.001}	& \best{1.470$\pm$0.024}	& \best{0.592$\pm$0.001}		\\
                & 336		& 0.376$\pm$0.003	& 0.041$\pm$0.002	& 0.414$\pm$0.002	& 1.674$\pm$0.030	& 0.613$\pm$0.002	& \best{0.373$\pm$0.002}	& \best{0.029$\pm$0.001}	& \best{0.409$\pm$0.002}	& \best{1.622$\pm$0.025}	& \best{0.611$\pm$0.001}		\\
                & 720		& 0.414$\pm$0.004	& 0.042$\pm$0.003	& 0.449$\pm$0.005	& 2.051$\pm$0.056	& 0.644$\pm$0.003	& \best{0.404$\pm$0.002}	& \best{0.027$\pm$0.002}	& \best{0.441$\pm$0.002}	& \best{1.981$\pm$0.017}	& \best{0.635$\pm$0.002}		\\
                & AVG		& 0.363$\pm$0.040	& 0.048$\pm$0.005	& 0.404$\pm$0.031	& 1.646$\pm$0.230	& 0.601$\pm$0.034	& \best{0.354$\pm$0.038}	& \best{0.033$\pm$0.005}	& \best{0.396$\pm$0.030}	& \best{1.606$\pm$0.212}	& \best{0.594$\pm$0.032}		\\
        \hline

        \multirow{5}{*}{ETTm1}&
                96		& 0.292$\pm$0.002	& 0.047$\pm$0.001	& 0.347$\pm$0.002	& 2.184$\pm$0.033	& 0.540$\pm$0.002	& \best{0.286$\pm$0.001}	& \best{0.036$\pm$0.000}	& \best{0.340$\pm$0.001}	& \best{2.136$\pm$0.011}	& \best{0.535$\pm$0.001}		\\
                & 192		& 0.330$\pm$0.001	& 0.043$\pm$0.002	& 0.369$\pm$0.001	& 2.241$\pm$0.011	& 0.575$\pm$0.001	& \best{0.325$\pm$0.001}	& \best{0.029$\pm$0.000}	& \best{0.362$\pm$0.001}	& \best{2.220$\pm$0.004}	& \best{0.570$\pm$0.001}		\\
                & 336		& 0.365$\pm$0.002	& 0.042$\pm$0.002	& 0.392$\pm$0.003	& 2.350$\pm$0.054	& 0.605$\pm$0.002	& \best{0.361$\pm$0.001}	& \best{0.027$\pm$0.001}	& \best{0.385$\pm$0.000}	& \best{2.305$\pm$0.008}	& \best{0.601$\pm$0.000}		\\
                & 720		& 0.418$\pm$0.001	& 0.043$\pm$0.001	& 0.423$\pm$0.001	& 2.506$\pm$0.031	& 0.646$\pm$0.001	& \best{0.414$\pm$0.001}	& \best{0.033$\pm$0.001}	& \best{0.418$\pm$0.001}	& \best{2.469$\pm$0.009}	& \best{0.643$\pm$0.001}		\\
                & AVG		& 0.351$\pm$0.041	& 0.044$\pm$0.002	& 0.383$\pm$0.025	& 2.320$\pm$0.110	& 0.592$\pm$0.035	& \best{0.346$\pm$0.042}	& \best{0.031$\pm$0.003}	& \best{0.376$\pm$0.026}	& \best{2.283$\pm$0.110}	& \best{0.587$\pm$0.036}		\\
        \hline

        \multirow{5}{*}{ETTm2}&
                96		& 0.170$\pm$0.002	& 0.039$\pm$0.002	& 0.259$\pm$0.002	& 1.096$\pm$0.010	& 0.412$\pm$0.002	& \best{0.168$\pm$0.000}	& \best{0.033$\pm$0.001}	& \best{0.257$\pm$0.000}	& \best{1.079$\pm$0.003}	& \best{0.410$\pm$0.001}		\\
                & 192		& 0.233$\pm$0.004	& 0.042$\pm$0.003	& 0.305$\pm$0.003	& 1.224$\pm$0.020	& 0.483$\pm$0.004	& \best{0.225$\pm$0.000}	& \best{0.025$\pm$0.001}	& \best{0.297$\pm$0.001}	& \best{1.199$\pm$0.012}	& \best{0.474$\pm$0.001}		\\
                & 336		& 0.297$\pm$0.009	& 0.048$\pm$0.007	& 0.348$\pm$0.005	& 1.363$\pm$0.011	& 0.545$\pm$0.008	& \best{0.284$\pm$0.003}	& \best{0.026$\pm$0.001}	& \best{0.337$\pm$0.002}	& \best{1.323$\pm$0.016}	& \best{0.533$\pm$0.002}		\\
                & 720		& \best{0.382$\pm$0.006}	& 0.041$\pm$0.003	& 0.401$\pm$0.005	& 1.532$\pm$0.034	& \best{0.618$\pm$0.005}	& 0.382$\pm$0.001	& \best{0.028$\pm$0.000}	& \best{0.400$\pm$0.000}	& \best{1.531$\pm$0.015}	& 0.618$\pm$0.001		\\
                & AVG		& 0.270$\pm$0.070	& 0.042$\pm$0.003	& 0.328$\pm$0.047	& 1.304$\pm$0.145	& 0.514$\pm$0.068	& \best{0.265$\pm$0.071}	& \best{0.028$\pm$0.003}	& \best{0.323$\pm$0.047}	& \best{1.283$\pm$0.150}	& \best{0.509$\pm$0.068}		\\
        \hline

        \multirow{5}{*}{ECL}&
                96		& \best{0.138$\pm$0.000}	& 0.034$\pm$0.000	& \best{0.237$\pm$0.000}	& 2.442$\pm$0.003	& \best{0.372$\pm$0.000}	& 0.138$\pm$0.000	& \best{0.025$\pm$0.000}	& 0.237$\pm$0.001	& \best{2.412$\pm$0.003}	& 0.372$\pm$0.000		\\
                & 192		& 0.155$\pm$0.000	& 0.034$\pm$0.000	& 0.252$\pm$0.001	& 2.679$\pm$0.026	& 0.393$\pm$0.000	& \best{0.154$\pm$0.000}	& \best{0.025$\pm$0.000}	& \best{0.251$\pm$0.000}	& \best{2.652$\pm$0.033}	& \best{0.392$\pm$0.000}		\\
                & 336		& 0.169$\pm$0.000	& 0.036$\pm$0.000	& 0.266$\pm$0.001	& 2.707$\pm$0.030	& 0.411$\pm$0.001	& \best{0.169$\pm$0.001}	& \best{0.021$\pm$0.000}	& \best{0.266$\pm$0.001}	& \best{2.665$\pm$0.023}	& \best{0.411$\pm$0.001}		\\
                & 720		& 0.206$\pm$0.000	& 0.042$\pm$0.001	& 0.297$\pm$0.001	& 2.915$\pm$0.033	& 0.454$\pm$0.000	& \best{0.206$\pm$0.000}	& \best{0.028$\pm$0.003}	& \best{0.296$\pm$0.000}	& \best{2.818$\pm$0.063}	& \best{0.454$\pm$0.000}		\\
                & AVG		& 0.167$\pm$0.023	& 0.036$\pm$0.003	& 0.263$\pm$0.020	& 2.686$\pm$0.150	& 0.407$\pm$0.027	& \best{0.167$\pm$0.022}	& \best{0.025$\pm$0.002}	& \best{0.262$\pm$0.020}	& \best{2.637$\pm$0.130}	& \best{0.407$\pm$0.027}
        \end{tabular}
    \end{table}
\end{landscape}

\begin{landscape}
    \begin{table}[h]
        \caption{Full results of the proposed AliO method on the ECL, ETTh1, ETTh2, ETTm1, ETTm2, ILI, Traffix, and Weather datasets and GPT4TS~\cite{model.gpt4ts}. The results are reported in terms of MSE, TAM, MAE, MAPE, and RMSE. The best results are highlighted in \best{bold}. We use official GitHub code~(\url{https://github.com/DAMO-DI-ML/NeurIPS2023-One-Fits-All}) to train the model with the official configuration. The average improvement of AliO over the baseline is MSE: 2.27\%, TAM: 32.13\%, MAE: 2.43\%, MAPE: 1.56\%, and RMSE: 1.14\%. The maximum improvement of AliO over the baseline is MSE: 11.54\%, TAM: 62.06\%, MAE: 10.98\%, MAPE: 8.16\%, and RMSE: 5.93\%. ECL, ETTh1, ETTh2, ETTm1, and ETTm2 are in \Cref{tab:full_results_GPT4TS_01}.}
        \label{tab:full_results_GPT4TS_02}
        
    \vskip 0.1in
    \centering
    \setlength{\tabcolsep}{3pt}
    \small
    
        \begin{tabular}{c|c|ccccc|ccccc}
        
        \toprule
        \multicolumn{2}{c|}{Models}		& \multicolumn{10}{c}{GPT4TS}		\\
        \hline
        \multicolumn{2}{c|}{Method}		& \multicolumn{5}{c|}{Baseline}		& \multicolumn{5}{c}{\textbf{AliO}}		\\
        \hline
        \multicolumn{2}{c|}{Metric}	& MSE$\downarrow$	& TAM$\downarrow$	& MAE$\downarrow$	& MAPE$\downarrow$	& RMSE$\downarrow$	& MSE$\downarrow$	& TAM$\downarrow$	& MAE$\downarrow$	& MAPE$\downarrow$	& RMSE$\downarrow$	\\

        \midrule
        \multirow{5}{*}{ILI}&
                24		& 1.965$\pm$0.046	& 0.137$\pm$0.010	& 0.862$\pm$0.009	& 3.994$\pm$0.129	& 1.402$\pm$0.016	& \best{1.897$\pm$0.036}	& \best{0.115$\pm$0.007}	& \best{0.839$\pm$0.007}	& \best{3.967$\pm$0.117}	& \best{1.377$\pm$0.013}		\\
                & 36		& 1.914$\pm$0.090	& 0.104$\pm$0.026	& 0.913$\pm$0.037	& 3.408$\pm$0.334	& 1.383$\pm$0.033	& \best{1.693$\pm$0.044}	& \best{0.090$\pm$0.007}	& \best{0.812$\pm$0.020}	& \best{3.182$\pm$0.090}	& \best{1.301$\pm$0.017}		\\
                & 48		& 1.851$\pm$0.096	& 0.112$\pm$0.014	& 0.915$\pm$0.032	& \best{3.073$\pm$0.103}	& 1.360$\pm$0.036	& \best{1.722$\pm$0.041}	& \best{0.091$\pm$0.008}	& \best{0.852$\pm$0.006}	& 3.268$\pm$0.148	& \best{1.312$\pm$0.015}		\\
                & 60		& 1.861$\pm$0.056	& 0.114$\pm$0.011	& 0.927$\pm$0.031	& \best{3.225$\pm$0.188}	& 1.364$\pm$0.020	& \best{1.707$\pm$0.032}	& \best{0.097$\pm$0.005}	& \best{0.880$\pm$0.011}	& 3.376$\pm$0.155	& \best{1.306$\pm$0.012}		\\
                & AVG		& 1.898$\pm$0.041	& 0.117$\pm$0.011	& 0.904$\pm$0.023	& \best{3.425$\pm$0.313}	& 1.377$\pm$0.015	& \best{1.755$\pm$0.074}	& \best{0.098$\pm$0.009}	& \best{0.846$\pm$0.022}	& 3.448$\pm$0.275	& \best{1.324$\pm$0.028}		\\
        \hline

        \multirow{5}{*}{Traffic}&
                96		& 0.386$\pm$0.001	& 0.056$\pm$0.000	& 0.279$\pm$0.000	& 2.996$\pm$0.001	& 0.621$\pm$0.001	& \best{0.377$\pm$0.002}	& \best{0.039$\pm$0.000}	& \best{0.267$\pm$0.002}	& \best{2.819$\pm$0.007}	& \best{0.614$\pm$0.002}		\\
                & 192		& 0.404$\pm$0.001	& 0.058$\pm$0.002	& 0.286$\pm$0.003	& 3.081$\pm$0.043	& 0.636$\pm$0.001	& \best{0.399$\pm$0.001}	& \best{0.032$\pm$0.001}	& \best{0.276$\pm$0.001}	& \best{2.854$\pm$0.002}	& \best{0.632$\pm$0.001}		\\
                & 336		& 0.411$\pm$0.000	& 0.061$\pm$0.001	& 0.290$\pm$0.000	& 3.106$\pm$0.016	& 0.641$\pm$0.000	& \best{0.406$\pm$0.002}	& \best{0.032$\pm$0.000}	& \best{0.282$\pm$0.002}	& \best{2.888$\pm$0.011}	& \best{0.637$\pm$0.002}		\\
                & 720		& 0.444$\pm$0.000	& 0.063$\pm$0.001	& 0.304$\pm$0.001	& 3.212$\pm$0.029	& 0.667$\pm$0.000	& \best{0.438$\pm$0.000}	& \best{0.024$\pm$0.000}	& \best{0.295$\pm$0.000}	& \best{2.950$\pm$0.007}	& \best{0.662$\pm$0.000}		\\
                & AVG		& 0.411$\pm$0.019	& 0.059$\pm$0.002	& 0.289$\pm$0.008	& 3.099$\pm$0.069	& 0.641$\pm$0.015	& \best{0.405$\pm$0.020}	& \best{0.032$\pm$0.005}	& \best{0.280$\pm$0.009}	& \best{2.878$\pm$0.043}	& \best{0.636$\pm$0.015}		\\
        \hline

        \multirow{5}{*}{Weather}	&
                96		& 0.148$\pm$0.002	& 0.026$\pm$0.001	& 0.199$\pm$0.003	& 12.027$\pm$0.525	& 0.385$\pm$0.002	& \best{0.147$\pm$0.001}	& \best{0.018$\pm$0.000}	& \best{0.194$\pm$0.002}	& \best{11.946$\pm$0.167}	& \best{0.384$\pm$0.002}		\\
                & 192		& 0.195$\pm$0.000	& 0.024$\pm$0.001	& 0.244$\pm$0.000	& 14.653$\pm$0.401	& 0.442$\pm$0.000	& \best{0.192$\pm$0.001}	& \best{0.017$\pm$0.000}	& \best{0.237$\pm$0.001}	& \best{14.616$\pm$0.251}	& \best{0.439$\pm$0.001}		\\
                & 336		& 0.246$\pm$0.002	& 0.024$\pm$0.000	& 0.284$\pm$0.001	& 15.431$\pm$0.356	& 0.496$\pm$0.002	& \best{0.244$\pm$0.001}	& \best{0.013$\pm$0.000}	& \best{0.276$\pm$0.001}	& \best{14.887$\pm$0.367}	& \best{0.494$\pm$0.001}		\\
                & 720		& 0.319$\pm$0.001	& 0.022$\pm$0.001	& 0.336$\pm$0.002	& \best{14.770$\pm$0.506}	& 0.565$\pm$0.001	& \best{0.316$\pm$0.001}	& \best{0.011$\pm$0.000}	& \best{0.328$\pm$0.001}	& 14.877$\pm$0.092	& \best{0.562$\pm$0.000}		\\
                & AVG		& 0.227$\pm$0.057	& 0.024$\pm$0.001	& 0.266$\pm$0.045	& 14.220$\pm$1.163	& 0.472$\pm$0.060	& \best{0.225$\pm$0.056}	& \best{0.015$\pm$0.002}	& \best{0.259$\pm$0.044}	& \best{14.081$\pm$1.107}	& \best{0.469$\pm$0.059}		\\
    
        \end{tabular}
    \end{table}
\end{landscape}

\begin{landscape}
    \begin{table}[h]
        \caption{Full results of the proposed AliO method on the ECL, ETTh1, ETTh2, ETTm1, ETTm2, Solar, traffic, and Weather datasets and CycleNet~\cite{model.cyclenet}. The results are reported in terms of MSE, TAM, MAE, MAPE, and RMSE. The best results are highlighted in \best{bold}. We use official GitHub code~(\url{https://github.com/ACAT-SCUT/CycleNet}) to train the model with the official configuration. The average improvement of AliO over the baseline is MSE: 0.52\%, TAM: 37.91\%, MAE: 0.91\%, MAPE: 1.84\%, and RMSE: 0.26\%. The maximum improvement of AliO over the baseline is MSE: 2.73\%, TAM: 74.35\%, MAE: 6.79\%, MAPE: 13.27\%, and RMSE: 1.38\%. Solar, Traffix, and Weather are in \Cref{tab:full_results_CycleNet_02}.}
        \label{tab:full_results_CycleNet_01}
        
    \vskip 0.1in
    \centering
    \setlength{\tabcolsep}{3pt}
    \small
    
        \begin{tabular}{c|c|ccccc|ccccc}
        
        \toprule
        \multicolumn{2}{c|}{Models}		& \multicolumn{10}{c}{CycleNet}		\\
        \hline
        \multicolumn{2}{c|}{Method}		& \multicolumn{5}{c|}{Baseline}		& \multicolumn{5}{c}{\textbf{AliO}}		\\
        \hline
        \multicolumn{2}{c|}{Metric}	& MSE$\downarrow$	& TAM$\downarrow$	& MAE$\downarrow$	& MAPE$\downarrow$	& RMSE$\downarrow$	& MSE$\downarrow$	& TAM$\downarrow$	& MAE$\downarrow$	& MAPE$\downarrow$	& RMSE$\downarrow$	\\

        \midrule

        \multirow{5}{*}{ECL}&
                96		& \best{0.141$\pm$0.000}	& \best{0.019$\pm$0.000}	& \best{0.234$\pm$0.000}	& 2.305$\pm$0.002	& \best{0.376$\pm$0.000}	& 0.142$\pm$0.000	& 0.019$\pm$0.000	& 0.234$\pm$0.000	& \best{2.304$\pm$0.004}	& 0.376$\pm$0.000		\\
                & 192		& 0.156$\pm$0.000	& 0.017$\pm$0.002	& 0.247$\pm$0.000	& 2.450$\pm$0.007	& 0.395$\pm$0.000	& \best{0.156$\pm$0.000}	& \best{0.015$\pm$0.000}	& \best{0.247$\pm$0.000}	& \best{2.444$\pm$0.003}	& \best{0.394$\pm$0.000}		\\
                & 336		& 0.173$\pm$0.000	& 0.014$\pm$0.000	& 0.265$\pm$0.000	& 2.439$\pm$0.003	& 0.415$\pm$0.000	& \best{0.173$\pm$0.000}	& \best{0.010$\pm$0.000}	& \best{0.265$\pm$0.000}	& \best{2.439$\pm$0.001}	& \best{0.415$\pm$0.000}		\\
                & 720		& 0.211$\pm$0.000	& 0.013$\pm$0.000	& 0.297$\pm$0.000	& \best{2.583$\pm$0.001}	& 0.459$\pm$0.000	& \best{0.211$\pm$0.000}	& \best{0.010$\pm$0.000}	& \best{0.297$\pm$0.000}	& 2.584$\pm$0.002	& \best{0.459$\pm$0.000}		\\
                & AVG		& 0.170$\pm$0.023	& 0.016$\pm$0.002	& 0.261$\pm$0.021	& 2.444$\pm$0.088	& 0.411$\pm$0.028	& \best{0.170$\pm$0.023}	& \best{0.014$\pm$0.003}	& \best{0.261$\pm$0.021}	& \best{2.443$\pm$0.089}	& \best{0.411$\pm$0.028} \\
                
        \multirow{5}{*}{ETTh1}&
                96		& 0.380$\pm$0.002	& 0.054$\pm$0.011	& 0.393$\pm$0.002	& 9.380$\pm$0.023	& 0.616$\pm$0.002	& \best{0.376$\pm$0.000}	& \best{0.016$\pm$0.001}	& \best{0.387$\pm$0.000}	& \best{9.142$\pm$0.027}	& \best{0.613$\pm$0.000}		\\
                & 192		& 0.425$\pm$0.001	& 0.043$\pm$0.005	& 0.418$\pm$0.001	& 9.471$\pm$0.032	& 0.652$\pm$0.001	& \best{0.422$\pm$0.000}	& \best{0.013$\pm$0.001}	& \best{0.414$\pm$0.000}	& \best{9.185$\pm$0.033}	& \best{0.650$\pm$0.000}		\\
                & 336		& 0.462$\pm$0.001	& 0.041$\pm$0.005	& 0.438$\pm$0.001	& 9.443$\pm$0.011	& 0.680$\pm$0.001	& \best{0.459$\pm$0.001}	& \best{0.012$\pm$0.001}	& \best{0.434$\pm$0.000}	& \best{9.210$\pm$0.029}	& \best{0.678$\pm$0.000}		\\
                & 720		& 0.461$\pm$0.000	& 0.038$\pm$0.002	& 0.460$\pm$0.000	& 9.559$\pm$0.015	& 0.679$\pm$0.000	& \best{0.458$\pm$0.001}	& \best{0.011$\pm$0.000}	& \best{0.454$\pm$0.000}	& \best{9.408$\pm$0.029}	& \best{0.676$\pm$0.000}		\\
                & AVG		& 0.432$\pm$0.030	& 0.044$\pm$0.005	& 0.427$\pm$0.022	& 9.463$\pm$0.058	& 0.657$\pm$0.023	& \best{0.429$\pm$0.030}	& \best{0.013$\pm$0.002}	& \best{0.422$\pm$0.022}	& \best{9.236$\pm$0.091}	& \best{0.654$\pm$0.024}		\\
        \hline

        \multirow{5}{*}{ETTh2}&
                96		& 0.285$\pm$0.001	& 0.047$\pm$0.002	& 0.335$\pm$0.001	& 1.343$\pm$0.005	& 0.534$\pm$0.001	& \best{0.283$\pm$0.000}	& \best{0.032$\pm$0.000}	& \best{0.334$\pm$0.000}	& \best{1.324$\pm$0.001}	& \best{0.532$\pm$0.000}		\\
                & 192		& 0.373$\pm$0.002	& 0.042$\pm$0.002	& 0.392$\pm$0.002	& 1.512$\pm$0.011	& 0.611$\pm$0.001	& \best{0.371$\pm$0.000}	& \best{0.029$\pm$0.001}	& \best{0.390$\pm$0.000}	& \best{1.504$\pm$0.004}	& \best{0.609$\pm$0.000}		\\
                & 336		& 0.424$\pm$0.004	& 0.048$\pm$0.013	& 0.435$\pm$0.002	& 1.770$\pm$0.012	& 0.651$\pm$0.003	& \best{0.418$\pm$0.000}	& \best{0.015$\pm$0.000}	& \best{0.431$\pm$0.000}	& \best{1.735$\pm$0.005}	& \best{0.647$\pm$0.000}		\\
                & 720		& 0.456$\pm$0.003	& 0.053$\pm$0.013	& 0.459$\pm$0.002	& 2.066$\pm$0.009	& 0.675$\pm$0.003	& \best{0.449$\pm$0.000}	& \best{0.023$\pm$0.005}	& \best{0.456$\pm$0.000}	& \best{2.049$\pm$0.001}	& \best{0.670$\pm$0.000}		\\
                & AVG		& 0.385$\pm$0.058	& 0.047$\pm$0.003	& 0.405$\pm$0.042	& 1.672$\pm$0.244	& 0.618$\pm$0.048	& \best{0.381$\pm$0.056}	& \best{0.025$\pm$0.006}	& \best{0.402$\pm$0.041}	& \best{1.653$\pm$0.243}	& \best{0.615$\pm$0.047}		\\
        \hline

        \multirow{5}{*}{ETTm1}&
                96		& 0.326$\pm$0.001	& 0.039$\pm$0.003	& 0.364$\pm$0.001	& 2.208$\pm$0.001	& 0.571$\pm$0.001	& \best{0.320$\pm$0.000}	& \best{0.017$\pm$0.000}	& \best{0.355$\pm$0.000}	& \best{2.112$\pm$0.002}	& \best{0.566$\pm$0.000}		\\
                & 192		& 0.366$\pm$0.000	& 0.032$\pm$0.003	& 0.382$\pm$0.000	& 2.263$\pm$0.005	& 0.605$\pm$0.000	& \best{0.364$\pm$0.000}	& \best{0.013$\pm$0.000}	& \best{0.376$\pm$0.000}	& \best{2.204$\pm$0.000}	& \best{0.603$\pm$0.000}		\\
                & 336		& 0.396$\pm$0.001	& 0.031$\pm$0.003	& 0.402$\pm$0.001	& 2.316$\pm$0.002	& 0.629$\pm$0.000	& \best{0.394$\pm$0.000}	& \best{0.012$\pm$0.000}	& \best{0.396$\pm$0.000}	& \best{2.261$\pm$0.000}	& \best{0.627$\pm$0.000}		\\
                & 720		& 0.457$\pm$0.000	& 0.026$\pm$0.001	& 0.434$\pm$0.000	& 2.476$\pm$0.003	& 0.676$\pm$0.000	& \best{0.455$\pm$0.000}	& \best{0.016$\pm$0.001}	& \best{0.431$\pm$0.000}	& \best{2.449$\pm$0.007}	& \best{0.675$\pm$0.000}		\\
                & AVG		& 0.386$\pm$0.043	& 0.032$\pm$0.004	& 0.395$\pm$0.023	& 2.316$\pm$0.089	& 0.620$\pm$0.034	& \best{0.383$\pm$0.044}	& \best{0.014$\pm$0.002}	& \best{0.390$\pm$0.025}	& \best{2.256$\pm$0.110}	& \best{0.618$\pm$0.035}		\\
        \hline

        \multirow{5}{*}{ETTm2}&
                96		& 0.167$\pm$0.001	& 0.033$\pm$0.002	& 0.248$\pm$0.001	& 1.041$\pm$0.004	& 0.408$\pm$0.001	& \best{0.166$\pm$0.000}	& \best{0.022$\pm$0.000}	& \best{0.247$\pm$0.000}	& \best{1.032$\pm$0.000}	& \best{0.407$\pm$0.000}		\\
                & 192		& 0.233$\pm$0.000	& 0.029$\pm$0.002	& 0.291$\pm$0.000	& 1.158$\pm$0.002	& 0.482$\pm$0.001	& \best{0.232$\pm$0.000}	& \best{0.020$\pm$0.000}	& \best{0.290$\pm$0.000}	& \best{1.151$\pm$0.000}	& \best{0.481$\pm$0.000}		\\
                & 336		& 0.294$\pm$0.001	& 0.025$\pm$0.001	& 0.330$\pm$0.000	& 1.265$\pm$0.005	& 0.542$\pm$0.001	& \best{0.293$\pm$0.000}	& \best{0.018$\pm$0.000}	& \best{0.330$\pm$0.000}	& \best{1.264$\pm$0.000}	& \best{0.541$\pm$0.000}		\\
                & 720		& 0.394$\pm$0.000	& 0.022$\pm$0.000	& 0.389$\pm$0.000	& 1.470$\pm$0.004	& 0.628$\pm$0.000	& \best{0.394$\pm$0.000}	& \best{0.017$\pm$0.000}	& \best{0.389$\pm$0.000}	& \best{1.469$\pm$0.000}	& \best{0.628$\pm$0.000}		\\
                & AVG		& 0.272$\pm$0.075	& 0.027$\pm$0.004	& 0.315$\pm$0.046	& 1.233$\pm$0.141	& 0.515$\pm$0.072	& \best{0.271$\pm$0.075}	& \best{0.019$\pm$0.002}	& \best{0.314$\pm$0.047}	& \best{1.229$\pm$0.144}	& \best{0.514$\pm$0.072}		\\
        \hline
        \end{tabular}
    \end{table}
\end{landscape}

\begin{landscape}
    \begin{table}[h]
        \caption{Full results of the proposed AliO method on the ECL, ETTh1, ETTh2, ETTm1, ETTm2, Solar, traffic, and Weather datasets and CycleNet~\cite{model.cyclenet}. The results are reported in terms of MSE, TAM, MAE, MAPE, and RMSE. The best results are highlighted in \best{bold}. We use official GitHub code~(\url{https://github.com/ACAT-SCUT/CycleNet}) to train the model with the official configuration. The average improvement of AliO over the baseline is MSE: 0.52\%, TAM: 37.91\%, MAE: 0.91\%, MAPE: 1.84\%, and RMSE: 0.26\%. The maximum improvement of AliO over the baseline is MSE: 2.73\%, TAM: 74.35\%, MAE: 6.79\%, MAPE: 13.27\%, and RMSE: 1.38\%. ECL, ETTh1, ETTh2, ETTm1, and ETTm2 are in \Cref{tab:full_results_CycleNet_01}.}
        \label{tab:full_results_CycleNet_02}
        
    \vskip 0.1in
    \centering
    \setlength{\tabcolsep}{3pt}
    \small
    
        \begin{tabular}{c|c|ccccc|ccccc}
        
        \toprule
        \multicolumn{2}{c|}{Models}		& \multicolumn{10}{c}{CycleNet}		\\
        \hline
        \multicolumn{2}{c|}{Method}		& \multicolumn{5}{c|}{Baseline}		& \multicolumn{5}{c}{\textbf{AliO}}		\\
        \hline
        \multicolumn{2}{c|}{Metric}	& MSE$\downarrow$	& TAM$\downarrow$	& MAE$\downarrow$	& MAPE$\downarrow$	& RMSE$\downarrow$	& MSE$\downarrow$	& TAM$\downarrow$	& MAE$\downarrow$	& MAPE$\downarrow$	& RMSE$\downarrow$	\\

        \midrule
        \multirow{5}{*}{Solar}&
                96		& \best{0.250$\pm$0.001}	& 0.030$\pm$0.002	& 0.278$\pm$0.001	& 1.984$\pm$0.005	& \best{0.500$\pm$0.001}	& 0.250$\pm$0.000	& \best{0.026$\pm$0.000}	& \best{0.277$\pm$0.000}	& \best{1.982$\pm$0.000}	& 0.500$\pm$0.000		\\
                & 192		& \best{0.289$\pm$0.000}	& 0.027$\pm$0.000	& \best{0.298$\pm$0.001}	& \best{2.101$\pm$0.005}	& \best{0.538$\pm$0.000}	& 0.290$\pm$0.000	& \best{0.026$\pm$0.000}	& 0.299$\pm$0.000	& 2.110$\pm$0.001	& 0.539$\pm$0.000		\\
                & 336		& 0.338$\pm$0.000	& \best{0.024$\pm$0.000}	& \best{0.322$\pm$0.000}	& 2.335$\pm$0.001	& 0.582$\pm$0.000	& \best{0.338$\pm$0.000}	& 0.026$\pm$0.002	& 0.323$\pm$0.001	& \best{2.328$\pm$0.003}	& \best{0.581$\pm$0.000}		\\
                & 720		& \best{0.351$\pm$0.000}	& 0.021$\pm$0.001	& \best{0.327$\pm$0.000}	& \best{2.478$\pm$0.005}	& \best{0.593$\pm$0.000}	& 0.352$\pm$0.000	& \best{0.020$\pm$0.000}	& 0.327$\pm$0.000	& 2.480$\pm$0.001	& 0.593$\pm$0.000		\\
                & AVG		& \best{0.307$\pm$0.036}	& 0.026$\pm$0.003	& \best{0.306$\pm$0.018}	& \best{2.224$\pm$0.173}	& \best{0.553$\pm$0.033}	& 0.308$\pm$0.036	& \best{0.024$\pm$0.002}	& 0.307$\pm$0.018	& 2.225$\pm$0.172	& 0.553$\pm$0.033		\\
        \hline

        \multirow{5}{*}{Traffic}&
                96		& 0.481$\pm$0.000	& 0.028$\pm$0.003	& 0.314$\pm$0.001	& 3.675$\pm$0.005	& 0.693$\pm$0.000	& \best{0.468$\pm$0.000}	& \best{0.007$\pm$0.000}	& \best{0.293$\pm$0.001}	& \best{3.187$\pm$0.028}	& \best{0.684$\pm$0.000}		\\
                & 192		& 0.482$\pm$0.001	& 0.021$\pm$0.000	& 0.313$\pm$0.000	& 3.619$\pm$0.004	& 0.694$\pm$0.000	& \best{0.469$\pm$0.000}	& \best{0.007$\pm$0.000}	& \best{0.295$\pm$0.000}	& \best{3.181$\pm$0.007}	& \best{0.685$\pm$0.000}		\\
                & 336		& \best{0.480$\pm$0.005}	& 0.015$\pm$0.004	& 0.308$\pm$0.006	& 3.428$\pm$0.058	& \best{0.693$\pm$0.004}	& 0.481$\pm$0.000	& \best{0.006$\pm$0.000}	& \best{0.301$\pm$0.000}	& \best{3.185$\pm$0.009}	& 0.694$\pm$0.000		\\
                & 720		& 0.507$\pm$0.005	& 0.022$\pm$0.001	& 0.324$\pm$0.006	& 3.490$\pm$0.038	& 0.712$\pm$0.003	& \best{0.506$\pm$0.005}	& \best{0.012$\pm$0.004}	& \best{0.323$\pm$0.006}	& \best{3.465$\pm$0.050}	& \best{0.711$\pm$0.004}		\\
                & AVG		& 0.487$\pm$0.010	& 0.021$\pm$0.004	& 0.315$\pm$0.005	& 3.553$\pm$0.088	& 0.698$\pm$0.007	& \best{0.481$\pm$0.014}	& \best{0.008$\pm$0.002}	& \best{0.303$\pm$0.011}	& \best{3.255$\pm$0.109}	& \best{0.693$\pm$0.010}		\\
        \hline

        \multirow{5}{*}{Weather}	&
                96		& \best{0.170$\pm$0.000}	& 0.010$\pm$0.000	& \best{0.216$\pm$0.000}	& \best{12.814$\pm$0.055}	& \best{0.413$\pm$0.000}	& 0.171$\pm$0.000	& \best{0.010$\pm$0.000}	& 0.217$\pm$0.001	& 12.942$\pm$0.059	& 0.413$\pm$0.000		\\
                & 192		& 0.223$\pm$0.001	& 0.013$\pm$0.003	& 0.260$\pm$0.000	& \best{13.923$\pm$0.100}	& 0.472$\pm$0.001	& \best{0.222$\pm$0.000}	& \best{0.010$\pm$0.002}	& \best{0.259$\pm$0.000}	& 13.935$\pm$0.035	& \best{0.472$\pm$0.000}		\\
                & 336		& \best{0.276$\pm$0.000}	& 0.009$\pm$0.001	& \best{0.297$\pm$0.000}	& \best{14.299$\pm$0.037}	& \best{0.525$\pm$0.000}	& 0.276$\pm$0.000	& \best{0.007$\pm$0.000}	& 0.297$\pm$0.000	& 14.302$\pm$0.006	& 0.525$\pm$0.000		\\
                & 720		& 0.350$\pm$0.000	& 0.009$\pm$0.002	& 0.345$\pm$0.000	& 14.876$\pm$0.083	& 0.591$\pm$0.000	& \best{0.350$\pm$0.000}	& \best{0.006$\pm$0.000}	& \best{0.345$\pm$0.000}	& \best{14.851$\pm$0.003}	& \best{0.591$\pm$0.000}		\\
                & AVG		& \best{0.255$\pm$0.059}	& 0.010$\pm$0.001	& \best{0.280$\pm$0.042}	& \best{13.978$\pm$0.674}	& \best{0.500$\pm$0.059}	& 0.255$\pm$0.059	& \best{0.008$\pm$0.002}	& 0.280$\pm$0.042	& 14.007$\pm$0.623	& 0.500$\pm$0.059		\\
    
        \end{tabular}
    \end{table}
\end{landscape}

\clearpage

\newpage
\section{Pronunciation of \texttt{a.li.o}}

To prevent confusion regarding the pronunciation, this section explicitly describes how to pronounce \texttt{a.li.o}.

\begin{center}
    \Large
    \textipa{[a.li.o]} 
\end{center}

The pronunciation consists of three syllables:

\begin{itemize}
    \item \textbf{a}: Pronounced like the `a' in \textit{father} (ah).
    \item \textbf{li}: Pronounced with a clear `L' sound, as in \textit{Lee} (lee).
    \item \textbf{o}: Pronounced like the `o' in \textit{go} (oh).
\end{itemize}

\noindent Consequently, it is pronounced as \textbf{\textit{``Ah-Lee-Oh''}}, ensuring the `L' is clearly articulated.


\newpage
\section*{NeurIPS Paper Checklist}

\begin{enumerate}

\item {\bf Claims}
    \item[] Question: Do the main claims made in the abstract and introduction accurately reflect the paper's contributions and scope?
    \item[] Answer: \answerYes{} 
    \item[] Justification: We briefly described our methodology in the Abstract and stated its effects accordingly.
    \item[] Guidelines:
    \begin{itemize}
        \item The answer NA means that the abstract and introduction do not include the claims made in the paper.
        \item The abstract and/or introduction should clearly state the claims made, including the contributions made in the paper and important assumptions and limitations. A No or NA answer to this question will not be perceived well by the reviewers. 
        \item The claims made should match theoretical and experimental results, and reflect how much the results can be expected to generalize to other settings. 
        \item It is fine to include aspirational goals as motivation as long as it is clear that these goals are not attained by the paper. 
    \end{itemize}

\item {\bf Limitations}
    \item[] Question: Does the paper discuss the limitations of the work performed by the authors?
    \item[] Answer: \answerYes{}
    \item[] Justification: We provide the limitations in \Cref{sec:limitation_and_discussion}.
    \item[] Guidelines:
    \begin{itemize}
        \item The answer NA means that the paper has no limitation while the answer No means that the paper has limitations, but those are not discussed in the paper. 
        \item The authors are encouraged to create a separate "Limitations" section in their paper.
        \item The paper should point out any strong assumptions and how robust the results are to violations of these assumptions (e.g., independence assumptions, noiseless settings, model well-specification, asymptotic approximations only holding locally). The authors should reflect on how these assumptions might be violated in practice and what the implications would be.
        \item The authors should reflect on the scope of the claims made, e.g., if the approach was only tested on a few datasets or with a few runs. In general, empirical results often depend on implicit assumptions, which should be articulated.
        \item The authors should reflect on the factors that influence the performance of the approach. For example, a facial recognition algorithm may perform poorly when image resolution is low or images are taken in low lighting. Or a speech-to-text system might not be used reliably to provide closed captions for online lectures because it fails to handle technical jargon.
        \item The authors should discuss the computational efficiency of the proposed algorithms and how they scale with dataset size.
        \item If applicable, the authors should discuss possible limitations of their approach to address problems of privacy and fairness.
        \item While the authors might fear that complete honesty about limitations might be used by reviewers as grounds for rejection, a worse outcome might be that reviewers discover limitations that aren't acknowledged in the paper. The authors should use their best judgment and recognize that individual actions in favor of transparency play an important role in developing norms that preserve the integrity of the community. Reviewers will be specifically instructed to not penalize honesty concerning limitations.
    \end{itemize}

\item {\bf Theory assumptions and proofs}
    \item[] Question: For each theoretical result, does the paper provide the full set of assumptions and a complete (and correct) proof?
    \item[] Answer: \answerYes{} 
    \item[] Justification: We provide the proofs of our theorems in \Cref{prof:eq_alico_frequency,prof:eq_alico_frequency_amplitude}.
    \item[] Guidelines:
    \begin{itemize}
        \item The answer NA means that the paper does not include theoretical results. 
        \item All the theorems, formulas, and proofs in the paper should be numbered and cross-referenced.
        \item All assumptions should be clearly stated or referenced in the statement of any theorems.
        \item The proofs can either appear in the main paper or the supplemental material, but if they appear in the supplemental material, the authors are encouraged to provide a short proof sketch to provide intuition. 
        \item Inversely, any informal proof provided in the core of the paper should be complemented by formal proofs provided in appendix or supplemental material.
        \item Theorems and Lemmas that the proof relies upon should be properly referenced. 
    \end{itemize}

    \item {\bf Experimental result reproducibility}
    \item[] Question: Does the paper fully disclose all the information needed to reproduce the main experimental results of the paper to the extent that it affects the main claims and/or conclusions of the paper (regardless of whether the code and data are provided or not)?
    \item[] Answer: \answerYes{} 
    \item[] Justification: We provide the experimental settings in \Cref{apdx:forecasting_models_and_configurations} with the proper code references. 
    \item[] Guidelines:
    \begin{itemize}
        \item The answer NA means that the paper does not include experiments.
        \item If the paper includes experiments, a No answer to this question will not be perceived well by the reviewers: Making the paper reproducible is important, regardless of whether the code and data are provided or not.
        \item If the contribution is a dataset and/or model, the authors should describe the steps taken to make their results reproducible or verifiable. 
        \item Depending on the contribution, reproducibility can be accomplished in various ways. For example, if the contribution is a novel architecture, describing the architecture fully might suffice, or if the contribution is a specific model and empirical evaluation, it may be necessary to either make it possible for others to replicate the model with the same dataset, or provide access to the model. In general. releasing code and data is often one good way to accomplish this, but reproducibility can also be provided via detailed instructions for how to replicate the results, access to a hosted model (e.g., in the case of a large language model), releasing of a model checkpoint, or other means that are appropriate to the research performed.
        \item While NeurIPS does not require releasing code, the conference does require all submissions to provide some reasonable avenue for reproducibility, which may depend on the nature of the contribution. For example
        \begin{enumerate}
            \item If the contribution is primarily a new algorithm, the paper should make it clear how to reproduce that algorithm.
            \item If the contribution is primarily a new model architecture, the paper should describe the architecture clearly and fully.
            \item If the contribution is a new model (e.g., a large language model), then there should either be a way to access this model for reproducing the results or a way to reproduce the model (e.g., with an open-source dataset or instructions for how to construct the dataset).
            \item We recognize that reproducibility may be tricky in some cases, in which case authors are welcome to describe the particular way they provide for reproducibility. In the case of closed-source models, it may be that access to the model is limited in some way (e.g., to registered users), but it should be possible for other researchers to have some path to reproducing or verifying the results.
        \end{enumerate}
    \end{itemize}

\item {\bf Open access to data and code}
    \item[] Question: Does the paper provide open access to the data and code, with sufficient instructions to faithfully reproduce the main experimental results, as described in supplemental material?
    \item[] Answer: \answerYes{}
    \item[] Justification: We provide the anonymous GitHub link. 
    \item[] Guidelines:
    \begin{itemize}
        \item The answer NA means that paper does not include experiments requiring code.
        \item Please see the NeurIPS code and data submission guidelines (\url{https://nips.cc/public/guides/CodeSubmissionPolicy}) for more details.
        \item While we encourage the release of code and data, we understand that this might not be possible, so “No” is an acceptable answer. Papers cannot be rejected simply for not including code, unless this is central to the contribution (e.g., for a new open-source benchmark).
        \item The instructions should contain the exact command and environment needed to run to reproduce the results. See the NeurIPS code and data submission guidelines (\url{https://nips.cc/public/guides/CodeSubmissionPolicy}) for more details.
        \item The authors should provide instructions on data access and preparation, including how to access the raw data, preprocessed data, intermediate data, and generated data, etc.
        \item The authors should provide scripts to reproduce all experimental results for the new proposed method and baselines. If only a subset of experiments are reproducible, they should state which ones are omitted from the script and why.
        \item At submission time, to preserve anonymity, the authors should release anonymized versions (if applicable).
        \item Providing as much information as possible in supplemental material (appended to the paper) is recommended, but including URLs to data and code is permitted.
    \end{itemize}

\item {\bf Experimental setting/details}
    \item[] Question: Does the paper specify all the training and test details (e.g., data splits, hyperparameters, how they were chosen, type of optimizer, etc.) necessary to understand the results?
    \item[] Answer: \answerYes{} 
    \item[] Justification: We provide the data splits, hyper-parameters, and type of optimizer in \Cref{apdx:dataset_explanation,apdx:forecasting_models_and_configurations}.
    \item[] Guidelines:
    \begin{itemize}
        \item The answer NA means that the paper does not include experiments.
        \item The experimental setting should be presented in the core of the paper to a level of detail that is necessary to appreciate the results and make sense of them.
        \item The full details can be provided either with the code, in appendix, or as supplemental material.
    \end{itemize}

\item {\bf Experiment statistical significance}
    \item[] Question: Does the paper report error bars suitably and correctly defined or other appropriate information about the statistical significance of the experiments?
    \item[] Answer: \answerYes{} 
    \item[] Justification: We report error in \Cref{apdx:full_results}. 
    \item[] Guidelines:
    \begin{itemize}
        \item The answer NA means that the paper does not include experiments.
        \item The authors should answer "Yes" if the results are accompanied by error bars, confidence intervals, or statistical significance tests, at least for the experiments that support the main claims of the paper.
        \item The factors of variability that the error bars are capturing should be clearly stated (for example, train/test split, initialization, random drawing of some parameter, or overall run with given experimental conditions).
        \item The method for calculating the error bars should be explained (closed form formula, call to a library function, bootstrap, etc.)
        \item The assumptions made should be given (e.g., Normally distributed errors).
        \item It should be clear whether the error bar is the standard deviation or the standard error of the mean.
        \item It is OK to report 1-sigma error bars, but one should state it. The authors should preferably report a 2-sigma error bar than state that they have a 96\% CI, if the hypothesis of Normality of errors is not verified.
        \item For asymmetric distributions, the authors should be careful not to show in tables or figures symmetric error bars that would yield results that are out of range (e.g. negative error rates).
        \item If error bars are reported in tables or plots, The authors should explain in the text how they were calculated and reference the corresponding figures or tables in the text.
    \end{itemize}

\item {\bf Experiments compute resources}
    \item[] Question: For each experiment, does the paper provide sufficient information on the computer resources (type of compute workers, memory, time of execution) needed to reproduce the experiments?
    \item[] Answer: \answerYes{} 
    \item[] Justification: We provide the GPUs information in \Cref{apdx:forecasting_models_and_configurations}. (RTX A6000, RTX 3090). 
    \item[] Guidelines:
    \begin{itemize}
        \item The answer NA means that the paper does not include experiments.
        \item The paper should indicate the type of compute workers CPU or GPU, internal cluster, or cloud provider, including relevant memory and storage.
        \item The paper should provide the amount of compute required for each of the individual experimental runs as well as estimate the total compute. 
        \item The paper should disclose whether the full research project required more compute than the experiments reported in the paper (e.g., preliminary or failed experiments that didn't make it into the paper). 
    \end{itemize}
    
\item {\bf Code of ethics}
    \item[] Question: Does the research conducted in the paper conform, in every respect, with the NeurIPS Code of Ethics \url{https://neurips.cc/public/EthicsGuidelines}?
    \item[] Answer: \answerYes{}.
    \item[] Justification: All the data and models we used are open source, and there are no potential risks, adverse outcomes, or ethical concerns associated with their use.
    \item[] Guidelines:
    \begin{itemize}
        \item The answer NA means that the authors have not reviewed the NeurIPS Code of Ethics.
        \item If the authors answer No, they should explain the special circumstances that require a deviation from the Code of Ethics.
        \item The authors should make sure to preserve anonymity (e.g., if there is a special consideration due to laws or regulations in their jurisdiction).
    \end{itemize}

\item {\bf Broader impacts}
    \item[] Question: Does the paper discuss both potential positive societal impacts and negative societal impacts of the work performed?
    \item[] Answer: \answerYes{}.
    \item[] Justification: We discussed the positive aspects that our technology can bring in \Cref{sec:alignment}. Specifically, by using our technology, the predictive stability of time series forecasting models can be improved, which in turn can reduce the social time and resource costs associated with potential budget reallocations.
    \item[] Guidelines:
    \begin{itemize}
        \item The answer NA means that there is no societal impact of the work performed.
        \item If the authors answer NA or No, they should explain why their work has no societal impact or why the paper does not address societal impact.
        \item Examples of negative societal impacts include potential malicious or unintended uses (e.g., disinformation, generating fake profiles, surveillance), fairness considerations (e.g., deployment of technologies that could make decisions that unfairly impact specific groups), privacy considerations, and security considerations.
        \item The conference expects that many papers will be foundational research and not tied to particular applications, let alone deployments. However, if there is a direct path to any negative applications, the authors should point it out. For example, it is legitimate to point out that an improvement in the quality of generative models could be used to generate deepfakes for disinformation. On the other hand, it is not needed to point out that a generic algorithm for optimizing neural networks could enable people to train models that generate Deepfakes faster.
        \item The authors should consider possible harms that could arise when the technology is being used as intended and functioning correctly, harms that could arise when the technology is being used as intended but gives incorrect results, and harms following from (intentional or unintentional) misuse of the technology.
        \item If there are negative societal impacts, the authors could also discuss possible mitigation strategies (e.g., gated release of models, providing defenses in addition to attacks, mechanisms for monitoring misuse, mechanisms to monitor how a system learns from feedback over time, improving the efficiency and accessibility of ML).
    \end{itemize}
    
\item {\bf Safeguards}
    \item[] Question: Does the paper describe safeguards that have been put in place for responsible release of data or models that have a high risk for misuse (e.g., pretrained language models, image generators, or scraped datasets)?
    \item[] Answer: \answerNA{}.
    \item[] Justification: The paper does not pose no such risks.
    \item[] Guidelines:
    \begin{itemize}
        \item The answer NA means that the paper poses no such risks.
        \item Released models that have a high risk for misuse or dual-use should be released with necessary safeguards to allow for controlled use of the model, for example by requiring that users adhere to usage guidelines or restrictions to access the model or implementing safety filters. 
        \item Datasets that have been scraped from the Internet could pose safety risks. The authors should describe how they avoided releasing unsafe images.
        \item We recognize that providing effective safeguards is challenging, and many papers do not require this, but we encourage authors to take this into account and make a best faith effort.
    \end{itemize}

\item {\bf Licenses for existing assets}
    \item[] Question: Are the creators or original owners of assets (e.g., code, data, models), used in the paper, properly credited and are the license and terms of use explicitly mentioned and properly respected?
    \item[] Answer: \answerYes{}
    \item[] Justification: We have included descriptions of all datasets and models used in the paper (see Appendix).
    \item[] Guidelines:
    \begin{itemize}
        \item The answer NA means that the paper does not use existing assets.
        \item The authors should cite the original paper that produced the code package or dataset.
        \item The authors should state which version of the asset is used and, if possible, include a URL.
        \item The name of the license (e.g., CC-BY 4.0) should be included for each asset.
        \item For scraped data from a particular source (e.g., website), the copyright and terms of service of that source should be provided.
        \item If assets are released, the license, copyright information, and terms of use in the package should be provided. For popular datasets, \url{paperswithcode.com/datasets} has curated licenses for some datasets. Their licensing guide can help determine the license of a dataset.
        \item For existing datasets that are re-packaged, both the original license and the license of the derived asset (if it has changed) should be provided.
        \item If this information is not available online, the authors are encouraged to reach out to the asset's creators.
    \end{itemize}

\item {\bf New assets}
    \item[] Question: Are new assets introduced in the paper well documented and is the documentation provided alongside the assets?
    \item[] Answer: \answerYes{}
    \item[] Justification: We provide our method using Anonymous GitHub repository.
    \item[] Guidelines:
    \begin{itemize}
        \item The answer NA means that the paper does not release new assets.
        \item Researchers should communicate the details of the dataset/code/model as part of their submissions via structured templates. This includes details about training, license, limitations, etc. 
        \item The paper should discuss whether and how consent was obtained from people whose asset is used.
        \item At submission time, remember to anonymize your assets (if applicable). You can either create an anonymized URL or include an anonymized zip file.
    \end{itemize}

\item {\bf Crowdsourcing and research with human subjects}
    \item[] Question: For crowdsourcing experiments and research with human subjects, does the paper include the full text of instructions given to participants and screenshots, if applicable, as well as details about compensation (if any)? 
    \item[] Answer: \answerNA{}.
    \item[] Justification: We did not involve crowdsourcing nor research with human subjects.
    \item[] Guidelines:
    \begin{itemize}
        \item The answer NA means that the paper does not involve crowdsourcing nor research with human subjects.
        \item Including this information in the supplemental material is fine, but if the main contribution of the paper involves human subjects, then as much detail as possible should be included in the main paper. 
        \item According to the NeurIPS Code of Ethics, workers involved in data collection, curation, or other labor should be paid at least the minimum wage in the country of the data collector. 
    \end{itemize}

\item {\bf Institutional review board (IRB) approvals or equivalent for research with human subjects}
    \item[] Question: Does the paper describe potential risks incurred by study participants, whether such risks were disclosed to the subjects, and whether Institutional Review Board (IRB) approvals (or an equivalent approval/review based on the requirements of your country or institution) were obtained?
    \item[] Answer: \answerNA{}.
    \item[] Justification: We did not involve crowdsourcing nor research with human subjects.
    \item[] Guidelines:
    \begin{itemize}
        \item The answer NA means that the paper does not involve crowdsourcing nor research with human subjects.
        \item Depending on the country in which research is conducted, IRB approval (or equivalent) may be required for any human subjects research. If you obtained IRB approval, you should clearly state this in the paper. 
        \item We recognize that the procedures for this may vary significantly between institutions and locations, and we expect authors to adhere to the NeurIPS Code of Ethics and the guidelines for their institution. 
        \item For initial submissions, do not include any information that would break anonymity (if applicable), such as the institution conducting the review.
    \end{itemize}

\item {\bf Declaration of LLM usage}
    \item[] Question: Does the paper describe the usage of LLMs if it is an important, original, or non-standard component of the core methods in this research? Note that if the LLM is used only for writing, editing, or formatting purposes and does not impact the core methodology, scientific rigorousness, or originality of the research, declaration is not required.
    \item[] Answer: \answerNA{}.
    \item[] Justification: We did not use LLMs for core development.
    \item[] Guidelines:
    \begin{itemize}
        \item The answer NA means that the core method development in this research does not involve LLMs as any important, original, or non-standard components.
        \item Please refer to our LLM policy (\url{https://neurips.cc/Conferences/2025/LLM}) for what should or should not be described.
    \end{itemize}

\end{enumerate}

\end{document}